\documentclass{article}

\PassOptionsToPackage{numbers}{natbib}

\usepackage[final]{neurips_2020}




\usepackage[utf8]{inputenc} 
\usepackage[T1]{fontenc}    
\usepackage{hyperref}       
\usepackage{url}            
\usepackage{booktabs}       
\usepackage{amsfonts}       
\usepackage{nicefrac}       
\usepackage{microtype}      

\usepackage{amsmath}
\usepackage{amssymb}
\usepackage{amsthm}
\usepackage{enumerate}
\usepackage{graphicx}
\usepackage{xparse}
\usepackage{xcolor}

\DeclareFontFamily{U}{mathx}{\hyphenchar\font45}
\DeclareFontShape{U}{mathx}{m}{n}{
	<5> <6> <7> <8> <9> <10>
	<10.95> <12> <14.4> <17.28> <20.74> <24.88>
	mathx10
}{}
\DeclareSymbolFont{mathx}{U}{mathx}{m}{n}
\DeclareFontSubstitution{U}{mathx}{m}{n}
\DeclareMathAccent{\widecheck}{0}{mathx}{"71}
\DeclareMathAccent{\wideparen}{0}{mathx}{"75}

\ExplSyntaxOn
\keys_define:nn { miguel/label }
{
	label   .tl_set:N = \l_miguel_label_tl,
	unknown .code:n   = \clist_put_right:Nx \l_miguel_label_clist
	{ \l_keys_key_tl = \exp_not:n { #1 } }
}
\clist_new:N \l_miguel_label_clist
\box_new:N \l_miguel_label_box
\box_new:N \l_miguel_label_image_box
\NewDocumentCommand{\xincludegraphics}{O{}m}
{
	\tl_clear:N \l_miguel_label_tl
	\clist_clear:N \l_miguel_label_clist
	\keys_set:nn { miguel/label } { #1 }
	\tl_if_empty:NTF \l_miguel_label_tl
	{
		\miguel_includegraphics:Vn \l_miguel_label_clist { #2 }
	}
	{
		\hbox_set:Nn \l_miguel_label_image_box
		{
			\miguel_includegraphics:Vn \l_miguel_label_clist { #2 }
		}
		\hbox_set:Nn \l_miguel_label_box
		{
			\skip_horizontal:n { -10pt }
			\fcolorbox{white}{white}{\footnotesize \tl_use:N \l_miguel_label_tl}
		}
		\leavevmode
		\box_use:N \l_miguel_label_image_box
		\skip_horizontal:n { -\box_wd:N \l_miguel_label_image_box }
		\hbox_overlap_right:n
		{
			\box_move_up:nn
			{
				\box_ht:N \l_miguel_label_image_box - 
				\box_ht:N \l_miguel_label_box - -3pt
			}
			{ \box_use_drop:N \l_miguel_label_box }
		}
		\skip_horizontal:n { \box_wd:N \l_miguel_label_image_box }
	}
}
\cs_new_protected:Nn \miguel_includegraphics:nn
{
	\includegraphics[#1]{#2}
}
\cs_generate_variant:Nn \miguel_includegraphics:nn { V }
\ExplSyntaxOff

\newtheorem{theorem}{Theorem}[section]
\newtheorem{lemma}[theorem]{Lemma}
\newtheorem{proposition}[theorem]{Proposition}
\newtheorem{corollary}[theorem]{Corollary}
\theoremstyle{definition}
\newtheorem{definition}[theorem]{Definition}
\newtheorem{assumption}[theorem]{Assumption}

\newcommand{\R}{\mathbb{R}}
\newcommand{\C}{\mathbb{C}}
\newcommand{\N}{\mathcal{N}}
\newcommand{\E}{\mathbb{E}}
\newcommand{\Id}{\operatorname{Id}}
\newcommand{\CK}{\text{CK}}
\newcommand{\NTK}{\text{NTK}}
\newcommand{\MP}{\text{MP}}
\newcommand{\prev}{{\text{prev}}}

\newcommand{\Tr}{\operatorname{Tr}}
\newcommand{\tr}{\operatorname{tr}}

\newcommand{\x}{\mathbf{x}}
\newcommand{\y}{\mathbf{y}}
\newcommand{\z}{\mathbf{z}}
\renewcommand{\u}{\mathbf{u}}
\renewcommand{\v}{\mathbf{v}}
\newcommand{\w}{\mathbf{w}}
\renewcommand{\r}{\mathbf{r}}
\newcommand{\s}{\mathbf{s}}
\newcommand{\q}{\mathbf{q}}
\renewcommand{\a}{\alpha}
\renewcommand{\b}{\beta}
\newcommand{\eps}{\varepsilon}
\newcommand{\veps}{\widecheck{\varepsilon}}
\newcommand{\vd}{\check{d}}
\newcommand{\vB}{\widecheck{B}}
\newcommand{\wM}{\widetilde{M}}
\newcommand{\vX}{\widecheck{X}}
\newcommand{\wX}{\widetilde{X}}
\newcommand{\bX}{\mathbf{X}}

\renewcommand{\P}{\mathbb{P}}
\newcommand{\vx}{\widecheck{\mathbf{x}}}
\newcommand{\wx}{\tilde{\mathbf{x}}}
\newcommand{\1}{\mathbf{1}}

\newcommand{\Var}{\operatorname{Var}}

\newcommand{\diag}{\operatorname{diag}}
\renewcommand{\vec}{\operatorname{vec}}

\newcommand{\limspec}{\operatorname{lim\;spec}}

\renewcommand{\Im}{\operatorname{Im}}
\renewcommand{\Re}{\operatorname{Re}}
\newcommand{\cE}{\mathcal{E}}

\title{Spectra of the Conjugate Kernel and Neural Tangent Kernel for
	Linear-Width Neural Networks}

%

\author{%
	Zhou Fan\\
	Department of Statistics and Data Science\\
	Yale University\\
	\texttt{zhou.fan@yale.edu}\\
	\And
	Zhichao Wang\\
	Department of Mathematics\\
	University of California, San Diego\\
	\texttt{zhw036@ucsd.edu}\\
}

\begin{document}
	
\maketitle

\begin{abstract}
We study the eigenvalue distributions of the Conjugate Kernel and Neural
Tangent Kernel associated to multi-layer feedforward neural networks. In an
asymptotic regime where network width is increasing linearly in sample size,
under random initialization of the weights, and for input samples
satisfying a notion of approximate pairwise orthogonality, we show that the
eigenvalue distributions of the CK and NTK converge to deterministic limits. The
limit for the CK is described by iterating the Marcenko-Pastur map across the
hidden layers. The limit for the NTK is equivalent to that of a linear
combination of the CK matrices across layers, and may be described by
recursive fixed-point equations that extend this Marcenko-Pastur map.
We demonstrate the agreement of these asymptotic predictions with the observed
spectra for both synthetic and CIFAR-10 training data, and we perform a small
simulation to investigate the evolutions of these spectra over training.
\end{abstract}

\section{Introduction}\label{sec:introduction}

Recent progress in our theoretical understanding of neural networks has
connected their training and generalization to
two associated kernel matrices. The first is the \emph{Conjugate Kernel (CK)}
or the equivalent Gaussian process kernel
\cite{neal1995bayesian,williams1997computing,cho2009kernel,daniely2016toward,poole2016exponential,schoenholz2017deep,lee2018deep,matthews2018gaussian}.
This is the gram matrix of the derived features produced by the
final hidden layer of the network. The network predictions are linear in
these derived features, and the CK governs
training and generalization in this linear model.

The second is the \emph{Neural Tangent Kernel (NTK)}
\cite{jacot2018neural,du2019gradienta,allen2019convergence}.
This is the gram matrix of the Jacobian of in-sample predictions with
respect to the network weights, and was introduced to study full network
training. Under gradient-flow training dynamics, the in-sample predictions 
follow a differential equation governed by the NTK. We provide a brief
review of these matrices in Section \ref{sec:model}.

The spectral decompositions of these kernel matrices are related to
training and generalization properties of the underlying network.
Training occurs most rapidly along the
eigenvectors of the largest eigenvalues \cite{advani2017high}, and the
eigenvalue distribution may determine the trainability of the model and the
extent of implicit bias towards simpler
functions \cite{xiao2019disentangling,yang2019fine}. It is thus of interest to
understand the spectral properties of these matrices, both at random
initialization and over the course of training.

\subsection{Summary of contributions}

In this work, we apply techniques of random matrix theory to derive an exact
asymptotic characterization of the eigenvalue distributions of the CK and NTK at
random initialization, in a multi-layer feedforward network architecture. We
study a ``linear-width'' asymptotic regime, where each hidden layer has width
proportional to the training sample size. We impose an assumption
of approximate pairwise orthogonality for the training samples, which
encompasses general settings of independent samples that need not have
independent entries.

We show that the eigenvalue distributions
for both the CK and the NTK converge to deterministic limits, depending on the
limiting eigenvalue distribution of the training data. The limit distribution
for the CK at each intermediate hidden layer is a Marcenko-Pastur map of a
linear transformation of that of the previous layer. The NTK can be approximated
by a linear combination of CK matrices, and its limiting eigenvalue distribution can be described by a recursively defined sequence of fixed-point equations that extend
this Marcenko-Pastur map. We demonstrate the agreement of these asymptotic
limits with the observed spectra on both synthetic and CIFAR-10 training data of moderate size.

In this linear-width asymptotic regime, feature learning occurs, and both the
CK and NTK evolve over training.
Although our theory pertains only to their spectra at random
initialization of the weights, we conclude with an empirical examination of
their spectral evolutions during training,
on simple examples of learning a single neuron and learning a
binary classifier for two classes in CIFAR-10. In these examples, the bulk
eigenvalue distributions of the CK and NTK undergo elongations, and
isolated principal components emerge that are highly predictive of the 
training labels. Recent theoretical work has studied the evolution of the NTK 
in an entrywise sense \cite{huang2019dynamics,dyer2019asymptotics}, and
we believe it is an interesting open question to translate this understanding
to a more spectral perspective.

\subsection{Related literature}

Many properties of the CK and NTK have been established in the limit of infinite
width and fixed sample size $n$. In this limit, both the CK
\cite{neal1995bayesian,williams1997computing,daniely2016toward,lee2018deep,matthews2018gaussian}
and the NTK \cite{jacot2018neural,lee2019wide,yang2019scaling} at random
initialization converge to fixed $n \times n$ kernel matrices. The associated
random features regression models converge to kernel linear regression in the
RKHS of these limit kernels. Furthermore, network training occurs
in a ``lazy'' regime \cite{chizat2019lazy}, where the NTK remains constant
throughout training
\cite{jacot2018neural,du2019gradienta,du2019gradientb,allen2019convergence,lee2019wide,arora2019exact}.
Spectral properties of the CK, NTK, and Hessian of the
training loss have been previously studied in this infinite-width limit in
\cite{poole2016exponential,sagun2018empirical,xiao2019disentangling,karakida2019universal,geiger2019jamming,jacot2019asymptotic}.
Limitations of lazy training and these equivalent kernel 
regression models have been studied theoretically and empirically in
\cite{chizat2019lazy,arora2019exact,yehudai2019power,ghorbani2019limitations,ghorbani2019linearized,liang2019risk},
suggesting that trained neural networks of practical width are not fully
described by this type of infinite-width kernel equivalence. The asymptotic
behavior is different in the linear-width regime of this work:
For example, for a linear activation $\sigma(x)=x$, the infinite-width limit
of the CK for random weights is the input Gram matrix
$X^\top X$, whereas its limit spectrum under linear-width asymptotics has an
additional noise component from iterating the Marcenko-Pastur map.

Under linear-width asymptotics, the limit CK spectrum for
one hidden layer was characterized in
\cite{pennington2017nonlinear} for training data with i.i.d.\ Gaussian entries.
For activations satisfying $\E_{\xi \sim \N(0,1)}[\sigma'(\xi)]=0$,
\cite{pennington2017nonlinear} conjectured that this limit is a
Marcenko-Pastur law also in multi-layer networks, and this was proven under a
subgaussian assumption as part of the results of \cite{benigni2019eigenvalue}.
\cite{louart2018random} studied the one-hidden-layer CK
with general training data, and
\cite{liao2018spectrum} specialized this
to Gaussian mixture models. These works \cite{louart2018random,liao2018spectrum}
showed that the limit spectrum is a Marcenko-Pastur map of the inter-neuron
covariance.
We build on this insight by analyzing this covariance across multiple layers,
under approximate orthogonality of the training samples. 
This orthogonality condition is similar to that of \cite{adlam2019random},
which recently studied the one-hidden-layer CK with a bias term.
This condition is also more general than the assumption of i.i.d.\ entries,
and we describe in Appendix \ref{appendix:penningtonreduction}
the reduction to the one-hidden-layer result
of \cite{pennington2017nonlinear} for i.i.d.\ Gaussian inputs,
as this reduction is not immediately clear. \cite{peche2019note} provides
another form of the limit distribution in
\cite{pennington2017nonlinear}, which is equivalent to our form in
Appendix \ref{appendix:penningtonreduction} via the relation described in
\cite{benaych2010surprising}.

The limit NTK spectrum for a one-hidden-layer network with i.i.d.\ Gaussian
inputs was recently characterized in parallel work of
\cite{adlam2020neural}. In particular, \cite{adlam2020neural} applied the same
idea as in Lemma \ref{lemma:NTKapprox} below to study the Hadamard product
arising in the NTK. \cite{pennington2017geometry,pennington2018spectrum}
previously studied the equivalent spectrum of 
a sample covariance matrix derived from the network Jacobian, which is one of
two components of the Hessian of the training loss, in a slightly different
setting and also for one hidden layer.

The spectra of the kernel matrices $X^\top X$ that we study are equivalent
(up to the addition/removal of 0's) to the spectra of the sample covariance
matrices in linear regression using the features $X$. As developed in a line
of recent literature including \cite{dicker2016ridge,pennington2017nonlinear,dobriban2018high,louart2018random,liao2018dynamics,hastie2019surprises,mei2019generalization,adlam2020neural,dascoli2020double}, this spectrum and the associated Stieltjes transform and resolvent are
closely related to the training and generalization errors in this linear
regression model. These works have collectively
provided an asymptotic understanding of training and generalization error for
random features regression models derived from the CK and NTK of
one-hidden-layer neural networks, and related 
qualitative phenomena of double and multiple descent in the generalization
error curves. 


\section{Background}\label{sec:background}

\subsection{Neural network model and kernel matrices}\label{sec:model}

We consider a fully-connected, feedforward neural network with input dimension
$d_0$, hidden layers of dimensions $d_1,\ldots,d_L$, and a scalar output.
For an input $\x \in \R^{d_0}$, we parametrize the network as
\begin{equation}\label{eq:NNfunc}
f_\theta(\x)=\w^\top \frac{1}{\sqrt{d_L}} \sigma\bigg(
W_L \frac{1}{\sqrt{d_{L-1}}}\sigma \Big(\ldots
\frac{1}{\sqrt{d_2}}\sigma\Big(W_2\frac{1}{\sqrt{d_1}} \sigma(W_1 \x)
\Big)\Big)\bigg) \in \R.
\end{equation}
Here, $\sigma:\R \to \R$ is the activation function (applied entrywise) and
\[W_\ell \in \R^{d_\ell \times d_{\ell-1}} \quad
\text{ for } 1 \leq \ell \leq L, \qquad
\w \in \R^{d_L}\]
are the network weights. We denote by $\theta=(\vec(W_1),\ldots,\vec(W_L),\w)$ the weights across all layers. The scalings by
$1/\sqrt{d_\ell}$ reflect the ``NTK-parametrization'' of the network \cite{jacot2018neural}. We discuss
alternative scalings and an extension to 
multi-dimensional outputs in Section \ref{sec:extensions}.

Given $n$ training samples $\x_1,\ldots,\x_n \in \R^{d_0}$, we
denote the matrices of inputs and post-activations by
\[X \equiv X_0
=\begin{pmatrix} \x_1 & \dots & \x_n \end{pmatrix} \in \R^{d_0 \times n},
\quad X_\ell=\frac{1}{\sqrt{d_\ell}}
\sigma\left(W_\ell X_{\ell-1}\right) \in
\R^{d_\ell \times n}\quad \text{ for } 1\leq\ell\leq L.\]
Then the in-sample predictions of the network are given by
$f_\theta(X)=(f_\theta(\x_1),\ldots,f_\theta(\x_n))
=\w^\top X_L \in \R^{1 \times n}$. 
The {\bf Conjugate Kernel (CK)} is the matrix
\[K^\CK=X_L^\top X_L \in \R^{n \times n}.\]
More generally, we will call $X_\ell^\top X_\ell$ the conjugate kernel at the
intermediate layer $\ell$. Fixing the matrix $X_L$, the CK governs
training 
and generalization in the linear regression model $\y=\w^\top X_L$. For very
wide networks, $K^\CK$ may be viewed as an approximation of its infinite-width
limit,\footnote{In this paper, we
use ``conjugate kernel'' and ``neural tangent kernel'' to refer to these
matrices for a finite-width network, rather than their infinite-width limits.}
and regression using $X_L$ is an approximation of regression in the RKHS
defined by this limit kernel \cite{rahimi2008random}.

We denote the Jacobian matrix
of the network predictions with respect to the weights $\theta$ as
\[J=\nabla_\theta f_\theta(X)=
\begin{pmatrix} \nabla_\theta f(\x_1) & \cdots & \nabla_\theta f(\x_n)
\end{pmatrix} \in \R^{\dim(\theta) \times n}.\]
The {\bf Neural Tangent Kernel (NTK)} is the matrix
\begin{equation}\label{eq:NTK}
K^{\NTK}=J^\top J=\big(\nabla_\theta f_\theta(X)\big)^\top
\big(\nabla_\theta f_\theta(X)\big) \in \R^{n \times n}.
\end{equation}
Under gradient-flow training of the network weights $\theta$ with
training loss $\|\y-f_\theta(X)\|^2/2$,
the time evolutions of residual errors and in-sample predictions are given by
\begin{equation}\label{eq:training}
\frac{d}{dt}\Big(\y-f_{\theta(t)}(X)\Big)=-K^\NTK(t) \cdot
\Big(\y-f_{\theta(t)}(X)\Big), \quad
\frac{d}{dt}f_{\theta(t)}(X)
=K^\NTK(t) \cdot \Big(\y-f_{\theta(t)}(X)\Big)
\end{equation}
where $\theta(t)$ and $K^\NTK(t)$ are the parameters and NTK at
training time $t$ \cite{jacot2018neural,du2019gradienta}.
Denoting the eigenvalues and eigenvectors of $K^\NTK(t)$
by $(\lambda_\a(t),\v_\a(t))_{\a=1}^n$, and the spectral components of the
residual error by $r_\a(t)=\v_\a(t)^\top (\y-f_{\theta(t)}(X))$,
these training dynamics are expressed spectrally as
\[\v_\a(t)^\top \frac{d}{dt}
\Big(\y-f_{\theta(t)}(X)\Big)=-\lambda_\a(t)r_\a(t), \qquad
\frac{d}{dt}f_{\theta(t)}(X)=
\sum_{\a=1}^n \lambda_\a(t)r_\a(t) \cdot \v_\a(t).\]
Note that these relations hold instantaneously at each training time $t$,
regardless of whether $K^\NTK(t)$ evolves or remains approximately constant over
training. Hence, $\lambda_\a(t)$ controls the instantaneous rate of decay
of the residual error in the direction of $\v_\a(t)$.

For very wide networks, 
$K^\NTK$, $\lambda_\a$, and $\v_\a$ are all approximately constant over the
entirety of training
\cite{jacot2018neural,du2019gradienta,du2019gradientb,allen2019convergence,chizat2019lazy}.
This yields the closed-form solution
$r_\a(t) \approx r_\a(0)e^{-t \lambda_\a}$,
so that the in-sample predictions $f_{\theta(t)}(X)$ converge 
exponentially fast to the observed training labels $\y$, with a different
exponential rate $\lambda_\a$ along each eigenvector $\v_\a$ of $K^\NTK$.

\subsection{Eigenvalue distributions, Stieltjes transforms,
and the Marcenko-Pastur map}\label{sec:stieltjes}

We will derive almost-sure weak limits for the empirical eigenvalue
distributions of random symmetric kernel matrices $K \in \R^{n \times n}$
as $n \to \infty$. Throughout this paper, we will denote this as
\[\limspec K=\mu\]
where $\mu$ is the limit probability distribution on $\R$.
Letting $\{\lambda_\a\}_{\a=1}^n$ be the eigenvalues of $K$, this means
\begin{equation}\label{eq:weakconverge}
\frac{1}{n}\sum_{\a=1}^n f(\lambda_\a) \to \E_{\lambda \sim \mu} [f(\lambda)]
\end{equation}
a.s.\ as $n \to \infty$, for any continuous bounded
function $f:\R \to \R$.
Intuitively, this may be understood as the convergence
of the ``bulk'' of the eigenvalue distribution
of $K$.\footnote{We caution that this does not imply convergence of the
largest and smallest eigenvalues of $K$ to the support of $\mu$, which is a
stronger notion of convergence than what we study in this work.}
We will also show that $\|K\| \leq C$ a.s., for a constant $C>0$ and all large
$n$. Then (\ref{eq:weakconverge}) in fact holds for any continuous
function $f:\R \to \R$, as such a function must be bounded on $[-C,C]$.

We will characterize the probability distribution $\mu$ and the empirical
eigenvalue distribution of $K$ by their Stieltjes
transforms. These are defined, respectively, for a spectral
argument $z \in \C^+$ as\footnote{Note that some authors use a negative sign
convention and define $m_\mu(z)$ as $\int 1/(z-x)d\mu(x)$.}
\[m_\mu(z)=\int \frac{1}{x-z}d\mu(x), \qquad
m_K(z)=\frac{1}{n}\sum_{\a=1}^n \frac{1}{\lambda_\a-z}
=\frac{1}{n}\Tr (K-z\Id)^{-1}.\]
The pointwise convergence $m_K(z) \to m_\mu(z)$ a.s.\ over $z \in \C^+$
implies $\limspec K=\mu$.
For $z=x+i\eta \in \C^+$, the value $\pi^{-1}\Im m_\mu(z)$ is
the density function of the convolution of $\mu$ with the distribution
$\operatorname{Cauchy}(0,\eta)$ at $x \in \R$. Hence, the function
$m_\mu(z)$ uniquely defines $\mu$, and evaluating $\pi^{-1}\Im m_\mu(x+i\eta)$
for small $\eta>0$ yields an approximation for the density function
of $\mu$ (provided this density exists at $x$).

An example of this type of characterization is given by the
\emph{Marcenko-Pastur map}, which describes the spectra of sample covariance
matrices \cite{marchenko1967distribution}:
Let $X \in \R^{d \times n}$ have i.i.d.\ $\N(0,1/d)$ entries, let
$\Phi \in \R^{n \times n}$ be deterministic and
positive semi-definite, and let $n \to \infty$
such that $\limspec \Phi=\mu$ and $n/d \to \gamma \in (0,\infty)$. Then
the sample covariance matrix $\Phi^{1/2}X^\top X\Phi^{1/2}$
has an almost sure spectral limit,
\begin{equation}\label{eq:MPmap}
\limspec\,\Phi^{1/2}X^\top X\Phi^{1/2}=\rho^\MP_\gamma \boxtimes \mu.
\end{equation}
We will call this limit $\rho^\MP_\gamma \boxtimes \mu$
the Marcenko-Pastur map of $\mu$ with aspect ratio $\gamma$.
This distribution $\rho^\MP_\gamma \boxtimes \mu$ may be defined by its
Stieltjes transform $m(z)$, which solves the Marcenko-Pastur fixed point
equation \cite{marchenko1967distribution}
\begin{equation}\label{eq:MPeq}
m(z)=\int \frac{1}{x(1-\gamma-\gamma zm(z))-z}\,d\mu(x).
\end{equation}

\section{Main results}\label{sec:results}

\subsection{Assumptions}

We use Greek indices $\a$, $\b$, etc.\ for samples in $\{1,\ldots,n\}$, and
Roman indices $i$, $j$, etc.\ for neurons in $\{1,\ldots,d\}$.
For a matrix $X \in \R^{d \times n}$,
we denote by $\x_\a$ its $\a^\text{th}$ column and by
$\x_i^\top$ its $i^\text{th}$ row. $\|\cdot\|$ is the $\ell_2$-norm for
vectors and $\ell_2 \to \ell_2$ operator norm for matrices. $\Id$ is the
identity matrix.

\begin{definition}\label{def:orthogonal}
Let $\eps,B>0$. A matrix $X \in \R^{d \times n}$ is
{\bf $(\eps,B)$-orthonormal} if its columns satisfy,
for every $\a \neq \b \in \{1,\ldots,n\}$,
\[\big|\|\x_\a\|^2-1\big| \leq \eps,
\qquad \big|\x_\a^\top \x_\b \big| \leq \eps,
\qquad \|X\| \leq B, \qquad
\sum_{\a=1}^n (\|\x_\a\|^2-1)^2 \leq B^2.\]
\end{definition}

\begin{assumption}\label{assump:asymptotics}
The number of layers $L \geq 1$ is fixed,
and $n,d_0,d_1,\ldots,d_L \to \infty$, such that
\begin{enumerate}[(a)]
\item The weights $\theta=(\vec(W_1),\ldots,\vec(W_L),\w)$ are i.i.d.\ and distributed
as $\N(0,1)$.
\item The activation $\sigma(x)$ is twice differentiable, with $\sup_{x \in \R}
|\sigma'(x)|,|\sigma''(x)| \leq \lambda_\sigma$ for some
$\lambda_\sigma<\infty$. For $\xi \sim \N(0,1)$, we have
$\E[\sigma(\xi)]=0$ and $\E[\sigma^2(\xi)]=1$.
\item The input $X \in \R^{d_0 \times n}$
is $(\eps_n,B)$-orthonormal in the sense of Definition
\ref{def:orthogonal}, where $B$ is a constant, and
$\eps_n n^{1/4} \to 0$ as $n \to \infty$.
\item As $n \to \infty$, $\limspec X^\top X=\mu_0$
for a probability distribution $\mu_0$ on $[0,\infty)$,
and $\lim n/d_\ell=\gamma_\ell$ for constants $\gamma_\ell \in (0,\infty)$
and each $\ell=1,2,\ldots,L$.
\end{enumerate}
\end{assumption}

Part (c) quantifies our assumption of approximate pairwise orthogonality of
the training samples. Although not completely general, it encompasses many
settings of independent samples with input dimension $d_0 \asymp n$,
including:
\begin{itemize}
\item Non-white Gaussian inputs $\x_\a \sim \N(0,\Sigma)$,
for any $\Sigma$ satisfying $\Tr \Sigma=1$ and $\|\Sigma\| \lesssim 1/n$.
\item Inputs $\x_\a$ drawn from certain multi-class Gaussian mixture models,
in the high-dimensional asymptotic regimes that were studied in \cite{couillet2016kernel,louart2018random,liao2018spectrum,liao2018dynamics,liao2019inner}.
\item Inputs that may be expressed as
$\sqrt{d_0} \cdot \x_\a=f(\z_\a)$, where $\z_\a \in \R^m$ has independent
entries satisfying a log-Sobolev inequality, and $f:\R^m \to \R^{d_0}$ is any
Lipschitz function.
\end{itemize}
In particular, the limit spectral law $\mu_0$ in Assumption
\ref{assump:asymptotics}(d) can be very different
from the Marcenko-Pastur spectrum that would correspond to $X$ having
i.i.d.\ entries. This approximate orthogonality is implied by the following more
technical convex concentration property, which is discussed further
in \cite{vu2015random,adamczak2015note}. We prove this result in Appendix
\ref{appendix:inputisgood}.

\begin{proposition}\label{prop:inputisgood}
Let $X=(\x_1,\ldots,\x_n) \in \R^{d_0 \times n}$,
where $\x_1,\ldots,\x_n$ are independent training samples satisfying
$\E[\x_\a]=0$ and $\E[\|\x_\a\|^2]=1$. Suppose, for some
constant $c_0>0$, that $d_0 \geq c_0n$, and each vector $\sqrt{d_0}\cdot \x_\a$
satifies the convex concentration property
\[\P\Big[\big|\varphi(\sqrt{d_0} \cdot \x_\a)-\E \varphi(\sqrt{d_0} \cdot \x_\a)
\big| \geq t\Big] \leq 2e^{-c_0t^2}\]
for every $t>0$ and every 1-Lipschitz convex function $\varphi:\R^{d_0} \to \R$.
Then for any $k>0$, with probability $1-n^{-k}$,
$X$ is $(\sqrt{\frac{C\log n}{d_0}},B)$-orthonormal
for some $C,B>0$ depending only on $c_0,k$.
\end{proposition}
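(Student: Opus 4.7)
The plan is to verify the four defining conditions of $(\eps,B)$-orthonormality separately, each by applying the convex concentration hypothesis to an appropriately chosen 1-Lipschitz convex functional and then union-bounding; throughout, I would use that convex Lipschitz concentration tensorizes to the concatenated random vector $(\sqrt{d_0}\x_1,\ldots,\sqrt{d_0}\x_n) \in \R^{d_0 n}$ under independence of the columns, as discussed in the cited references.

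For the column-norm bound $\big|\|\x_\a\|^2-1\big| \leq \eps$, the relevant functional is $\varphi(\y)=\|\y\|$, which is 1-Lipschitz and convex. Applied to $\sqrt{d_0}\x_\a$ with $t = C\sqrt{\log n}$, concentration together with $\E\|\sqrt{d_0}\x_\a\|^2 = d_0$ and the variance estimate $\Var(\|\sqrt{d_0}\x_\a\|) = O(1)$ (from integrating the sub-Gaussian tail) yields $\E\|\sqrt{d_0}\x_\a\| = \sqrt{d_0} - O(1/\sqrt{d_0})$, after which the identity $\|\x_\a\|^2-1 = (\|\x_\a\|-1)(\|\x_\a\|+1)$ converts the deviation bound on $\|\x_\a\|-1$ into $\big|\|\x_\a\|^2-1\big| \leq C'\sqrt{\log n/d_0}$. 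For the off-diagonal bound $|\x_\a^\top \x_\b| \leq \eps$, I would condition on $\x_\b$ and apply the hypothesis to the affine (hence convex) 1-Lipschitz map $\varphi(\y) = \y^\top\x_\b/\|\x_\b\|$ at $\sqrt{d_0}\x_\a$; since $\E[\x_\a]=0$ the relevant mean vanishes, and combining with the bound on $\|\x_\b\|$ from the previous step yields $|\x_\a^\top\x_\b| \leq C'\sqrt{\log n/d_0}$ uniformly in $\a \neq \b$ after a union bound over the $\binom{n}{2}$ pairs.

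For the operator-norm bound $\|X\| \leq B$, I would tensorize the hypothesis to the full $\sqrt{d_0}X$ and apply it to $X \mapsto \|X\v\|$, which for each fixed unit $\v \in \R^n$ is convex and 1-Lipschitz in Frobenius norm and satisfies $\E\|X\v\|^2 = \sum_\a v_\a^2\,\E\|\x_\a\|^2 = 1$. A standard $\tfrac{1}{2}$-net argument on the unit sphere of $\R^n$ with cardinality at most $5^n$, taking $t = C''\sqrt{n}$, then gives $\|X\| \leq 2(1 + C''/\sqrt{c_0}) = O(1)$ with exponentially small failure probability $e^{-cn}$, using $d_0 \geq c_0 n$ to absorb the entropy of the net. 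Finally, for the last condition, the variables $Y_\a := (\|\x_\a\|^2-1)^2$ are independent, bounded by $O(\log n/d_0)$ on the good event of the column-norm step, and have expectation $\E Y_\a = O(1/d_0)$ (again from the sub-Gaussian tail), so a Bernstein inequality for sums of independent non-negative bounded variables gives $\sum_\a Y_\a \leq B^2$ for a suitable constant $B$ with probability $1-n^{-k}$.

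The principal obstacle is the operator-norm step: summing per-column estimates alone would only give $\|X\| = O(\sqrt{n})$, and the improvement to a constant $B$ genuinely requires both the joint tensorization of convex Lipschitz concentration and the assumption $d_0 \gtrsim n$, coupled via the $\tfrac{1}{2}$-net on the unit sphere of $\R^n$. The remaining three steps are then routine manipulations of sub-Gaussian tail integrals combined with a Bernstein inequality, and the overall probability $1-n^{-k}$ follows by choosing the constant $C$ in the deviation bounds large enough before taking the union bound.
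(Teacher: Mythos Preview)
Your proposal is correct and follows the same four-step structure as the paper's proof, with only minor technical variations: the paper concentrates $\|\sqrt{d_0}\x_\a\|^2$ and $\|\sqrt{d_0}X\v\|^2$ directly via Adamczak's Hanson--Wright inequality for convex-concentrated vectors rather than first concentrating the norms, and for the fourth condition it bounds $\|\z\|$ (with $z_\a=\|\x_\a\|^2-1$) via a $\tfrac12$-net argument on the sub-exponential projections $\v^\top\z$ rather than your Bernstein inequality on $\sum z_\a^2$. Both routes are standard and lead to the same conclusion.
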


In Assumptions \ref{assump:asymptotics}(a) and (b), the scaling of $\theta$
and the conditions $\E[\sigma(\xi)]=0$ and $\E[\sigma^2(\xi)]=1$, together with
the parametrization (\ref{eq:NNfunc}),
ensure that all pre-activations have approximate mean 0 and variance 1.
This may be achieved in practice by batch
normalization \cite{ioffe2015batch}.
For $\xi \sim \N(0,1)$, we define the following constants associated
to $\sigma(x)$. We verify in Proposition \ref{prop:sigmaproperties} that under
Assumption \ref{assump:asymptotics}(b), we have
$b_\sigma^2 \leq 1 \leq a_\sigma$.
\begin{equation}\label{eq:qr}
b_\sigma=\E[\sigma'(\xi)], \quad a_\sigma=\E[\sigma'(\xi)^2],
\quad q_\ell=(b_\sigma^2)^{L-\ell}, \quad r_\ell=a_\sigma^{L-\ell},
\quad r_+=\sum_{\ell=0}^{L-1} r_\ell-q_\ell.
\end{equation}

\subsection{Spectrum of the Conjugate Kernel}\label{sec:CK}

Recall the Marcenko-Pastur map (\ref{eq:MPmap}).
Let $\mu_1,\mu_2,\mu_3,\ldots$ be the sequence of probability distributions on
$[0,\infty)$ defined recursively by
\begin{equation}\label{eq:muell}
\mu_\ell=\rho^\MP_{\gamma_\ell} \boxtimes \Big((1-b_\sigma^2)
+b_\sigma^2 \cdot \mu_{\ell-1}\Big).
\end{equation}
Here, $\mu_0$ is the input limit spectrum in Assumption
\ref{assump:asymptotics}(d), $b_\sigma$ is defined in (\ref{eq:qr}),
and $(1-b_\sigma^2)+b_\sigma^2 \cdot \mu$ denotes the translation and rescaling
of $\mu$ that is the
distribution of $(1-b_\sigma^2)+b_\sigma^2 \lambda$ when $\lambda \sim \mu$.

The following theorem shows that these distributions $\mu_1,\mu_2,\mu_3,\ldots$
are the asymptotic limits of the empirical eigenvalue distributions of the CK
across the layers. Thus, the limit distribution for each layer $\ell$
is a Marcenko-Pastur map of a translation and rescaling of that of the preceding
layer $\ell-1$.

\begin{theorem}\label{thm:CK}
Suppose Assumption \ref{assump:asymptotics} holds, and
define $\mu_1,\ldots,\mu_L$ by (\ref{eq:muell}). Then (marginally) for each
$\ell=1,\ldots,L$, we have $\limspec X_\ell^\top X_\ell=\mu_\ell$. In
particular,
\[\limspec K^\CK=\mu_L.\]
Furthermore, $\|K^\CK\| \leq C$ a.s.\ for a constant $C>0$ and all large $n$.
\end{theorem}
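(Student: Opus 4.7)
The plan is to prove Theorem~\ref{thm:CK} by induction on the layer index $\ell$, establishing simultaneously that $\limspec X_\ell^\top X_\ell = \mu_\ell$, that $X_\ell$ remains $(\eps_n',B')$-orthonormal for some controlled constants, and that $\|X_\ell^\top X_\ell\|\leq C_\ell$ almost surely for all large $n$. The base case $\ell=0$ is exactly Assumption~\ref{assump:asymptotics}(c)--(d). In the inductive step I would fix $X_{\ell-1}$ and treat $W_\ell$ as the only source of randomness, exploiting the fact that conditional on $X_{\ell-1}$, the columns of $W_\ell X_{\ell-1}$ are jointly Gaussian with cross-covariance matrix $X_{\ell-1}^\top X_{\ell-1}$, which by the inductive orthonormality is close to $\Id$ off-diagonal.

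The core computation is a Hermite linearization of $\sigma$. Writing $\sigma(x) = b_\sigma\,x + \sigma_\perp(x)$ with $\E[\sigma_\perp(\xi)\xi]=0$ and $\E[\sigma_\perp(\xi)^2] = 1-b_\sigma^2$ for $\xi \sim \N(0,1)$, a second-order expansion in the small off-diagonal correlation gives, for jointly Gaussian $(Z_\a,Z_\b)$ of covariance $\x_{\ell-1,\a}^\top\x_{\ell-1,\b}$,
\[\E[\sigma(Z_\a)\sigma(Z_\b)] = b_\sigma^2\,\x_{\ell-1,\a}^\top \x_{\ell-1,\b} + O\bigl((\x_{\ell-1,\a}^\top \x_{\ell-1,\b})^2\bigr),\qquad \a\neq\b,\]
while the diagonal $\E[\sigma(Z_\a)^2]=1$ up to a correction controlled by $|\|\x_{\ell-1,\a}\|^2-1|$. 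Using the inductive $(\eps_n',B')$-orthonormality together with $\eps_n n^{1/4}\to 0$, these remainders can be absorbed into negligible operator-norm perturbations, so that the conditional inter-neuron covariance of the columns of $\sigma(W_\ell X_{\ell-1})$ is well-approximated by
\[\Phi_{\ell-1} = (1-b_\sigma^2)\,\Id + b_\sigma^2\,X_{\ell-1}^\top X_{\ell-1},\qquad
\limspec \Phi_{\ell-1} = (1-b_\sigma^2) + b_\sigma^2\,\mu_{\ell-1}.\]

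With $\Phi_{\ell-1}$ identified, the remaining task is to show that $X_\ell^\top X_\ell = \frac{1}{d_\ell} \sigma(W_\ell X_{\ell-1})^\top \sigma(W_\ell X_{\ell-1})$ has the same limit spectrum as the sample covariance $\Phi_{\ell-1}^{1/2} Z^\top Z\,\Phi_{\ell-1}^{1/2}$ with $Z$ having i.i.d.\ $\N(0,1/d_\ell)$ entries. By the Marcenko-Pastur map (\ref{eq:MPmap}) this limit equals $\rho^\MP_{\gamma_\ell}\boxtimes\limspec \Phi_{\ell-1} = \mu_\ell$. This single-layer step follows the spirit of \cite{louart2018random,liao2018spectrum} and is carried out by a resolvent/leave-one-column-out expansion, whose error terms are controlled by the inductive approximate orthonormality. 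To close the induction, the orthonormality property must be propagated: using the Lipschitz bound on $\sigma$, Gaussian concentration of $\|\x_{\ell,\a}\|^2$ around $1$ and of $\x_{\ell,\a}^\top \x_{\ell,\b}$ around $0$ (via Stein's identity for the jointly Gaussian pre-activations), together with the standard bound $\|W_\ell\|/\sqrt{d_{\ell-1}}=O(1)$, yields an $(\eps_n',B')$-orthonormal statement for $X_\ell$ with $\eps_n' n^{1/4}\to 0$, and the same bounds deliver the operator-norm control $\|X_\ell^\top X_\ell\|\leq C_\ell$.

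The main obstacle is the single-layer Marcenko-Pastur step. The columns of $\sigma(W_\ell X_{\ell-1})$ are not independent (they share the random weights $W_\ell$) and are only approximately Gaussian, while the target covariance $\Phi_{\ell-1}$ is itself random through $X_{\ell-1}$. Justifying that all higher-order Hermite components of $\sigma$ contribute only through the isotropic additive piece $(1-b_\sigma^2)\Id$---without leaking into the noise of the resolvent equation---requires the careful Hermite expansion above together with a concentration argument showing that the quadratic remainders do not accumulate across the $\binom{n}{2}$ pairs. Once this is executed at a single layer under the $(\eps_n,B)$-orthonormal input hypothesis, iterating through $\ell=1,\ldots,L$ gives the full theorem.
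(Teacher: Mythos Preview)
Your overall strategy---induction on layers, approximate the conditional covariance by $(1-b_\sigma^2)\Id + b_\sigma^2 X_{\ell-1}^\top X_{\ell-1}$, run a single-layer Marcenko--Pastur argument, and propagate orthonormality---is exactly the route the paper takes. Two points need correction.

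First, the ``main obstacle'' you identify rests on a mis-reading of the dependence structure. Conditional on $X_{\ell-1}$, the \emph{rows} of $\sigma(W_\ell X_{\ell-1})$ are i.i.d.\ (the $i^{\text{th}}$ row is $\sigma(\w_i^\top X_{\ell-1})$ for i.i.d.\ Gaussian $\w_i$), so
\[
X_\ell^\top X_\ell=\frac{1}{d_\ell}\sum_{i=1}^{d_\ell}\sigma(\w_i^\top X_{\ell-1})^\top\sigma(\w_i^\top X_{\ell-1})
\]
is a genuine sample covariance of i.i.d.\ vectors with population covariance $\Phi_\ell=\E_\w[\sigma(\w^\top X_{\ell-1})^\top\sigma(\w^\top X_{\ell-1})]$. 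The leave-one-out is over rows (neurons), not columns (samples), and the non-independence of the columns plays no role in this step; this is precisely the mechanism in \cite{louart2018random} and in the paper's single-layer resolvent lemma. Once you see this, the obstacle disappears and no comparison to an auxiliary Gaussian $\Phi^{1/2}Z^\top Z\,\Phi^{1/2}$ is needed---the Marcenko--Pastur fixed-point equation is derived directly for $X_\ell^\top X_\ell$.

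Second, the covariance remainder cannot be ``absorbed into negligible operator-norm perturbations'' under the hypothesis $\eps_n n^{1/4}\to 0$. Your expansion gives off-diagonal errors $O(\eps_n^2)$ per entry, but there are $n(n-1)$ of them, and an $n\times n$ matrix with entries of this size can have operator norm of order $n\eps_n^2$, which under $\eps_n=o(n^{-1/4})$ is only $o(n^{1/2})$ and need not vanish. What \emph{does} vanish is the normalized Frobenius norm: $n^{-1}\|\Phi_\ell-\tilde\Phi_\ell\|_F^2\lesssim n\eps_n^4+\eps_n^2\to 0$. This suffices, since Stieltjes transforms are stable under such perturbations (write $A^{-1}-B^{-1}=A^{-1}(B-A)B^{-1}$ and apply Cauchy--Schwarz to the trace). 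The paper argues exactly this way; replace your operator-norm claim with the Frobenius one and the induction closes.
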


If $\sigma(x)$ is such that $b_\sigma=0$, then each distribution
$\mu_\ell$ is simply the Marcenko-Pastur law $\rho_{\gamma_\ell}^\MP$. This
special case was previously conjectured in \cite{pennington2017nonlinear}
and proven in \cite{benigni2019eigenvalue}, for input data $X$ with
i.i.d.\ entries. Note that for such non-linearities, the limiting CK spectrum
does not depend on the spectrum $\mu_0$ of the input data, and furthermore
$\mu_1=\ldots=\mu_L$ if the layers have the same width $d_1=\ldots=d_L$.
Implications of this for the network discrimination ability in classification
tasks and for learning performance have been discussed previously in \cite{couillet2016kernel,pennington2017nonlinear,louart2018random,liao2019inner,adlam2019random}.

To connect Theorem \ref{thm:CK} to our next result on the NTK, let us describe
the iteration (\ref{eq:muell})
more explicitly using a recursive sequence of fixed-point equations derived from
the Marcenko-Pastur equation (\ref{eq:MPeq}):
Let $m_\ell(z)$ be the Stieltjes transform of $\mu_\ell$, and define
\[\tilde{t}_\ell(z_{-1},z_\ell)=\lim_{n \to \infty} \frac{1}{n}
\Tr (z_{-1}\Id+z_\ell X_\ell^\top X_\ell)^{-1}
=\frac{1}{z_\ell}m_\ell\left(-\frac{z_{-1}}{z_\ell}\right).\]
Applying the Marcenko-Pastur equation (\ref{eq:MPeq}) to
$m_\ell(-z_{-1}/z_\ell)$, and introducing
$\tilde{s}_\ell(z_{-1},z_\ell)=[z_\ell(1-\gamma_\ell+\gamma_\ell z_{-1}
\tilde{t}_\ell(z_{-1},z_\ell))]^{-1}$, one may check that (\ref{eq:muell})
may be written as the pair of equations
\begin{align}
\tilde{t}_\ell(z_{-1},z_\ell)&=\tilde{t}_{\ell-1}\bigg(
z_{-1}+\frac{1-b_\sigma^2}{\tilde{s}_\ell(z_{-1},z_\ell)},\;
\frac{b_\sigma^2}{\tilde{s}_\ell(z_{-1},z_\ell)}\bigg),\label{eq:tildet}\\
\tilde{s}_\ell(z_{-1},z_\ell)
&=(1/z_\ell)+\gamma_\ell\Big(\tilde{s}_\ell(z_{-1},z_\ell)-z_{-1}
\tilde{s}_\ell(z_{-1},z_\ell)\tilde{t}_\ell(z_{-1},z_\ell)\Big),\label{eq:tildes}
\end{align}
where (\ref{eq:tildes}) is a rearrangement of the definition of
$\tilde{s}_\ell$. Applying (\ref{eq:tildet}) to substitute
$\tilde{t}_\ell(z_{-1},z_\ell)$ in (\ref{eq:tildes}), the equation (\ref{eq:tildes}) is a fixed-point equation that defines
$\tilde{s}_\ell$ in terms of $\tilde{t}_{\ell-1}$. Then (\ref{eq:tildet})
defines $\tilde{t}_\ell$ in terms of
$\tilde{s}_\ell$ and $\tilde{t}_{\ell-1}$. The limit Stieltjes transform for
$K^\CK$ is the specialization $m_\CK(z)=\tilde{t}_L(-z,1)$.

\subsection{Spectrum of the Neural Tangent Kernel}\label{sec:NTK}

In the neural network model (\ref{eq:NNfunc}), an
application of the chain rule yields an explicit form
\[K^\NTK=X_L^\top X_L+\sum_{\ell=1}^L (S_\ell^\top S_\ell) \odot
(X_{\ell-1}^\top X_{\ell-1})\]
for certain matrices $S_\ell \in \R^{d_\ell \times n}$,
where $\odot$ is the Hadamard (entrywise) product. We refer to Appendix
\ref{appendix:NTKapprox}
for the exact expression; see also \cite[Eq.\ (1.7)]{huang2019dynamics}.
Our spectral analysis of $K^\NTK$ relies on the following approximation,
which shows that the limit spectrum of $K^\NTK$ is equivalent to a linear
combination of the CK matrices $X_0^\top X_0,\ldots,X_L^\top X_L$
and $\Id$. We prove this in Appendix \ref{appendix:NTKapprox}.

\begin{lemma}\label{lemma:NTKapprox}
Under Assumption \ref{assump:asymptotics}, letting $r_+$ and $q_\ell$ be as
defined in (\ref{eq:qr}),
\[\limspec K^\NTK=\limspec
\Big(r_+\Id+X_L^\top X_L+\sum_{\ell=0}^{L-1}
q_\ell X_\ell^\top X_\ell\Big).\]
\end{lemma}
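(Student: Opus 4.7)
My plan starts from the chain-rule identity (derived in Appendix \ref{appendix:NTKapprox})
\[K^\NTK = X_L^\top X_L + \sum_{\ell=1}^L (S_\ell^\top S_\ell) \odot (X_{\ell-1}^\top X_{\ell-1}),\]
where each $S_\ell\in\R^{d_\ell\times n}$ is the back-propagated sensitivity matrix at layer $\ell$. I will show, for each $\ell$, that
\[\bigl\|(S_\ell^\top S_\ell)\odot(X_{\ell-1}^\top X_{\ell-1}) - \bigl((r_{\ell-1}-q_{\ell-1})\Id + q_{\ell-1} X_{\ell-1}^\top X_{\ell-1}\bigr)\bigr\| \to 0 \quad \text{a.s.}\]
Because $\sum_{\ell=1}^L(r_{\ell-1}-q_{\ell-1})=r_+$ and $\sum_{\ell=1}^L q_{\ell-1} X_{\ell-1}^\top X_{\ell-1}=\sum_{\ell=0}^{L-1}q_\ell X_\ell^\top X_\ell$, summing over $\ell$ and adding $X_L^\top X_L$ exhibits $K^\NTK$ as an operator-norm negligible perturbation of the target matrix $r_+\Id + X_L^\top X_L + \sum_{\ell=0}^{L-1} q_\ell X_\ell^\top X_\ell$, and equality of spectral limits follows via Weyl's inequality (or, equivalently, Stieltjes-transform continuity).

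The analytic core is uniform entrywise concentration of $S_\ell^\top S_\ell$. Writing $(S_\ell^\top S_\ell)_{\alpha\beta}=\w^\top M_{\alpha\beta}\w/d_L$ with $M_{\alpha\beta}$ an alternating product of matrices $W_{m+1}W_{m+1}^\top/d_m$ and diagonal matrices $\diag(\sigma'((W_m X_{m-1})_{\cdot,\alpha})\odot \sigma'((W_m X_{m-1})_{\cdot,\beta}))$ over $m=\ell,\ldots,L-1$, Hanson--Wright concentration in $\w$ (conditional on all other weights) reduces matters to $\tr M_{\alpha\beta}/d_L$. A layerwise Gaussian-conditioning argument parallel to the proof of Theorem \ref{thm:CK} finishes: on the diagonal ($\alpha=\beta$) each layer contributes $\E[\sigma'(\xi)^2]=a_\sigma$, yielding $\tr M_{\alpha\alpha}/d_L \to a_\sigma^{L-\ell+1}=r_{\ell-1}$; off-diagonal ($\alpha\neq\beta$) the relevant pair $((W_m X_{m-1})_{i\alpha},(W_m X_{m-1})_{i\beta})$ is jointly Gaussian with cross-covariance controlled by $(X_{m-1}^\top X_{m-1})_{\alpha\beta}\to 0$ (propagated through the layers as in the CK analysis), so each layer contributes $\E[\sigma'(\xi)]^2=b_\sigma^2$, yielding $\tr M_{\alpha\beta}/d_L \to (b_\sigma^2)^{L-\ell+1}=q_{\ell-1}$. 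Union bounds over the $n^2$ entries upgrade this to the decomposition $S_\ell^\top S_\ell = r_{\ell-1}\Id + q_{\ell-1}(\mathbf{1}\mathbf{1}^\top-\Id) + E_\ell$ with $\max_{\alpha,\beta}|(E_\ell)_{\alpha\beta}|\to 0$ a.s.

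Taking Hadamard product with $X_{\ell-1}^\top X_{\ell-1}$, the first two pieces evaluate to $r_{\ell-1}\diag(X_{\ell-1}^\top X_{\ell-1}) + q_{\ell-1}(X_{\ell-1}^\top X_{\ell-1}-\diag(X_{\ell-1}^\top X_{\ell-1}))$, differing from the desired $(r_{\ell-1}-q_{\ell-1})\Id + q_{\ell-1} X_{\ell-1}^\top X_{\ell-1}$ only by the diagonal matrix $(r_{\ell-1}-q_{\ell-1})(\diag(X_{\ell-1}^\top X_{\ell-1})-\Id)$, whose operator norm is $\max_\alpha |\|(X_{\ell-1})_\alpha\|^2-1| \to 0$ by the same propagated orthogonality. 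What remains is the Hadamard residual $E_\ell\odot(X_{\ell-1}^\top X_{\ell-1})$, and this is where I expect the main obstacle: Hadamard products cooperate poorly with operator norms in general, so controlling this term requires exploiting the joint structure of both factors. The plan is to split $X_{\ell-1}^\top X_{\ell-1}=D+N$ into its diagonal and off-diagonal parts, bound $\|E_\ell\odot D\|\leq \|E_\ell\|_{\max}\cdot\|D\|$ directly, and bound $\|E_\ell\odot N\|$ via a Schur or Frobenius inequality that exploits simultaneously $\|E_\ell\|_{\max}\to 0$ and $\max_{\alpha\neq\beta}|N_{\alpha\beta}|=o(1)$. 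Upstream of this, the delicate point is making the entrywise concentration of $(S_\ell^\top S_\ell)_{\alpha\beta}$ quantitative enough (with subpolynomial-in-$n$ error bounds) to survive the union bound over $n^2$ pairs with a margin compatible with what the Hadamard residual bound requires.
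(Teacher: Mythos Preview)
Your entrywise analysis of $S_\ell^\top S_\ell$ is essentially the paper's Corollary~\ref{cor:Sapprox}: Hanson--Wright in $\w$ followed by a layer-by-layer reduction (Lemma~\ref{lemma:onelayerapprox}) gives $|{\s_\a^\ell}^\top\s_\b^\ell-q_{\ell-1}|\le C\eps_n$ for $\a\neq\b$ and $|\|\s_\a^\ell\|^2-r_{\ell-1}|\le C\eps_n$, uniformly over $\a,\b$ with a union bound. So the decomposition $S_\ell^\top S_\ell=r_{\ell-1}\Id+q_{\ell-1}(\1\1^\top-\Id)+E_\ell$ with $\|E_\ell\|_{\max}\le C\eps_n$ is exactly right.

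The gap is in the passage from entrywise control to spectral equivalence. You aim for operator-norm convergence, but under Assumption~\ref{assump:asymptotics}(c) we only have $\eps_n n^{1/4}\to 0$, i.e.\ $\eps_n=o(n^{-1/4})$. The Hadamard residual $E_\ell\odot N$ (with $N$ the off-diagonal part of $X_{\ell-1}^\top X_{\ell-1}$) has entries of size $O(\eps_n^2)$, so any Frobenius-type bound gives $\|E_\ell\odot N\|\le\|E_\ell\odot N\|_F\le Cn\eps_n^2$, which need not vanish (take $\eps_n=n^{-0.3}$). No Schur-product inequality rescues this: $E_\ell$ is not positive semidefinite, and you have no control on $\|E_\ell\|$ beyond $\|E_\ell\|_F\le Cn\eps_n$. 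So the claim $\|(S_\ell^\top S_\ell)\odot(X_{\ell-1}^\top X_{\ell-1})-\text{target}\|\to 0$ is not provable from your inputs, and Weyl's inequality is unavailable.

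The paper's fix is to weaken the target: it is enough that
\[
\frac{1}{n}\Bigl\|K^\NTK-\Bigl(r_+\Id+X_L^\top X_L+\sum_{\ell=0}^{L-1}q_\ell X_\ell^\top X_\ell\Bigr)\Bigr\|_F^2\to 0,
\]
since this already forces the Stieltjes transforms to agree (Proposition~\ref{prop:specapprox}, via $|\tr(M-z\Id)^{-1}-\tr(\wM-z\Id)^{-1}|\le (\Im z)^{-2}\,n^{-1/2}\|M-\wM\|_F$). With your entrywise bounds---off-diagonal difference $O(\eps_n^2)$, diagonal difference $O(\eps_n)$---one gets $n^{-1}\|\cdot\|_F^2\le C(n\eps_n^4+\eps_n^2)\to 0$ precisely because $\eps_n n^{1/4}\to 0$. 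This is Lemma~\ref{lemma:NTKFapprox}. So the correct route keeps all of your concentration work but replaces the operator-norm endgame by the normalized-Frobenius criterion; Weyl is not needed, and the ``main obstacle'' you anticipated for $E_\ell\odot N$ disappears.
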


By this lemma, if $b_\sigma=0$, then $q_0=\ldots=q_{L-1}=0$ and
the limit spectrum of $K^\NTK$ reduces to
the limit spectrum of $r_+\Id+X_L^\top X_L$ which is a translation of
$\rho_{\gamma_L}^{\MP}$
described in Theorem \ref{thm:CK}. Thus we assume in the following that
$b_\sigma \neq 0$. Our next result provides an analytic description of the limit spectrum of $K^\NTK$, by extending
(\ref{eq:tildet},\ref{eq:tildes}) to characterize
the trace of rational functions of $X_0^\top X_0,\ldots,X_L^\top X_L$ and $\Id$.

Denote the closed lower-half complex plane with 0 removed as
$\C^*=\overline{\C^-} \setminus \{0\}$.
For $\ell=0,1,2,\ldots$, we define recursively
two sequences of functions
\begin{align*}
t_\ell&:(\C^- \times \R^\ell \times \C^*) \times \C^{\ell+2} \to \C,
& (\z,\w) \mapsto t_\ell(\z,\w)\\
s_\ell&:\C^- \times \R^\ell \times \C^* \to \C^+,
& \z \mapsto s_\ell(\z).
\end{align*}
where $\z=(z_{-1},z_0,\ldots,z_\ell) \in \C^- \times \R^\ell \times \C^*$
and $\w=(w_{-1},w_0,\ldots,w_\ell) \in \C^{\ell+2}$.
We will define these functions such that $t_\ell(\z,\w)$ will be the value of
\[\lim_{n \to \infty} n^{-1}\Tr
(z_{-1}\Id+z_0 X_0^\top X_0+\ldots+z_\ell
X_\ell^\top X_\ell)^{-1}(w_{-1}\Id+w_0 X_0^\top X_0+\ldots+w_\ell
X_\ell^\top X_\ell).\]
For $\ell=0$, we define the first function $t_0$ by
\begin{equation}\label{eq:t0}
t_0\Big((z_{-1},z_0),(w_{-1},w_0)\Big)
=\int \frac{w_{-1}+w_0x}{z_{-1}+z_0x} d\mu_0(x)
\end{equation}
For $\ell \geq 1$, we then define the functions
$s_\ell$ and $t_\ell$ recursively by
\begin{align}
s_\ell(\z)&=(1/z_\ell)
+\gamma_\ell t_{\ell-1}\big(\z_\prev(s_\ell(\z),\z),\,(1-b_\sigma^2,0,
\ldots,0,b_\sigma^2)\big),\label{eq:sl}\\
t_\ell(\z,\w)&=(w_\ell/z_\ell)
+t_{\ell-1}\big(\z_\prev(s_\ell(\z),\z),\,\w_\prev\big)\label{eq:tl}
\end{align}
where we write as shorthand
\begin{align}
\z_\prev(s_\ell(\z),\z)
& \equiv \left(z_{-1}+\frac{1-b_\sigma^2}{s_\ell(\z)},
z_0,\ldots,z_{\ell-2},z_{\ell-1}+\frac{b_\sigma^2}{s_\ell(\z)}\right)
\in \C^- \times \R^{\ell-1} \times \C^*,\label{eq:zprev}\\
\w_\prev& \equiv (w_{-1},\ldots,w_{\ell-1})-(w_\ell/z_\ell) \cdot 
(z_{-1},\ldots,z_{\ell-1}) \in \C^{\ell+1}.\label{eq:wprev}
\end{align}

\begin{proposition}\label{prop:swelldefined}
Suppose $b_\sigma\neq 0$. For each $\ell \geq 1$ and any $\z \in \C^- \times
\R^\ell \times \C^*$, there is a unique solution $s_\ell(\z) \in
\C^+$ to the fixed-point equation (\ref{eq:sl}).
\end{proposition}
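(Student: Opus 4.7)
My plan is to proceed by induction on $\ell \ge 1$, carrying as a strengthened inductive hypothesis the following Herglotz-type positivity: for the weight vector $\w^\star := (1-b_\sigma^2, 0, \ldots, 0, b_\sigma^2)$, the value $t_{\ell'}(\z, \w^\star)$ has strictly positive imaginary part whenever the first and last complex coordinates of $\z$ both lie in $\C^-$. This positivity at level $\ell-1$ is exactly what makes the fixed-point equation (\ref{eq:sl}) well-posed at level $\ell$.

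For the base case, I verify the positivity of $t_0$ directly from its integral form (\ref{eq:t0}): when $z_{-1}, z_0 \in \C^-$, the denominator $z_{-1}+z_0 x$ has strictly negative imaginary part for every $x \ge 0$; the numerator $1-b_\sigma^2 + b_\sigma^2 x$ is non-negative (using $b_\sigma^2 \in (0,1]$ from Proposition \ref{prop:sigmaproperties}); and $\mu_0$ is a probability measure on $[0,\infty)$ with mean $1$ (by the orthonormality assumption on $X$), hence not a point mass at $0$. Together these yield $\Im t_0(\z, \w^\star) > 0$.

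For the inductive step, set $\Phi(s) := 1/z_\ell + \gamma_\ell t_{\ell-1}(\z_\prev(s, \z), \w^\star)$. For any $s \in \C^+$, the assumption $b_\sigma \ne 0$ places both $z_{-1} + (1-b_\sigma^2)/s$ and $z_{\ell-1} + b_\sigma^2/s$ strictly in $\C^-$, so the inductive positivity gives $\Im t_{\ell-1}(\z_\prev(s,\z), \w^\star) > 0$; combined with $\Im(1/z_\ell) \ge 0$ (since $z_\ell \in \C^*$), $\Phi$ is a holomorphic self-map of $\C^+$. Moreover $\Phi$ is uniformly bounded on $\C^+$: as $|s| \to \infty$ the arguments of $t_{\ell-1}$ converge to $\z$, while as $s \to 0$ in $\C^+$ the divergent shifts $\pm C/s$ make the denominators in $t_{\ell-1}$ blow up so that $t_{\ell-1} \to 0$. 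Existence of a fixed point then follows from the Nevanlinna representation of the bounded Herglotz function $s \mapsto t_{\ell-1}(\z_\prev(s, \z), \w^\star)$, which reduces (\ref{eq:sl}) to a Marcenko--Pastur-type scalar equation known to have a unique $\C^+$ solution; uniqueness is alternatively immediate from the Denjoy--Wolff theorem, since $\Phi$ is a bounded, non-identity holomorphic self-map of $\C^+$.

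The main obstacle is closing the induction, i.e., propagating the positivity from $t_{\ell-1}$ to $t_\ell$ through (\ref{eq:tl}). The substitution $\w \mapsto \w_\prev$ in (\ref{eq:wprev}) generally produces a weight vector no longer of the form $\w^\star$, so the inductive hypothesis must be strengthened to a broader positivity class of $\w$'s closed under $\w \mapsto \w_\prev$—naturally, those corresponding to positive semidefinite ``observables'' $W$ in the underlying Stieltjes-trace interpretation $t_\ell \approx n^{-1}\Tr(Z^{-1}W)$—and one must verify that this class is indeed preserved by the recursion. This enlarged inductive statement, together with the sign-tracking for the $(w_\ell/z_\ell)$ boundary term appearing in (\ref{eq:tl}), is the technical heart of the argument.
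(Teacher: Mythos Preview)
Your approach is genuinely different from the paper's, and the obstacle you name at the end is exactly why a purely analytic induction of this kind is hard to close.

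The paper does not prove well-posedness of (\ref{eq:sl}) from the recursion alone. Proposition \ref{prop:swelldefined} is instead established inside the inductive proof of Lemma \ref{lemma:NTKextended}, jointly with the trace identity (\ref{eq:tliscorrect}), by leaning on the random-matrix realization at every step. For existence: with $A=\sum_{k=0}^{\ell-1} z_k X_k^\top X_k$, $\alpha=z_\ell$, $z=-z_{-1}$, one forms the finite-$n$ quantity $\bar{s}=\alpha^{-1}+(n/d_\ell)\,\E_{W_\ell}[\tr R\,\Phi_\ell]$, shows $\Im\bar{s}\ge c>0$ deterministically via Proposition \ref{prop:basic}, extracts a subsequential limit $s_0\in\C^+$, and uses the layer-$(\ell{-}1)$ trace limit (the inductive hypothesis on part (b)) to verify that $s_0$ solves (\ref{eq:sl}). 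For uniqueness: the finite-$n$ map $f_n(s)=\alpha^{-1}+\gamma\,\tr(A+s^{-1}\Phi_\ell-z\Id)^{-1}\Phi_\ell$ satisfies the contraction-type bound of Lemma \ref{lemma:fixedpointfiniten}(b), with strict contraction whenever $\tr S\Phi_\ell S^*>0$, and Corollary \ref{cor:fixedpointunique} passes this to the limit using that $\Phi_\ell$ has a nondegenerate limit spectrum. The Herglotz positivity you are trying to propagate is thus never carried as a property of the abstract functions $t_\ell$; it is read off from the positive-semidefiniteness of the concrete matrix $\Phi_\ell$ at finite $n$. Your proposed enlargement to ``positive semidefinite observables'' is reaching for exactly this structure, but the paper gets it for free from the model rather than having to identify and close an invariant class of $\w$'s under (\ref{eq:wprev}).

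Two further points in your sketch would need work even before the main obstacle. Your Nevanlinna reduction is in the wrong variable: $s$ enters $\z_\prev(s,\z)$ only through $1/s$, so the Herglotz structure is naturally in $u=1/s\in\C^-$, and turning (\ref{eq:sl}) into a genuine Marcenko--Pastur scalar equation requires making that substitution precise rather than asserting it. And Denjoy--Wolff (equivalently Schwarz--Pick) yields uniqueness only once an interior fixed point is already in hand; boundedness of $\Phi$ does not by itself rule out a boundary Denjoy--Wolff point---for instance when $z_\ell\in\R\setminus\{0\}$, your own asymptotic gives $\Phi(s)\to 1/z_\ell\in\R$ as $s\to 0$ in $\C^+$---so you would still need a uniform lower bound on $\Im\Phi$. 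The paper's route sidesteps both issues by producing the fixed point as a limit that lies strictly in $\C^+$ a priori.
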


Hence, (\ref{eq:sl}) defines the function $s_\ell$ in terms of the function
$t_{\ell-1}$, and this is then used in
(\ref{eq:tl}) to define $t_\ell$. This is illustrated diagrammatically as
\[\begin{matrix} 
t_0 & \rightarrow & t_1 & \rightarrow & t_2 &\rightarrow &\cdots\\
\downarrow& \large\nearrow & \downarrow& \large\nearrow & \downarrow& \large\nearrow & \\
s_1 &  & s_2 &  & s_3 & & \\
\end{matrix}\]


Specializing the function $t_L$ for the last layer $L$ to the values
$(z_{-1},z_0,\ldots,z_{L-1},z_L)=(r_+,q_0,\ldots,q_{L-1},1)$ and
$(w_{-1},w_0,\ldots,w_L)=(1,0,\ldots,0)$, we obtain an analytic description for
the limit spectrum of $K^\NTK$ via its Stieltjes transform.

\begin{theorem}\label{thm:NTK}
Suppose $b_\sigma\ne 0$. Under Assumption \ref{assump:asymptotics}, for any fixed values
$z_{-1},z_0,\ldots,z_L \in \R$ where $z_L \neq 0$, we have
$\limspec (z_{-1}\Id+z_0X_0^\top X_0+\ldots+z_L X_L^\top X_L)=\nu$
where $\nu$ is the probability distribution with Stieltjes transform
$m_\nu(z)=t_L((-z+z_{-1},z_0,\ldots,z_L),(1,0,\ldots,0))$.

In particular, $\limspec K^\NTK$ is the probability distribution with Stieltjes transform
\[m_\NTK(z)=t_L\Big((-z+r_+,q_0,\ldots,q_{L-1},1),(1,0,\ldots,0)\Big).\]
Furthermore, $\|K^\NTK\| \leq C$ a.s.\ for a constant $C>0$ and all large $n$.
\end{theorem}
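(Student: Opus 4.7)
The plan is to reduce the NTK statement to the more general joint-spectrum statement about $t_L(\z,\w)$, and prove that statement by induction on the layer index. First, by Lemma~\ref{lemma:NTKapprox}, the limit spectrum of $K^\NTK$ agrees with that of the linear combination $r_+\Id + X_L^\top X_L + \sum_{\ell=0}^{L-1} q_\ell X_\ell^\top X_\ell$, so it suffices to prove the first assertion for arbitrary fixed coefficients $(z_{-1},\ldots,z_L)$; the NTK corollary follows by plugging in $(z_{-1},z_0,\ldots,z_{L-1},z_L)=(r_+,q_0,\ldots,q_{L-1},1)$ and observing that $m_\nu(z)$ evaluated in the first block equals the Stieltjes transform $n^{-1}\Tr(K^\NTK-z\Id)^{-1}$ in the limit. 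I would then prove by induction on $\ell$ that for every admissible $\z \in \C^-\times\R^\ell\times\C^*$ and every $\w \in \C^{\ell+2}$,
\[
\frac{1}{n}\Tr\Big(z_{-1}\Id+\sum_{k=0}^\ell z_k X_k^\top X_k\Big)^{-1}\Big(w_{-1}\Id+\sum_{k=0}^\ell w_k X_k^\top X_k\Big) \xrightarrow{\text{a.s.}} t_\ell(\z,\w).
\]
The base case $\ell=0$ is immediate from Assumption~\ref{assump:asymptotics}(d) together with the boundedness of $\|X_0\|$ ensured by $(\eps_n,B)$-orthonormality.

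For the inductive step, I would condition on $X_0,\ldots,X_{\ell-1}$ and exploit the same Gaussian-equivalent representation that underlies the proof of Theorem~\ref{thm:CK}: writing $X_\ell=d_\ell^{-1/2}\sigma(W_\ell X_{\ell-1})$, the approximate pairwise orthogonality of the columns of $X_{\ell-1}$ (which propagates across layers by the same mechanism used for the CK) implies via a Hermite/Taylor expansion that the conditional inner-product structure satisfies
\[
\E\big[X_\ell^\top X_\ell \,\big|\, X_{\ell-1}\big] \approx \Phi_\ell := (1-b_\sigma^2)\Id + b_\sigma^2 X_{\ell-1}^\top X_{\ell-1},
\]
and that $X_\ell^\top X_\ell$ behaves spectrally like $d_\ell^{-1}\Phi_\ell^{1/2}Z^\top Z\,\Phi_\ell^{1/2}$ for a Gaussian matrix $Z\in\R^{d_\ell\times n}$ independent of $X_{\ell-1}$. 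I would then apply the deterministic-equivalent / Marcenko-Pastur self-consistent resolvent identity to the sample-covariance form $A+z_\ell X_\ell^\top X_\ell$, where $A=z_{-1}\Id+\sum_{k<\ell}z_k X_k^\top X_k$. This identity produces a scalar quantity $s_\ell(\z)$ that solves the fixed-point equation (\ref{eq:sl}) (the inner trace in that equation being exactly the MP functional of $\Phi_\ell$ evaluated using $t_{\ell-1}$), and replaces the resolvent of $A+z_\ell X_\ell^\top X_\ell$ by the resolvent of $A+(1-b_\sigma^2)/s_\ell \cdot \Id + b_\sigma^2/s_\ell \cdot X_{\ell-1}^\top X_{\ell-1}$, which is precisely the shift encoded in $\z_\prev$.

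To handle the numerator $w_{-1}\Id+\sum_{k\leq\ell}w_k X_k^\top X_k$, I would use the algebraic identity
\[
\frac{w_{-1}+\sum_{k\leq\ell}w_k X_k^\top X_k}{z_{-1}+\sum_{k\leq\ell}z_k X_k^\top X_k} = \frac{w_\ell}{z_\ell}\cdot\Id + \frac{(w_{-1}-\tfrac{w_\ell}{z_\ell}z_{-1})\Id+\sum_{k<\ell}(w_k-\tfrac{w_\ell}{z_\ell}z_k)X_k^\top X_k}{z_{-1}+\sum_{k\leq\ell}z_k X_k^\top X_k},
\]
which explains the $(w_\ell/z_\ell)$ term and the shift $\w_\prev$ in (\ref{eq:tl}) and (\ref{eq:wprev}); after this rearrangement only numerators supported on layers $<\ell$ remain, and the deterministic-equivalent argument above closes the induction. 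Proposition~\ref{prop:swelldefined} supplies the well-definedness of $s_\ell$ needed to evaluate the limiting formulas.

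The operator-norm bound $\|K^\NTK\|\le C$ would follow from Lemma~\ref{lemma:NTKapprox} plus inductive bounds $\|X_\ell\|\le C_\ell$; the latter follow from $\|W_\ell\|\le C\sqrt{d_\ell}$ (standard Gaussian operator-norm concentration), Lipschitzness of $\sigma$, and $\|X_{\ell-1}\|\le C_{\ell-1}$. The main obstacle is justifying, in a form sufficiently uniform to yield pointwise convergence of the joint-resolvent trace, the replacement of $X_\ell^\top X_\ell$ by its Gaussian-equivalent sample-covariance proxy inside a resolvent that also contains $X_0^\top X_0,\ldots,X_{\ell-1}^\top X_{\ell-1}$ as non-commuting operators: unlike Theorem~\ref{thm:CK} this cannot be done marginally in $\ell$, and one needs concentration estimates for linear and quadratic forms of the rows of $X_\ell$ against a data-dependent resolvent whose spectral norm must itself be controlled via the propagated orthogonality condition.
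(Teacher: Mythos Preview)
Your inductive architecture is exactly the paper's: the reduction to the linear combination via Lemma~\ref{lemma:NTKapprox}, the induction on $\ell$ for the joint trace $t_\ell(\z,\w)$, the base case from $\limspec X_0^\top X_0=\mu_0$, the conditioning on $X_0,\ldots,X_{\ell-1}$, and the algebraic split of the numerator that produces the $w_\ell/z_\ell$ term and $\w_\prev$ are all identical to the argument in Lemma~\ref{lemma:NTKextended}. The difference is in how the resolvent step is executed. You propose to first replace $X_\ell^\top X_\ell$ by a Gaussian-equivalent $d_\ell^{-1}\Phi_\ell^{1/2}Z^\top Z\Phi_\ell^{1/2}$ and then invoke standard Marcenko--Pastur theory; the paper never performs such a replacement. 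Instead it works directly with the nonlinear features via a leave-one-out/Sherman--Morrison resolvent analysis (Lemma~\ref{lemma:fixedpoint}, proved through Lemma~\ref{lemma:resolvent_remainder}), using only concentration of quadratic forms $\vx_i^\top Y\vx_i$ in the rows of $X_\ell$ (Lemma~\ref{lemma:quadform}). This sidesteps precisely the obstacle you flag: there is no need to justify a Gaussian-equivalence inside a resolvent containing the non-commuting $X_0^\top X_0,\ldots,X_{\ell-1}^\top X_{\ell-1}$, because those are absorbed into a deterministic $A$ and only Lipschitz concentration for a single row of $\sigma(W_\ell X_{\ell-1})$ against a bounded deterministic $Y$ is required. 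The replacement of the true conditional second moment $\Phi_\ell$ by $\tilde\Phi_\ell=(1-b_\sigma^2)\Id+b_\sigma^2 X_{\ell-1}^\top X_{\ell-1}$ is then done a posteriori via the Frobenius bound $n^{-1}\|\Phi_\ell-\tilde\Phi_\ell\|_F^2\to 0$.

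Two smaller gaps in your outline: First, Lemma~\ref{lemma:NTKapprox} is only a $\limspec$ statement and does not by itself transfer an operator-norm bound; the paper proves $\|K^\NTK\|\le C$ separately in Lemma~\ref{lemma:NTKFapprox} via the explicit Hadamard-product form of $K^\NTK$ and the inequality $\|(S_\ell^\top S_\ell)\odot(X_{\ell-1}^\top X_{\ell-1})\|\le \max_\a\|\s_\a^\ell\|^2\cdot\|X_{\ell-1}^\top X_{\ell-1}\|$. Second, your proposed bound $\|X_\ell\|\le C_\ell$ from $\|W_\ell\|\le C\sqrt{d_\ell}$ and Lipschitzness of $\sigma$ does not work as stated, because an entrywise Lipschitz map does not control operator norm; the paper instead establishes $\|X_\ell\|\le C$ through the propagation of $(\eps,B)$-orthonormality (Lemma~\ref{lemma:normbound}), which uses sub-Gaussian concentration for the rows of $\sigma(WX)$.
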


We remark that Theorem \ref{thm:NTK} encompasses the previous result
in Theorem \ref{thm:CK} for $K^\CK=X_L^\top X_L$, by specializing to
$(z_0,\ldots,z_{L-1},z_L)=(0,\ldots,0,1)$. Under this specialization,
$s_\ell(z_{-1},0,\ldots,0,z_\ell)=\tilde{s}_\ell(z_{-1},z_\ell)$,
$t_\ell((z_{-1},0,\ldots,0,z_\ell),(1,0,\ldots,0))=\tilde{t}_\ell(z_{-1},z_\ell)$, and (\ref{eq:sl},\ref{eq:tl}) reduce to 
(\ref{eq:tildet},\ref{eq:tildes}).

\subsection{Extension to multi-dimensional outputs and rescaled
parametrizations}\label{sec:extensions}

Theorem \ref{thm:NTK} pertains to a network with scalar outputs,
under the ``NTK-parametrization'' of network weights in (\ref{eq:NNfunc}). As
neural network models used in practice often have multi-dimensional outputs and
may be parametrized differently for backpropagation, we state here the
extension of the preceding result to a network with $k$-dimensional output and a
general scaling of the weights.

Consider the model
\begin{equation}\label{eq:NNfuncmulti}
f_\theta(\x)=W_{L+1}^\top \frac{1}{\sqrt{d_L}} \sigma\bigg(
W_L \frac{1}{\sqrt{d_{L-1}}}\sigma \Big(\ldots
\frac{1}{\sqrt{d_2}}\sigma\Big(W_2\frac{1}{\sqrt{d_1}} \sigma(W_1 \x)
\Big)\Big)\bigg) \in \R^k
\end{equation}
where $W_{L+1}^\top \in \R^{k \times d_L}$. We write the coordinates of
$f_\theta$ as $(f_\theta^1,\ldots,f_\theta^k)$, and the vectorized output
for all training samples $X \in \R^{d_0 \times n}$ as
$f_\theta(X)=(f_\theta^1(X),\ldots,f_\theta^k(X)) \in \R^{nk}$. We consider
the NTK
\begin{equation}\label{eq:NTKmulti}
K^\NTK=\sum_{\ell=1}^{L+1}
\tau_\ell \Big(\nabla_{W_\ell} f_\theta(X)\Big)^\top
\Big(\nabla_{W_\ell} f_\theta(X)\Big) \in \R^{nk \times nk}.
\end{equation}
For $\tau_1=\ldots=\tau_{L+1}=1$, this is a flattening of the NTK
defined in \cite{jacot2018neural}, and we recall briefly its derivation 
from gradient-flow training in Appendix \ref{appendix:NTKmultiderivation}.
We consider general constants $\tau_1,\ldots,\tau_{L+1}>0$ to allow for
a different learning rate for each weight matrix $W_\ell$, which may arise from
backpropagation in the model (\ref{eq:NNfuncmulti}) using a
parametrization with different scalings of the weights.

\begin{theorem}\label{thm:NTKmulti}
Fix any $k \geq 1$. Suppose Assumption \ref{assump:asymptotics} holds, and $b_\sigma \neq 0$. Then $\|K^\NTK\| \leq C$ a.s.\ for a constant $C>0$
and all large $n$, and $\limspec K^\NTK$ is the probability
distribution with Stieltjes transform
\[m_\NTK(z)=t_L\Big((-z+\tau \cdot r_+,\;\tau_1q_0,\ldots,\tau_Lq_{L-1},\tau_{L+1}),(1,0,\ldots,0)\Big),
\quad \tau \cdot r_+ \equiv \sum_{\ell=0}^{L-1} \tau_{\ell+1}(r_\ell-q_\ell).\]
\end{theorem}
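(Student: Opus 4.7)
The plan is to reduce Theorem \ref{thm:NTKmulti} to Theorem \ref{thm:NTK} by exploiting the approximate block structure that the $k$-dimensional output imposes on $K^\NTK$. Organizing $K^\NTK \in \R^{nk \times nk}$ as a $k \times k$ array of $n \times n$ blocks $K_{j_1 j_2}$ indexed by output coordinates, and writing $w_j \in \R^{d_L}$ for the $j$-th column of $W_{L+1}$, the chain rule gives for $\ell \leq L$
\[
(K_{j_1 j_2})_{\alpha\beta}^{(\ell)} = \tau_\ell \cdot w_{j_1}^\top A^\ell_{\alpha\beta}\,w_{j_2}\cdot (X_{\ell-1}^\top X_{\ell-1})_{\alpha\beta},
\]
where $A^\ell_{\alpha\beta}$ depends only on $W_1,\ldots,W_L$ and the inputs; the $\ell=L+1$ contribution is exactly $\tau_{L+1}\delta_{j_1 j_2}(X_L^\top X_L)_{\alpha\beta}$. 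Since the columns $w_j$ are i.i.d.\ $\N(0,I_{d_L})$ and are the only source of coupling across output coordinates, averaging them out should yield a block-diagonal surrogate $\tilde K^\NTK = I_k \otimes K_\star$, with $K_\star$ equal (up to concentration) to the scalar NTK of (\ref{eq:NNfunc}) in which each $W_\ell$-derivative term is weighted by $\tau_\ell$.

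First I would show $m_{K^\NTK}(z) - m_{\tilde K^\NTK}(z) \to 0$ a.s.\ for $z\in\C^+$. On diagonal blocks ($j_1=j_2$), Hanson--Wright-type concentration replaces each quadratic form $w_j^\top A^\ell_{\alpha\beta} w_j$ by $\Tr A^\ell_{\alpha\beta}$, reproducing the scalar computation underlying Lemma \ref{lemma:NTKapprox} with the extra $\tau_\ell$ factors. On off-diagonal blocks ($j_1\neq j_2$), each entry is a mean-zero bilinear form of order $d_L^{-1/2}$, but the naive operator-norm bound on such an $n\times n$ block is $O(\sqrt{n/d_L})=O(1)$ and does not vanish. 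I would therefore work at the resolvent level via
\[
m_{K^\NTK}(z) - m_{\tilde K^\NTK}(z) = -\frac{1}{nk}\Tr\big[(K^\NTK-zI)^{-1}(K^\NTK - \tilde K^\NTK)(\tilde K^\NTK-zI)^{-1}\big]
\]
and control the right-hand side via Gaussian integration by parts in $W_{L+1}$, using mean-zero cancellation, independence across the $w_j$, and the fact that $k$ is fixed.

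Once the reduction to $\tilde K^\NTK$ is in hand, the $k$-fold identical block structure gives $\limspec\tilde K^\NTK=\limspec K_\star$, since eigenvalues of $\tilde K^\NTK$ occur with multiplicity $k$ in dimension $nk$. A layerwise rerun of the proof of Lemma \ref{lemma:NTKapprox}, tracking the $\tau_\ell$ factors, shows that the $\ell$-th derivative term contributes $\tau_\ell[(r_{\ell-1}-q_{\ell-1})\Id + q_{\ell-1}X_{\ell-1}^\top X_{\ell-1}]$ to $K_\star$ for $\ell\leq L$ and $\tau_{L+1}X_L^\top X_L$ for $\ell=L+1$, so
\[
\limspec K_\star = \limspec\Big(\tau\cdot r_+\,\Id + \tau_{L+1}X_L^\top X_L + \sum_{\ell=0}^{L-1}\tau_{\ell+1}q_\ell\,X_\ell^\top X_\ell\Big),
\]
with $\tau\cdot r_+=\sum_{\ell=0}^{L-1}\tau_{\ell+1}(r_\ell-q_\ell)$. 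Applying Theorem \ref{thm:NTK} to this linear combination with coefficients $(-z+\tau\cdot r_+,\tau_1 q_0,\ldots,\tau_L q_{L-1},\tau_{L+1})$ then yields the claimed Stieltjes transform $m_\NTK(z)=t_L\big((-z+\tau\cdot r_+,\tau_1 q_0,\ldots,\tau_L q_{L-1},\tau_{L+1}),(1,0,\ldots,0)\big)$. The hardest step is the Stieltjes-transform control of the off-diagonal blocks: they do not vanish in operator norm, so a careful resolvent perturbation exploiting the mean-zero Gaussian structure of $W_{L+1}$ is required, and a separate Gaussian concentration argument is needed to upgrade $\|\tilde K^\NTK\|\leq C$ to $\|K^\NTK\|\leq C$ a.s.
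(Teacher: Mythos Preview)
Your overall strategy---the $k\times k$ block decomposition, reduction to a block-diagonal surrogate, and invocation of Theorem~\ref{thm:NTK} on the resulting $\tau$-weighted linear combination---matches the paper's proof exactly. The difference is in how you control the off-diagonal blocks $K_{ij}^\NTK$ for $i\neq j$, and here you have made the problem substantially harder than it is.

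You assert that the off-diagonal blocks do not vanish in any useful norm and therefore propose a resolvent-level argument via Gaussian integration by parts in $W_{L+1}$. But the paper shows that $\frac{1}{n}\|K_{ij}^\NTK\|_F^2\to 0$ directly from entrywise bounds, and then applies Proposition~\ref{prop:specapprox} (normalized-Frobenius control of Stieltjes transforms). The key observation you are missing is the Hadamard-product damping: each block is a sum over $\ell$ of $\tau_\ell\,(S_\ell^{i\top}S_\ell^j)\odot(X_{\ell-1}^\top X_{\ell-1})$. Since $\w_i$ and $\w_j$ are independent, Hanson--Wright gives $|\s_\a^{\ell,i\top}\s_\b^{\ell,j}|\leq C\eps_n$ uniformly in $\a,\b$ (no trace term survives), and $(\eps_n,B)$-orthonormality of $X_{\ell-1}$ gives $|(X_{\ell-1}^\top X_{\ell-1})_{\a\b}|\leq\eps_n$ for $\a\neq\b$. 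Thus the off-diagonal entries of $K_{ij}^\NTK$ are $O(\eps_n^2)$ and the diagonal entries $O(\eps_n)$, so $\|K_{ij}^\NTK\|_F^2\leq Cn^2\eps_n^4+Cn\eps_n^2=o(n)$ by the assumption $\eps_n n^{1/4}\to 0$. No resolvent perturbation or Gaussian IBP is needed.

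Two smaller points. First, the operator-norm bound $\|K^\NTK\|\leq C$ follows immediately from the Hadamard-product inequality $\|(S_\ell^{i\top}S_\ell^j)\odot(X_{\ell-1}^\top X_{\ell-1})\|\leq(\max_\a\|\s_\a^{\ell,i}\|^2)^{1/2}(\max_\a\|\s_\a^{\ell,j}\|^2)^{1/2}\|X_{\ell-1}^\top X_{\ell-1}\|$ together with Corollary~\ref{cor:Sapprox}; no separate concentration is required. Second, your surrogate $I_k\otimes K_\star$ is not quite right: the diagonal blocks $K_{11}^\NTK,\ldots,K_{kk}^\NTK$ are distinct random matrices (each depends on its own $\w_i$), not $k$ copies of one $K_\star$. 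The paper observes instead that each $K_{ii}^\NTK$ has the \emph{same} limiting spectrum (by the scalar argument with $\w_i$ in the role of $\w$), and the limit of the block-diagonal matrix is the equally weighted mixture of these identical limits.
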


\section{Experiments}\label{sec:experiments}

We describe in Appendix \ref{appendix:computation} an algorithm to numerically
compute the limit spectral densities of Theorem \ref{thm:NTK}.
The computational cost is independent of the dimensions
$(n,d_0,\ldots,d_L)$, and each limit density below was computed within a few
seconds on our laptop computer. Using this procedure, we investigate
the accuracy of the theoretical predictions of Theorems \ref{thm:CK}
and \ref{thm:NTK}. Finally, we conclude by examining the spectra
of $K^\CK$ and $K^\NTK$ after network training.

\subsection{Simulated Gaussian training data}

\begin{figure}
\xincludegraphics[width=0.33\textwidth,label=a)]{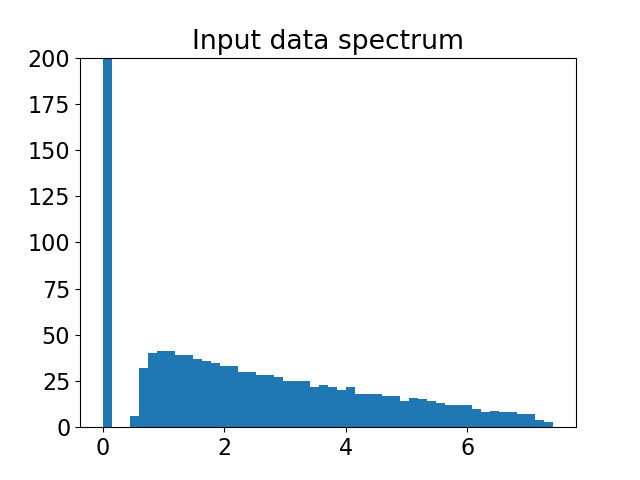}%
\xincludegraphics[width=0.33\textwidth,label=b)]{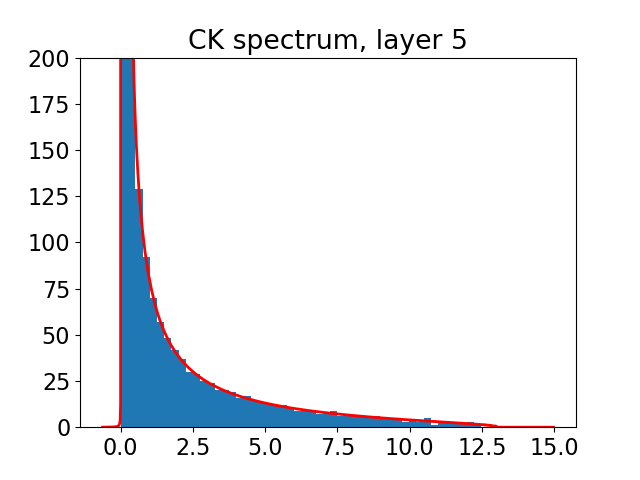}%
\xincludegraphics[width=0.33\textwidth,label=c)]{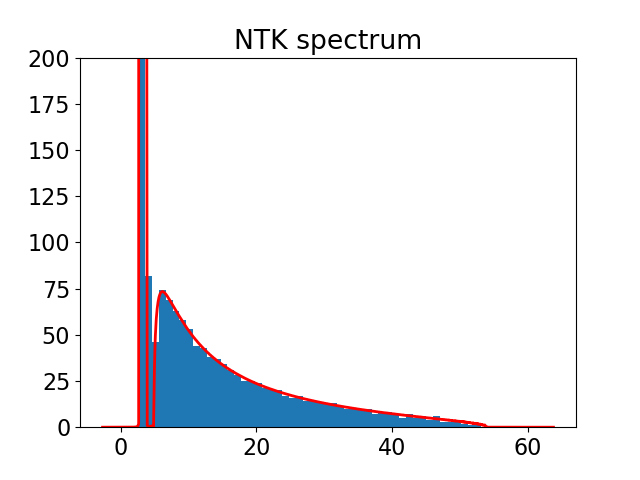}
\caption{Simulated spectra at initialization
for i.i.d.\ Gaussian training samples in a 5-layer
network, for (a) the input gram matrix $X_0^\top X_0$,
(b) $K^\CK=X_5^\top X_5$, and 
(c) $K^\NTK$. Numerical computations of the limit spectra in
Theorems \ref{thm:CK} and \ref{thm:NTK} are superimposed in red.}
\label{fig:gaussian}
\end{figure}

We consider $n=3000$ training samples with i.i.d.\ $\N(0,1/d_0)$ entries,
input dimension $d_0=1000$, and $L=5$ hidden layers of dimensions
$d_1=\ldots=d_5=6000$. We take $\sigma(x) \propto \tan^{-1}(x)$, normalized
so that $\E[\sigma(\xi)^2]=1$. A close agreement between the observed and limit
spectra is displayed in Figure \ref{fig:gaussian},
for both $K^\CK$ and $K^\NTK$ at initialization. The CK spectra for intermediate layers are depicted in Appendix
\ref{appendix:alllayers}.

We highlight two qualitative phenomena:
The spectral distribution of the NTK (at initialization) is separated from 0, as
explained by the $\Id$ component in Lemma \ref{lemma:NTKapprox}. Across layers $\ell=1,\ldots,L$, there is a merging of the spectral bulk
components of the CK, and an extension of its spectral support.

\subsection{CIFAR-10 training data}\label{sec:CIFAR}

\begin{figure}
\xincludegraphics[width=0.33\textwidth,label=a)]{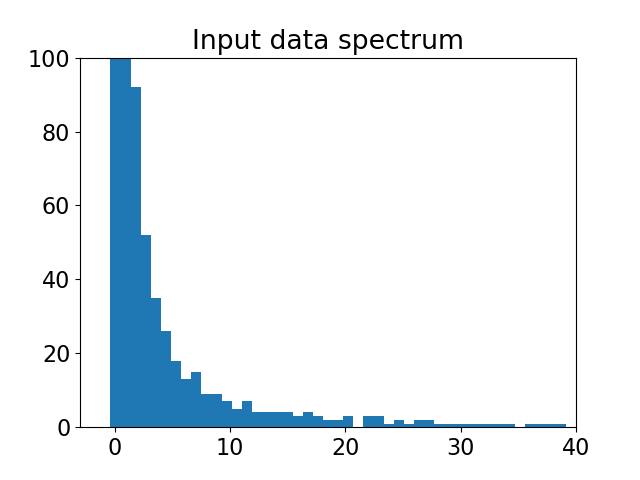}%
\xincludegraphics[width=0.33\textwidth,label=b)]{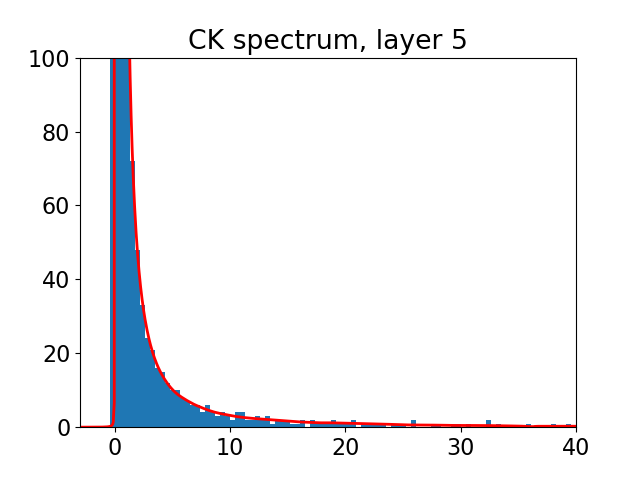}%
\xincludegraphics[width=0.33\textwidth,label=c)]{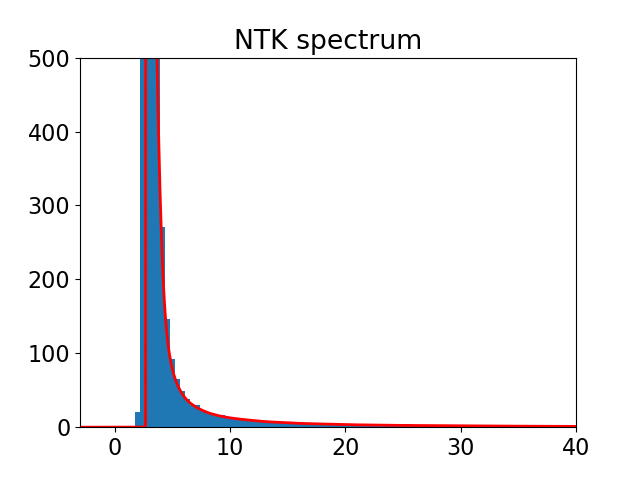}
\caption{Same plots as Figure \ref{fig:gaussian}, for 5000 training 
samples from CIFAR-10 with 10 leading PCs removed.}
\label{fig:CIFAR}
\end{figure}

We consider $n=5000$ samples randomly selected from the CIFAR-10
training set \cite{krizhevsky2009learning}, with
input dimension $d_0=3072$, and $L=5$ hidden layers of
dimensions $d_1=\ldots=d_5=10000$. Strong principal component
structure may cause the training samples to have large pairwise inner-products, which is shown in Appendix \ref{appendix:orthogonality_for_data}.
Thus, we pre-process the training samples by removing the leading 10 PCs---a few
example images before and after this removal are depicted in
Appendix \ref{appendix:CIFAR_images}.
A close agreement between the observed and limit
spectra is displayed in Figure \ref{fig:CIFAR}, for both $K^\CK$ and $K^\NTK$.
Results without removing these leading 10 PCs are
presented in Appendix \ref{appendix:CIFARraw}, where there is close
agreement for $K^\CK$ but a deviation from the theoretical prediction
for $K^\NTK$. This suggests that the approximation in
Lemma \ref{lemma:NTKapprox} is sensitive to large but low-rank
perturbations of $X$. 

\subsection{CK and NTK spectra after training}\label{sec:training}

\begin{figure}
\xincludegraphics[width=0.33\textwidth,label=a)]{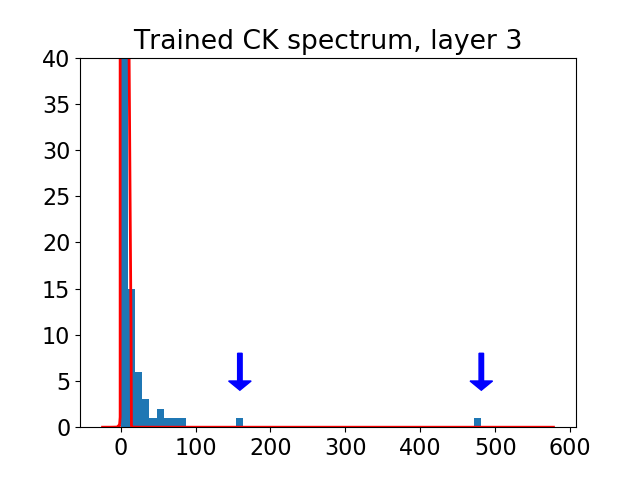}%
\xincludegraphics[width=0.33\textwidth,label=b)]{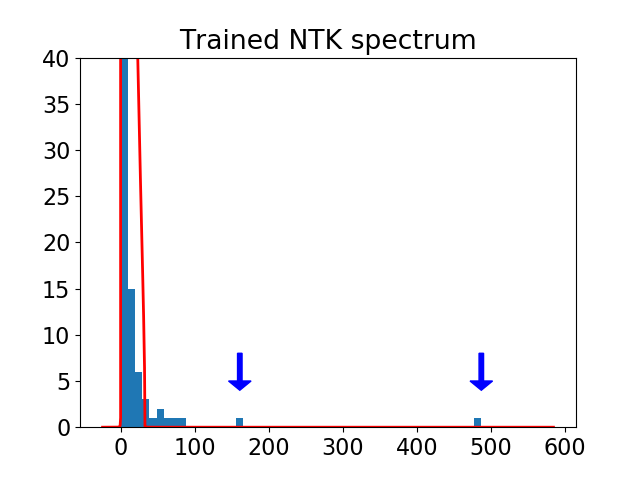}%
\xincludegraphics[width=0.33\textwidth,label=c)]{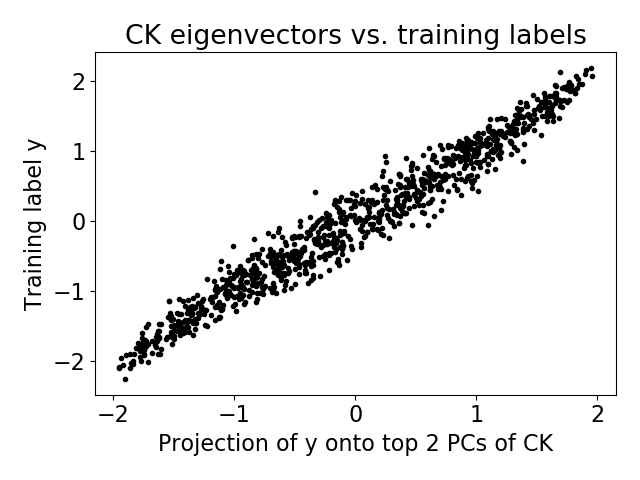}
\caption{Eigenvalues of (a) $K^\CK$ and (b) $K^\NTK$ in a \emph{trained}
network, for training labels $y_\a=\sigma(\x_\a^\top \v)$. The limit spectra at
random initialization of weights are shown in red. Large outlier eigenvalues,
indicated by blue arrows, emerge over training. (c) The projection of training
labels onto the first 2 eigenvectors of the trained matrix $K^\CK$ accounts
for 96\% of the training label variance.}\label{fig:training}
\end{figure}

We consider $n=1000$ training samples $(\x_\a,y_\a)$, with $\x_\a$ uniformly
distributed on the unit sphere of dimension $d_0=800$,
and $y_\a=\sigma(\x_\a^\top \v)$ for a fixed $\v \in \R^{d_0}$ on the sphere of
radius $\sqrt{d_0}$. We train a 3-layer network with widths $d_1=d_2=d_3=800$,
without biases, using the Adam optimizer in Keras with learning rate $0.01$,
batch size 32, and 300 training epochs. The final mean-squared training
error is $10^{-4}$, and the test-sample prediction-$R^2$ is 0.81.

Figure \ref{fig:training} depicts the spectra of $K^\CK$ and $K^\NTK$ for the 
trained weights $\theta$. Intermediate layers are shown in Appendix
\ref{appendix:alllayers}. We observe that the bulk spectra of $K^\CK$ and
$K^\NTK$ are elongated from their random initializations.
Furthermore, large outlier eigenvalues emerge in both $K^\CK$ and $K^\NTK$
over training. The corresponding eigenvectors are highly predictive of the
training labels $\y$, suggesting the emergence of these eigenvectors
as the primary mechanism of training in this example.

We describe in Appendix \ref{appendix:CIFARtraining} a second
training example for a binary classification task on CIFAR-10, where similar
qualitative phenomena are observed for the trained $K^\CK$. This may suggest a
path to understanding the learning process of deep neural networks, for
future study.

\section{Conclusion}

We have provided analytic descriptions of the empirical
eigenvalue distributions of the Conjugate Kernel (CK) and Neural Tangent Kernel
(NTK) of large feedforward neural networks at random initialization,
under a general condition for the input samples. Our work uses techniques of
random matrix theory to provide an asymptotic analysis in a limiting regime
where network width grows linearly with sample size. The resulting limit
spectra exhibit ``high-dimensional noise'' that is not present in analyses
of the infinite-width limit alone. This type of high-dimensional limit has
been previously studied for networks with a single hidden layer, and our work
develops new proof techniques to extend these characterizations to multi-layer
networks, in a systematic and recursive form.

Our results contribute to the theoretical understanding of neural networks
in two ways: First, an increasingly large body of literature studies the
training and generalization errors of linear regression models using random
features derived from the neural network CK and NTK. In the linear-width setting
of our current paper, such results are typically based on asymptotic
approximations for the Stieltjes transforms and resolvents of the associated
kernel and covariance matrices. Our work develops theoretical tools that may
enable the extension of these studies to random features regression models that 
are derived from deep networks with possibly many layers.

Second, the linear-width asymptotic regime may provide a simple
setting for studying feature learning and neural network training
outside of the ``lazy'' regime, and which is arguably closer to the operating
regimes of neural network models in some practical applications. Our
experimental results suggest interesting phenomena in the spectral evolutions of
the CK and NTK that may potentially arise during training in this
regime, and our theoretical characterizations of their spectra for random
weights may provide a first step towards the analysis of these phenomena.

\section*{Broader Impact}

This work performs theoretical analysis that aims to extend our understanding
of training and generalization in multi-layer neural networks. A better
theoretical understanding of training and generalization in these models may
ultimately help us to (1) understand the mechanisms
by which social biases may be propagated by artificial systems, and prevent
this from occurring, and (2) increase the robustness and fault-tolerance of
artificial systems built on such models.

\begin{ack}
This research is supported in part by NSF Grant DMS-1916198. We would like to
thank John Lafferty and Ganlin Song for helpful discussions regarding
the Neural Tangent Kernel.
\end{ack}

{\small

\bibliographystyle{plain}
\bibliography{NTKspectrum}}

\begin{thebibliography}{10}

\bibitem{adamczak2015note}
Radoslaw Adamczak.
\newblock A note on the {H}anson-{W}right inequality for random vectors with
  dependencies.
\newblock {\em Electronic Communications in Probability}, 20, 2015.

\bibitem{adamczak2015concentration}
Rados{\l}aw Adamczak and Pawe{\l} Wolff.
\newblock Concentration inequalities for non-{L}ipschitz functions with bounded
  derivatives of higher order.
\newblock {\em Probability Theory and Related Fields}, 162(3-4):531--586, 2015.

\bibitem{adlam2019random}
Ben Adlam, Jake Levinson, and Jeffrey Pennington.
\newblock A random matrix perspective on mixtures of nonlinearities for deep
  learning.
\newblock {\em arXiv preprint arXiv:1912.00827}, 2019.

\bibitem{adlam2020neural}
Ben Adlam and Jeffrey Pennington.
\newblock The neural tangent kernel in high dimensions: Triple descent and a
  multi-scale theory of generalization.
\newblock In {\em International Conference on Machine Learning}, 2020.

\bibitem{advani2017high}
Madhu~S Advani and Andrew~M Saxe.
\newblock High-dimensional dynamics of generalization error in neural networks.
\newblock {\em arXiv preprint arXiv:1710.03667}, 2017.

\bibitem{allen2019convergence}
Zeyuan Allen-Zhu, Yuanzhi Li, and Zhao Song.
\newblock A convergence theory for deep learning via over-parameterization.
\newblock In {\em International Conference on Machine Learning}, pages
  242--252, 2019.

\bibitem{arora2019exact}
Sanjeev Arora, Simon~S Du, Wei Hu, Zhiyuan Li, Russ~R Salakhutdinov, and
  Ruosong Wang.
\newblock On exact computation with an infinitely wide neural net.
\newblock In {\em Advances in Neural Information Processing Systems}, pages
  8139--8148, 2019.

\bibitem{benaych2010surprising}
Florent Benaych-Georges.
\newblock On a surprising relation between the marchenko-pastur law,
  rectangular and square free convolutions.
\newblock {\em Annales de l'IHP Probabilit{\'e}s et statistiques},
  46(3):644--652, 2010.

\bibitem{benigni2019eigenvalue}
Lucas Benigni and Sandrine P{\'e}ch{\'e}.
\newblock Eigenvalue distribution of nonlinear models of random matrices.
\newblock {\em arXiv preprint arXiv:1904.03090}, 2019.

\bibitem{boucheron2013concentration}
S.~Boucheron, G.~Lugosi, and P.~Massart.
\newblock {\em Concentration Inequalities: A Nonasymptotic Theory of
  Independence}.
\newblock OUP Oxford, 2013.

\bibitem{chizat2019lazy}
Lenaic Chizat, Edouard Oyallon, and Francis Bach.
\newblock On lazy training in differentiable programming.
\newblock In {\em Advances in Neural Information Processing Systems}, pages
  2933--2943, 2019.

\bibitem{cho2009kernel}
Youngmin Cho and Lawrence~K Saul.
\newblock Kernel methods for deep learning.
\newblock In {\em Advances in Neural Information Processing Systems}, pages
  342--350, 2009.

\bibitem{couillet2016kernel}
Romain Couillet and Florent Benaych-Georges.
\newblock Kernel spectral clustering of large dimensional data.
\newblock {\em Electronic Journal of Statistics}, 10(1):1393--1454, 2016.

\bibitem{daniely2016toward}
Amit Daniely, Roy Frostig, and Yoram Singer.
\newblock Toward deeper understanding of neural networks: The power of
  initialization and a dual view on expressivity.
\newblock In {\em Advances In Neural Information Processing Systems}, pages
  2253--2261, 2016.

\bibitem{dascoli2020double}
St{\'e}phane d'Ascoli, Maria Refinetti, Giulio Biroli, and Florent Krzakala.
\newblock Double trouble in double descent: {B}ias and variance(s) in the lazy
  regime.
\newblock In {\em International Conference on Machine Learning}, 2020.

\bibitem{dicker2016ridge}
Lee~H Dicker.
\newblock Ridge regression and asymptotic minimax estimation over spheres of
  growing dimension.
\newblock {\em Bernoulli}, 22(1):1--37, 2016.

\bibitem{dobriban2018high}
Edgar Dobriban and Stefan Wager.
\newblock High-dimensional asymptotics of prediction: Ridge regression and
  classification.
\newblock {\em The Annals of Statistics}, 46(1):247--279, 2018.

\bibitem{du2019gradientb}
Simon~S Du, Jason~D Lee, Haochuan Li, Liwei Wang, and Xiyu Zhai.
\newblock Gradient descent finds global minima of deep neural networks.
\newblock In {\em International Conference on Machine Learning}, 2019.

\bibitem{du2019gradienta}
Simon~S Du, Xiyu Zhai, Barnabas Poczos, and Aarti Singh.
\newblock Gradient descent provably optimizes over-parameterized neural
  networks.
\newblock In {\em International Conference on Learning Representations}, 2019.

\bibitem{dyer2019asymptotics}
Ethan Dyer and Guy Gur-Ari.
\newblock Asymptotics of wide networks from {F}eynman diagrams.
\newblock {\em arXiv preprint arXiv:1909.11304}, 2019.

\bibitem{geiger2019jamming}
Mario Geiger, Stefano Spigler, St{\'e}phane d'Ascoli, Levent Sagun, Marco
  Baity-Jesi, Giulio Biroli, and Matthieu Wyart.
\newblock Jamming transition as a paradigm to understand the loss landscape of
  deep neural networks.
\newblock {\em Physical Review E}, 100(1):012115, 2019.

\bibitem{ghorbani2019limitations}
Behrooz Ghorbani, Song Mei, Theodor Misiakiewicz, and Andrea Montanari.
\newblock Limitations of lazy training of two-layers neural network.
\newblock In {\em Advances in Neural Information Processing Systems}, pages
  9108--9118, 2019.

\bibitem{ghorbani2019linearized}
Behrooz Ghorbani, Song Mei, Theodor Misiakiewicz, and Andrea Montanari.
\newblock Linearized two-layers neural networks in high dimension.
\newblock {\em arXiv preprint arXiv:1904.12191}, 2019.

\bibitem{hastie2019surprises}
Trevor Hastie, Andrea Montanari, Saharon Rosset, and Ryan~J Tibshirani.
\newblock Surprises in high-dimensional ridgeless least squares interpolation.
\newblock {\em arXiv preprint arXiv:1903.08560}, 2019.

\bibitem{huang2019dynamics}
Jiaoyang Huang and Horng-Tzer Yau.
\newblock Dynamics of deep neural networks and neural tangent hierarchy.
\newblock {\em arXiv preprint arXiv:1909.08156}, 2019.

\bibitem{ioffe2015batch}
Sergey Ioffe and Christian Szegedy.
\newblock Batch normalization: Accelerating deep network training by reducing
  internal covariate shift.
\newblock In {\em International Conference on Machine Learning}, pages
  448--456, 2015.

\bibitem{jacot2018neural}
Arthur Jacot, Franck Gabriel, and Cl{\'e}ment Hongler.
\newblock Neural tangent kernel: {C}onvergence and generalization in neural
  networks.
\newblock In {\em Advances in Neural Information Processing Systems}, pages
  8571--8580, 2018.

\bibitem{jacot2019asymptotic}
Arthur Jacot, Franck Gabriel, and Cl{\'e}ment Hongler.
\newblock The asymptotic spectrum of the hessian of dnn throughout training.
\newblock {\em arXiv preprint arXiv:1910.02875}, 2019.

\bibitem{johnson1990matrix}
C.R. Johnson.
\newblock {\em Matrix Theory and Applications}.
\newblock AMS Short Course Lecture Notes. American Mathematical Society, 1990.

\bibitem{karakida2019universal}
Ryo Karakida, Shotaro Akaho, and Shun-ichi Amari.
\newblock Universal statistics of {F}isher information in deep neural networks:
  {M}ean field approach.
\newblock In {\em The 22nd International Conference on Artificial Intelligence
  and Statistics}, pages 1032--1041, 2019.

\bibitem{kasiviswanathan2019restricted}
Shiva~Prasad Kasiviswanathan and Mark Rudelson.
\newblock Restricted isometry property under high correlations.
\newblock {\em arXiv preprint arXiv:1904.05510}, 2019.

\bibitem{krizhevsky2009learning}
Alex Krizhevsky.
\newblock Learning multiple layers of features from tiny images.
\newblock 2009.

\bibitem{lee2018deep}
Jaehoon Lee, Yasaman Bahri, Roman Novak, Samuel~S Schoenholz, Jeffrey
  Pennington, and Jascha Sohl-Dickstein.
\newblock Deep neural networks as {G}aussian processes.
\newblock In {\em International Conference on Learning Representations}, 2018.

\bibitem{lee2019wide}
Jaehoon Lee, Lechao Xiao, Samuel Schoenholz, Yasaman Bahri, Roman Novak, Jascha
  Sohl-Dickstein, and Jeffrey Pennington.
\newblock Wide neural networks of any depth evolve as linear models under
  gradient descent.
\newblock In {\em Advances in Neural Information Processing Systems}, pages
  8570--8581, 2019.

\bibitem{liang2019risk}
Tengyuan Liang, Alexander Rakhlin, and Xiyu Zhai.
\newblock On the risk of minimum-norm interpolants and restricted lower
  isometry of kernels.
\newblock {\em arXiv preprint arXiv:1908.10292}, 2019.

\bibitem{liao2018dynamics}
Zhenyu Liao and Romain Couillet.
\newblock The dynamics of learning: A random matrix approach.
\newblock In {\em International Conference on Machine Learning}, 2018.

\bibitem{liao2018spectrum}
Zhenyu Liao and Romain Couillet.
\newblock On the spectrum of random features maps of high dimensional data.
\newblock In {\em International Conference on Machine Learning}, pages
  3063--3071, 2018.

\bibitem{liao2019inner}
Zhenyu Liao and Romain Couillet.
\newblock On inner-product kernels of high dimensional data.
\newblock In {\em 2019 IEEE 8th International Workshop on Computational
  Advances in Multi-Sensor Adaptive Processing (CAMSAP)}, pages 579--583. IEEE,
  2019.

\bibitem{louart2018random}
Cosme Louart, Zhenyu Liao, and Romain Couillet.
\newblock A random matrix approach to neural networks.
\newblock {\em The Annals of Applied Probability}, 28(2):1190--1248, 2018.

\bibitem{marchenko1967distribution}
Vladimir~Alexandrovich Marchenko and Leonid~Andreevich Pastur.
\newblock Distribution of eigenvalues for some sets of random matrices.
\newblock {\em Matematicheskii Sbornik}, 114(4):507--536, 1967.

\bibitem{matthews2018gaussian}
Alexander G de~G Matthews, Jiri Hron, Mark Rowland, Richard~E Turner, and
  Zoubin Ghahramani.
\newblock Gaussian process behaviour in wide deep neural networks.
\newblock In {\em International Conference on Learning Representations}, 2018.

\bibitem{mei2019generalization}
Song Mei and Andrea Montanari.
\newblock The generalization error of random features regression: Precise
  asymptotics and double descent curve.
\newblock {\em arXiv preprint arXiv:1908.05355}, 2019.

\bibitem{neal1995bayesian}
Radford~M Neal.
\newblock {\em Bayesian learning for neural networks}.
\newblock PhD thesis, University of Toronto, 1995.

\bibitem{peche2019note}
S~P{\'e}ch{\'e}.
\newblock A note on the pennington-worah distribution.
\newblock {\em Electronic Communications in Probability}, 24, 2019.

\bibitem{pennington2017geometry}
Jeffrey Pennington and Yasaman Bahri.
\newblock Geometry of neural network loss surfaces via random matrix theory.
\newblock In {\em International Conference on Machine Learning}, pages
  2798--2806, 2017.

\bibitem{pennington2017nonlinear}
Jeffrey Pennington and Pratik Worah.
\newblock Nonlinear random matrix theory for deep learning.
\newblock In {\em Advances in Neural Information Processing Systems}, pages
  2637--2646, 2017.

\bibitem{pennington2018spectrum}
Jeffrey Pennington and Pratik Worah.
\newblock The spectrum of the {F}isher information matrix of a
  single-hidden-layer neural network.
\newblock In {\em Advances in Neural Information Processing Systems}, pages
  5410--5419, 2018.

\bibitem{poole2016exponential}
Ben Poole, Subhaneil Lahiri, Maithra Raghu, Jascha Sohl-Dickstein, and Surya
  Ganguli.
\newblock Exponential expressivity in deep neural networks through transient
  chaos.
\newblock In {\em Advances in Neural Information Processing Systems}, pages
  3360--3368, 2016.

\bibitem{rahimi2008random}
Ali Rahimi and Benjamin Recht.
\newblock Random features for large-scale kernel machines.
\newblock In {\em Advances in Neural Information Processing Systems}, pages
  1177--1184, 2008.

\bibitem{rudelson2013hanson}
Mark Rudelson and Roman Vershynin.
\newblock Hanson-{W}right inequality and sub-gaussian concentration.
\newblock {\em Electronic Communications in Probability}, 18, 2013.

\bibitem{sagun2018empirical}
Levent Sagun, Utku Evci, V~Ugur Guney, Yann Dauphin, and Leon Bottou.
\newblock Empirical analysis of the hessian of over-parametrized neural
  networks.
\newblock In {\em International Conference on Learning Representations}, 2017.

\bibitem{schoenholz2017deep}
Samuel~S Schoenholz, Justin Gilmer, Surya Ganguli, and Jascha Sohl-Dickstein.
\newblock Deep information propagation.
\newblock In {\em International Conference on Learning Representations}, 2017.

\bibitem{vershynin2018high}
R.~Vershynin.
\newblock {\em High-Dimensional Probability: An Introduction with Applications
  in Data Science}.
\newblock Cambridge Series in Statistical and Probabilistic Mathematics.
  Cambridge University Press, 2018.

\bibitem{vershynin2010introduction}
Roman Vershynin.
\newblock Introduction to the non-asymptotic analysis of random matrices.
\newblock {\em arXiv preprint arXiv:1011.3027}, 2010.

\bibitem{vu2015random}
Van Vu and Ke~Wang.
\newblock Random weighted projections, random quadratic forms and random
  eigenvectors.
\newblock {\em Random Structures \& Algorithms}, 47(4):792--821, 2015.

\bibitem{williams1997computing}
Christopher~KI Williams.
\newblock Computing with infinite networks.
\newblock In {\em Advances in Neural Information Processing Systems}, pages
  295--301, 1997.

\bibitem{xiao2019disentangling}
Lechao Xiao, Jeffrey Pennington, and Samuel~S Schoenholz.
\newblock Disentangling trainability and generalization in deep learning.
\newblock {\em arXiv preprint arXiv:1912.13053}, 2019.

\bibitem{yang2019scaling}
Greg Yang.
\newblock Scaling limits of wide neural networks with weight sharing:
  {G}aussian process behavior, gradient independence, and neural tangent kernel
  derivation.
\newblock {\em arXiv preprint arXiv:1902.04760}, 2019.

\bibitem{yang2019fine}
Greg Yang and Hadi Salman.
\newblock A fine-grained spectral perspective on neural networks.
\newblock {\em arXiv preprint arXiv:1907.10599}, 2019.

\bibitem{yehudai2019power}
Gilad Yehudai and Ohad Shamir.
\newblock On the power and limitations of random features for understanding
  neural networks.
\newblock In {\em Advances in Neural Information Processing Systems}, pages
  6594--6604, 2019.

\end{thebibliography}

\newpage

\appendix

\section{Numerical solution of the fixed-point equations}\label{appendix:computation}

Theorem \ref{thm:NTK} characterizes the limit Stieltjes transform $m(z)$
of matrices such as $K^\CK$ and $K^\NTK$. By the
discussion in Section \ref{sec:stieltjes}, a numerical
approximation to the density functions of the corresponding 
spectral distributions may be obtained by computing $m(z)$ for $z=x+i\eta$,
across a fine grid of values $x \in \R$ and for a fixed small imaginary part
$\eta>0$. We describe here one possible approach for this computation.

To compute the limit spectrum for
$z_{-1}\Id+z_0X_0^\top X_0+\ldots+z_LX_L^\top X_L$
and general values $z_{-1},\ldots,z_L \in \R$, fix the spectral argument
$z=x+i\eta$ and denote
\[\z_L=(-z+z_{-1},z_0,\ldots,z_L),\; \z_{L-1}=\z_\prev(s_L(\z_L),\z_L),
\;\z_{L-2}=\z_\prev(s_{L-1}(\z_{L-1}),\z_{L-1}), \text{ etc.}\]
Here, for $s \in \C^+$ and $\z \in \C^- \times \R^\ell \times \C^*$, the
quantity
\[\z_\prev(s,\z)=\left(z_{-1}+\frac{1-b_\sigma^2}{s},z_0,\ldots,
z_{\ell-2},z_{\ell-1}+\frac{b_\sigma^2}{s}\right)
\in \C^- \times \R^{\ell-1} \times \C^*\]
is as defined in (\ref{eq:zprev}).
Denote $s_\ell \equiv s_\ell(\z_\ell)$ for each $\ell=1,\ldots,L$.
Observe that, if we are given $s_1,\ldots,s_L$, then the value
$t_\ell(\z_\ell,\w)$ may be directly computed from (\ref{eq:tl}),
for any $\ell \in \{0,\ldots,L\}$
and any vector $\w \in \C^{\ell+2}$. This is because
the fixed points needed to compute the arguments
$\z_\prev(s_\ell(\z_\ell),\z_\ell)$,
$\z_\prev(s_{\ell-1}(\z_{\ell-1}),\z_{\ell-1})$, etc.\ for the successive
evaluations of $t_\ell$, $t_{\ell-1}$, etc.\ are provided by this given
sequence $s_1,\ldots,s_L$.

Thus, we apply an iterative procedure of initializing
$s_1^{(0)},\ldots,s_L^{(0)} \in \C^+$, and computing the \emph{simultaneous}
updates $s_1^{(t+1)},\ldots,s_L^{(t+1)}$ using the previous values
$s_1^{(t)},\ldots,s_L^{(t)}$. That is, we iterate the following two steps:
\begin{enumerate}
\item Set $\z_L^{(t)}=\z_L$, and compute
$\z_{L-1}^{(t)}=\z_\prev(s_L^{(t)},\z_L^{(t)})$,
$\z_{L-2}^{(t)}=\z_\prev(s_{L-1}^{(t)},\z_{L-1}^{(t)})$, etc.
\item Compute an update $s_\ell^{(t+1)}$ for the value of $s_\ell(\z_\ell)$
and each $\ell=1,\ldots,L$,
using the right side of (\ref{eq:sl}) with $\z_\ell^{(t)}$
and $\z_{\ell-1}^{(t)} \equiv \z_\prev(s_\ell^{(t)},\z_\ell^{(t)})$
in place of $\z_\ell$ and $\z_\prev(s_\ell(\z_\ell),\z_\ell)$.
\end{enumerate}
After this iteration converges to fixed points $s_1^*,\ldots,s_L^*$,
we then compute $m(z)=t_L(\z_L,(1,0,\ldots,0))$ using (\ref{eq:tl}) and
these fixed points. For each successive value $z=x+i\eta$ along the grid of
values $x \in \R$, we initialize $s_1^{(0)},\ldots,s_L^{(0)}$ by linear
interpolation from the computed fixed points at the preceding two values of $x$
along this grid, for faster computation.

Note that for each value $z=x+i\eta$, if the above iteration converges to fixed
points $s_1^*,\ldots,s_L^* \in \C^+$, then this procedure
computes the correct value for $m(z)$: This is because, denoting
\[\z_{L-1}^*=\z_\prev(s_L^*,\z_L), \quad
\z_{L-2}^*=\z_\prev(s_{L-1}^*,\z_{L-1}^*), \quad \ldots,
\qquad \z_1^*=\z_\prev(s_2^*,\z_2^*),\]
it may be checked iteratively from (\ref{eq:sl},\ref{eq:tl}) and the
uniqueness guarantee of Proposition \ref{prop:swelldefined} that
$s_1^*=s_1(\z_1^*)$, then $s_2^*=s_2(\z_2^*)$, etc., and finally that
$s_L^*=s_L(\z_L)$. This then means that
$\z_{L-1}^*=\z_\prev(s_L(\z_L),\z_L)=\z_{L-1}$, then
$\z_{L-2}^*=\z_\prev(s_{L-1}(\z_{L-1}),\z_{L-1})=\z_{L-2}$, etc.,
and so $s_\ell^*=s_\ell(\z_\ell)$ for each $\ell$. Then this 
method computes the correct value for $m(z)=t_L(\z_L,(1,0,\ldots,0))$.

We have found in practice that the above iteration occasionally converges to
fixed points $s_1,\ldots,s_L$ not belonging to $\C^+$ (i.e.\ this is not a
mapping from $(\C^+)^L$ to $(\C^+)^L$). If this occurs, we randomly
re-initialize $s_1^{(0)},\ldots,s_L^{(0)} \in \C^+$, and we have found that
the method reaches the correct fixed point within a small number of random
initializations.

To clarify this approach, let us illustrate this computation in a
simple example: Consider $L=2$. Fix any grid value $x\in\R$ and $\eta>0$. An
approximate density function for the limit spectrum of $X_2^\top X_2$ at $x$ is
given by $\frac{1}{\pi}\Im t_2\left((-z,0,0,1),(1,0,0,0) \right)$,
where $z=x+i\eta$. Based on equations (\ref{eq:t0},\ref{eq:sl},\ref{eq:tl}),
\begin{align*}
	t_2\left((-z,0,0,1),(1,0,0,0) \right)
	&= t_1\left(\left(-z+\frac{1-b_\sigma^2}{s_2},0,\frac{b_\sigma^2}{s_2}\right),(1,0,0) \right)\\
	&=
t_0\left(\left(-z+\frac{1-b_\sigma^2}{s_2}+\frac{1-b_\sigma^2}{s_1},\frac{b_\sigma^2}{s_1}\right),(1,0)
\right)\\
	&=\int
\left(-z+\frac{1-b_\sigma^2}{s_2}+\frac{1-b_\sigma^2}{s_1}+\frac{b_\sigma^2}{s_1}x\right)^{-1}d\mu_0(x),
\end{align*}
where $s_1,s_2\in\C^+$ satisfy the fixed point equations
\begin{align}
	s_2&=1+\gamma_2s_2+\gamma_2t_0\left(\left(-z+\frac{1-b_\sigma^2}{s_2}+\frac{1-b_\sigma^2}{s_1},\frac{b_\sigma^2}{s_1}\right),(s_2z,0)\right)\label{eq:s_2}\\
	s_1&=\frac{s_2}{b_\sigma^2}+\gamma_1t_0\left(\left(-z+\frac{1-b_\sigma^2}{s_2}+\frac{1-b_\sigma^2}{s_1},\frac{b_\sigma^2}{s_1}\right),(1-b_\sigma^2,b_\sigma^2) \right).\label{eq:s_1}
\end{align}
We randomly initialize $s_1^{(0)},s_2^{(0)}\in\C^+$, and
update $s_1^{(t+1)},s_2^{(t+1)}$ simultaneously by substituting
$s_1=s_1^{(t)}$ and $s_2=s_2^{(t)}$ into the right side of (\ref{eq:s_2}) and
(\ref{eq:s_1}). We iterate this until convergence, and
then substitute into the above expression for $t_2((-z,0,0,1),(1,0,0,0))$ to
approximate the limit spectral density of $X_2^\top X_2$ at $x$.

\section{Proof of $(\eps,B)$-orthonormality for independent input
training samples}\label{appendix:inputisgood}
We prove Proposition \ref{prop:inputisgood}. For convenience, in this section,
we denote the input dimension $d_0$ simply as $d$, and we denote the rescaled
input by $\wX=\sqrt{d}\,X$, with columns $\wx_\a=\sqrt{d} \cdot \x_\a$. 

{\bf Bound for $\|\wx_\a\|^2$:}
Note that $\E[\|\wx_\a\|^2]=d$. Applying the convex concentration property
and \cite[Theorem 2.5]{adamczak2015note} with $A=\Id$, we have for any
$t>0$ that
\begin{equation}\label{eq:xnormconcentration}
\P\Big[\big|\|\wx_\a\|^2-d\big|>t\Big] \leq
2\exp\left(-c\min\left(\frac{t^2}{d},t\right)\right)
\end{equation}
for a constant $c$ depending only on $c_0$. Applying this for
$t=\sqrt{Kd\log n}$ and a union bound, with probability $1-2ne^{-cK\log n}$,
\begin{equation}\label{eq:xgood}
\Big|\|\wx_\a\|^2-d\Big| \leq \sqrt{Kd\log n} \quad
\text{ for all } \a \in [n].
\end{equation}
Rescaling, this shows $|\|\x_\a\|^2-1| \leq \sqrt{(K\log n)/d}$.

{\bf Bound for $\wx_\a^\top \wx_\b$:}
Since $\wx_\a$ and $\wx_\b$ are independent,
conditional on $\wx_\b$, we have $\E[\wx_\a^\top \wx_\b \mid \wx_\b]=0$, and
the map $\wx_\a \mapsto \wx_\a^\top \wx_\b$
is convex and $\|\wx_\b\|$-Lipschitz. Then the convex concentration property
implies, for any $t>0$,
\[\P\Big[|\wx_\a^\top \wx_\b|>t \Big|\wx_\b \Big] \leq
2e^{-c_0t^2/\|\wx_\b\|^2}.\]
On the event (\ref{eq:xgood}), applying this for $t=\sqrt{Kd\log n}$,
this probability is at most $2e^{-cK\log n}$.
Taking a union bound, with probability $1-2n^2e^{-cK\log n}$,
\[\Big|\wx_\a^\top \wx_\b\Big| \leq \sqrt{Kd\log n}
\quad \text{ for all } \a \neq \b \in [n].\]
Rescaling, this shows $|\x_\a^\top \x_\b| \leq \sqrt{(K\log n)/d}$.

{\bf Bound for $\|\wX\|$:} Fix any unit vector $\v=(v_1,\ldots,v_n) \in \R^n$.
By \cite[Lemma C.11]{kasiviswanathan2019restricted}, the random vector $\wX\v$
also satisfies the convex concentration property, with a modified constant
$c_0'$. Note that $\E[\|\wX\v\|^2]=d\|\v\|^2=d$. Then, as in
(\ref{eq:xnormconcentration}), we have
\[\P\Big[|\|\wX \v\|^2-d|>t\Big] \leq
2\exp\left(-c\min\left(\frac{t^2}{d},t\right)\right).\]
Applying this with $t=(B^2/4-1)d$,
and taking a union bound over a $1/2$-net $\N$
of the unit ball $\{\v \in \R^n:\|\v\|=1\}$ with cardinality $5^n$, we have
with probability at least $1-5^n \cdot 2e^{-cB^2d}$ that
\[\|\wX\v\| \leq (B/2)\sqrt{d} \quad \text{ for all } \v \in \N.\]
Since
\[\|\wX\|=\sup_{\v:\|\v\|=1} \|\wX\v\|
\leq \sup_{\v \in \N} \|\wX\v\|+\|\wX\|/2,\]
we have $\|\wX\| \leq B\sqrt{d}$ on this event. Rescaling, this shows $\|X\|
\leq B$.

{\bf Bound for $\sum_{\a=1}^n (\|\wx_\a\|^2-d)^2$:}
Define $\z=(z_1,\ldots,z_n)$ where $\z_\a=\|\wx_\a\|^2-d$.
Fixing any unit vector $\v=(v_1,\ldots,v_n) \in \R^n$, let us first bound
$\v^\top \z$: We have
\[\v^\top \z=\sum_{\a=1}^n v_\a (\|\wx_\a\|^2-d),\]
which has mean 0. Note that integrating the tail bound
(\ref{eq:xnormconcentration}) yields the sub-exponential condition
\[\E\left[\exp\left(\lambda(\|\wx_\a\|^2-d)\right)\right] 
\leq \exp(Cd\lambda^2)
\quad \text{ for all } |\lambda| \leq c'\]
and some constants $C,c'>0$. (See e.g.\ \cite[Theorem
2.3]{boucheron2013concentration}, applied with $(v,c)=(C'd,C')$ and a large enough constant $C'>0$.)
Then, as $\wx_1,\ldots,\wx_n$ are independent and $\|\v\|^2=1$, also
\[\E[e^{\lambda \v^\top \z}]
=\E\left[\exp\left(\lambda \sum_{\a=1}^n v_\a(\|\wx_\a\|^2-d)\right)\right] 
\leq \exp(Cd\lambda^2)
\quad \text{ for all } |\lambda| \leq c'.\]
For any $t>0$, applying this with $\lambda=\min(t/(2Cd),c')$
yields the sub-exponential tail bound
\[\P[\v^\top \z \geq t] \leq e^{-\lambda t}\E[e^{\lambda \v^\top \z}]
\leq \exp\left(-c\min\left(\frac{t^2}{d},t\right)\right).\]
Now applying this for $t=(B/2)d$, and again
taking a union bound over a $1/2$-net $\N$ of the unit ball, we have with
probability $1-5^n \cdot e^{-cBd}$ that
\[\v^\top \z \leq (B/2)d \quad \text{ for all } \v \in \N.\]
On this event, we have as above that $\|\z\| \leq Bd$, so $\|\z\|^2 \leq
B^2d^2$. Rescaling, this shows $\sum_{\a=1}^n (\|\wx_\a\|^2-1)^2 \leq B^2$.

Applying all of the above bounds for sufficiently large constants $K,B>0$, we
obtain that these bounds hold with probability at least $1-n^{-k}$, which yields
Proposition \ref{prop:inputisgood}.

\section{Overview of proofs and preliminary lemmas}\label{appendix:overview}

The proofs of Theorems \ref{thm:CK}, \ref{thm:NTK}, and \ref{thm:NTKmulti} are
contained in the subsequent Appendices
\ref{appendix:orthogonal}--\ref{appendix:NTKmulti}. We provide here an outline
of the argument.

We will apply induction across the layers $\ell=1,\ldots,L$,
analyzing the post-activation matrix $X_\ell$ of each layer conditional on the
previous post-activations $X_0,\ldots,X_{\ell-1}$ (i.e.\ with respect to only
the randomness of $W_\ell$). For the Conjugate Kernel, this will entail
analyzing the Stieltjes transform
\[\frac{1}{n} \Tr (X_L^\top X_L-z\Id)^{-1}\]
conditional on the previous layers. For the Neural Tangent Kernel, given the
approximation in Lemma \ref{lemma:NTKapprox}, this will entail analyzing
the Stieltjes transform
\[\frac{1}{n} \Tr (A+X_L^\top X_L-z\Id)^{-1}\]
conditional on the previous layers, where $A$ is a linear combination of
$X_0^\top X_0,\ldots,X_{L-1}^\top X_{L-1}$, and $\Id$. Note that this matrix $A$
is deterministic conditional on the previous layers.

In Appendix \ref{appendix:orthogonal}, we carry out a non-asymptotic analysis
of $(\eps,B)$-orthonormality. In particular, we show that if the deterministic
input $X \equiv X_0$ is $(\eps,B)$-orthonormal, then $X_1$ is
$(C\eps,CB)$-orthonormal with high probability, for a constant $C>0$ depending
only on $\lambda_\sigma$. Note that we require the fourth technical condition
\[\sum_{\a=1}^n (\|\x_\a\|^2-1)^2 \leq B^2\]
in Definition \ref{def:orthogonal} to ensure that the operator norm $\|X_1\|$
remains of constant order, as otherwise $X_1$ may have a rank-one component
whose norm grows slowly with $n$. Applying this result conditionally for
every layer, Assumption \ref{assump:asymptotics} then implies that
$X_0,\ldots,X_L$ are all $(\tilde{\eps}_n,\tilde{B})$-orthonormal for modified
parameters $(\tilde{\eps}_n,\tilde{B})$ with high probability.

In Appendix \ref{appendix:singlelayer}, we carry out the analysis of the trace
\[\frac{1}{n}\Tr(A+\alpha X_1^\top X_1-z\Id)^{-1}\]
in a single layer, for a deterministic
$(\eps_n,B)$-orthonormal input $X_0$, symmetric matrix $A \in
\R^{n \times n}$, and spectral parameters $\alpha \in \C^* \equiv
\overline{\C^-} \setminus 0$ and $z \in \C^+$. We allow
$\alpha \in \C^*$ (rather than fixing $\alpha=1$), 
as the subsequent induction argument for the NTK will require this extension.
When $A=0$ and $\alpha=1$, this
reduces to the analysis in \cite{louart2018random}, and also mirrors the proof 
of the Marcenko-Pastur equation (\ref{eq:MPeq}). For $A \neq 0$,
this trace will depend jointly on $A$ and the second-moment matrix
$\Phi_1 \in \R^{n \times n}$ for the rows of $X_1$. We derive a 
fixed-point equation in terms of $A$ and $\Phi_1$, which approximates this
trace in the $n \to \infty$ limit. 

In Appendix \ref{appendix:CK}, we prove Theorem \ref{thm:CK} on the CK,
by specializing this analysis to the setting $A=0$ and $\alpha=1$.
The inductive loop
is closed via an entrywise approximation of the second-moment matrix
$\Phi_\ell$ in each layer by a linear combination of
$X_{\ell-1}^\top X_{\ell-1}$ and $\Id$ in the previous layer.
The main argument for this approximation
has been carried out in Appendix \ref{appendix:orthogonal}.

In Appendix \ref{appendix:NTK}, we prove Theorem \ref{thm:NTK} on the NTK.
Our analysis reduces the trace of any
linear combination of $X_0^\top X_0,\ldots,X_L^\top X_L$ and $\Id$ to the trace of a
more general rational function of $X_0^\top X_0,\ldots,X_{L-1}^\top X_{L-1}$ and $\Id$ in the previous layer. In order
to close the inductive loop, we analyze the trace of such a rational function
across layers, and show that it may be characterized by the recursive
fixed-point equations (\ref{eq:sl}) and (\ref{eq:tl}). 
In Appendix \ref{appendix:NTK}, we also establish the
approximation in Lemma \ref{lemma:NTKapprox} and the existence and uniqueness of
the fixed point to (\ref{eq:sl}).

Finally, in Appendix \ref{appendix:NTKmulti}, we prove Theorem
\ref{thm:NTKmulti}, which is a minor extension of Theorem \ref{thm:NTK}.

{\bf Notation.} In the proof, $\v^*$ and $M^*$ denote the conjugate transpose.
For a complex matrix $M \in \C^{n \times n}$,
we denote by
\[\tr M=n^{-1}\Tr M\]
the normalized matrix trace, by
$\|M\|=\sup_{\v \in \C^n:\|\v\|=1} \|M\v\|$ the operator norm, and by
$\|M\|_F=(\Tr M^*M)^{1/2}=(\sum_{\a,\b} |M_{\a\b}|^2)^{1/2}$ the Frobenius 
norm. Note that we have
\[|\tr M| \leq \|M\| \leq \|M\|_F, \qquad \|M\|_F \leq
\sqrt{n}\|M\|, \qquad |\tr AB| \leq n^{-1}\|A\|_F\|B\|_F.\]

Let us collect here a few basic results, which we will use in the subsequent
sections.

\begin{proposition}\label{prop:sigmaproperties}
Under Assumption \ref{assump:asymptotics}(b),
the constants $a_\sigma$ and $b_\sigma$ in (\ref{eq:qr}) satisfy
\[|b_\sigma| \leq 1 \leq \sqrt{a_\sigma} \leq \lambda_\sigma.\]
For a universal constant $C>0$, the activation function $\sigma$ satisfies
\begin{equation}\label{eq:sigmabound}
|\sigma(x)| \leq \lambda_\sigma(|x|+C) \qquad \text{ for all } x \in \R.
\end{equation}
\end{proposition}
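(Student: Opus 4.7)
The plan is to prove each of the four inequalities by combining the pointwise regularity assumptions on $\sigma$ with two standard Gaussian tools: the Gaussian Poincar\'e inequality and Stein's lemma (Gaussian integration by parts).

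First, I would dispose of $\sqrt{a_\sigma} \leq \lambda_\sigma$ directly: the pointwise bound $|\sigma'| \leq \lambda_\sigma$ in Assumption \ref{assump:asymptotics}(b) gives $a_\sigma = \E[\sigma'(\xi)^2] \leq \lambda_\sigma^2$. For the middle inequality $1 \leq \sqrt{a_\sigma}$, I would apply the Gaussian Poincar\'e inequality to the Lipschitz function $\sigma$, which yields $\Var(\sigma(\xi)) \leq \E[\sigma'(\xi)^2] = a_\sigma$. Under the normalization $\E[\sigma(\xi)] = 0$ and $\E[\sigma^2(\xi)] = 1$, the left-hand side is exactly $1$. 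For the remaining bound $|b_\sigma| \leq 1$, I would use Gaussian integration by parts to rewrite $b_\sigma = \E[\sigma'(\xi)] = \E[\xi \sigma(\xi)]$, which is legitimate since $\sigma$ is $C^1$ and of at most linear growth (so that $\xi \sigma(\xi)$ is absolutely Gaussian-integrable). Cauchy--Schwarz then gives $b_\sigma^2 \leq \E[\xi^2]\,\E[\sigma(\xi)^2] = 1$.

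For the linear growth bound (\ref{eq:sigmabound}), the pointwise bound $|\sigma'| \leq \lambda_\sigma$ makes $\sigma$ globally $\lambda_\sigma$-Lipschitz, and therefore $|\sigma(x)| \leq |\sigma(0)| + \lambda_\sigma |x|$. To control the value at the origin, I would exploit the zero-mean condition by writing
\[|\sigma(0)| \;=\; \big|\E[\sigma(0) - \sigma(\xi)]\big| \;\leq\; \lambda_\sigma\, \E|\xi| \;=\; \lambda_\sigma \sqrt{2/\pi},\]
so that the desired inequality holds with the universal constant $C = \sqrt{2/\pi}$.

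None of these steps presents a serious obstacle; the argument is essentially bookkeeping once one observes that the Lipschitz and second-moment hypotheses on $\sigma$ are tailor-made for a Poincar\'e/Stein pair of applications. The only minor item worth confirming is the applicability of Stein's lemma to $\sigma$, which is immediate from the linear-growth bound derived along the way (or can even be checked before the other estimates, since it relies only on Assumption \ref{assump:asymptotics}(b) and the mean-zero normalization).
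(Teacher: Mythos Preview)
Your proposal is correct and essentially identical to the paper's proof: the same use of the pointwise bound for $\sqrt{a_\sigma}\le\lambda_\sigma$, the Gaussian Poincar\'e inequality for $1\le a_\sigma$, Stein's lemma plus Cauchy--Schwarz for $|b_\sigma|\le 1$, and the Lipschitz bound plus a control of $|\sigma(0)|$ via $\E[\sigma(\xi)]=0$. Your bound on $|\sigma(0)|$ is in fact slightly cleaner than the paper's (you write $|\sigma(0)|=|\E[\sigma(0)-\sigma(\xi)]|$ directly, whereas the paper adds an unnecessary $\E|\sigma(\xi)|$ term via the triangle inequality), but the overall argument is the same.
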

\begin{proof}
It is clear from definition that $a_\sigma \leq \lambda_\sigma^2$.
By the Gaussian Poincar\'e inequality,
\[1=\E[\sigma(\xi)^2]=\Var[\sigma(\xi)]
\leq \E[\sigma'(\xi)^2]=a_\sigma.\]
By Gaussian integration-by-parts and Cauchy-Schwarz,
\[|b_\sigma|=|\E[\sigma'(\xi)]|=|\E[\xi \cdot \sigma(\xi)]|
\leq \E[\xi^2]^{1/2}\E[\sigma(\xi)^2]^{1/2}=1.\]
We have
\begin{equation}\label{eq:sigma0bound}
|\sigma(0)| \leq \E[|\sigma(0)-\sigma(\xi)|]+\E[|\sigma(\xi)|]
\leq \lambda_\sigma \E[|\xi|]+\E[\sigma(\xi)^2]^{1/2}
\leq C\lambda_\sigma
\end{equation}
(the last inequality applying $\lambda_\sigma \geq 1$). Then
$|\sigma(x)| \leq |\sigma(0)|+\lambda_\sigma |x| \leq \lambda_\sigma(|x|+C)$.
\end{proof}

\begin{proposition}\label{prop:invertible}
Suppose $M=U+iV \in \C^{n \times n}$, where the real and imaginary parts
$U,V \in \R^{n \times n}$ are symmetric, and $V$ is
invertible with either $V \succeq c_0\Id$ or $V \preceq -c_0\Id$ for a value
$c_0>0$. Then $M$ is invertible, and $\|M^{-1}\| \leq 1/c_0$.
\end{proposition}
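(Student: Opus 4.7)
The plan is to show the lower bound $\|M\v\| \geq c_0\|\v\|$ for every $\v \in \C^n$, which will simultaneously give invertibility of $M$ (by finite-dimensionality) and the operator-norm bound $\|M^{-1}\| \leq 1/c_0$.

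First I would observe that, since $U$ and $V$ are real symmetric matrices, they are Hermitian when regarded as complex matrices, so the quadratic forms $\v^* U\v$ and $\v^* V\v$ are real for every $\v \in \C^n$. Hence
\[\v^* M\v = \v^* U\v + i\,\v^* V\v\]
gives the real/imaginary decomposition of $\v^* M\v$, and in particular $\Im(\v^* M\v)=\v^* V\v$. Next I would apply the hypothesis on $V$: whether $V \succeq c_0\Id$ or $V \preceq -c_0\Id$, in either case $|\v^* V\v| \geq c_0\|\v\|^2$. Combining this with Cauchy--Schwarz, namely $|\v^* M\v| \leq \|\v\|\cdot \|M\v\|$, yields
\[\|M\v\| \;\geq\; \frac{|\v^* M\v|}{\|\v\|} \;\geq\; \frac{|\Im(\v^* M\v)|}{\|\v\|} \;=\; \frac{|\v^* V\v|}{\|\v\|} \;\geq\; c_0\|\v\|,\]
from which the claim follows at once.

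There is no real obstacle here: this is a standard fact about complex matrices with definite Hermitian imaginary part. The only point to verify carefully is that $\v^* V\v$ is real, which is exactly where the hypothesis that $V$ is real symmetric (not merely invertible with a definite spectrum) gets used.
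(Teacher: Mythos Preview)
Your proof is correct and follows essentially the same approach as the paper: both arguments bound $\|M\v\|$ from below via Cauchy--Schwarz by $|\v^*M\v|/\|\v\|$, use that $\v^*U\v$ and $\v^*V\v$ are real (since $U,V$ are real symmetric, hence Hermitian) to isolate $|\v^*V\v|$ as the imaginary part, and then invoke the definiteness of $V$. The only cosmetic difference is that the paper restricts to unit vectors $\v$ from the outset.
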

\begin{proof}
For any unit vector $\v \in \C^n$,
\[\|M\v\|=\|M\v\| \cdot \|\v\| \geq |\v^* M\v|
=|\v^*U\v+i \cdot \v^*V\v| \geq |\v^*V\v|,\]
the last step holding because $U,V$ are real-symmetric so that
$\v^*U\v$ and $\v^*V\v$ are both real. By the given assumption on $V$, we have
$|\v^*V\v| \geq c_0$, so $\|M\v\| \geq c_0$ for every unit vector $\v \in \C^n$.
Then $M$ is invertible, and $\|M^{-1}\| \leq 1/c_0$.
\end{proof}

\begin{proposition}\label{prop:specapprox}
Let $M,\wM \in \R^{n \times n}$ be any two symmetric matrices satisfying
\[\frac{1}{n}\|M-\wM\|_F^2 \to 0\]
a.s.\ as $n \to \infty$. If $\limspec M=\nu$ for a probability
distribution $\nu$ on $\R$, then also $\limspec \wM=\nu$.
\end{proposition}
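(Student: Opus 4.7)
The plan is to pass through Stieltjes transforms, as already highlighted in Section \ref{sec:stieltjes}: weak convergence $\limspec \wM = \nu$ is implied by a.s.\ pointwise convergence $m_{\wM}(z) \to m_\nu(z)$ for each $z \in \C^+$. Since the hypothesis $\limspec M = \nu$ already gives $m_M(z) \to m_\nu(z)$ a.s., it suffices to control the difference $m_M(z) - m_{\wM}(z)$ by the Frobenius distance $\|M - \wM\|_F$.

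The key identity is the resolvent identity
\[
(M - z\Id)^{-1} - (\wM - z\Id)^{-1}
= (M - z\Id)^{-1}\,(\wM - M)\,(\wM - z\Id)^{-1}.
\]
Taking normalized traces and applying Cauchy--Schwarz in the form $|\Tr(AB)| \leq \|A\|_F \|B\|_F$ with $A = (M-z\Id)^{-1}$ and $B = (\wM - M)(\wM - z\Id)^{-1}$, I get
\[
|m_M(z) - m_{\wM}(z)| \leq \tfrac{1}{n}\|A\|_F \cdot \|\wM - M\|_F \cdot \|(\wM - z\Id)^{-1}\|.
\]
For $z = x + i\eta$ with $\eta > 0$, the operator bound $\|(\wM - z\Id)^{-1}\| \leq 1/\eta$ is immediate from Proposition \ref{prop:invertible} (applied to $\wM - z\Id$), and $\|A\|_F^2 = \sum_\a |\lambda_\a(M) - z|^{-2} \leq n/\eta^2$ gives $\|A\|_F \leq \sqrt{n}/\eta$. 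Putting these together,
\[
|m_M(z) - m_{\wM}(z)| \leq \frac{1}{\eta^2}\sqrt{\frac{\|M - \wM\|_F^2}{n}}.
\]

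By hypothesis, the right side tends to $0$ a.s.\ as $n \to \infty$, for each fixed $z \in \C^+$. Hence $m_{\wM}(z) \to m_\nu(z)$ a.s.\ pointwise on $\C^+$, and the equivalence recalled in Section \ref{sec:stieltjes} yields $\limspec \wM = \nu$. There is no serious obstacle here; the only care needed is to use the Frobenius-times-operator splitting of the trace so that one gets the favorable power $n^{-1/2}$ rather than an $O(1)$ bound, which is precisely what converts the hypothesis $n^{-1}\|M-\wM\|_F^2 \to 0$ into a usable estimate.
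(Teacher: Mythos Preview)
Your proof is correct and essentially the same as the paper's: both pass through Stieltjes transforms, apply the resolvent identity, and use the Cauchy--Schwarz trace inequality together with the resolvent operator-norm bound $1/\Im z$ to get exactly the estimate $|m_M(z)-m_{\wM}(z)| \leq (\Im z)^{-2}\sqrt{n^{-1}\|M-\wM\|_F^2}$. The only cosmetic difference is the grouping in the Cauchy--Schwarz step---the paper pairs $(\wM-M)$ against $(M-z\Id)^{-1}(\wM-z\Id)^{-1}$ and bounds the latter's Frobenius norm by $\sqrt{n}$ times the product of the two operator norms, whereas you pair $(M-z\Id)^{-1}$ against $(\wM-M)(\wM-z\Id)^{-1}$---but the resulting bound is identical.
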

\begin{proof}
For fixed $z \in \C^+$, let
$m(z)=\tr(M-z\Id)^{-1}$ and $\tilde{m}(z)=\tr(\wM-z\Id)^{-1}$
be the Stieltjes transforms. Then applying $A^{-1}-B^{-1}=A^{-1}(B-A)B^{-1}$,
we may bound their difference by
\begin{align*}
|m(z)-\tilde{m}(z)|^2&=\frac{1}{n^2}\Big|\Tr
[(M-z\Id)^{-1}-(\wM-z\Id)^{-1}]\Big|^2\\
&=\frac{1}{n^2}\Big|\Tr (M-z\Id)^{-1}(\wM-M)(\wM-z\Id)^{-1}\Big|^2\\
&\leq \frac{1}{n^2}\|\wM-M\|_F^2\|(M-z\Id)^{-1}(\wM-z\Id)^{-1}\|_F^2\\
&\leq \frac{1}{n}\|\wM-M\|_F^2\|(M-z\Id)^{-1}\|^2\|(\wM-z\Id)^{-1}\|^2
\end{align*}
Applying $\|(M-z\Id)^{-1}\| \leq 1/\Im z$ by Proposition \ref{prop:invertible},
and similarly for $\wM$, the given
condition shows that $m(z)-\tilde{m}(z) \to 0$ a.s., pointwise over $z \in
\C^+$. If $\limspec
M=\nu$, then $m(z) \to m_\nu(z) \equiv \int (x-z)^{-1}d\nu(x)$ a.s., and hence
also $\tilde{m}(z) \to m_\nu(z)$ a.s.\ and $\limspec \wM=\nu$.
\end{proof}

\section{Propagation of approximate pairwise
orthogonality}\label{appendix:orthogonal}

In this section, we work in the following (non-asymptotic) setting of a single
layer: Consider any deterministic matrix $X \in \R^{d \times n}$,
let $W \in \R^{\vd \times d}$ have i.i.d.\ $\N(0,1)$ entries, and set
\begin{equation}\label{def:vX}
\vX=\frac{1}{\sqrt{\vd}}\sigma(WX) \in \R^{\vd \times n}.
\end{equation}
Note that $\vX$ has i.i.d.\ rows with distribution $\sigma(\w^\top
X)/\sqrt{\vd}$, where $\w \sim \N(0,\Id)$. Define the second-moment matrix of
$\vX$ by
\begin{equation}\label{def:Phi}
\Phi=\E[\vX^\top \vX]
=\E[\sigma(\w^\top X)^\top \sigma(\w^\top X)] \in \R^{n \times n}
\end{equation}
where the expectations are over the standard Gaussian matrix $W$ and standard
Gaussian vector $\w$. Let $\Phi_{\a\b}$ denote the $(\a,\b)$ entry of $\Phi$ for any $\a,\b\in[n]$. We show in this section the following result.

\begin{lemma}\label{lemma:orthonormalpropagation}
Suppose $X$ is $(\eps,B)$-orthonormal where $\eps<1/\lambda_\sigma$.
Then for universal
constants $C,c>0$, with probability at least $1-2n^2e^{-c\vd
\eps^2}-3e^{-cn}$, the matrix
$\vX$ remains $(\veps,\vB)$-orthonormal with
\[\veps=C\lambda_\sigma^2\eps,
\qquad \vB=C\Big(1+n/\vd\Big)\lambda_\sigma^2 B.\]
\end{lemma}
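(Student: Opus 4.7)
The plan is to approximate $\vX^\top\vX$ by its expectation $\Phi=\E_W[\vX^\top\vX]\in\R^{n\times n}$, verify the four defining inequalities of $(\veps,\vB)$-orthonormality for $\Phi$, and then concentrate $\vX^\top\vX$ around $\Phi$. The rows of $\vX$ are iid with distribution $\sigma(\w^\top X)/\sqrt{\vd}$ for $\w\sim\N(0,\Id)$, so every probabilistic estimate below is of sample-covariance type in $\vd$ samples of a sub-Gaussian vector in $\R^n$.

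The first step is to compute $\Phi_{\a\b}=\E[\sigma(\w^\top\x_\a)\sigma(\w^\top\x_\b)]$. Using the Hermite expansion $\sigma=\sum_{k\ge 1}c_kH_k$ (so $c_0=0$, $\sum_kc_k^2=1$, and $c_1=b_\sigma$) combined with a Taylor expansion in $\|\x_\a\|^2,\|\x_\b\|^2$ around $1$ using the bounds $|\sigma'|,|\sigma''|\le\lambda_\sigma$, I would obtain $|\Phi_{\a\a}-1|\le C\lambda_\sigma^2|\|\x_\a\|^2-1|$ and $|\Phi_{\a\b}|\le C\lambda_\sigma^2\eps$ for $\a\ne\b$. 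Equivalently, $\Phi=(1-b_\sigma^2)\Id+b_\sigma^2 X^\top X+R$, where the remainder $R$ is the Hadamard-power part $\sum_{k\ge 2}c_k^2(X^\top X)^{\odot k}$ (positive semidefinite by the Schur product theorem) plus a small diagonal correction of size $O(\lambda_\sigma^2\eps^2)$. A trace bound then gives $\|R\|\le C\lambda_\sigma^2\sum_\a(\|\x_\a\|^2-1)^2\le C\lambda_\sigma^2 B^2$, hence the a priori estimate $\|\Phi\|\le C\lambda_\sigma^2 B^2$.

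For the entrywise conditions (bounds 1 and 2 of Definition \ref{def:orthogonal}), I would apply Bernstein's inequality to the iid sum $[\vX^\top\vX]_{\a\b}=\vd^{-1}\sum_i\sigma(W_i^\top\x_\a)\sigma(W_i^\top\x_\b)$, whose summands are sub-exponential with norm $O(\lambda_\sigma^2)$, then union-bound over the $n^2$ pairs. Combined with Step 1, this produces $|\|\vx_\a\|^2-1|,|\vx_\a^\top\vx_\b|\le C\lambda_\sigma^2\eps$ on an event of probability at least $1-2n^2e^{-c\vd\eps^2/\lambda_\sigma^4}$. For the fourth condition I would split $\sum_\a(\|\vx_\a\|^2-1)^2\le 2\sum_\a(\Phi_{\a\a}-1)^2+2\sum_\a(\|\vx_\a\|^2-\Phi_{\a\a})^2$: the deterministic part is $O(\lambda_\sigma^4 B^2)$ via the Lipschitz estimate of Step 1 and the input hypothesis $\sum_\a(\|\x_\a\|^2-1)^2\le B^2$, while the random part has expectation $O(n/\vd)$ and is upgraded to a high-probability bound by concentration on the iid average indexed by $i$.

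The main obstacle is the operator-norm condition (condition 3): since $\vd$ and $n$ are of the same order, no cheap Lipschitz-net bound on $W$ yields an $O(1)$ deviation. My plan is the decomposition $\|\vX^\top\vX\|\le\|\Phi\|+\|\vX^\top\vX-\Phi\|$, handling $\|\Phi\|=O(\lambda_\sigma^2 B^2)$ by Step 1 and handling the deviation by standard sample-covariance operator-norm concentration for sub-Gaussian vectors: an $\epsilon$-net on the unit sphere of $\R^n$ combined with Gaussian-Lipschitz concentration of the quadratic form $\v^\top\sigma(WX)^\top\sigma(WX)\v$ in $W$ (Lipschitz constant $O(\lambda_\sigma\|X\|)$) yields $\|\vX^\top\vX-\Phi\|\le C(n/\vd+\sqrt{n/\vd})\lambda_\sigma^2 B^2$ on an event of probability at least $1-e^{-cn}$. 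Combining then gives $\|\vX\|^2\le C(1+n/\vd)\lambda_\sigma^2 B^2$, and hence $\|\vX\|\le\vB$. The delicate point is that the proportionality constant in the covariance-concentration bound must be $\|\Phi\|$ rather than a generic $O(n)$, and this is only possible because the Schur-product identification forces $R$ to have trace $O(\lambda_\sigma^2 B^2)$ rather than the naive $O(\eps^2 n)$ Frobenius bound.
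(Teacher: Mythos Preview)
Your overall skeleton matches the paper's: entrywise conditions by Bernstein on the i.i.d.\ rows, the operator norm via $\|\vX^\top\vX-\Phi\|+\|\Phi\|$ with sub-Gaussian sample-covariance concentration, and the fourth condition by splitting off the deterministic part $\sum_\a(\Phi_{\a\a}-1)^2$. The substantive gap is in your bound on $\|\Phi\|$.

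Your identification of $R=\Phi-(1-b_\sigma^2)\Id-b_\sigma^2 X^\top X$ with $\sum_{k\ge 2}c_k^2(X^\top X)^{\odot k}$ up to a diagonal correction of size $O(\lambda_\sigma^2\eps^2)$ is incorrect: when all $\|\x_\a\|=1$, both $\Phi$ and $(1-b_\sigma^2)\Id+b_\sigma^2 X^\top X$ have diagonal $1$, so $R_{\a\a}=0$, whereas $\sum_{k\ge 2}c_k^2(X^\top X)^{\odot k}$ has diagonal $1-b_\sigma^2$; thus the diagonal correction is $-(1-b_\sigma^2)\Id+O(\eps)$, not $O(\eps^2)$. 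More seriously, no trace bound yields $\|R\|\le C\lambda_\sigma^2\sum_\a(\|\x_\a\|^2-1)^2$: in the unit-norm case the right-hand side vanishes while $R=\sum_{k\ge 2}c_k^2[(X^\top X)^{\odot k}-\Id]$ does not; the positive-semidefinite Hadamard-power part itself has trace $\approx n(1-b_\sigma^2)$, and once its diagonal is subtracted it is no longer PSD, so there is nothing to trace-bound. The paper bounds $\|\Phi\|$ without any Hermite expansion: writing $\Sigma=\Phi-\E[\sigma(\w^\top X)]^\top\E[\sigma(\w^\top X)]$, one has $\|\Sigma\|=\sup_{\|\v\|=1}\Var[\sigma(\w^\top X)\v]\le\lambda_\sigma^2 B^2$ by Gaussian Poincar\'e (since $\w\mapsto\sigma(\w^\top X)\v$ is $\lambda_\sigma\|X\|$-Lipschitz), and the rank-one remainder $\Phi-\Sigma$ has norm $\sum_\a\E[\sigma(\w^\top\x_\a)]^2\le C\lambda_\sigma^2\sum_\a(\|\x_\a\|^2-1)^2\le C\lambda_\sigma^2 B^2$. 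This last step --- which is where the fourth orthonormality condition for $X$ actually enters the operator-norm bound --- is likely what you were reaching for with the ``trace bound.''

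Two smaller imprecisions. The quadratic form $\v^\top\sigma(WX)^\top\sigma(WX)\v$ is not Lipschitz in $W$; rather, for each fixed $\v$, the linear form $\w\mapsto\sigma(\w^\top X)\v$ is $\lambda_\sigma B$-Lipschitz, which makes each row of $\vX$ a sub-Gaussian vector in $\R^n$ and allows the standard bound for matrices with i.i.d.\ sub-Gaussian rows. And $\sum_\a(\|\vx_\a\|^2-\Phi_{\a\a})^2=\|\z\|^2$ is not itself an i.i.d.\ average over $i$; the paper passes to $\|\z\|\le 2\sup_{\v\in\text{net}}\v^\top\z$, and for each fixed $\v$ bounds the sub-exponential norm of $\sum_\a v_\a[\sigma(\w^\top\x_\a)^2-\Phi_{\a\a}]$ via a Gaussian Sobolev inequality (not merely Lipschitz concentration, since this function is quadratic in $\sigma$) before applying Bernstein over $i$.
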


\begin{corollary}\label{cor:orthonormalinduction}
Under Assumption \ref{assump:asymptotics}, there exist
parameters $(\tilde{\eps}_n,\tilde{B})$ still satisfying
$\tilde{\eps}_n n^{1/4} \to 0$, such that a.s.\ for all large $n$,
every matrix $X_0,\ldots,X_L$ is $(\tilde{\eps}_n,\tilde{B})$-orthonormal.
\end{corollary}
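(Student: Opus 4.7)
The natural approach is to apply Lemma~\ref{lemma:orthonormalpropagation} inductively across the layers $\ell=1,\ldots,L$, treating $W_\ell$ as the only source of randomness at layer $\ell$ and conditioning on $X_0,\ldots,X_{\ell-1}$. At each step, the lemma's failure probability $2n^2 e^{-c d_\ell \eps^2}+3e^{-cn}$ must be summable in $n$ so that a Borel--Cantelli argument delivers the almost-sure statement.

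The main obstacle is that Assumption~\ref{assump:asymptotics}(c) only guarantees $\eps_n n^{1/4}\to 0$, which does not force $d_\ell\eps_n^2\to\infty$: e.g.\ $\eps_n\asymp n^{-1/2}$ satisfies the assumption but leaves $d_\ell\eps_n^2=O(1)$, so $e^{-c d_\ell\eps_n^2}$ does not even vanish. To get around this I would inflate $\eps_n$ to $\hat\eps_n:=\max(\eps_n,(\log n)^{-1}n^{-1/4})$. Since $(\eps,B)$-orthonormality is preserved under enlarging the first parameter, the input $X_0$ remains $(\hat\eps_n,B)$-orthonormal; meanwhile $\hat\eps_n n^{1/4}\to 0$ still holds, while now $d_\ell\hat\eps_n^2\gtrsim n^{1/2}/(\log n)^2$, which makes $n^2 e^{-c d_\ell\hat\eps_n^2}$ summable in $n$.

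With $\hat\eps_n$ in hand the induction is mechanical. Let $C$ be the constant from Lemma~\ref{lemma:orthonormalpropagation}, and set $\eps_\ell:=(C\lambda_\sigma^2)^\ell\hat\eps_n$ and $B_\ell:=B\prod_{k=1}^\ell C(1+n/d_k)\lambda_\sigma^2$, starting from $\eps_0=\hat\eps_n$, $B_0=B$. Since $L$ is fixed, $\eps_L$ is still $o(n^{-1/4})$; since $n/d_k\to\gamma_k<\infty$, $B_L$ is bounded by a deterministic constant for all large $n$. Define $\tilde\eps_n:=\eps_L$ and let $\tilde B$ be any fixed upper bound for $B_L$. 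For all large $n$ one has $\eps_{\ell-1}<1/\lambda_\sigma$ for every $\ell$, so working conditionally on $X_0,\ldots,X_{\ell-1}$, on the event that $X_{\ell-1}$ is $(\eps_{\ell-1},B_{\ell-1})$-orthonormal Lemma~\ref{lemma:orthonormalpropagation} applies (since the previous post-activations are deterministic under this conditioning) and yields $(\eps_\ell,B_\ell)$-orthonormality of $X_\ell$ with conditional probability at least $1-2n^2e^{-c d_\ell\eps_{\ell-1}^2}-3e^{-cn}$.

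A union bound across the $L$ layers then shows that the event $\mathcal E_n$ that $X_0,\ldots,X_L$ are all $(\tilde\eps_n,\tilde B)$-orthonormal fails with probability at most $\sum_{\ell=1}^L\bigl(2n^2 e^{-c d_\ell\eps_{\ell-1}^2}+3e^{-cn}\bigr)$, which by the calibration of $\hat\eps_n$ is summable in $n$. Borel--Cantelli then gives that a.s.\ $\mathcal E_n$ holds for all large $n$, proving the corollary. The only genuine difficulty is the first step; once $\hat\eps_n$ is correctly calibrated, the finite-$L$ induction and the conditioning argument are routine.
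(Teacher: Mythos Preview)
Your proposal is correct and follows essentially the same approach as the paper: inflate $\eps_n$ so that the failure probabilities from Lemma~\ref{lemma:orthonormalpropagation} become summable, then induct across the $L$ layers with geometrically growing constants and conclude via Borel--Cantelli. The only cosmetic difference is the choice of inflation---the paper takes $\eps_n\ge n^{-0.49}$ while you take $\hat\eps_n\ge(\log n)^{-1}n^{-1/4}$---and the paper packages the layer-wise constants into a single $C_0$ rather than tracking $(C\lambda_\sigma^2)^\ell$ and $\prod_k C(1+n/d_k)\lambda_\sigma^2$ explicitly.
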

\begin{proof}
Note that increasing $\eps_n$ represents a weaker assumption, so we may assume
without loss of generality that $\eps_n \geq n^{-0.49}$. Then
by Lemma \ref{lemma:orthonormalpropagation}, there is a constant $C_0 \geq 1$
depending on $\lambda_\sigma,\gamma_1,\ldots,\gamma_L$,
such that if $X_{\ell-1}$ is $(C_0^{\ell-1}\eps_n,C_0^{\ell-1}B)$-orthonormal,
then conditional on this event,
$X_\ell$ is $(C_0^\ell \eps_n,C_0^\ell B)$-orthonormal with probability
at least $1-e^{-n^{0.01}}$ for all large $n$. Thus, setting
$\tilde{\eps}_n=C_0^L\eps_n$ and $\tilde{B}=C_0^LB$,
with probability at least $1-Le^{-n^{0.01}}$, every matrix
$X_0,\ldots,X_L$ is $(\tilde{\eps}_n,\tilde{B})$-orthonormal.
The almost sure statement then follows from the Borel-Cantelli Lemma.
\end{proof}

In the remainder of this section, we prove
Lemma \ref{lemma:orthonormalpropagation}.
We divide the proof into Lemmas \ref{lemma:orthog},
\ref{lemma:normbound}, and \ref{lemma:colsqbound} below, which check the
individual requirements for $(\veps,\vB)$-orthonormality of $\vX$. We denote by
$C,C',c,c'>0$ universal constants that may change from instance to instance.

\begin{lemma}\label{lemma:orthog}
If $X$ is $(\eps,B)$-orthonormal where $\eps<1/\lambda_\sigma$,
then for universal constants $C,c>0$:
\begin{enumerate}[(a)]
\item For all $\a \neq \b \in [n]$,
\begin{align}
|\Phi_{\a\b}-b_\sigma^2 \x_\a^\top \x_\b| &\leq C\lambda_\sigma^2 \eps^2\label{eq:concentration_offdiagonal}\\
\Big|\E_{\w \sim \N(0,\Id)}[\sigma(\w^\top \x_\a)]\Big| &\leq C\lambda_\sigma
\Big|\|\x_\a\|^2-1\Big| \leq C\lambda_\sigma \eps\label{eq:wxbound}\\
|\Phi_{\a\a}-1| & \leq  C\lambda_\sigma
\Big|\|\x_\a\|^2-1\Big|\leq C\lambda_\sigma \eps
\label{eq:concentration_diagonal}
\end{align}
\item With probability at least $1-2n^2e^{-c\vd\eps^2}$,
simultaneously for all $\a \neq \b \in [n]$, the columns of $\vX$ satisfy
\[\big|\|\vx_\a\|^2-1\big| \leq C\lambda_\sigma^2 \eps,
\qquad \big|\vx_\a^\top \vx_\b \big| \leq C\lambda_\sigma^2 \eps.\]
\end{enumerate}
\end{lemma}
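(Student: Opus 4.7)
The plan is to separate the deterministic moment computation in part (a) from the concentration over $W$ in part (b). Every entry of $\Phi$ depends only on the three scalars $\|\x_\a\|^2$, $\|\x_\b\|^2$, and $\rho=\x_\a^\top\x_\b$, which under $(\eps,B)$-orthonormality all lie within $O(\eps)$ of the reference values $(1,1,0)$. So part (a) reduces to bounding Gaussian moments of $\sigma$ under small perturbations of this reference point, for which the main tool is Gaussian integration by parts. Part (b) is then a concentration argument for an average of $\vd$ i.i.d.\ sub-exponential random variables.

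For the one-index bounds (\ref{eq:wxbound}) and (\ref{eq:concentration_diagonal}), I would introduce $h(r)=\E_{\xi\sim\N(0,1)}[\sigma(r\xi)]$ and $H(r)=\E[\sigma(r\xi)^2]$, which by Assumption \ref{assump:asymptotics}(b) satisfy $h(1)=0$ and $H(1)=1$. Stein's lemma gives $h'(r)=r\E[\sigma''(r\xi)]$, so $|h'(r)|\le r\lambda_\sigma$; an analogous computation bounds $|H'(r)|\le C\lambda_\sigma^2 r$ on a neighborhood of $1$, using $|\sigma'|,|\sigma''|\le\lambda_\sigma$ together with (\ref{eq:sigmabound}) to control $\E[|\sigma(r\xi)|]$. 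Integrating from $r=1$ to $r=\|\x_\a\|$ yields bounds linear in $|\|\x_\a\|^2-1|$. For the off-diagonal bound (\ref{eq:concentration_offdiagonal}), fix the diagonals and view $\Psi(\rho)=\Phi_{\a\b}$ as a function of $\rho$ alone. Price's theorem gives $\Psi'(\rho)=\E[\sigma'(U)\sigma'(V)]$ and $\Psi''(\rho)=\E[\sigma''(U)\sigma''(V)]$ for the bivariate Gaussian $(U,V)$ with covariance $\left(\begin{smallmatrix}\|\x_\a\|^2 & \rho \\ \rho & \|\x_\b\|^2\end{smallmatrix}\right)$, so $|\Psi''|\le\lambda_\sigma^2$ uniformly, and at $\rho=0$ the variables $U$ and $V$ are independent, giving $\Psi'(0)=g(\|\x_\a\|)g(\|\x_\b\|)$ with $g(r)=\E[\sigma'(r\xi)]$. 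Since $g(1)=b_\sigma$ and $|g'|\le C\lambda_\sigma$ by another IBP, we get $g(\|\x_\a\|)=b_\sigma+O(\lambda_\sigma\eps)$; combined with $|\Psi(0)|=|h(\|\x_\a\|)h(\|\x_\b\|)|\le C\lambda_\sigma^2\eps^2$ from the diagonal step, a second-order Taylor expansion yields $\Psi(\rho)=b_\sigma^2\rho+O(\lambda_\sigma^2\eps^2)$.

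For part (b), conditioning on $X$, the rows of $W$ are i.i.d.\ $\N(0,\Id)$, so $\|\vx_\a\|^2=\vd^{-1}\sum_i\sigma(\w_i^\top\x_\a)^2$ and $\vx_\a^\top\vx_\b=\vd^{-1}\sum_i\sigma(\w_i^\top\x_\a)\sigma(\w_i^\top\x_\b)$ are averages of $\vd$ i.i.d.\ terms whose means are the entries of $\Phi$ controlled in part (a). By (\ref{eq:sigmabound}) each factor $\sigma(\w_i^\top\x_\a)$ is sub-Gaussian with norm $O(\lambda_\sigma)$, so each summand is sub-exponential with norm $O(\lambda_\sigma^2)$. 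Bernstein's inequality then gives a deviation bound of the form $2\exp(-c\vd t^2/\lambda_\sigma^4)$ in the sub-Gaussian regime; choosing $t=C\lambda_\sigma^2\eps$ and taking a union bound over the $O(n^2)$ pairs $(\a,\b)$ produces an exceptional set of probability at most $2n^2 e^{-c\vd\eps^2}$. Combining with the deterministic bounds from part (a) via the triangle inequality yields the claim with $\veps=C\lambda_\sigma^2\eps$. The main technical step is the Price-theorem expansion for (\ref{eq:concentration_offdiagonal}): extracting the exact leading coefficient $b_\sigma^2$ requires both the factorization $\Psi'(0)=g(\|\x_\a\|)g(\|\x_\b\|)$ at independence and the linearization $g(\|\x_\a\|)=b_\sigma+O(\lambda_\sigma\eps)$, after which the quadratic-in-$\rho$ remainder is controlled uniformly by $|\Psi''|\le\lambda_\sigma^2$; the concentration in part (b) is otherwise standard.
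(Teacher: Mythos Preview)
Your proposal is correct, and for part (b) it is essentially the same as the paper's argument: sub-Gaussianity of $\sigma(\w^\top\x_\a)$ (via the Lipschitz property of $\sigma$ and Gaussian concentration), sub-exponentiality of the products, Bernstein's inequality at level $t=\lambda_\sigma^2\eps$, and a union bound over $O(n^2)$ pairs.

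For part (a) your route is genuinely different. The paper works in realization space: it Gram--Schmidt orthogonalizes $(\zeta_\a,\zeta_\b)=(\w^\top\x_\a,\w^\top\x_\b)$ as $\zeta_\a=u_\a\xi_\a$, $\zeta_\b=u_\b\xi_\b+v_\b\xi_\a$ with $|u_\a-1|,|u_\b-1|,|v_\b|\le C\eps$, and then Taylor-expands $\sigma(\zeta)$ around $\sigma(\xi)$ to second order. The leading cross term produces exactly $v_\b\,b_\sigma^2$ via $\E[\xi\sigma(\xi)]=b_\sigma$, and the identity $u_\a v_\b=\x_\a^\top\x_\b$ converts $v_\b$ back to $\rho$ with an $O(\eps^2)$ error. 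You instead work in parameter space, differentiating in the covariance entry $\rho$ via Price's theorem and Taylor-expanding $\Psi(\rho)$ around $\rho=0$. Your factorization $\Psi'(0)=g(\|\x_\a\|)g(\|\x_\b\|)$ at independence plays the same role as the paper's cross-term calculation, and the uniform bound $|\Psi''|\le\lambda_\sigma^2$ replaces the explicit remainder bookkeeping. The Price-theorem route is arguably cleaner and makes the appearance of $b_\sigma^2=\E[\sigma'(\xi)]^2$ as the first derivative at independence completely transparent; the Gram--Schmidt route is more elementary and self-contained.

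One small discrepancy: your bound $|H'(r)|\le C\lambda_\sigma^2 r$ yields $|\Phi_{\a\a}-1|\le C\lambda_\sigma^2\,|\|\x_\a\|^2-1|$, one power of $\lambda_\sigma$ worse than the stated (\ref{eq:concentration_diagonal}). The paper recovers $\lambda_\sigma^1$ because in the linear Taylor term the factor $\sigma(\xi)$ is controlled by $\E[\sigma(\xi)^2]^{1/2}=1$ via Cauchy--Schwarz rather than by $\lambda_\sigma$. You can match this in your framework by bounding $|H'(r)|=2|\E[\xi\sigma(r\xi)\sigma'(r\xi)]|\le 2\lambda_\sigma H(r)^{1/2}$ and bootstrapping near $r=1$. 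This does not affect the downstream applications, which only use these bounds up to universal constants, but you should note it if you want to match the lemma as stated.
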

Note that (\ref{eq:concentration_offdiagonal}) establishes an approximation
which is second-order in $\eps$---this will be important in our later
arguments which approximate $\Phi$ in Frobenius norm.
\begin{proof}
For part (a), observe that $(\zeta_\a,\zeta_\b) \equiv
(\w^\top \x_\a,\w^\top \x_\b)$ is bivariate
Gaussian, with mean 0 and covariance
\[\Sigma=\begin{pmatrix} \|\x_\a\|^2 & \x_\a^\top \x_\b \\
\x_\a^\top \x_\b & \|\x_\b\|^2 \end{pmatrix}=\Id+\Delta\]
where $\Delta$ is entrywise bounded by $\eps$. Then  performing a Gram-Schmidt
orthogonalization procedure, for some independent standard Gaussian
variables $\xi_\a,\xi_\b \sim \N(0,1)$, we have
\begin{equation}\label{eq:gramschmidt}
\zeta_\a=u_\a \xi_\a, \qquad \zeta_\b=u_\b \xi_\b+v_\b \xi_\a
\end{equation}
where $u_\a,u_\b>0$ and $v_\b \in \R$ satisfy
$|u_\a-1|,|u_\b-1|,|v_\b| \leq C\eps$ for a universal constant $C>0$.

By a Taylor expansion of $\sigma(\zeta)$ around $\zeta=\xi$,
there exists a random variable $\eta$ between $\zeta$ and $\xi$ such that
\begin{equation}\label{eq:taylorexpansion}
\sigma(\zeta)=\sigma(\xi)+\sigma'(\xi)(\zeta-\xi)+\frac{1}{2}\sigma''(\eta)(\zeta-\xi)^2.
\end{equation}
For $\a \neq \b$,
applying this for both $\zeta_\a$ and $\zeta_\b$, noting that the product of
leading terms satisfies $\E[\sigma(\xi_\a)\sigma(\xi_\b)]=0$, and applying
also the bounds $|\sigma'(x)|,|\sigma''(x)| \leq \lambda_\sigma$ where
$\lambda_\sigma \geq 1$, it is easy to check that
\[\Phi_{\a\b}=\E[\sigma(\zeta_\a)\sigma(\zeta_\b)]
=\E\Big[\sigma(\xi_\a) \cdot
\sigma'(\xi_\b)(\zeta_\b-\xi_\b)+\sigma(\xi_\b) \cdot
\sigma'(\xi_\a)(\zeta_\a-\xi_\a) \Big]+\text{remainder}\]
where this remainder has magnitude at most $C\lambda_\sigma^2\eps^2$.
For the first term, substituting (\ref{eq:gramschmidt}) and applying
independence of $\xi_\a$ and $\xi_\b$, we have
\begin{align*}
&\E\Big[\sigma(\xi_\a) \cdot
\sigma'(\xi_\b)(\zeta_\b-\xi_\b)+\sigma(\xi_\b) \cdot
\sigma'(\xi_\a)(\zeta_\a-\xi_\a) \Big]\\
&=(u_\b-1) \E[\sigma(\xi_\a)] \cdot \E[\sigma'(\xi_\b)\xi_\b]
+v_\b \E[\sigma(\xi_\a)\xi_\a] \cdot \E[\sigma'(\xi_\b)]
+(u_\a-1) \E[\sigma(\xi_\b)] \cdot \E[\sigma'(\xi_\a)\xi_\a].
\end{align*}
Applying $\E[\sigma(\xi)]=0$ and the integration-by-parts identity
$\E[\sigma(\xi)\xi]=\E[\sigma'(\xi)]=b_\sigma$, this term equals $v_\b
b_\sigma^2$. From (\ref{eq:gramschmidt}), we have $u_\a v_\b=\E[\zeta_\a
\zeta_\b]=\x_\a^\top \x_\b$. Since $|u_\a-1| \leq C\eps$ and $|\x_\a^\top \x_\b|
\leq \eps$, this implies $|v_\b b_\sigma^2-b_\sigma^2 \x_\a^\top \x_\b| \leq
Cb_\sigma^2 \eps^2 \leq C\lambda_\sigma^2 \eps^2$.
Combining these yields (\ref{eq:concentration_offdiagonal}). Similarly,
from a first-order Taylor expansion analogous to
(\ref{eq:taylorexpansion}),
\begin{align*}
\Big|\E[\sigma(\w^\top \x_\a)]\Big|
&=\Big|\E[\sigma(\zeta_\a)]-\E[\sigma(\xi_\a)]\Big|
\leq C\lambda_\sigma \cdot |u_\a-1|,\\
|\Phi_{\a\a}-1|&=\Big|\E[\sigma(\zeta_\a)^2]-\E[\sigma(\xi_\a)^2]\Big|
\leq C\max\Big(\lambda_\sigma \cdot |u_\a-1|,\;\lambda_\sigma^2 \cdot
|u_\a-1|^2\Big).
\end{align*}
The bounds (\ref{eq:wxbound}) and
(\ref{eq:concentration_diagonal}) follow from the observations
$u_\a^2=\E[\zeta_\a^2]=\|\x_\a\|^2$ and
$|u_\a-1| \leq |u_\a-1| \cdot |u_\a+1|=|u_\a^2-1| \leq \eps$.

For part (b), let $\w_k^\top$ be the $k^\text{th}$ row of $W$.
Then by definition of $\vX$, for any $\a,\b \in [n]$ (including $\a=\b$),
\[\vx_\a^\top \vx_\b=\frac{1}{\vd} \sum_{k=1}^{\vd}
\sigma\Big(\w_k^\top \x_\a\Big)
\sigma\Big(\w_k^\top \x_\b\Big).\]
We apply Bernstein's inequality: Denote by $\|\cdot\|_{\psi_2}$ and $\|\cdot\|_{\psi_1}$ the sub-Gaussian and sub-exponential norms of a random variable. For any deterministic vector $\x\in \R^{d}$, the function
$\w \mapsto \sigma(\w^\top \x)$ is
$\lambda_\sigma \|\x\|$-Lipschitz. Then for $\w \sim \N(0,\Id)$ and a universal
constant $C>0$, we have by Gaussian concentration-of-measure
\[\|\sigma(\w^\top \x_\a)-\E[\sigma(\w^\top \x_\a)]\|_{\psi_2}
\leq C\lambda_\sigma \|\x_\a\|.\]
From (\ref{eq:wxbound}),
$|\E[\sigma(\w^\top \x_\a)]| \leq C\lambda_\sigma \eps$. Thus
(recalling that $|\|\x_\a\|-1| \leq \eps$), we have
$\|\sigma(\w^\top \x_\a)\|_{\psi_2} \leq C\lambda_\sigma$ for a
constant $C>0$, and similarly for $\x_\b$. So
\begin{equation}\label{eq:subexpbound}
\|\sigma(\w^\top \x_\a) \sigma(\w^\top \x_\b)\|_{\psi_1}
\leq \|\sigma(\w^\top \x_\a)\|_{\psi_2}
\|\sigma(\w^\top \x_\b)\|_{\psi_2} \leq C\lambda_\sigma^2.
\end{equation}
Applying Bernstein's inequality (see \cite[Theorem 2.8.1]{vershynin2018high}),
for a universal constant $c>0$ and any $t>0$,
\[\P\Big[\big|\vx_\a^\top \vx_\b
-\E\big[\vx_\a^\top \vx_\b\big] \big|>t  \Big]\leq 2\exp\left(-c\vd
\min\left(\frac{t^2}{\lambda_\sigma^4},\frac{t}{\lambda_\sigma^2}\right)\right).\]
Applying this for $t=\lambda_\sigma^2\eps$ and
taking a union bound over all $\a,\b \in [n]$, we get
\begin{align}
\P\Big[\big|\vx_\a^\top \vx_\b-\E\big[\vx_\a^\top \vx_\b\big] \big|\leq
\lambda_\sigma^2 \eps
\text{ for all } \a,\b \in [n] \Big] \geq 1-2n^2\exp\left(-c\vd \cdot \eps^2\right).
\end{align}
Since $\E[\vx_\a^\top \vx_\b]=\Phi_{\a\b}$, part (b) now follows from part (a).
\end{proof}

\begin{lemma}\label{lemma:normbound}
If $X$ is $(\eps,B)$-orthonormal where $\eps<1/\lambda_\sigma$,
then for universal constants $C,c>0$:
\begin{enumerate}[(a)]
\item $\|\Phi\| \leq C\lambda_\sigma^2 B^2$.
\item With probability at least $1-2e^{-cn}$,
$\|\vX\| \leq C\Big(1+\sqrt{n/\vd}\Big)\lambda_\sigma B$.
\end{enumerate}
\end{lemma}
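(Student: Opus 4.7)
The plan is to prove (a) by a direct bound on quadratic forms using the Gaussian Poincar\'e inequality, and (b) by writing $\vX$ as its expectation plus a centered part and applying a standard sample-covariance concentration bound.

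\textbf{Part (a).} I would fix an arbitrary unit vector $\v\in\R^n$ and consider
\[g(\w):=\sigma(\w^\top X)\v, \qquad \v^\top \Phi \v \;=\; \E[g(\w)^2].\]
A short computation would give $\nabla g(\w)=X D_\w\v$ with $D_\w:=\diag(\sigma'(\w^\top X))$, and hence $\|\nabla g(\w)\|\le \|X\|\,\lambda_\sigma\,\|\v\|\le \lambda_\sigma B$, so that $g$ is $\lambda_\sigma B$-Lipschitz. Gaussian Poincar\'e would then bound $\Var g(\w)\le \lambda_\sigma^2 B^2$. For the mean, I would combine \eqref{eq:wxbound} with Cauchy--Schwarz and the fourth defining condition $\sum_\a(\|\x_\a\|^2-1)^2\le B^2$ of $(\eps,B)$-orthonormality to obtain $|\E g(\w)|\le C\lambda_\sigma B$. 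Summing, $\v^\top\Phi\v = (\E g)^2+\Var g\le C\lambda_\sigma^2 B^2$, and taking the supremum over unit $\v$ yields (a).

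\textbf{Part (b).} I would write $\vX=\E\vX + (\vX-\E\vX)$. The mean matrix equals $\1_{\vd}\mu^\top/\sqrt{\vd}$ with $\mu_\a=\E[\sigma(\w^\top\x_\a)]$, so $\|\E\vX\|=\|\mu\|\le C\lambda_\sigma B$ by exactly the mean estimate used in (a). For the centered matrix $M:=\sqrt{\vd}(\vX-\E\vX)$, I would observe that its i.i.d.\ rows have covariance $\Sigma_0=\Phi-\mu\mu^\top$ satisfying $\|\Sigma_0\|\le C\lambda_\sigma^2 B^2$ by part (a), and that every linear projection of a row, namely $g(\w)-\E g(\w)$, is $C\lambda_\sigma B$-sub-Gaussian (from the Lipschitz bound in (a) plus Gaussian concentration of measure). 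I would then invoke a standard sample-covariance bound for i.i.d.\ sub-Gaussian rows (e.g., Theorem~4.7.1 of \cite{vershynin2018high}) to obtain, with probability at least $1-2e^{-cn}$,
\[\bigl\|M^\top M/\vd-\Sigma_0\bigr\| \;\le\; C\lambda_\sigma^2 B^2\bigl(\sqrt{n/\vd}+n/\vd\bigr),\]
whence $\|M/\sqrt{\vd}\|\le C\lambda_\sigma B\sqrt{1+n/\vd}\le C\lambda_\sigma B(1+\sqrt{n/\vd})$. Adding the mean contribution proves (b).

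The main technical subtlety I expect to face is ensuring that the $B$ factor propagates into the sub-Gaussian norm of the rows of $M$: this comes from $\|X\|\le B$ controlling the Lipschitz constant of every linear functional of a row, rather than merely each individual coordinate being sub-Gaussian. A naive row-wise concentration that treated each coordinate separately would not yield the correct scaling in $B$. Part (a) is otherwise straightforward provided one remembers to invoke the fourth condition $\sum_\a(\|\x_\a\|^2-1)^2\le B^2$ of Definition \ref{def:orthogonal} when bounding the mean term.
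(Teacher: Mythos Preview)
Your proposal is correct and follows essentially the same approach as the paper: Gaussian Poincar\'e plus the mean bound via \eqref{eq:wxbound} and the fourth orthonormality condition for part (a), and sub-Gaussian row concentration for part (b). The only cosmetic difference is that you explicitly center $\vX$ and bound the mean separately, whereas the paper applies the sub-Gaussian sample-covariance bound (\cite[Eq.\ (5.26)]{vershynin2010introduction}) directly to the uncentered rows and then uses $\|\Phi\|$ from (a) to conclude.
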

\begin{proof}
For part (a), define
\begin{equation}\label{eq:Sigma}
\Sigma=\E\Big[\sigma(\w^\top X)^\top \sigma(\w^\top X)\Big]
-\E[\sigma(\w^\top X)]^\top\E[\sigma(\w^\top X)]
\end{equation}
where the first term on the right is $\Phi$. Then
\[\|\Sigma\|=\sup_{\v:\|\v\|=1} \v^\top \Sigma \v
=\sup_{\v:\|\v\|=1} \left|\E\Big[\big(\sigma(\w^\top X)
\v\big)^2\Big]-\E\Big[\sigma(\w^\top X)\v\Big]^2\right|
=\sup_{\v:\|\v\|=1} \Var\big[\sigma(\w^\top X)\v\big].\]
We bound this variance using the Gaussian Poincar\'e inequality:
Let us fix $\v\in\R^n$ with $\|\v\|=1$ and define
\[F(\w)=\sigma(\w^\top X)\v=\sum_{\a=1}^n v_\a \sigma(\w^\top \x_\a).\]
Then, letting $\u \in \R^n$ be the vector with entries
$u_\a=v_\a \sigma'(\w^\top \x_\a)$,
\begin{equation}\label{eq:Flipschitz}
\nabla F(\w)=\sum_{\a=1}^n v_\a \sigma'(\w^\top \x_\a) \cdot \x_\a
=X\u, \qquad \|\nabla F(\w)\| \leq \|X\| \cdot \|\u\|
\leq \lambda_\sigma B.
\end{equation}
Then by the Gaussian Poincar\'e inequality,
$\Var[F(\w)] \leq \E[\|\nabla F(\w)\|^2] \leq \lambda_\sigma^2B^2$,
so $\|\Sigma\| \leq \lambda_\sigma^2 B^2$.
In addition, by (\ref{eq:wxbound}), the difference between $\Phi$ and $\Sigma$ is a rank-one perturbation controlled by
\begin{equation}\label{eq:rankone_perturbation_bound}
\|\Phi-\Sigma\|=\|\E[\sigma(\w^\top X)]\|^2=\sum_{\a=1}^n
\E[\sigma(\w^\top \x_\a)]^2 \leq C\lambda_\sigma^2
\sum_{\a=1}^n (\|\x_\a\|^2-1)^2 \leq C \lambda_\sigma^2B^2,
\end{equation}
the last inequality using the final condition of $(\eps,B)$-orthonormality in
Definition \ref{def:orthogonal}. This establishes part (a).

For part (b), we apply the concentration result of
\cite[Eq.\ (5.26)]{vershynin2010introduction} for matrices with independent
sub-Gaussian rows. For any fixed unit vector $\v \in \R^n$, recall from
(\ref{eq:Flipschitz}) that $F(\w)=\sigma(\w^\top X)\v$ is $\lambda_\sigma
B$-Lipschitz. Then by Gaussian concentration-of-measure,
\[\|F(\w)-\E[F(\w)]\|_{\psi_2} \leq C\lambda_\sigma B.\]
We have $|\E[F(\w)]| \leq \|\E[\sigma(\w^\top X)]\| \leq C\lambda_\sigma B$ by
(\ref{eq:rankone_perturbation_bound}), so also
$\|F(\w)\|_{\psi_2} \leq C\lambda_\sigma B$.
This holds for any unit vector $\v \in \R^n$, hence
$\|\sigma(\w^\top X)\|_{\psi_2} \leq C\lambda_\sigma B$ for the vector
sub-Gaussian norm. Thus, $\sqrt{d}\vX/(\lambda_\sigma B)$ has i.i.d.\
rows whose sub-Gaussian norm is at most a universal constant. Recalling
$\Phi=\E[\vX^\top \vX]$ and applying
\cite[Eq.\ (5.26)]{vershynin2010introduction} with
$A=\sqrt{d}\vX/(\lambda_\sigma B)$, we obtain for some universal
constants $C,c>0$ that
\[\P\left[\|\vX^\top \vX-\Phi\|>
\max(\delta,\delta^2)\|\Phi\|\right] \leq 2e^{-ct^2},
\qquad \delta=C\sqrt{n/\vd}+t/\sqrt{\vd}.\]
Note that the complementary event $\|\vX^\top \vX-\Phi\|
\leq \max(\delta,\delta^2)\|\Phi\|$ implies
\[\|\vX\| \leq \sqrt{(1+\max(\delta,\delta^2))\|\Phi\|}
\leq (1+C'\delta)\sqrt{\|\Phi\|}\]
for a constant $C'>0$. Then choosing $t=\sqrt{n}$ and applying part (a) yields
part (b).
\end{proof}

\begin{lemma}\label{lemma:colsqbound}
If $X$ is $(\eps,B)$-orthonormal where $\eps<1/\lambda_\sigma$,
then for universal constants $C,c>0$,
with probability at least $1-e^{-cn}$, the columns of $\vX$ satisfy
\[\sum_{\a=1}^n (\|\vx_\a\|^2-1)^2 \leq C\Big(1+n^2/\vd^2\Big)\lambda_\sigma^4
B^2.\]
\end{lemma}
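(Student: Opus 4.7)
I would decompose $(\|\vx_\a\|^2 - 1)^2 \leq 2(\|\vx_\a\|^2 - \Phi_{\a\a})^2 + 2(\Phi_{\a\a} - 1)^2$ and handle the two sums separately. The deterministic sum $\sum_\a(\Phi_{\a\a}-1)^2$ is controlled by the diagonal bound (\ref{eq:concentration_diagonal}) in Lemma \ref{lemma:orthog}(a), which gives $(\Phi_{\a\a}-1)^2 \leq C\lambda_\sigma^2(\|\x_\a\|^2-1)^2$, and then by the fourth condition of $(\eps,B)$-orthonormality this sum is at most $C\lambda_\sigma^2 B^2$. The main task is to control $T \equiv \sum_\a(\|\vx_\a\|^2-\Phi_{\a\a})^2$ with probability $\geq 1-e^{-cn}$.

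\textbf{Expected size.} Writing $T=\|S\|^2/\vd^2$ with $S_\a=\sum_k(\sigma(\w_k^\top\x_\a)^2-\Phi_{\a\a})$, the i.i.d.\ structure over $k$ gives $\E T = \vd^{-1}\sum_\a\Var(\sigma(\w^\top\x_\a)^2)$. Using the sub-Gaussian estimate $\|\sigma(\w^\top\x_\a)\|_{\psi_2}\leq C\lambda_\sigma$ (from Gaussian concentration applied to the $\lambda_\sigma\|\x_\a\|$-Lipschitz function $\w\mapsto\sigma(\w^\top\x_\a)$, as in the proof of Lemma \ref{lemma:orthog}(b)), one has $\E\sigma(\w^\top\x_\a)^4\leq C\lambda_\sigma^4$, so $\E T\leq Cn\lambda_\sigma^4/\vd$ and in particular $\E\sqrt T\leq\sqrt{\E T}\leq C\lambda_\sigma^2\sqrt{n/\vd}$.

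\textbf{Concentration of $\sqrt T$.} I would view $\sqrt T$ as a smooth function of the Gaussian matrix $W$ and apply Gaussian Lipschitz concentration. Computing the Jacobian $J_S\in\R^{n\times\vd d}$ of $S$ and using $\|\nabla_W\sqrt T\|_F\leq\|J_S\|$, a direct calculation yields the factorization $J_SJ_S^\top=(A^\top A)\odot(X^\top X)$, where $A=2\sigma(WX)\odot\sigma'(WX)$. Both factors are positive semi-definite, so Schur's product theorem gives $\|J_S\|^2\leq\|X\|^2\cdot\max_\a(A^\top A)_{\a\a}\leq B^2\cdot\max_\a 4\sum_k\sigma^2(\w_k^\top\x_\a)\sigma'^2(\w_k^\top\x_\a)$. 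The inner quantity is a sum of $\vd$ i.i.d.\ sub-exponentials in $\w_k$, with $\psi_1$-norm $O(\lambda_\sigma^4)$ and mean $O(\lambda_\sigma^2)$, so Bernstein plus a union bound over $\a$ yields $\max_\a(A^\top A)_{\a\a}\leq C\lambda_\sigma^2\vd$ with probability $\geq 1-2ne^{-c\vd/\lambda_\sigma^4}$ (which is at least $1-e^{-c'n}$ under Assumption \ref{assump:asymptotics}). On this event, $\sqrt T$ is locally $L$-Lipschitz with $L\leq C\lambda_\sigma B\sqrt\vd$; after extending $\sqrt T$ to a globally $L$-Lipschitz function agreeing with it on the good event (e.g.\ by Kirszbraun), Gaussian concentration gives $|\sqrt T - \E\sqrt T|\leq C\lambda_\sigma B\sqrt{n\vd}$ with probability $\geq 1-2e^{-cn}$. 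Squaring and absorbing $\lambda_\sigma,B\geq 1$ yields $T\leq C\lambda_\sigma^4B^2\cdot n/\vd$, which is at most $C\lambda_\sigma^4B^2(1+n^2/\vd^2)$ by the elementary inequality $n/\vd\leq 1+n^2/\vd^2$ (AM-GM). Combining with the deterministic bound on $\sum_\a(\Phi_{\a\a}-1)^2$ finishes the proof.

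\textbf{Main obstacle.} The delicate point is getting the correct power of $B$. The naive bound $\|J_S\|\leq 2\lambda_\sigma\sqrt\vd\,\|X\|\cdot\|\vX\|$, combined with Lemma \ref{lemma:normbound}(b), would introduce one $B$ from $\|X\|$ and a second from $\|\vX\|\lesssim\lambda_\sigma B$, producing a final bound with $B^4$ that would cause $\vB$ in Lemma \ref{lemma:orthonormalpropagation} to grow geometrically across layers and break Corollary \ref{cor:orthonormalinduction}. The Schur-product route avoids this by bounding $\max_\a(A^\top A)_{\a\a}$ directly by Bernstein on $\sum_k\sigma^2\sigma'^2$---whose mean is $O(\lambda_\sigma^2\vd)$ and involves no $\|\vX\|^2$---trading a small probabilistic cost for the correct linear-in-$B^2$ scaling. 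The second subtlety is that the Lipschitz bound only holds on a good event rather than globally, which is handled by the standard truncation/Lipschitz-extension trick described above.
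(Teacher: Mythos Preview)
Your approach is valid but follows a genuinely different route from the paper. The paper controls $\|\z\|$ via an $\eps$-net: for each unit vector $\v$, it writes $\v^\top\z=\vd^{-1}\sum_{i} F(\q(\w_i))$ as an average of i.i.d.\ terms over the rows of $W$, invokes the Gaussian \emph{Sobolev} inequality to show $\|F(\q(\w))\|_{\psi_1}\leq C\lambda_\sigma^2 B$ (the single power of $B$ enters through $\|\nabla_\w F\|^2\leq\|X\|^2\|\nabla_\q F\|^2$, followed by a $\psi_1$-bound on $\|\nabla_\q F\|^2$ that does not see $\|X\|$), and then applies Bernstein plus a union bound over a $5^n$-net. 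You instead apply Gaussian Lipschitz concentration directly to $\sqrt{T}$ as a function of the entire matrix $W$, controlling the Lipschitz constant through the Schur-product identity $J_SJ_S^\top=(A^\top A)\odot(X^\top X)$. Both devices correctly isolate one factor of $B$; the Schur route is a clean alternative to Sobolev and is arguably more transparent about where $B$ enters.

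Two points to flag. First, there is a scaling slip: since $\sqrt{T}=\|S\|/\vd$, the correct bound is $\|\nabla_W\sqrt{T}\|\leq\|J_S\|/\vd\leq C\lambda_\sigma B/\sqrt{\vd}$, giving a deviation of order $\lambda_\sigma B\sqrt{n/\vd}$ at probability $e^{-cn}$, not $\sqrt{n\vd}$. Your final conclusion $T\leq C\lambda_\sigma^4 B^2\cdot n/\vd$ is consistent with the corrected numbers, so this looks like a typo rather than a gap. Second, and more substantively, your Bernstein step on $\max_\a(A^\top A)_{\a\a}$ produces failure probability $2ne^{-c\vd/\lambda_\sigma^4}$, which only becomes $\leq e^{-c'n}$ with a \emph{universal} $c'$ under a side condition like $\vd\gtrsim\lambda_\sigma^4 n$. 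The lemma as stated carries no such hypothesis; the paper's row-wise Bernstein naturally delivers the two-regime bound $\|\z\|\lesssim\lambda_\sigma^2 B\max(\sqrt{n/\vd},\,n/\vd)$ with universal constants, uniformly in $\vd$. Your argument can be repaired by raising the threshold for $\max_\a(A^\top A)_{\a\a}$ to $C\lambda_\sigma^4\max(\vd,n)$, which restores universality and recovers the full $(1+n^2/\vd^2)$ shape; but as written you are leaning on Assumption~\ref{assump:asymptotics}, which the lemma itself does not invoke.
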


Let us remark that in settings where $\eps \gg 1/\sqrt{n}$, applying
Lemma \ref{lemma:orthog}(b) to bound each term $(\|\vx_\a\|^2-1)^2$ separately
would not yield a constant-order bound for this sum. The proof below performs
a more careful analysis of the combined fluctuations of $(\|\vx_\a\|^2-1)^2$.

\begin{proof}
Let $\z=(z_1,\ldots,z_n) \in \R^n$ and $\r=(r_1,\ldots,r_n) \in \R^n$ be defined
as
\[z_\a=\|\vx_\a\|^2-\E[\|\vx_\a\|^2],
\qquad r_\a=\E[\|\vx_\a\|^2]-1.\]
The quantity to be bounded is $\|\z+\r\|^2$. Note that
$\|\z+\r\|^2 \leq 2\|\z\|^2+2\|\r\|^2$. We have
\[\E[\|\vx_\a\|^2]=\E\left[\frac{1}{\vd} \sum_{i=1}^{\vd}
\sigma(\w_i^\top \x_\a)^2 \right]=\Phi_{\a\a},\]
so applying (\ref{eq:concentration_diagonal}) from Lemma \ref{lemma:orthog},
\begin{equation}\label{eq:rsqbound}
\|\r\|^2=\sum_{\a=1}^n (\Phi_{\a\a}-1)^2
\leq C\lambda_\sigma^2 \sum_{\a=1}^n (\|\x_\a\|^2-1)^2 \leq C\lambda_\sigma^2
B^2.
\end{equation}
Thus it remains to bound $\|\z\|^2$.

Let $\N$ be a $1/2$-net of the unit ball $\{\w \in \R^n:\|\w\|=1\}$,
of cardinality $|\N| \leq 5^n$. Then
\[\|\z\|=\sup_{\w:\|\w\| \leq 1}
\w^\top \z \leq \sup_{\v \in \N} \v^\top \z+\|\z\|/2,\]
so $\|\z\| \leq 2\sup_{\v \in \N} \v^\top \z$.
For each fixed vector $\v=(v_1,\ldots,v_n) \in \N$, we have
\begin{align}
\v^\top \z&=\sum_{\a=1}^n v_\a \cdot \frac{1}{\vd}
\sum_{i=1}^{\vd} \Big(\sigma(\w_i^\top \x_\a)^2-\E[\sigma(\w_i^\top \x_\a)^2]
\Big)\nonumber\\
&=\frac{1}{\vd}\sum_{i=1}^{\vd}
\bigg(\sum_{\a=1}^n
\Big(\sigma(\w_i^\top \x_\a)^2-\E[\sigma(\w_i^\top \x_\a)^2]\Big) v_\a\bigg).
\label{eq:vzprod}
\end{align}
We will bound the sub-exponential norm of each summand $i=1,\ldots,\vd$ and
apply Bernstein's inequality.

For $\w \sim \N(0,\Id)$, denote
\[\q \equiv \q(\w)=(q_1,\ldots,q_n)=(\w^\top \x_1,\ldots,\w^\top \x_n),
\qquad F(\q)=\sum_{\a=1}^n \Big(\sigma(q_\a)^2
-\E[\sigma(q_\a)^2] \Big)v_\a.\]
Observe that $\q(\w)=X^\top \w$.
Thus we wish to bound the sub-exponential norm of $F(\q(\w))$ when $\w \sim
\N(0,\Id)$. By the Gaussian Sobolev inequality
(see \cite[Eq.\ (3)]{adamczak2015concentration}), for any $p \geq 2$,
\begin{equation}\label{eq:sobolev}
\|F(\q(\w))\|_{L^p} \leq \sqrt{p} \cdot \Big\|\|\nabla_{\w}
F(\q(\w))\|\Big\|_{L^p}
\end{equation}
where $\|Y\|_{L^p}=\E[|Y|^p]^{1/p}$ denotes the $L^p$-norm of a random variable
(and $\|\nabla_\w F(\q(\w))\|$ is the usual $\ell_2$ vector norm of the
gradient of $F(\q(\w))$ in $\w$). By the chain rule,
\[\nabla_{\w} F(\q(\w))=X \cdot \nabla_\q F(\q),\]
so
\[\|\nabla_{\w} F(\q(\w))\|^2 \leq \|X\|^2 \|\nabla_\q F(\q)\|^2
\leq B^2\|\nabla_\q F(\q)\|^2.\]
We have $(\partial/\partial q_\a) F(\q)=2\sigma(q_\a)\sigma'(q_\a)v_\a$, so
\[\|\nabla_\q F(\q)\|^2
=\sum_{\a=1}^n 4\sigma(q_\a)^2\sigma'(q_\a)^2v_\a^2
\leq 4\lambda_\sigma^2 \sum_{\a=1}^n \sigma(q_\a)^2v_\a^2.\]
Recalling (\ref{eq:subexpbound}), we have $\|\sigma(q_\a)^2\|_{\psi_1}
=\|\sigma(\w^\top \x_\a)^2\|_{\psi_1} \leq C\lambda_\sigma^2$. Then
\[\left\|\sum_{\a=1}^n \sigma(q_\a)^2v_\a^2\right\|_{\psi_1}
\leq C\lambda_\sigma^2 \sum_{\a=1}^n v_\a^2=C\lambda_\sigma^2,\]
so
\[\Big\|\|\nabla_{\w} F(\q(\w))\|^2\Big\|_{\psi_1}
\leq C \lambda_\sigma^4B^2.\]
This implies the bound (see \cite[Proposition 2.7.1]{vershynin2018high}),
for any $p \geq 1$,
\[\Big\|\|\nabla_{\w} F(\q(\w))\|\Big\|_{L^{2p}}^{2p}
=\E\Big[\|\nabla_{\w} F(\q(\w))\|^{2p}\Big]
=\Big\|\|\nabla_{\w} F(\q(\w))\|^2\Big\|_{L^p}^p
\leq (C'\lambda_\sigma^4B^2  \cdot p)^p\]
for a universal constant $C'>0$. Thus, applying this to
(\ref{eq:sobolev}), we obtain for any $p \geq 2$
\[\|F(\q(\w))\|_{L^p} \leq \sqrt{p} \cdot C\lambda_\sigma^2B \sqrt{p}
=C\lambda_\sigma^2B \cdot p.\]
Finally, this implies (see again \cite[Proposition 2.7.1]{vershynin2018high})
$\|F(\q(\w))\|_{\psi_1} \leq C'\lambda_\sigma^2B$
for a universal constant $C'>0$, which is our desired bound on the
sub-exponential norm of $F(\q(\w))$.

Applying this and Bernstein's inequality to (\ref{eq:vzprod}), for any $t>0$,
\[\P[\v^\top \z>t]
\leq \exp\left(-c\vd \min\left(\frac{t^2}{\lambda_\sigma^4B^2},
\frac{t}{\lambda_\sigma^2B}\right)\right).\]
Setting
\[t=C_0\lambda_\sigma^2B \cdot \max(\delta,\delta^2),
\qquad \delta=\sqrt{n/\vd}\]
for a large enough constant $C_0>0$, and
taking the union bound over all $5^n$ vectors $\v \in \N$, we get
\[\P[\|\z\|>2t]
\leq \P\left[\sup_{\v \in \N} \v^\top \z>t\right] \leq e^{-cn}\]
for a constant $c>0$. Combining with the bound on $\|\r\|^2$ in
(\ref{eq:rsqbound}), we obtain the lemma.
\end{proof}

\section{Resolvent analysis for a single layer}\label{appendix:singlelayer}

We consider the same setting of a single layer as in the preceding section.
Let $\vX$ and $\Phi$ be defined by the
deterministic input $X \in \R^{d \times n}$ and Gaussian 
matrix $W \in \R^{\vd \times d}$ as in
(\ref{def:vX}) and (\ref{def:Phi}), and define the ($n$-dependent) aspect ratio
\[\gamma=n/\vd.\]
Consider a deterministic real-symmetric matrix $A \in \R^{n \times n}$, and
two (possibly $n$-dependent) spectral arguments
$\alpha \in \C^*$ and $z \in \C^+$, where
$\C^*=\overline{\C^-} \setminus \{0\}$. We study the matrix
\[A+\alpha \vX^\top \vX-z\Id.\]

We collect here the set of assumptions that we will use in this section.
\begin{assumption}\label{assump:singlelayer}
There are constants $B,C_0,c_0>0$ such that
\begin{enumerate}[(a)]
\item $\alpha \in \C^*$ and $z \in \C^+$, and
$\gamma,|\alpha|,|z|,\Im z \in [c_0,C_0]$.
\item $X$ is $(\eps_n,B)$-orthonormal, where $\eps_n<n^{-0.01}$.
\item $A \in \R^{n \times n}$ is deterministic and symmetric,
satisfying $\|A\| \leq C_0$.
\item $W$ has i.i.d.\ $\N(0,1)$ entries, and $\sigma(x)$ satisfies Assumption
\ref{assump:asymptotics}(b).
\end{enumerate}
\end{assumption}
Throughout this section, $C,C',c,c',n_0>0$ denote constants changing from
instance to instance that may depend on $\lambda_\sigma$ and the above
values $B,C_0,c_0$.

Proposition \ref{prop:invertible} ensures that $A+\alpha\vX^\top\vX-z\Id$
is invertible. Define the resolvent
\begin{equation}\label{eq:resolvent}
R=(A+\alpha \vX^\top \vX-z\Id)^{-1} \in \C^{n \times n}
\end{equation} 
and the deterministic ($n$-dependent) parameter
\begin{equation}\label{eq:sbar}
\bar{s}=\alpha^{-1}+\gamma \cdot \E[\tr R\Phi].
\end{equation}
The goal of this section is to prove the following result, which approximates
this resolvent $R$ by replacing the random matrix $\alpha \vX^\top \vX$ with a
deterministic matrix $\bar{s}^{-1}\Phi$, and provides an approximate fixed-point
equation that defines this parameter $\bar{s}$.

For $A=0$ and $\alpha=1$, we will verify in Appendix \ref{appendix:CK}
that this result reduces to the Marcenko-Pastur equation (\ref{eq:MPeq}).

\begin{lemma}\label{lemma:fixedpoint}
Under Assumption \ref{assump:singlelayer},
there are constants $C,c,c',n_0>0$ such that for all $n \geq n_0$,
any deterministic matrix $M \in \C^{n \times n}$, and any $t \in (n^{-1},c')$,
\begin{enumerate}[(a)]
\item 
$\displaystyle \P\left[\left|\tr RM-\tr \left(A+\bar{s}^{-1}\Phi-z\Id\right)^{-1}M\right|
>\|M\|t\right] \leq Cne^{-cnt^2}$
\item 
$\displaystyle \P\left[\left|\bar{s}-\big(\alpha^{-1}+\gamma
\tr \left(A+\bar{s}^{-1} \Phi-z\Id\right)^{-1}\Phi\big)\right|
>t\right] \leq Cne^{-cnt^2}$
\end{enumerate}
\end{lemma}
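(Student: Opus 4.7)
My plan is to combine a leave-one-out/Sherman--Morrison analysis of the rows of $W$ with Gaussian concentration for quadratic forms, in the style of the classical proof of the Marcenko--Pastur equation. Write $\vX^\top\vX = \sum_{k=1}^{\vd}\y_k\y_k^\top$ with $\y_k = \sigma(X^\top\w_k)/\sqrt{\vd}$ i.i.d.\ across $k$, so that $\E[\y_k\y_k^\top] = \Phi/\vd$, and let $R^{(k)} = (A + \alpha\sum_{j\ne k}\y_j\y_j^\top - z\Id)^{-1}$ denote the leave-one-out resolvent. Let $T = (A+\bar s^{-1}\Phi - z\Id)^{-1}$ be the candidate deterministic equivalent. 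Since $\alpha \in \overline{\C^-}$ and $\Im z \geq c_0$ give $\Im(R^{-1}) \preceq -\Im(z)\Id$, Proposition \ref{prop:invertible} yields $\|R\|,\|R^{(k)}\| \leq 1/c_0$, as well as $\Im R \succeq \Im(z)\,R^*R$. Combined with $\Phi \succeq 0$ and $\Im(\alpha^{-1}) \geq 0$, this will be used to obtain a lower bound on $|\alpha\bar s|$ so that $T$ is well-posed with $\|T\| \leq C$.

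The analytic core is a concentration estimate for the quadratic form $\y_k^\top N\y_k$, namely
\[
\P\!\bigl[\,|\y_k^\top N\y_k - \gamma\tr(N\Phi)| > t\|N\|\,\bigr] \leq C\exp(-cnt^2)
\]
for any deterministic $N \in \C^{n\times n}$, proved by splitting $N$ into its Hermitian and anti-Hermitian parts and applying the Gaussian Sobolev inequality to $\w \mapsto \vd^{-1}\sigma(X^\top\w)^\top N\sigma(X^\top\w)$ exactly as in the proof of Lemma \ref{lemma:colsqbound} (the bound $\|X\|\leq B$ enters the Lipschitz constant of the gradient). Applied with $N = R^{(k)}$, which is independent of $\w_k$, together with the rank-one bound $|\tr((R - R^{(k)})\Phi)| = O(1/n)$, this yields
\[
\y_k^\top R^{(k)}\y_k = \gamma\,\tr R\Phi + o(1), \qquad 1 + \alpha\,\y_k^\top R^{(k)}\y_k = \alpha\bar s + o(1)
\]
with probability at least $1 - Ce^{-cnt^2}$.

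For part (a) I first compute the mean via the resolvent identity $R - T = R(\bar s^{-1}\Phi - \alpha\vX^\top\vX)T$, which gives
\[
\tr(R-T)M = \bar s^{-1}\tr(R\Phi TM) \;-\; \alpha\sum_{k=1}^{\vd} n^{-1}\y_k^\top TMR\,\y_k.
\]
Sherman--Morrison converts $\y_k^\top TMR\,\y_k$ into $\y_k^\top TMR^{(k)}\y_k/(1+\alpha\y_k^\top R^{(k)}\y_k)$; applying the quadratic-form concentration above with $N = TMR^{(k)}$ together with the scalar approximations of the previous paragraph, each summand equals $\gamma\tr(TMR\Phi)/(\alpha\bar s) + o(\|M\|/\vd)$. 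Summing over $k$ and using $\vd\gamma/n = 1$ and cyclicity of trace exactly cancels the first term, so $\E\tr(R-T)M = o(\|M\|)$. Concentration of $\tr RM$ around its mean is then obtained by a Doob martingale decomposition along $\mathcal F_k = \sigma(\w_1,\ldots,\w_k)$: since $R^{(k)}$ is independent of $\w_k$, $(\E_k-\E_{k-1})\tr RM = (\E_k-\E_{k-1})\tr(R-R^{(k)})M$, and Sherman--Morrison plus the lower bound on $|1+\alpha\y_k^\top R^{(k)}\y_k|$ bounds each increment by $C\|M\|/n$ on a high-probability event, so Azuma--Hoeffding (after truncation onto this event) gives the stated $Cne^{-cnt^2}$ tail. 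Part (b) is then the specialization $M = \Phi$ combined with bounded convergence: $\bar s = \alpha^{-1} + \gamma\E\tr R\Phi$ is deterministic, and $|\E\tr R\Phi - \tr T\Phi| = o(1)$ follows from (a).

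The main obstacle I anticipate is the quantitative lower bound on $|1+\alpha\y_k^\top R^{(k)}\y_k|$ (equivalently on $|\alpha\bar s|$) needed both for the Sherman--Morrison cancellation and for the martingale-increment bound. When $\alpha$ is real, which is allowed since $\alpha\in\overline{\C^-}\setminus\{0\}$, this cannot be extracted from the imaginary part of a single scalar; one must instead exploit $\Im R \succeq \Im(z)R^*R$ together with $\Phi \succeq 0$ to control $\Im(\alpha\bar s)$ from below by a positive multiple of $\Im(z)\,\E\tr(\Phi R^*R)$, and then bound the right side below using the a priori operator-norm control on $R$. Performing this bootstrap uniformly in $n$ and simultaneously with the concentration estimates above is the delicate technical piece.
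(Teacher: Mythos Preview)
Your approach is correct and very close in spirit to the paper's, but the organization differs in two ways worth noting. First, the paper does not work directly with the deterministic $T=(A+\bar s^{-1}\Phi-z\Id)^{-1}$ in the resolvent identity. Instead it first proves (Lemma~\ref{lemma:resolvent_remainder}) that for the \emph{random} quantity $s=\alpha^{-1}+\gamma\tr R\Phi$, one has $|\tr M - \tr R(A+s^{-1}\Phi-z\Id)M|\le \|M\|t$ with the stated probability; this is exactly your leave-one-out/Sherman--Morrison cancellation, but with $s$ in place of $\bar s$, so no approximation $s\approx\bar s$ is needed at that stage. The replacement $s\to\bar s$ is then done separately (Lemma~\ref{lemma:s-bars_converge}) by a direct Gaussian Lipschitz bound: the map $W\mapsto \gamma\tr R\Phi$ is shown to be $C/\sqrt{n}$-Lipschitz in Frobenius norm, so concentration of measure gives $\P[|s-\bar s|>t]\le 2e^{-cnt^2}$ without any martingale or truncation. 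This is cleaner than your Doob/Azuma route, which requires truncating onto the event where the increments are bounded. Second, your ``main obstacle''---the lower bound on $|1+\alpha\y_k^\top R^{(k)}\y_k|$ and on $|\bar s|$---is precisely Proposition~\ref{prop:basic} in the paper: one shows $\Im\tr R\Phi\ge \Im z\cdot\tr RR^*\Phi\ge c$ on a high-probability event (using $\tr\Phi\ge c$ and $\|R^{-1}\|\le C$), and $\Im\bar s\ge c$ deterministically by averaging; this is essentially your proposed bootstrap via $\Im R\succeq\Im(z)R^*R$, so you have correctly identified both the difficulty and its resolution. The quadratic-form concentration you propose to prove via the Sobolev inequality is instead quoted from \cite{louart2018random} (Lemma~\ref{lemma:quadform}).
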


\subsection{Basic bounds}
\begin{proposition}\label{prop:basic}
Under Assumption \ref{assump:singlelayer}, deterministically
for some constants $C,c,n_0>0$ and all $n \geq n_0$,
\[\|R\| \leq C, \qquad \|\Phi\| \leq C, \qquad
|\bar{s}| \leq C, \qquad \Im \bar{s} \geq c.\]
Furthermore, with probability at least $1-2e^{-c'n}$ for a constant $c'>0$, 
\[\Im \tr R\Phi \geq c.\]
\end{proposition}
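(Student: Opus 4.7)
The plan is to handle the four deterministic bounds and then the high-probability one, working directly from Proposition \ref{prop:invertible} and the estimates on $\vX$ and $\Phi$ already proved in Appendix \ref{appendix:orthogonal}.

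\textbf{Deterministic bounds on $\|R\|,\|\Phi\|,|\bar s|$.} The imaginary part of $A+\alpha \vX^\top \vX-z\Id$ equals $(\Im\alpha)\vX^\top \vX-(\Im z)\Id$, which is $\preceq -(\Im z)\Id \preceq -c_0\Id$ because $\alpha\in\overline{\C^-}\setminus\{0\}$ gives $\Im\alpha\le 0$ and $\vX^\top\vX\succeq 0$. Proposition \ref{prop:invertible} then yields $\|R\|\le 1/c_0$ deterministically. The bound $\|\Phi\|\le C$ is exactly Lemma \ref{lemma:normbound}(a). For $|\bar s|$, we combine $|\alpha^{-1}|\le 1/c_0$ with the trace bound $|\tr R\Phi|\le n^{-1}\|R\|_F\|\Phi\|_F\le \|R\|\|\Phi\|\le C$, which is deterministic, so $|\bar s|\le C$.

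\textbf{Lower bound on $\Im\bar s$.} Write $\Im\bar s=\Im\alpha^{-1}+\gamma\,\E[\Im\tr R\Phi]$. For $\alpha=a-ib$ with $b\ge 0$, we have $\Im\alpha^{-1}=b/|\alpha|^2\ge 0$. Using $R-R^*=R((R^*)^{-1}-R^{-1})R^*$ and $(R^*)^{-1}-R^{-1}=-2i\bigl[(\Im\alpha)\vX^\top\vX-(\Im z)\Id\bigr]$, a direct calculation gives
\[
\Im R = R\bigl[(\Im z)\Id-(\Im\alpha)\vX^\top\vX\bigr]R^* \succeq (\Im z)\,RR^* \succeq 0
\]
for every realization of $\vX$. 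Since $\Phi\succeq 0$, this already implies $\Im\tr R\Phi=\tr\Phi^{1/2}(\Im R)\Phi^{1/2}\ge 0$ deterministically, so $\Im\bar s\ge 0$. To upgrade to a strictly positive lower bound, restrict to the event $\{\|\vX\|\le C\}$, which by Lemma \ref{lemma:normbound}(b) holds with probability $\ge 1-2e^{-cn}$. On this event, $\|R^{-1}\|\le \|A\|+|\alpha|\|\vX\|^2+|z|\le C$, so $RR^*\succeq \|R^{-1}\|^{-2}\Id\succeq c\,\Id$, and hence $\Im R\succeq c'\,\Id$ for some $c'>0$. By Lemma \ref{lemma:orthog}(a) and $\eps_n\to 0$, $\Phi_{\a\a}\ge 1/2$ for all large $n$, so $\tr\Phi\ge 1/2$ and therefore
\[
\Im\tr R\Phi \;\ge\; c'\tr\Phi \;\ge\; c'' \qquad \text{on this event.}
\]
This simultaneously proves the final (high-probability) claim of the proposition. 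Taking expectation and using nonnegativity of $\Im\tr R\Phi$ off the good event, $\E[\Im\tr R\Phi]\ge c''(1-2e^{-cn})\ge c''/2$ for large $n$, so $\Im\bar s\ge \gamma c''/2\ge c$ as required.

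\textbf{Main obstacle.} The only non-cosmetic step is the lower bound on $\Im\bar s$, because both $\Im\alpha^{-1}$ and $\Im\tr R\Phi$ can individually be arbitrarily small (e.g.\ when $\alpha$ is real and $\|\vX\|$ is atypically large); resolving this needs the explicit formula for $\Im R$ together with the high-probability control $\|\vX\|\le C$ to force $\|R^{-1}\|$ to be bounded, thereby producing a positive definite lower bound on $\Im R$. All other estimates are immediate consequences of results already established in the preceding appendices.
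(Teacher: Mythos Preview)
Your proof is correct and follows essentially the same approach as the paper: the deterministic bounds on $\|R\|$, $\|\Phi\|$, $|\bar s|$ are obtained identically, and for the lower bound on $\Im\bar s$ and $\Im\tr R\Phi$ both arguments compute $R-R^*=R(Y^*-Y)R^*$, extract $\Im\tr R\Phi \geq (\Im z)\tr RR^*\Phi$, lower-bound $\lambda_{\min}(RR^*)=1/\|Y\|^2$ via Lemma \ref{lemma:normbound}(b), use $\tr\Phi\geq c$ from (\ref{eq:concentration_diagonal}), and then integrate against the complementary event where $\Im\tr R\Phi\geq 0$. The only cosmetic difference is that you package the computation as a positive-semidefinite lower bound on the Hermitian matrix $\Im R$, whereas the paper works directly at the level of traces.
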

\begin{proof}
We may write $A+\alpha \vX^\top \vX-z\Id=U+iV$ where $U=A+(\Re \alpha)\vX^\top
\vX-(\Re z)\Id$ and $V=(\Im \alpha) \vX^\top \vX^\top-(\Im z)\Id$. Both $U$ and
$V$ are
symmetric, and $V \preceq (-\Im z)\Id$ because $\Im \alpha \leq 0$ and $\Im
z>0$. Then $\|R\| \leq 1/\Im z \leq C$ by Proposition \ref{prop:invertible}.

The bound $\|\Phi\| \leq C$ comes from Lemma \ref{lemma:normbound}(a) and the
$(\eps_n,B)$-orthonormality assumption for $X$.
Then from the definition of $\bar{s}$ in
(\ref{eq:sbar}) and the bounds $\|R\|,\|\Phi\| \leq C$, we have also
$|\bar{s}| \leq C$. For the lower bound for $\Im \bar{s}$ and $\Im \tr R\Phi$,
let us write
\[\tr R\Phi=\tr \left(\frac{R+R^*}{2}\right)\Phi+\tr \left(\frac{R-R^*}{2}\right)\Phi.\]
The first trace is real because $R+R^*$ is Hermitian, so
\[\Im \tr R\Phi=\Im \tr \left(\frac{R-R^*}{2}\right)\Phi.\]
Denoting $Y=A+\alpha \vX^\top \vX-z\Id$ and applying the identity
$A^{-1}-B^{-1}=A^{-1}(B-A)B^{-1}$, we have
\[R-R^*=Y^{-1}-(Y^*)^{-1}
=Y^{-1}(Y^*-Y)(Y^*)^{-1}=R(Y^*-Y)R^*.\]
Then, writing $Y=U+iV$ as above and applying $Y^*-Y=-2iV$, we get
\begin{align*}
\Im \tr R\Phi&=\Im(-i \cdot \tr RVR^*\Phi)\\
&=\Re\left(-(\Im \alpha) \cdot \tr R \vX^\top \vX R^*\Phi
+(\Im z) \cdot \tr RR^*\Phi\right).
\end{align*}
Since $\tr R\vX^\top \vX R^*\Phi=\tr \Phi^{1/2}R\vX^\top \vX R^*\Phi^{1/2}$,
where this matrix is positive semi-definite, this trace is real and
non-negative. Similarly, $\tr RR^*\Phi$ is real and non-negative. Then the above
yields the lower bound
\[\Im \tr R\Phi \geq \Im z \cdot \tr RR^*\Phi
\geq \Im z \cdot \lambda_{\min}(RR^*) \cdot \tr \Phi,\]
where $\lambda_{\min}(RR^*)$ is the smallest eigenvalue of $RR^*$.
By (\ref{eq:concentration_diagonal}) and the condition $\eps_n<n^{-0.01}$, we
have $\tr \Phi \geq c$ for a constant $c>0$ and large enough $n_0$.
Observe that $\lambda_{\min}(RR^*)=1/\|Y\|^2$, and
$\|Y\| \leq \|A\|+|\alpha| \cdot \|\vX\|^2+|z|$.
By Lemma \ref{lemma:normbound}(b), with probability $1-2e^{-c'n}$, we
have $\|\vX\| \leq C$, so putting this together yields $\Im \tr R\Phi \geq c$
with this probability. Finally, for the deterministic
bound $\Im \bar{s} \geq c$, we may apply $\Im \tr R\Phi \geq c$ on the event
where $\|\vX\| \leq C$ holds, and $\Im \tr R\Phi \geq 0$
on the complementary event. Taking an expectation and applying
the definition (\ref{eq:sbar}) yields $\Im \bar{s} \geq c$.
\end{proof}

\subsection{Resolvent approximation}

We recall the result of \cite[Lemma 1]{louart2018random}, which
establishes concentration of quadratic forms in the rows of $\vX$.
The following is its specialization to standard Gaussian matrices $W$,
and stated in our notation.

\begin{lemma}[\cite{louart2018random}]\label{lemma:quadform}
Suppose $\sigma(x)$ is $\lambda_\sigma$-Lipschitz, and let $\vx_i^\top$ be a row
of $\vX$. Then for any deterministic
matrix $Y \in \R^{n\times n}$ with $\|Y\|\le 1$, for some constants
$C,c>0$ (depending on $\lambda_\sigma$), and for any $t>0$,
\begin{equation}\label{eq:concentration_quadratic}
\P\left(\left|\frac{1}{\gamma}\vx_i^\top Y \vx_i-\tr Y\Phi \right|>t\right)
\leq C\exp\left(-\frac{cn}{\|X\|^2}\min\left(\frac{t^2}{t_0^2},
t\right)\right)
\end{equation}
where $t_0=|\sigma(0)|+\lambda_\sigma\|X\|\sqrt{1/\gamma}$.
\end{lemma}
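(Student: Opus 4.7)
The plan is to rewrite the left-hand side as a function of a single Gaussian vector and prove concentration by controlling the gradient, using truncation to convert a locally Lipschitz function into a globally Lipschitz one, after which the standard Gaussian concentration inequality applies.

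First I would reformulate: writing $\vx_i = \sigma(X^\top \w_i)/\sqrt{\vd}$ where $\w_i \sim \N(0,\Id_d)$ is the $i$th row of $W$, and using $\gamma = n/\vd$, the quantity of interest becomes $F(\w_i)$ where $F(\w) = n^{-1}\sigma(X^\top \w)^\top Y \sigma(X^\top \w)$ and $\E F(\w_i) = \tr Y \Phi$. A direct gradient computation gives
\[
\nabla F(\w) = \tfrac{2}{n}\, X\,\diag\!\big(\sigma'(X^\top \w)\big)\, Y\, \sigma(X^\top \w),
\]
so using $\|Y\|\leq 1$, $|\sigma'|\leq \lambda_\sigma$, and the Lipschitz bound $\|\sigma(X^\top \w)\| \leq |\sigma(0)|\sqrt{n}+\lambda_\sigma\|X\|\|\w\|$, I would obtain $\|\nabla F(\w)\|\leq A + B\|\w\|$ with $A = 2\lambda_\sigma|\sigma(0)|\|X\|/\sqrt{n}$ and $B = 2\lambda_\sigma^2\|X\|^2/n$. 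Note $A + B\sqrt{d}$ is of order $\lambda_\sigma\|X\| t_0/\sqrt{n}$, which is where the parameter $t_0$ enters.

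Next I would introduce the truncation. For a radius $R>0$ to be optimized, let $\pi_R(\w) = \min(1, R/\|\w\|)\,\w$ be the 1-Lipschitz projection onto $\{\|\w\|\leq R\}$, and define $F_R = F\circ\pi_R$. Then $F_R$ is globally Lipschitz with constant $L_R = A + BR$, so by Gaussian concentration
\[
\P\!\Big(|F_R(\w) - \E F_R(\w)| > t/3\Big) \leq 2\exp\!\big(-t^2/(18 L_R^2)\big).
\]
Also $\P(F_R\neq F) = \P(\|\w\|>R)$, which by the standard chi-squared tail is $\leq e^{-r^2/2}$ when $R = \sqrt{d}+r$. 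To control the bias $|\E F_R - \E F|$, I would bound $\E F^2$ and $\E F_R^2$ by $C t_0^4$ via the fourth-moment estimate $\E\|\sigma(X^\top\w)\|^4 \leq C(|\sigma(0)|^4 n^2 + \lambda_\sigma^4 \|X\|^4 d^2)$, and then apply Cauchy–Schwarz to get $|\E F_R - \E F| \leq Ct_0^2\,\P(\|\w\|>R)^{1/2}$.

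The finish is to combine via the union bound
\[
\P(|F-\E F|>t) \leq \P(F\neq F_R) + \P\!\big(|F_R - \E F_R|>t/3\big),
\]
provided $R$ is chosen so that $|\E F_R - \E F| \leq t/3$. Substituting $R = \sqrt{d}+r$ yields a bound $e^{-r^2/2} + 2\exp(-t^2/(18(\tilde A + Br)^2))$ with $\tilde A = A + B\sqrt{d}\lesssim \lambda_\sigma\|X\| t_0/\sqrt n$, and optimizing $r$ separates two regimes: when $Br\ll \tilde A$ (small $t$), the choice $r \asymp t/\tilde A$ gives the sub-Gaussian tail $\exp(-ct^2/\tilde A^2)$; when $Br\gg \tilde A$ (large $t$), the choice $r\asymp \sqrt{t/B}$ gives the sub-exponential tail $\exp(-ct/B)$. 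Plugging in the values of $\tilde A^2\lesssim \|X\|^2 t_0^2/n$ and $B\lesssim \|X\|^2/n$ yields exactly the claimed bound $C\exp\!\big(-\tfrac{cn}{\|X\|^2}\min(t^2/t_0^2,\,t)\big)$.

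The main obstacle is the bias term $|\E F_R - \E F|$: simply applying Gaussian concentration to $F_R$ is not enough, since one must verify that the radius $R$ chosen for the optimization is still large enough to make the truncation bias negligible compared to $t$. This is where the moment bound on $F(\w)^2$ (which gives a $t_0^4$ prefactor) is needed, and it introduces a mild logarithmic slack in $r$ that fortunately does not affect the final two-regime rate. Handling the transition between the sub-Gaussian and sub-exponential regimes cleanly through a single parameter $r$ is the technical heart of the argument.
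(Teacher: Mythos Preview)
The paper does not prove this lemma: it is quoted as Lemma~1 of \cite{louart2018random} and simply cited. Your truncation-plus-Gaussian-concentration argument is exactly the method used in that source, and your execution is correct.

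One small point worth flagging: your computation naturally produces the Lipschitz scale $\tilde A \asymp \lambda_\sigma\|X\|\,(|\sigma(0)|+\lambda_\sigma\|X\|\sqrt{d/n})/\sqrt{n}$, where $d$ is the dimension of the Gaussian row $\w_i$. The paper's stated $t_0$ instead has $\sqrt{1/\gamma}=\sqrt{\vd/n}$. These coincide only when $d=\vd$; since all applications in the paper are in a regime where $d,\vd,n$ are comparable and $\|X\|$ is bounded, the discrepancy is immaterial downstream, but your version with $\sqrt{d/n}$ is what the argument actually yields. Also, your gradient formula assumes $Y$ symmetric; for general $Y$ one gets $(Y+Y^\top)$ in place of $2Y$, which changes nothing since $\|Y+Y^\top\|\le 2$.
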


Using this result, we establish the following approximation for the resolvent
$R$ in (\ref{eq:resolvent}).

\begin{lemma}\label{lemma:resolvent_remainder}
Consider any deterministic matrix $M \in \C^{n \times n}$, and set
\[\delta_n=\tr M-\tr R\left(A+\frac{1}{\alpha^{-1}+\gamma \tr
R\Phi}\Phi-z\Id\right)M.\]
Under Assumption \ref{assump:singlelayer}, there exist constants
$C,c,c',n_0>0$ such that for all $n \geq n_0$ and $t \in (n^{-1},c')$,
\[\P[|\delta_n|>\|M\|t] \leq Cne^{-cnt^2}.\]
\end{lemma}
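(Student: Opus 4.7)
The plan is to execute the Sherman--Morrison resolvent argument familiar from the proof of the Marcenko--Pastur equation, adapted to complex spectral parameters $\alpha\in\C^*$ and $z\in\C^+$ and to the nonlinear random feature matrix $\vX$. Using the identity $R(A+\alpha\vX^\top\vX-z\Id)=\Id$, I first rewrite
\[\delta_n=\tr R(\alpha\vX^\top\vX-s_n\Phi)M,\qquad s_n:=\bigl(\alpha^{-1}+\gamma\tr R\Phi\bigr)^{-1},\]
so that the algebraic identity $1+\alpha\gamma\tr R\Phi=\alpha s_n^{-1}$ appears naturally in the denominator below. Expanding $\vX^\top\vX=\sum_{i=1}^{\vd}\vx_i\vx_i^\top$ with rows $\vx_i^\top=\sigma(\w_i^\top X)/\sqrt{\vd}$, and introducing the leave-one-out resolvents $R_i=(A+\alpha\sum_{j\neq i}\vx_j\vx_j^\top-z\Id)^{-1}$ (which are independent of $\vx_i$), Sherman--Morrison gives $R\vx_i=R_i\vx_i/(1+\alpha\vx_i^\top R_i\vx_i)$, whence
\[\tr R\,\alpha\vX^\top\vX\,M=\frac{1}{n}\sum_{i=1}^{\vd}\frac{\alpha\,\vx_i^\top MR_i\vx_i}{1+\alpha\vx_i^\top R_i\vx_i}.\]
The goal is to show that each summand is close to $s_n\gamma\tr MR\Phi$, so that the sum divided by $n$ equals $(\vd/n)\cdot s_n\gamma\tr MR\Phi=s_n\tr R\Phi M$, exactly canceling the second term in the display for $\delta_n$.

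This deterministic-equivalent reduction proceeds via three ingredients. First, \emph{quadratic form concentration}: applying Lemma~\ref{lemma:quadform} conditionally on $R_i$---with the deterministic matrix taken to be a rescaling of $MR_i$ or $R_i$, splitting complex matrices into real and imaginary parts---yields $|\vx_i^\top MR_i\vx_i-\gamma\tr MR_i\Phi|\le C\|M\|t$ and $|\vx_i^\top R_i\vx_i-\gamma\tr R_i\Phi|\le Ct$, each with probability at least $1-Ce^{-cnt^2}$; a union bound over $i\in\{1,\ldots,\vd\}$ degrades this to $1-Cne^{-cnt^2}$. Second, \emph{rank-one perturbation}: the identity $R-R_i=-\alpha R_i\vx_i\vx_i^\top R_i/(1+\alpha\vx_i^\top R_i\vx_i)$ gives $|\tr MR_i\Phi-\tr MR\Phi|\le C\|M\|/n$, provided the denominator is uniformly bounded below. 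Third, \emph{denominator control}: the same Sherman--Morrison identity yields $1-\alpha\vx_i^\top R\vx_i=1/(1+\alpha\vx_i^\top R_i\vx_i)$, so
\[|1+\alpha\vx_i^\top R_i\vx_i|=\frac{1}{|1-\alpha\vx_i^\top R\vx_i|}\ge\frac{1}{1+|\alpha|\,\|R\|\,\|\vx_i\|^2}.\]
Proposition~\ref{prop:basic} gives $|\alpha|,\|R\|\le C$ deterministically, and Gaussian Lipschitz concentration applied to $\w\mapsto\|\sigma(\w^\top X)\|/\sqrt{\vd}$ (which is $(\lambda_\sigma\|X\|/\sqrt{\vd})$-Lipschitz) combined with $\E\|\vx_i\|^2=\Tr\Phi/\vd\le C$ and a union bound yields $\max_i\|\vx_i\|^2\le C$ with probability $1-Ce^{-c\vd}$. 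Together these produce a uniform positive lower bound on the denominators.

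Combining the three ingredients, each summand is approximated by $s_n\gamma\tr MR\Phi$ with error of order $\|M\|t+\|M\|/n$. Averaging over $i\in\{1,\ldots,\vd\}$ preserves this error rate, yielding $|\delta_n|\le C\|M\|t$ with probability $1-Cne^{-cnt^2}$ for $t\in(n^{-1},c')$; the lower cutoff $n^{-1}$ is precisely where the $\|M\|/n$ rank-one perturbation error ceases to be dominated by $\|M\|t$, and the upper cutoff $c'$ controls the transition between the Gaussian and exponential tail regimes in Lemma~\ref{lemma:quadform}. The main technical difficulty I anticipate is the careful bookkeeping needed to ensure that the errors from substituting each fluctuating quadratic form by its deterministic equivalent are grouped \emph{inside} each summand before averaging, rather than estimated in operator norm and summed, so that they average to $O(\|M\|t)$ rather than accumulating to $O(\vd\|M\|t)$. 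The complex-valued setting introduces no essentially new obstacles: the denominator bound above is insensitive to $\arg(\alpha)$, and all real-vs-complex issues are handled by linearity and splitting into Hermitian pieces.
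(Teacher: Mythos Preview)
Your proposal is correct and follows essentially the same route as the paper: the leave-one-out Sherman--Morrison decomposition, quadratic-form concentration via Lemma~\ref{lemma:quadform}, and rank-one perturbation to pass between $R_i$ and $R$. The paper organizes this as an explicit split $\delta_n=J_1+\gamma J_2$, with $J_1$ carrying the concentration error (comparing $\vx_i^\top MR_i\vx_i$ and $\vx_i^\top R_i\vx_i$ to $\gamma\tr R^{(i)}\Phi M$ and $\gamma\tr R^{(i)}\Phi$) and $J_2$ the rank-one error (replacing $R^{(i)}$ by $R$); you fold these together, which is purely cosmetic.

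The one genuine tactical difference is the denominator control. The paper bounds $|\alpha^{-1}+\gamma\tr R^{(i)}\Phi|$ from below via its imaginary part, invoking $\Im\tr R^{(i)}\Phi\ge c$ from Proposition~\ref{prop:basic} applied with $\vX^{(i)}$ in place of $\vX$. Your route via the identity $1-\alpha\vx_i^\top R\vx_i=1/(1+\alpha\vx_i^\top R_i\vx_i)$, which transfers the bound back to the full resolvent and needs only $\|R\|\le C$ and $\max_i\|\vx_i\|^2\le C$, is a standard and somewhat more direct alternative that sidesteps the imaginary-part machinery. Note however that you still need $|1+\alpha\gamma\tr R\Phi|$ bounded below to control the \emph{target} denominator $\alpha s_n^{-1}$; this follows either from your random-denominator bound plus closeness (for $t<c'$), or directly from the high-probability bound $\Im\tr R\Phi\ge c$ in Proposition~\ref{prop:basic}.
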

\begin{proof}
By rescaling $M$, we may assume that $\|M\| \leq 1$. We have
$\Id=R(A+\alpha \vX^\top \vX-z\Id)=RA+\alpha R\vX^\top \vX-zR$.
Writing $\vX^\top \vX=\sum_i \vx_i\vx_i^\top$ (where $\vx_i^\top$ is the
$i^\text{th}$ row of $\vX$),
multiplying by $M$, and taking the normalized trace $\tr=n^{-1}\Tr$,
\begin{align*}
\tr M&=\tr RAM+\alpha \tr R\vX^\top \vX M-z\tr RM\\
&=\tr RAM+\frac{\alpha}{n}\sum_{i=1}^{\vd} \vx_i^\top MR\vx_i-z\tr RM.
\end{align*}
Hence
\[\delta_n=\frac{\alpha}{n}\sum_{i=1}^{\vd} \vx_i^\top MR\vx_i
-\frac{\tr R\Phi M}{\alpha^{-1}+\gamma \tr R\Phi}.\]

Let us define the leave-one-out resolvent, for each $1\le i\le \vd$, 
\[R^{(i)}=\left(A+\alpha \sum_{j:j \neq i} \vx_j\vx_j^\top-z\Id\right)^{-1}.\]
We may then decompose $\delta_n$ as $\delta_n=J_1+\gamma J_2$
where (recalling $\gamma=n/\vd$)
\begin{align*}
J_1&=\frac{1}{n}\sum_{i=1}^{\vd}\left(\alpha \vx_i^\top MR\vx_i-\frac{\gamma \tr
R^{(i)}\Phi M}{\alpha^{-1}+\gamma\tr R^{(i)}\Phi}\right),\\
J_2&=\frac{1}{n}\sum_{i=1}^{\vd}\left(\frac{ \tr R^{(i)}\Phi
M}{\alpha^{-1}+\gamma\tr R^{(i)}\Phi}-\frac{ \tr R\Phi M}{\alpha^{-1}
+\gamma\tr R\Phi}\right).
\end{align*}
Let us denote these summands as
\[J_1^{(i)}=\alpha \vx_i^\top MR\vx_i-\frac{\gamma\tr R^{(i)}\Phi M}{
\alpha^{-1}+\gamma\tr R^{(i)}\Phi}\quad\text{and}\quad
J_2^{(i)}=\frac{\tr R^{(i)}\Phi M}{\alpha^{-1}+\gamma\tr R^{(i)}\Phi}-\frac{\tr
R\Phi M}{\alpha^{-1}+\gamma\tr R\Phi}.\]

{\bf Bound for $J_1$.} Momentarily fix the index $i \in
\{1,\ldots,\vd\}$. Applying the Sherman-Morrison identity, we have
\begin{equation}\label{eq:Sherman-Morrison}
  R=R^{(i)}-\frac{\a R^{(i)}\vx_i\vx_i^\top R^{(i)}}{1+\a \vx_i^\top
R^{(i)}\vx_i}.
\end{equation}
Then, introducing
$A_1=\vx_i^\top MR^{(i)}\vx_i$ and $A_2=\vx_i^\top R^{(i)}\vx_i$,
\[\alpha \vx_i^\top MR \vx_i=\alpha A_1-\frac{\alpha^2 A_1A_2}{1+\alpha A_2}
=\frac{A_1}{\alpha^{-1}+A_2}.\]
Recall that the rows of $\vX$ are i.i.d. 
Let $\vX^{(i)}$ be the matrix $\vX$ with the $i^\text{th}$ row $\vx_i$
removed, and let $\E_{\vx_i}[\cdot]$ be the expectation over only $\vx_i$ (i.e.\
conditional on $\vX^{(i)}$). Observe
that $R^{(i)}$ is a function of $\vX^{(i)}$.
Applying Proposition \ref{prop:basic} with $\vX^{(i)}$ in place of $\vX$,
we see that $\|R^{(i)}\|$
and $\|MR^{(i)}\|$ are both bounded by a constant. Then applying Lemma
\ref{lemma:quadform} conditional on $\vX^{(i)}$, and recalling the bound
(\ref{eq:sigmabound}) for $\sigma(0)$, there are constants $C,c>0$ for which
\[\P[|A_k-\E_{\vx_i}[A_k]|>t]
\leq Ce^{-cn\min(t^2,t)} \qquad \text{ for } k=1,2.\]

Note that
\[\E_{\vx_i}[A_1]=\Tr MR^{(i)}\E[\vx_i\vx_i^\top]=\frac{1}{\vd}
\Tr MR^{(i)}\Phi=\gamma \tr R^{(i)}\Phi M.\]
Similarly, $\E_{\vx_i}[A_2]=\gamma \tr R^{(i)}\Phi$, so
\[J_1^{(i)}=\frac{A_1}{\alpha^{-1}+A_2}-\frac{\E_{\vx_i}[A_1]}{\alpha^{-1}+\E_{\vx_i}[A_2]}.\]
Applying Proposition \ref{prop:basic},
we have for some constants $C,c,c'>0$, on an event
$\cE(\vX^{(i)})$ of probability $1-2e^{-c'n}$, that
\[|\E_{\vx_i}[A_1]| \leq C, \qquad
|\alpha^{-1}+\E_{\vx_i}[A_2]| \geq \Im (\alpha^{-1}+\E_{\vx_i}[A_2])
\geq c.\]
Then, for any $t$ such that $t<c/2$, on the event where
$|A_1-\E_{\vx_i}[A_1]| \leq t$, $|A_2-\E_{\vx_i}[A_2]| \leq t$,
and $\cE(\vX^{(i)})$ all hold,
\begin{equation}\label{eq:J1bound}
\left|J_1^{(i)}\right|
\leq \frac{|A_1-\E_{\vx_i}[A_1]|}{|\alpha^{-1}+A_2|}+
|\E_{\vx_i}[A_1]| \cdot \frac{|A_2-\E_{\vx_i}[A_2]|}{|\alpha^{-1}+A_2| \cdot |\alpha^{-1}+\E_{\vx_i}[A_2]|} \leq Ct.
\end{equation}
Thus, for $t<c'$ and a sufficiently small constant $c'>0$,
we have $\P[|J_1^{(i)}| \geq t] \leq Ce^{-cnt^2}$.
Applying a union bound over $i \in \{1,\ldots,\vd\}$, this yields
$\P[|J_1| \geq t] \leq Cne^{-cnt^2}$.

{\bf Bound for $J_2$.} Applying the identity $A^{-1}-B^{-1}=A^{-1}(B-A)B^{-1}$,
\[R^{(i)}-R=R^{(i)}(R^{-1}-(R^{(i)})^{-1})R=\alpha R^{(i)}\vx_i \vx_i^\top R.\]
Then, applying also the bounds $\|R\|,\|R^{(i)}\| \leq C$ from
Proposition \ref{prop:basic},
\[|\tr (R^{(i)}-R)\Phi M|=\frac{1}{n}|\alpha \vx_i^\top R\Phi MR^{(i)}\vx_i|
\leq \frac{C\|\vX\|^2}{n}.\]
Applying Lemma \ref{lemma:normbound}(b), with probability $1-2e^{-cn}$,
this is at most $C/n$ for every $i \in \{1,\ldots,\vd\}$.
Similarly, $|\tr (R^{(i)}-R)\Phi| \leq C/n$ with this probability.
Applying again
$|\tr R\Phi M| \leq C$, $|\alpha^{-1}+\gamma \tr R\Phi| \geq c$,
and an argument similar to (\ref{eq:J1bound}), we obtain $|J_2^{(i)}| \leq
C'/n$ for a constant $C'>0$. 
Taking a union bound over $i \in \{1,\ldots,\vd\}$, this yields
$\P[|J_2|>C/n] \leq C'ne^{-cn}$.
Combining these bounds for $J_1$ and $J_2$, choosing $t>cn^{-1}$, and
re-adjusting the constants yields the lemma.
\end{proof}

\subsection{Proof of Lemma \ref{lemma:fixedpoint}}

We now prove Lemma \ref{lemma:fixedpoint} using Lemma
\ref{lemma:resolvent_remainder}. Define the random $n$-dependent parameter
\[s=\alpha^{-1}+\gamma \tr R\Phi,\]
so that $\bar{s}=\E[s]$. The following establishes concentration of $s$ around
$\bar{s}$.

\begin{lemma}\label{lemma:s-bars_converge}
Under Assumption \ref{assump:singlelayer},
for some constants $c,n_0>0$, all $n \geq n_0$, and any $t>0$,
\[\P\left[\left|s-\bar{s}\right|>t\right]\leq 2e^{-cnt^2}.\]
\end{lemma}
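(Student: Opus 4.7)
The plan is to view $s(W)=\alpha^{-1}+\gamma\tr R(W)\Phi$ as a function of the Gaussian matrix $W\in\R^{\vd\times d}$ and apply Gaussian Lipschitz concentration, handling $\Re s$ and $\Im s$ separately as real-valued functions. The function $s$ is only locally Lipschitz in $W$---its local constant scales with $\|\vX(W)\|$, which is controlled only on a high-probability event---so a Lipschitz extension step is needed to globalize before invoking concentration.

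For the local Lipschitz estimate, I start from the resolvent identity $R-R'=-\alpha R'(\vX^\top\vX-\vX'^\top\vX')R$ and the Cauchy--Schwarz trace bound $|\Tr AB|\le\|A\|_F\|B\|_F$. Combined with the deterministic bounds $\|R\|,\|R'\|\le 1/\Im z$ from Proposition \ref{prop:basic}, $\|R'\|_F\le\sqrt{n}\,\|R'\|$, $\|\Phi\|\le C$, and the global Frobenius-Lipschitz estimate $\|\vX(W)-\vX(W')\|_F\le(\lambda_\sigma\|X\|/\sqrt{\vd})\|W-W'\|_F$ (together with $\vd\asymp n$), this yields
\[
|s(W)-s(W')|\le\frac{C}{n}\bigl(\|\vX(W)\|+\|\vX(W')\|\bigr)\,\|W-W'\|_F.
\]

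Now let $M$ be the constant from Lemma \ref{lemma:normbound}(b), so that $E=\{W:\|\vX(W)\|\le M\}$ satisfies $\P[E^c]\le 2e^{-cn}$. On $E$ the function $s$ is $(CM/n)$-Lipschitz in $W$ and uniformly bounded (by $|\alpha^{-1}|+\gamma\|\Phi\|/\Im z=O(1)$). By Kirszbraun's theorem applied separately to $\Re s$ and $\Im s$, post-composed with a $1$-Lipschitz radial projection onto a disk of radius $O(1)$, I obtain a global $(CM/n)$-Lipschitz, bounded extension $\widetilde{s}$ agreeing with $s$ on $E$. Gaussian Lipschitz concentration gives $\P[|\widetilde{s}-\E\widetilde{s}|>t]\le 4e^{-cn^2 t^2/M^2}$, which is in fact stronger than the claimed $e^{-cnt^2}$ rate. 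Since $|\bar{s}-\E\widetilde{s}|\le 2\|s\|_\infty\,\P[E^c]\le Ce^{-cn}$, a union bound over $E^c$ and the concentration event for $\widetilde{s}$ then yields $\P[|s-\bar{s}|>t]\le 2e^{-cnt^2}$ after relabeling constants, in the nontrivial regime $t\lesssim 1$; for larger $t$ the inequality is automatic from the a priori $L^\infty$ bound. The main technical nuisance is really just verifying that the Lipschitz extension preserves boundedness, which is a routine combination of Kirszbraun with a radial projection; the concentration step itself is then immediate.
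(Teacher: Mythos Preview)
Your proof is correct, but the paper takes a more direct route that avoids the Lipschitz extension step entirely. The key observation in the paper is that $\|R\vX^\top\|$ is bounded \emph{deterministically}, not just on a high-probability event: writing $\alpha\vX^\top\vX=(A+\alpha\vX^\top\vX-z\Id)-(A-z\Id)$ and using $R(A+\alpha\vX^\top\vX-z\Id)=\Id$, one gets
\[
\|R\vX^\top\|^2=\|R\vX^\top\vX R^*\|\le\frac{1}{|\alpha|}\bigl(\|R^*\|+\|R(A-z\Id)R^*\|\bigr)\le C.
\]
With this in hand, a direct differentiation of $F(W)=\gamma\tr R\Phi$ along a Frobenius-unit direction $\Delta$ yields
\[
\bigl|\vec(\Delta)^\top\nabla F(W)\bigr|\le\frac{C}{\sqrt{\vd}}\,\|R\vX^\top\|\cdot\|\sigma'(WX)\odot(\Delta X)\|\le\frac{C'}{\sqrt{n}},
\]
so $F$ is globally $C/\sqrt{n}$-Lipschitz in $W$ and Gaussian concentration applies immediately, giving exactly the $e^{-cnt^2}$ rate claimed.

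By contrast, your bound $|s(W)-s(W')|\le (C/n)(\|\vX\|+\|\vX'\|)\|W-W'\|_F$ trades the resolvent trick for an explicit $\|\vX\|$ factor, which forces you through the Kirszbraun/projection extension and a recentering from $\E\widetilde{s}$ to $\bar{s}$. This is heavier machinery but buys you a nominally stronger rate $e^{-cn^2t^2}$ on the good event (which is not needed downstream). Both arguments are valid; the paper's is shorter because the deterministic control of $\|R\vX^\top\|$ removes any need to localize to $\{\|\vX\|\le M\}$.
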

\begin{proof}
Define $F(W)=\gamma \tr R\Phi$, where $R$ and $\vX$ are considered as a
function of $W$. Fix any
matrices $W,\Delta \in \R^{\vd \times n}$ where $\|\Delta\|_F=1$, and define
$W_t=W+t \Delta$. Then, applying $\partial R=-R(\partial (R^{-1}))R$ and
$R=R^\top$,
\begin{align*}
\vec(\Delta)^\top (\nabla F(W))=\frac{d}{dt}\Big|_{t=0} F(W_t)
&=-\gamma \tr R\left(\frac{d}{dt}\Big|_{t=0}R^{-1}\right)R\Phi\\
&=-2\gamma \alpha \tr R\left(\vX^\top \cdot \frac{d}{dt}\Big|_{t=0}\vX
\right)R\Phi\\
&=-\frac{2\gamma \alpha}{\sqrt{\vd}}
\tr R\left(\vX^\top \cdot \left(\sigma'(WX) \odot
(\Delta X)\right)\right)R\Phi,
\end{align*}
where $\odot$ is the Hadamard product, and $\sigma'$ is applied entrywise.
Applying Proposition \ref{prop:basic},
\[\Big|\vec(\Delta)^\top (\nabla F(W))\Big|
\leq \frac{C}{\sqrt{\vd}} \cdot \Big\|R\vX^\top \cdot
(\sigma'(WX) \odot (\Delta X)) \cdot R\Big\|
\leq \frac{C'}{\sqrt{\vd}} \cdot \|R\vX^\top\| \cdot
\|\sigma'(WX) \odot (\Delta X)\|.\]
For the first term,
\begin{align*}
\|R\vX^\top\|^2=\frac{1}{|\alpha|}\|R(\alpha \vX^\top \vX) R^*\|
&\leq \frac{1}{|\alpha|}\left(\|R(A+\alpha \vX^\top \vX-z\Id)R^*\|
+\|R(A-z\Id)R^*\|\right)\\
& \leq \frac{1}{|\alpha|}(\|R\|+\|R\|^2(\|A\|+|z|)) \leq C.
\end{align*}
For the second term,
\[\|\sigma'(WX) \odot (\Delta X)\| \leq \|\sigma'(WX) \odot (\Delta
X)\|_F \leq \lambda_\sigma \|\Delta X\|_F \leq \lambda_\sigma \|\Delta\|_F \cdot
\|X\| \leq C.\]
Thus $|\vec(\Delta)^\top (\nabla F(W))| \leq C/\sqrt{n}$. This holds for
every $\Delta$ such that $\|\Delta\|_F=1$, so
$F(W)$ is $C/\sqrt{n}$-Lipschitz in $W$ with respect to the Frobenius norm.
Then the result follows from Gaussian concentration of measure.
\end{proof}

To conclude the proof of Lemma \ref{lemma:fixedpoint}, we may again assume
$\|M\| \leq 1$ by rescaling $M$. Set
\[\wM=\left(A+\bar{s}^{-1}\Phi-z\Id\right)^{-1}M.\]
Note that $\bar{s}^{-1} \in \C^-$, so $\|\wM\| \leq
\|(A+\bar{s}^{-1}\Phi-z\Id)^{-1}\| \leq C$ by Proposition \ref{prop:invertible}.
Applying Lemma \ref{lemma:resolvent_remainder} with $\wM$,
\begin{equation}\label{eq:applicationM}
\P\left[\Big|\tr \wM-\tr R\left(A+s^{-1}
\Phi-z\Id\right)\wM\Big|>t\right] \leq Cne^{-cnt^2}
\end{equation}
for all $t \in (n^{-1},c')$.
Furthermore, applying the definition of $\wM$,
\begin{align*}
|\tr R\left(A+s^{-1} \Phi-z\Id\right)\wM-\tr RM|
&=\left|\tr R\left(\left(A+s^{-1} \Phi-z\Id\right)
-\left(A+\bar{s}^{-1} \Phi-z\Id\right)\right)\wM\right|\\
&=|s^{-1}-\bar{s}^{-1}| \cdot |\tr R\Phi\wM|
\leq C|s^{-1}-\bar{s}^{-1}|.
\end{align*}
Recall that $|\bar{s}| \geq \Im \bar{s} \geq c$.
Then, on the event where $|s-\bar{s}| \leq t$ and $t<c/2$, we have
$|s^{-1}-\bar{s}^{-1}| \leq Ct$. Then
applying Lemma \ref{lemma:s-bars_converge},
for some constants $c,c'>0$ and all $t \in (0,c')$,
\[\P\left[|\tr R\left(A+s^{-1} \Phi-z\Id\right)\wM-\tr
RM|>t \right] \leq 2e^{-cnt^2}.\]
Combining this with (\ref{eq:applicationM}) yields Lemma
\ref{lemma:fixedpoint}(a). Specializing Lemma \ref{lemma:fixedpoint}(a)
to $M=\Phi$, we obtain
\[\P\left[\left|s-\left(\alpha^{-1}+\gamma
\tr(A+\bar{s}^{-1}\Phi-z\Id)^{-1}\Phi\right)\right|>t\right]
\leq Cne^{-cnt^2}.\]
Applying again Lemma \ref{lemma:s-bars_converge} to bound $|s-\bar{s}|$, we
obtain Lemma \ref{lemma:fixedpoint}(b).

\section{Analysis for the Conjugate Kernel}\label{appendix:CK}

Theorem \ref{thm:CK} is a special case of Theorem \ref{thm:NTK}, but let us
provide here a simpler argument. Define, for each layer, the $n \times n$
matrices
\begin{align}
\Phi_\ell&=\E_{\w}\Big[\sigma(\w^\top X_{\ell-1})^\top
\sigma(\w^\top X_{\ell-1})\Big]\label{eq:Phiell}\\
\tilde{\Phi}_\ell&=b_\sigma^2 X_{\ell-1}^\top X_{\ell-1}+(1-b_\sigma^2)\Id
\label{eq:tildePhiell}
\end{align}
where $\E_\w$ denotes the expectation over only the random vector $\w \sim
\N(0,\Id)$. Here, $\Phi_\ell$ and $\tilde{\Phi}_\ell$ are deterministic
conditional on $X_{\ell-1}$, but are random unconditionally for $\ell \geq 2$.
For each fixed $\ell=1,\ldots,L$, we will show
\begin{equation}\label{eq:Phiapprox}
\limspec \Phi_\ell=\limspec \tilde{\Phi}_\ell.
\end{equation}
Conditional on $X_{\ell-1}$, the spectral limit of $X_\ell^\top X_\ell$ was
shown in \cite{louart2018random} to be a Marcenko-Pastur map of the spectral
limit of $\Phi_\ell$---we reproduce a short proof below under our assumptions,
by specializing Lemma \ref{lemma:fixedpoint} to $\alpha=1$ and $A=0$.
Combining with (\ref{eq:Phiapprox}) and
iterating from $\ell=1,\ldots,L$ yields Theorem \ref{thm:CK}.

\begin{lemma}\label{lemma:difference_Phi}
Under Assumption \ref{assump:asymptotics}, for each $\ell=1,\ldots,L$,
almost surely as $n \to \infty$, 
\[\frac{1}{n}\|\Phi_\ell-\tilde{\Phi}_\ell\|_F^2 \to 0.\]
\end{lemma}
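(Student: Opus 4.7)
The plan is to bound $\|\Phi_\ell - \tilde{\Phi}_\ell\|_F^2$ by splitting it into the sum of squared off-diagonal and squared diagonal entry-wise discrepancies, and then to apply the entrywise approximation of Lemma \ref{lemma:orthog} conditionally on $X_{\ell-1}$. The crucial input is that $X_{\ell-1}$ is itself $(\tilde{\eps}_n,\tilde{B})$-orthonormal with $\tilde{\eps}_n n^{1/4} \to 0$, which holds a.s.\ for all large $n$ by Corollary \ref{cor:orthonormalinduction}.

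First I would observe that, by the definitions (\ref{eq:Phiell}) and (\ref{eq:tildePhiell}), the entries of $\Phi_\ell$ are exactly the quantities denoted $\Phi_{\a\b}$ in Appendix \ref{appendix:orthogonal} when that appendix is applied to the deterministic input $X_{\ell-1}$, while $(\tilde{\Phi}_\ell)_{\a\b}=b_\sigma^2 \x_\a^\top \x_\b$ for $\a \neq \b$ and $(\tilde{\Phi}_\ell)_{\a\a}=1+b_\sigma^2(\|\x_\a\|^2-1)$. Condition on $X_{\ell-1}$ on the almost-sure event where $X_{\ell-1}$ is $(\tilde{\eps}_n,\tilde{B})$-orthonormal. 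Lemma \ref{lemma:orthog}(a) then supplies
\[\bigl|(\Phi_\ell)_{\a\b}-b_\sigma^2 \x_\a^\top \x_\b\bigr|\le C\tilde{\eps}_n^2\ \ (\a\ne\b),\qquad \bigl|(\Phi_\ell)_{\a\a}-1\bigr|\le C\bigl|\|\x_\a\|^2-1\bigr|,\]
where $\x_\a$ denotes the $\a$-th column of $X_{\ell-1}$ and $C$ depends only on $\lambda_\sigma$.

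Combining these with the explicit form of $\tilde{\Phi}_\ell$, the diagonal discrepancy is controlled by
\[\bigl|(\Phi_\ell)_{\a\a}-(\tilde{\Phi}_\ell)_{\a\a}\bigr|\le \bigl|(\Phi_\ell)_{\a\a}-1\bigr|+b_\sigma^2\bigl|\|\x_\a\|^2-1\bigr|\le C'\bigl|\|\x_\a\|^2-1\bigr|.\]
Summing contributions to the Frobenius norm,
\[\frac{1}{n}\|\Phi_\ell-\tilde{\Phi}_\ell\|_F^2\le \frac{C^2(n^2-n)\tilde{\eps}_n^4}{n}+\frac{(C')^2}{n}\sum_{\a=1}^n(\|\x_\a\|^2-1)^2\le C^2 n\tilde{\eps}_n^4+\frac{(C')^2 \tilde{B}^2}{n}.\]
The off-diagonal term vanishes because $n\tilde{\eps}_n^4=(\tilde{\eps}_n n^{1/4})^4\to 0$ by the choice of $\tilde{\eps}_n$, and the diagonal term vanishes because of the fourth condition $\sum_\a(\|\x_\a\|^2-1)^2\le \tilde{B}^2$ in the definition of $(\tilde{\eps}_n,\tilde{B})$-orthonormality.

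There is no serious obstacle here once one realizes that the diagonal entries need not be approximated to $o(\tilde{\eps}_n^2)$-accuracy: the built-in bound $\sum_\a(\|\x_\a\|^2-1)^2\le \tilde{B}^2$ supplies enough cancellation to turn the crude first-order estimate $|(\Phi_\ell)_{\a\a}-(\tilde{\Phi}_\ell)_{\a\a}|=O(|\|\x_\a\|^2-1|)$ into an $O(1/n)$ contribution after normalization. The only mildly delicate point is the off-diagonal bound, where Lemma \ref{lemma:orthog}(a) yields the second-order estimate $C\tilde{\eps}_n^2$ (rather than $C\tilde{\eps}_n$), which is precisely what matches the scaling $\tilde{\eps}_n n^{1/4}\to 0$ imposed in Assumption \ref{assump:asymptotics}(c); this is what motivates that specific quantitative rate in the assumption.
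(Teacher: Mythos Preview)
Your proposal is correct and follows essentially the same approach as the paper's proof: invoke Corollary \ref{cor:orthonormalinduction} to reduce to $(\tilde{\eps}_n,\tilde{B})$-orthonormal $X_{\ell-1}$, apply Lemma \ref{lemma:orthog}(a) for the off-diagonal entries to get the $O(\tilde{\eps}_n^2)$ bound, and handle the diagonal via the triangle inequality through $1$. The only minor difference is that for the diagonal you use the sharper intermediate bound $|(\Phi_\ell)_{\a\a}-1|\le C|\|\x_\a\|^2-1|$ together with the fourth orthonormality condition $\sum_\a(\|\x_\a\|^2-1)^2\le\tilde{B}^2$ to obtain an $O(1/n)$ contribution, whereas the paper simply bounds each diagonal discrepancy by $C\tilde{\eps}_n$ to get an $O(\tilde{\eps}_n^2)$ contribution; both clearly suffice.
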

\begin{proof}
By Corollary \ref{cor:orthonormalinduction}, increasing $(\eps_n,B)$ as
needed, we may assume that each matrix $X_0,\ldots,X_L$ is
$(\eps_n,B)$-orthonormal.
Denote by $\Phi_\ell[\a,\b]$ and $\tilde{\Phi}_\ell[\a,\b]$ the $(\a,\b)$
entries of these matrices. Then Lemma \ref{lemma:orthog}(a) shows
for $\a \neq \b$ that
\[|\Phi_\ell[\a,\b]-\tilde{\Phi}_\ell[\a,\b]|
\leq C\eps_n^2.\]
For $\a=\b$, applying $\tilde{\Phi}_\ell[\alpha,\alpha]=1-b_\sigma^2+
b_\sigma^2 \|\x_\a^{\ell-1}\|^2$, we have
\[|\Phi_\ell[\a,\a]-\tilde{\Phi}_\ell[\a,\a]|
\leq |\Phi_\ell[\a,\a]-1|+b_\sigma^2|\|\x^{\ell-1}_\a\|^2-1|
\leq C\eps_n.\]
Then
\[\|\Phi_\ell-\tilde{\Phi}_\ell\|_F^2
\leq Cn(n-1)\eps_n^4+Cn\eps_n^2,\]
and the result follows from the condition $\eps_n n^{1/4} \to 0$.
\end{proof}

\begin{proof}[Proof of Theorem \ref{thm:CK}]
By Corollary \ref{cor:orthonormalinduction},
we may assume that each matrix $X_0,\ldots,X_L$ is $(\eps_n,B)$-orthonormal.
This implies the bounds
$\|X_\ell\| \leq C$ and $\|K^\CK\| \leq C$ for all large $n$.

For the spectral convergence, suppose by induction that
$\limspec X_{\ell-1}^\top X_{\ell-1}=\mu_{\ell-1}$,
where the base case $\limspec X_0^\top X_0=\mu_0$ holds by assumption.
Defining
\[\nu_\ell=(1-b_\sigma^2)+b_\sigma^2 \cdot \mu_{\ell-1},\]
Proposition \ref{prop:specapprox} and Lemma \ref{lemma:difference_Phi}
together show that
\[\limspec \Phi_\ell=\limspec \tilde{\Phi}_\ell=\nu_\ell.\]
Specializing
Lemma \ref{lemma:fixedpoint}(b) to the setting $A=0$, $\alpha=1$,
$X=X_{\ell-1}$, and $\vX=X_\ell$,
and choosing $t \equiv t_n$ such that $t_n \to 0$ and
$nt_n^2 \gg \log n$, we obtain
\begin{equation}\label{eq:sbarMP}
\Big|\bar{s}-1-(n/d_\ell) \tr (\bar{s}^{-1}\Phi_\ell-z\Id)^{-1}\Phi_\ell\Big| \to 0
\end{equation}
a.s.\ as $n \to \infty$, where
\[\bar{s}=1+\frac{n}{d_\ell}
\E_{W_\ell}[\tr (X_\ell^\top X_\ell-z\Id)^{-1}\Phi_\ell].\]
Here, this expectation is taken over only $W_\ell$ (i.e.\ conditional on
$X_0,\ldots,X_{\ell-1}$).

Proposition \ref{prop:basic} verifies that $\bar{s}$
is bounded as $n \to \infty$,
so for any subsequence in $n$, there is a further
sub-subsequence along which $\bar{s} \to s_0$ for a limit $s_0 \equiv s_0(z)
\in \C^+$. Applying $A^{-1}-B^{-1}=A^{-1}(B-A)B^{-1}$ and Propositions
\ref{prop:invertible} and \ref{prop:basic},
\begin{align*}
&\Big|\tr (\bar{s}^{-1}\Phi_\ell-z\Id)^{-1}\Phi_\ell-
\tr (s_0^{-1}\Phi_\ell-z\Id)^{-1}\Phi_\ell\Big|\\
&=|s_0^{-1}-s^{-1}| \cdot \tr \Big|(s_0^{-1}\Phi_\ell-z\Id)^{-1}
\Phi_\ell (\bar{s}^{-1}\Phi_\ell-z\Id)^{-1}\Phi_\ell\Big|\\
&\leq |s_0^{-1}-s^{-1}| \cdot \|(s_0^{-1}\Phi_\ell-z\Id)^{-1}\| \cdot
\|(\bar{s}^{-1}\Phi_\ell-z\Id)^{-1}\|\cdot \|\Phi_\ell\|^2\\
&\leq C|s_0^{-1}-s^{-1}|.
\end{align*}
Thus, along the sub-subsequence where $\bar{s} \to s_0$, we get
\begin{equation}\label{eq:sbys0}
\tr (\bar{s}^{-1}\Phi_\ell-z\Id)^{-1}\Phi_\ell-
\tr (s_0^{-1}\Phi_\ell-z\Id)^{-1}\Phi_\ell \to 0.
\end{equation}
We have also
\begin{equation}\label{eq:Phibynu}
\tr (s_0^{-1}\Phi_\ell-z\Id)^{-1}\Phi_\ell \to
\int \frac{x}{s_0^{-1}x-z}d\nu_\ell(x),
\end{equation}
since the function $x \mapsto x/(s_0^{-1}x-z)$ is continuous and bounded over
$\R$, and $\limspec \Phi_\ell=\nu_\ell$. Thus, taking the limit of (\ref{eq:sbarMP})
along this sub-subsequence, the value $s_0$ must satisfy
\begin{equation}\label{eq:s0}
s_0-1-\gamma_\ell \int \frac{x}{s_0^{-1}x-z}\,d\nu_\ell(x)=0.
\end{equation}

Now applying Lemma \ref{lemma:fixedpoint}(a) with $M=\Id$, and taking the limit
along this sub-subsequence, by a similar argument
we obtain that
\begin{equation}\label{eq:mlz}
\tr (X_\ell^\top X_\ell-z\Id)^{-1} \to
\int \frac{1}{s_0^{-1}x-z}d\nu_\ell(x).
\end{equation}
Denoting this limit by $m_\ell(z)$, and rewriting (\ref{eq:s0}) by applying
\[\int \frac{x}{s_0^{-1}x-z}d\nu_\ell(x)
=s_0\int \left(1+\frac{z}{s_0^{-1}x-z}\right)d\nu_\ell(x)=s_0(1+zm_\ell(z)),\]
we get $s_0^{-1}=1-\gamma_\ell-\gamma_\ell zm_\ell(z)$. Applying this back to
the definition of $m_\ell(z)$ in (\ref{eq:mlz}), this shows that $m_\ell(z)$
satisfies the Marcenko-Pastur equation
\[m(z)=\int \frac{1}{x(1-\gamma_\ell-\gamma_\ell z m(z))-z}d\nu_\ell(x),\]
so $m_\ell(z)$ is the Stieltjes transform of
$\mu_\ell=\rho_{\gamma_\ell}^\MP \boxtimes \nu_\ell
=\rho_{\gamma_\ell}^\MP \boxtimes ((1-b_\sigma^2)+b_\sigma^2
\cdot \mu_{\ell-1})$.

We have shown that $\tr (X_\ell^\top X_\ell-z\Id)^{-1} \to m_\ell(z)$ almost
surely along this sub-subsequence in $n$. Since, for every subsequence in $n$,
there exists such a sub-subsequence, this implies
$\lim_{n \to \infty} \tr (X_\ell^\top X_\ell-z\Id)^{-1}=m_\ell(z)$ almost
surely. Thus $\limspec X_\ell^\top X_\ell=\mu_\ell$,
which completes the induction.
\end{proof}

\section{Analysis for the Neural Tangent Kernel}\label{appendix:NTK}

\subsection{Spectral approximation and
operator norm bound}\label{appendix:NTKapprox}

We first prove the spectral approximation stated in
Lemma \ref{lemma:NTKapprox}, as well as the operator norm
bound $\|K^\NTK\| \leq C$. The following form of $K^\NTK$ is derived also
in \cite[Eq.\ (1.7)]{huang2019dynamics}:
Denote by $\x^\ell_\a$ the $\a^\text{th}$ column of $X_\ell$. For each
$\ell=1,\ldots,L$, define the matrix $S_\ell \in \R^{d_\ell \times n}$ whose
$\a^\text{th}$ column is given by
\begin{equation}\label{eq:Sell}
\s^\ell_\a=D^\ell_\a \frac{W_{\ell+1}^\top}{\sqrt{d_\ell}}
D^{\ell+1}_\a \frac{W_{\ell+2}^\top}{\sqrt{d_{\ell+1}}} D^{\ell+2}_\a \ldots
\frac{W_L^\top}{\sqrt{d_{L-1}}}
D^L_\a\frac{\w}{\sqrt{d_L}},
\end{equation}
where we define diagonal matrices indexed by $\a \in [n]$ and $k \in [L]$ as
\[D^k_\a
\equiv \diag\Big(\sigma'(W_k \x^{k-1}_\a)\Big) \in \R^{d_k \times d_k}.\]
Applying the chain rule, we may verify for each input sample $\x_\a$ that
\[\nabla_\w f_\theta(\x_\a)=\x_\a^L \in \R^{d_L},
\quad \nabla_{W_\ell} f_\theta(\x_\a)
=\s_\a^\ell \otimes \x_\a^{\ell-1} \in \R^{d_\ell d_{\ell-1}}.\]
Then
\begin{align*}
\big(\nabla_{\w} f_\theta(X)\big)^\top
\big(\nabla_{\w} f_\theta(X)\big)&=X_L^\top X_L,\\
\big(\nabla_{W_\ell} f_\theta(X)\big)^\top
\big(\nabla_{W_\ell} f_\theta(X)\big)&=(S_\ell^\top S_\ell) \odot
(X_{\ell-1}^\top X_{\ell-1}),
\end{align*}
where $\odot$ is the Hadamard product. Thus, the NTK is given by
\begin{equation}\label{eq:NTKform}
K^{\NTK}=\Big(\nabla_\theta f_\theta(X)\Big)^\top
\Big(\nabla_\theta f_\theta(X)\Big)
=X_L^\top X_L+\sum_{\ell=1}^L (S_\ell^\top S_\ell) \odot
(X_{\ell-1}^\top X_{\ell-1}).
\end{equation}

\begin{lemma}\label{lemma:onelayerapprox}
Let $X \in \R^{d \times n}$ be $(\eps,B)$-orthonormal, let $W \in \R^{\vd \times
d}$ have i.i.d.\ $\N(0,1)$ entries, and let $\x_\a,\x_\b$ be two columns
of $X$ where $\a \neq \b$. Then for universal constants $C,c>0$ and any $t>0$:
\begin{enumerate}[(a)]
\item With probability at least $1-2e^{-c\vd t^2}$,
\[\left|\frac{1}{\vd}\Tr \Big(\diag\big(\sigma'(W \x_\a)\big)
\diag\big(\sigma'(W \x_\b)\big)\Big)-b_\sigma^2\right| \leq
C\lambda_\sigma^2(\eps+t).\]
\item Let $M \in \R^{d \times d}$ be any deterministic symmetric matrix,
and denote
\[T(\x_\a,\x_\b)=\frac{1}{\vd}\Tr \Big(\diag\big(\sigma'(W\x_\a)\big)WMW^\top
\diag\big(\sigma'(W\x_\b)\big)\Big).\]
With probability at least $1-(2\vd+2)e^{-c\min(t^2\vd,t\sqrt{\vd})}$,
\[\left|T(\x_\a,\x_\b)-b_\sigma^2 \Tr M\right| \leq C\lambda_\sigma^2
\left(\eps\sqrt{d}+t\sqrt{d}+t\sqrt{\vd}\right)\|M\|_F.\]
\end{enumerate}
Furthermore, both (a) and (b) hold with $(\x_\a,\x_\a)$ in place of
$(\x_\a,\x_\b)$, upon replacing $b_\sigma^2$ by $a_\sigma$.
\end{lemma}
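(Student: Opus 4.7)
For part (a), the matrix $\diag(\sigma'(W\x_\a))\diag(\sigma'(W\x_\b))$ is diagonal with $k$-th entry $\sigma'(\w_k^\top\x_\a)\sigma'(\w_k^\top\x_\b)$, where $\w_k^\top$ is the $k$-th row of $W$, so the normalized trace equals $\vd^{-1}\sum_{k=1}^{\vd}\sigma'(\w_k^\top\x_\a)\sigma'(\w_k^\top\x_\b)$, an average of $\vd$ i.i.d.\ random variables bounded by $\lambda_\sigma^2$. First compute the mean via the bivariate Gram--Schmidt representation from the proof of Lemma~\ref{lemma:orthog}(a): write $(\w^\top\x_\a,\w^\top\x_\b)=(u_\a\xi_\a,\,u_\b\xi_\b+v_\b\xi_\a)$ with independent $\xi_\a,\xi_\b\sim\N(0,1)$ and $|u_\a-1|,|u_\b-1|,|v_\b|\leq C\eps$. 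Since $\sigma'$ is $\lambda_\sigma$-Lipschitz by Assumption~\ref{assump:asymptotics}(b), replacing $\sigma'(\zeta)$ by $\sigma'(\xi)$ in each factor costs $O(\lambda_\sigma^2\eps)$ in expectation, so $\E[\sigma'(\w^\top\x_\a)\sigma'(\w^\top\x_\b)]=\E[\sigma'(\xi_\a)]\,\E[\sigma'(\xi_\b)]+O(\lambda_\sigma^2\eps)=b_\sigma^2+O(\lambda_\sigma^2\eps)$. Hoeffding's inequality applied to the bounded summands then gives deviation at most $\lambda_\sigma^2 t$ around this mean with probability at least $1-2e^{-c\vd t^2}$, which combined with the bias bound yields the claimed $C\lambda_\sigma^2(\eps+t)$ estimate. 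The diagonal case $(\x_\a,\x_\a)$ is identical upon setting $\zeta_\a=u_\a\xi_\a$ with $u_\a=\|\x_\a\|$ and $|u_\a^2-1|\leq\eps$, giving $\E[\sigma'(\w^\top\x_\a)^2]=a_\sigma+O(\lambda_\sigma^2\eps)$.

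For part (b), the same diagonal structure gives
\[
T(\x_\a,\x_\b)=\frac{1}{\vd}\sum_{k=1}^{\vd}Z_k,\qquad Z_k:=\sigma'(\w_k^\top\x_\a)\sigma'(\w_k^\top\x_\b)\,\w_k^\top M\w_k,
\]
with $\{Z_k\}$ independent across $k$. I would first compute $\E[Z_k]$. Let $P$ be the orthogonal projection onto $\mathrm{span}(\x_\a,\x_\b)$ (rank at most 2) and decompose $\w=P\w+P^\perp\w$ as a sum of independent Gaussians. Splitting $\w^\top M\w=\w^\top PMP\w+2\w^\top PMP^\perp\w+\w^\top P^\perp MP^\perp\w$ and using that the prefactor $\sigma'\sigma'$ depends only on $P\w$: independence kills the cross term in expectation, factors the last piece into $\E[\sigma'\sigma']\cdot\Tr(P^\perp MP^\perp)$, and bounds the $PMP$ piece by $O(\lambda_\sigma^2\|M\|)$ via the rank-2 quadratic form. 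Combining $|\Tr(PM)|\leq 2\|M\|\leq 2\|M\|_F$, the part-(a) estimate $\E[\sigma'\sigma']=b_\sigma^2+O(\lambda_\sigma^2\eps)$, and $|\Tr M|\leq\sqrt d\,\|M\|_F$ gives $\E[Z_k]=b_\sigma^2\Tr M+O(\lambda_\sigma^2\eps\sqrt d\,\|M\|_F)$, which accounts for the bias term $\eps\sqrt d$.

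The main obstacle is the concentration of $T$ around $\E T$, since each $Z_k$ couples a bounded prefactor with the sub-exponential quadratic form $\w_k^\top M\w_k$. My plan is to split $Z_k-\E Z_k$ into two pieces: $\bigl(\sigma'(\w_k^\top\x_\a)\sigma'(\w_k^\top\x_\b)-\E[\sigma'\sigma']\bigr)\cdot\Tr(P^\perp MP^\perp)$ plus a residual involving the centered quadratic form $\sigma'\sigma'\cdot\bigl(\w_k^\top M\w_k-\Tr(P^\perp MP^\perp)\bigr)$. The first piece, averaged across $k$, is directly controlled by part~(a) and contributes the $\lambda_\sigma^2 t\,|\Tr(P^\perp MP^\perp)|\leq C\lambda_\sigma^2 t\sqrt d\,\|M\|_F$ term with probability $\geq 1-2e^{-c\vd t^2}$. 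For the residual, I would condition on $\{P\w_k\}_k$ and apply Hanson--Wright independently across the $\vd$ rows: for each $k$, $\w_k^\top M\w_k-\Tr(P^\perp MP^\perp)$ has conditional sub-gamma tails at scale $\|M\|_F$ (Gaussian regime) and $\|M\|$ (Poisson regime), and summing the $\vd$ centered contributions via Bernstein produces the $\lambda_\sigma^2 t\sqrt{\vd}\,\|M\|_F$ fluctuation term with sub-exponential tail rate $t\sqrt{\vd}$. A union bound over the $\vd$ row-wise conditional events (plus a pair of auxiliary events controlling the prefactor average and the norms $\|P\w_k\|$) accounts for the $(2\vd+2)$ prefactor. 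The delicate technical step is cleanly separating the $\|M\|_F$ and $\|M\|$ scales in the Hanson--Wright bounds while summing across rows, and handling the linear cross-term $2\w^\top PMP^\perp\w$ with the correct dimensional dependence, so that the final deviation is bounded uniformly by $C\lambda_\sigma^2(t\sqrt d+t\sqrt{\vd})\|M\|_F$.
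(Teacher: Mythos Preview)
Your argument for part (a) is essentially the same as the paper's: expand the normalized trace as an i.i.d.\ average of bounded summands, control the bias via the Gram--Schmidt representation and the Lipschitz property of $\sigma'$, and conclude by Hoeffding.

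For part (b) your approach is workable but substantially more elaborate than necessary. The paper avoids the projection decomposition, the computation of $\E[Z_k]$, and the conditional Hanson--Wright/Bernstein step entirely. Instead it writes
\[
T(\x_\a,\x_\b)-(\Tr M)\cdot\frac{1}{\vd}\sum_{k=1}^{\vd}\sigma'(\w_k^\top\x_\a)\sigma'(\w_k^\top\x_\b)
=\frac{1}{\vd}\sum_{k=1}^{\vd}\sigma'(\w_k^\top\x_\a)\sigma'(\w_k^\top\x_\b)\bigl(\w_k^\top M\w_k-\Tr M\bigr),
\]
applies the unconditional Hanson--Wright inequality to each $\w_k^\top M\w_k-\Tr M$ with threshold $t\sqrt{\vd}\,\|M\|_F$, and takes a union bound over the $\vd$ rows. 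On that event the right-hand side is bounded pointwise by $\lambda_\sigma^2 t\sqrt{\vd}\,\|M\|_F$. Then part (a) handles the remaining factor $\frac{1}{\vd}\sum_k\sigma'\sigma'$, and $|\Tr M|\leq\sqrt{d}\,\|M\|_F$ converts the $(\eps+t)$ bias into the $\eps\sqrt{d}+t\sqrt{d}$ terms. The $(2\vd+2)$ prefactor and the exponent $\min(t^2\vd,t\sqrt{\vd})$ fall out immediately: $2\vd$ from the row-wise union bound, $2$ from Hoeffding in part (a).

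Your route --- splitting $\w_k$ along $\mathrm{span}(\x_\a,\x_\b)$, conditioning on $P\w_k$, and combining conditional Hanson--Wright with Bernstein across rows --- can be made to work and would in principle yield a slightly stronger tail (no $\vd$ prefactor if you use Bernstein directly rather than a union bound), but it introduces several complications you correctly flag as delicate: the conditional mean of $q_k-\tilde q$ is $(P\w_k)^\top M(P\w_k)$ rather than zero, the cross term needs its own treatment, and your mention of both ``Bernstein across rows'' and ``union bound over row-wise events'' suggests the two strategies have not been fully disentangled. None of this is needed for the stated result.
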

\begin{proof}
Write $\w_k^\top \in \R^d$ for the $k^\text{th}$ row of $W$. Then
\[\frac{1}{\vd}\Tr \Big(\diag\big(\sigma'(W \x_\a)\big)
\diag\big(\sigma'(W \x_\b)\big)\Big)
=\frac{1}{\vd}\sum_{k=1}^{\vd} \sigma'(\w_k^\top \x_\a)\sigma'(\w_k^\top \x_\b).\]
Applying $\sigma'(\w_k^\top \x_\a)
\sigma'(\w_k^\top \x_\b) \in [-\lambda_\sigma^2,\lambda_\sigma^2]$ and
Hoeffding's inequality,
\[\P\left[\left|\frac{1}{\vd}\sum_{k=1}^{\vd} \Big(\sigma'(\w_k^\top \x_\a)
\sigma'(\w_k^\top \x_\b)-\E[\sigma'(\w_k^\top \x_\a)
\sigma'(\w_k^\top \x_\b)]\Big)\right|>\lambda_\sigma^2 t\right] \leq 2e^{-c\vd
t^2}.\]
To bound the mean, recall that $(\zeta_\a,\zeta_\b) \equiv
(\w_k^\top \x_\a,\w_k^\top\x_\b)$ is bivariate Gaussian, which we may write as
\[\zeta_\a=u_\a \xi_\a, \qquad \zeta_\b=u_\b \xi_\b+v_\b \xi_\a\]
as in (\ref{eq:gramschmidt}). Here, $\xi_\a,\xi_\b \sim \N(0,1)$ are
independent, $u_\a,u_\b>0$ and $v_\b \in \R$, and these satisfy
$|u_\a-1|,|u_\b-1|,|v_\b| \leq C\eps$. Applying the Taylor expansion
\[\sigma'(\zeta)=\sigma'(\xi)+\sigma''(\eta)(\zeta-\xi)\]
for some $\eta$ between $\zeta$ and $\xi$, and the conditions
$\E[\sigma'(\xi)]=b_\sigma$ and $|\sigma''(x)| \leq
\lambda_\sigma$, it is easy to check that
$|\E[\sigma'(\zeta_\a)\sigma'(\zeta_\b)]-b_\sigma^2| \leq
C\lambda_\sigma^2\eps$. Then part (a) follows. The statement
with $(\x_\a,\x_\a)$ and $a_\sigma$ follows similarly from this Taylor expansion
and the bound $|\E[\sigma'(\zeta_\a)^2]-a_\sigma| \leq C\lambda_\sigma^2\eps$.

For part (b), we write
\[T(\x_\a,\x_\b)=\frac{1}{\vd}\sum_{k=1}^{\vd} \sigma'(\w_k^\top \x_\a)
\sigma'(\w_k^\top \x_\b) \cdot \w_k^\top M \w_k.\]
By the Hanson-Wright inequality (see \cite[Theorem 1.1]{rudelson2013hanson}),
\[\P\Big[|\w_k^\top M \w_k-\Tr M|>\|M\|_F \cdot t\sqrt{\vd}\Big]
\leq 2e^{-c\min(t^2\vd,t\sqrt{\vd})}\]
for a constant $c>0$. Then, applying $|\sigma'(x)| \leq \lambda_\sigma$ and a
union bound over $k=1,\ldots,\vd$, with probability at least $1-2\vd
e^{-c\min(t^2\vd,t\sqrt{\vd})}$,
\[\left|T(\x_\a,\x_\b)-\Tr M \cdot
\frac{1}{\vd}\sum_{k=1}^{\vd} \sigma'(\w_k^\top \x_\a)\sigma'(\w_k^\top \x_\b)\right|
\leq \|M\|_F \cdot \lambda_\sigma^2 t\sqrt{\vd}.\]
Then part (b) follows from combining with part (a), and applying $\Tr M \leq
\sqrt{d}\|M\|_F$.
\end{proof}

\begin{corollary}\label{cor:Sapprox}
Let $\s_\a^\ell$ be as defined in (\ref{eq:Sell}), and let
$q_\ell,r_\ell$ be the constants in (\ref{eq:qr}).
Under Assumption \ref{assump:asymptotics}, for a constant
$C>0$, almost surely for all large $n$ and for all $\ell \in [L]$ and $\a \neq
\b \in [n]$,
\begin{equation}\label{eq:Sabbound}
\Big|{\s_\a^\ell}^\top \s_\b^\ell-q_{\ell-1}\Big| \leq C\max(\eps_n,n^{-0.48}),
\qquad \Big|\|\s_\a^\ell\|^2-r_{\ell-1}\Big| \leq C\max(\eps_n,n^{-0.48}).
\end{equation}
\end{corollary}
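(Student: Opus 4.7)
\textbf{Proof plan for Corollary \ref{cor:Sapprox}.}
The plan is to compute the scalars ${\s_\a^\ell}^\top \s_\b^\ell$ and $\|\s_\a^\ell\|^2$ by integrating out the Gaussian weights $\w, W_L, W_{L-1}, \ldots, W_{\ell+1}$ one layer at a time, from the innermost (input) $\w$ outward, and to show that each integration produces a factor of $b_\sigma^2$ (for $\a\neq\b$) or $a_\sigma$ (for $\a=\b$) via Lemma~\ref{lemma:onelayerapprox}. Since $\s_\a^\ell$ is linear in $\w$, write $\s_\a^\ell = M_\a^\ell \w$ with
\[M_\a^\ell=\frac{1}{\sqrt{d_L}}\,D_\a^\ell \frac{W_{\ell+1}^\top}{\sqrt{d_\ell}} D_\a^{\ell+1}\cdots \frac{W_L^\top}{\sqrt{d_{L-1}}}D_\a^L \in \R^{d_\ell \times d_L},\]
so ${\s_\a^\ell}^\top \s_\b^\ell = \w^\top(M_\b^\ell)^\top M_\a^\ell\w$. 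Conditional on $W_{\ell+1},\ldots,W_L$, the Hanson-Wright inequality (applied to the symmetrization of $(M_\b^\ell)^\top M_\a^\ell$) concentrates this quadratic form around $\Tr[(M_\b^\ell)^\top M_\a^\ell]$, with deviations controlled by its Frobenius and operator norms.

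Using cyclicity of the trace, $\Tr[(M_\b^\ell)^\top M_\a^\ell]$ may be written as
\[\frac{1}{d_L\,d_{L-1}}\Tr\!\left[D_\a^L\,W_L\,A_L\,W_L^\top\,D_\b^L\right],\]
where $A_L\in\R^{d_{L-1}\times d_{L-1}}$ is constructed from $W_{\ell+1},\ldots,W_{L-1}$, the $D^k_\a$ and $D^k_\b$ for $k\leq L-1$, and is therefore deterministic conditional on these. Lemma~\ref{lemma:onelayerapprox}(b) (with $W=W_L$, $d=d_{L-1}$, $\vd=d_L$, $M=A_L$, and $\x_\a=\x_\a^{L-1}$, $\x_\b=\x_\b^{L-1}$, which is $(\tilde\eps_n,\tilde B)$-orthonormal by Corollary~\ref{cor:orthonormalinduction}) approximates this by $b_\sigma^2\,\Tr A_L/d_{L-1}$. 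The resulting normalized trace $\Tr A_L/d_{L-1}$ has the identical recursive form with one layer removed, so iterating the peeling argument $L-\ell$ times pulls out a factor of $b_\sigma^{2(L-\ell)}$ times $d_\ell^{-1}\Tr[D_\a^\ell D_\b^\ell]$. A final application of Lemma~\ref{lemma:onelayerapprox}(a) produces the last factor $b_\sigma^2$, giving the target $q_{\ell-1}=(b_\sigma^2)^{L-\ell+1}$. The case $\a=\b$ is identical, with the ``furthermore'' clause of Lemma~\ref{lemma:onelayerapprox} replacing each factor by $a_\sigma$ and yielding $r_{\ell-1}$.

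The main obstacle is the bookkeeping of error propagation: each peeling step incurs an error of order $\lambda_\sigma^2\bigl(\eps\sqrt{d}+t\sqrt{d}+t\sqrt{\vd}\bigr)\|A_k\|_F$, so one needs to show, on a single high-probability event, the uniform bounds $\|A_k\|=O(1)$ and $\|A_k\|_F=O(\sqrt{d_{k-1}})$ at every intermediate layer $k\in\{\ell+1,\ldots,L\}$, and the analogous bounds $\|(M_\b^\ell)^\top M_\a^\ell\|=O(1/d_L)$ and $\|(M_\b^\ell)^\top M_\a^\ell\|_F=O(\sqrt{d_\ell}/d_L)$ used in Hanson-Wright. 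These follow by an induction on the sandwich structure of $A_k$, using $\|W_k\|/\sqrt{d_{k-1}}=O(1)$ with probability $1-e^{-cn}$ (standard sub-Gaussian matrix concentration) together with $\|D_\a^k\|\leq\lambda_\sigma$. The event on which all these norm bounds hold has probability $1-e^{-cn}$.

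Finally, choosing the deviation parameter $t=n^{-0.49}$ in every invocation of Lemma~\ref{lemma:onelayerapprox} and of Hanson-Wright makes each step err by at most $C\max(\tilde\eps_n,n^{-0.48})$ with failure probability $\leq e^{-cn^{0.02}}$. Telescoping the $L-\ell+1$ approximation steps gives the same order of error, since $q_{\ell-1}$ and $r_{\ell-1}$ are bounded by constants and each factor is either $b_\sigma^2$ or $a_\sigma$, both of which have $|b_\sigma^2|,a_\sigma\leq\lambda_\sigma^2$. Taking a union bound over the $L\,n(n-1)$ triples $(\ell,\a,\b)$ with $\a\neq\b$ and the $Ln$ pairs $(\ell,\a)$, the failure probability is at most $Ce^{-cn^{0.02}+2\log n}$, which is summable in $n$, so the Borel-Cantelli lemma yields the claimed bounds (\ref{eq:Sabbound}) almost surely for all large $n$.
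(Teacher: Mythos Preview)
Your proposal is correct and follows essentially the same route as the paper: apply Hanson--Wright in $\w$ to reduce to a trace, then peel one $W_k$ at a time via Lemma~\ref{lemma:onelayerapprox}(b), finishing with Lemma~\ref{lemma:onelayerapprox}(a), and control the accumulated error using the operator-norm bounds $\|W_k/\sqrt{d_{k-1}}\|\leq C$ and $\|D_\a^k\|\leq\lambda_\sigma$. The paper's notation $M_k$ plays the role of your $A_k$, and the paper streamlines the error bookkeeping by first assuming without loss of generality that $\eps_n\geq n^{-0.48}$ and then taking $t=\eps_n$ throughout, which is equivalent to your choice $t=n^{-0.49}$.

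Two minor slips worth noting: your Hanson--Wright step should condition on \emph{all} of $W_1,\ldots,W_L$ (not just $W_{\ell+1},\ldots,W_L$), since the diagonal matrices $D_\a^k$ in $M_\a^\ell$ depend on the earlier weights as well; and with $t=n^{-0.49}$ the failure-probability exponent from Lemma~\ref{lemma:onelayerapprox}(b) is governed by $t\sqrt{\vd}\sim n^{0.01}$ rather than $n^{0.02}$. Neither affects the argument, as the resulting bound is still summable after the $O(n^2)$ union bound.
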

\begin{proof}
By Corollary \ref{cor:orthonormalinduction}, we may assume that each matrix
$X_0,\ldots,X_L$ is $(\eps_n,B)$-orthonormal. Since a larger value of $\eps_n$
corresponds to a weaker assumption, we may assume without loss of generality
that $\eps_n \geq n^{-0.48}$.

Fix $\ell \in [L]$ and $\a,\b \in [n]$, and define
\begin{align}
M_\ell&=D_\a^\ell D_\b^\ell\nonumber\\
M_k&=D_\a^k\frac{W_k}{\sqrt{d_{k-1}}}
\ldots D_\a^{\ell+1}\frac{W_{\ell+1}}{\sqrt{d_\ell}}D_\a^\ell D_\b^\ell
\frac{W_{\ell+1}^\top}{\sqrt{d_\ell}}D_\b^{\ell+1}\ldots
\frac{W_k^\top}{\sqrt{d_{k-1}}}D_\b^k \quad \text{ for }
\quad \ell+1 \leq k \leq L.\label{eq:Mell}
\end{align}
Recalling the definition (\ref{eq:Sell}) and applying the
Hanson-Wright inequality conditional on $W_1,\ldots,W_L$,
\begin{equation}\label{eq:HansonWright}
\left|{\s_\a^\ell}^\top \s_\b^\ell-\frac{1}{d_L}\Tr M_L\right|
\leq C\eps_n\sqrt{n} \cdot \frac{1}{d_L}\|M_L\|_F
\end{equation}
with probability $1-e^{-c\min(\eps_n^2n,\eps_n\sqrt{n})}
\geq 1-e^{-n^{0.01}}$. Next, for each
$k=L,L-1,\ldots,\ell+1$,
we apply Lemma \ref{lemma:onelayerapprox}(b) conditional on
$W_1,\ldots,W_{k-1}$, with $t=\eps_n$, $M=M_{k-1}/d_{k-1}$,
$d=d_{k-1}$, and $\vd=d_k$. Note that $k-1 \geq \ell \geq 1$, so that both
$d_{k-1}$ and $d_k$ are proportional to $n$. Then
\[\left|\frac{1}{d_k}\Tr M_k-b_\sigma^2 \cdot
\frac{1}{d_{k-1}}\Tr M_{k-1}\right|
\leq C\eps_n \sqrt{n} \cdot \frac{1}{d_{k-1}}\|M_{k-1}\|_F\]
with probability $1-e^{-n^{0.01}}$.
Finally, for $k=\ell$, applying Lemma \ref{lemma:onelayerapprox}(a) conditional
on $W_1,\ldots,W_{\ell-1}$ and with $t=\eps_n$,
\[\left|\frac{1}{d_\ell}\Tr M_\ell-b_\sigma^2\right| \leq C\eps_n\]
with probability $1-e^{-n^{0.01}}$. Combining these bounds, with probability
$1-C'e^{-n^{0.01}}$,
\[\left|{\s_\a^\ell}^\top \s_\b^\ell-(b_\sigma^2)^{L-\ell+1}\right|
\leq \frac{C\eps_n}{\sqrt{n}}\left(
\|M_L\|_F+\ldots+\|M_\ell\|_F+\sqrt{n}\right).\]
We also have $\|W_k/\sqrt{d_k}\| \leq C$ for each $k=2,\ldots,L$
with probability $1-C'e^{-cn}$, see e.g.\ \cite[Theorem
4.4.5]{vershynin2018high}. 
Then, applying $\|D_k\| \leq \lambda_\sigma$, we have
$\|M_k\|_F \leq C\sqrt{n}\|M_k\| \leq C'\sqrt{n}$ for every $k=1,\ldots,L$.
Then the first bound of (\ref{eq:Sabbound}) follows. The second bound of
(\ref{eq:Sabbound}) is the same, applying
Lemma \ref{lemma:onelayerapprox} for $(\x_\a,\x_\a)$ instead of $(\x_\a,\x_\b)$.
The almost sure statement follows from the Borel-Cantelli Lemma.
\end{proof}

\begin{lemma}\label{lemma:NTKFapprox}
Under Assumption \ref{assump:asymptotics}, almost surely as $n \to \infty$,
\[\frac{1}{n}
\left\|K^\NTK-\left(r_+\Id+X_L^\top X_L+\sum_{\ell=0}^{L-1} q_\ell X_\ell^\top
X_\ell\right)\right\|_F^2 \to 0.\]
Furthermore, for a constant $C>0$, almost surely for all large $n$, 
$\|K^\NTK\| \leq C$.
\end{lemma}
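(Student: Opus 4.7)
The plan is to start from the chain-rule expansion in (\ref{eq:NTKform}),
$K^\NTK = X_L^\top X_L + \sum_{\ell=1}^L (S_\ell^\top S_\ell) \odot (X_{\ell-1}^\top X_{\ell-1})$.
Using $r_+ = \sum_{\ell=1}^L (r_{\ell-1}-q_{\ell-1})$, the target matrix may be rewritten as $X_L^\top X_L + \sum_{\ell=1}^L [q_{\ell-1} X_{\ell-1}^\top X_{\ell-1} + (r_{\ell-1}-q_{\ell-1})\Id]$, so it suffices to show, layer by layer and almost surely for all large $n$, (i) that the layerwise error $E_\ell \equiv (S_\ell^\top S_\ell)\odot(X_{\ell-1}^\top X_{\ell-1}) - q_{\ell-1} X_{\ell-1}^\top X_{\ell-1} - (r_{\ell-1}-q_{\ell-1})\Id$ satisfies $\|E_\ell\|_F^2 = o(n)$, and (ii) that $\|(S_\ell^\top S_\ell)\odot(X_{\ell-1}^\top X_{\ell-1})\| = O(1)$. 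Throughout, I would invoke Corollary \ref{cor:orthonormalinduction} so that every $X_\ell$ is $(\eps_n,B)$-orthonormal, and Corollary \ref{cor:Sapprox} for the entrywise approximations $(S_\ell^\top S_\ell)[\a,\b]\approx q_{\ell-1}$ off-diagonal and $\|\s_\a^\ell\|^2\approx r_{\ell-1}$, each with error $O(\max(\eps_n,n^{-0.48}))$.

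For (i), I would write the entries of $E_\ell$ directly. Off-diagonal entries factor as $\big((S_\ell^\top S_\ell)[\a,\b]-q_{\ell-1}\big)\cdot\langle\x_\a^{\ell-1},\x_\b^{\ell-1}\rangle$, a product whose two factors are of order $O(\max(\eps_n,n^{-0.48}))$ and $O(\eps_n)$ respectively; summing the squares over the $\sim n^2$ off-diagonal entries and dividing by $n$ yields a quantity controlled by the hypothesis $\eps_n n^{1/4}\to 0$. Diagonal entries split into a $(\|\s_\a^\ell\|^2-r_{\ell-1})\|\x_\a^{\ell-1}\|^2$ piece, each of size $O(\eps_n)$ with $n$ summands giving a negligible contribution after division by $n$, and a $(r_{\ell-1}-q_{\ell-1})(\|\x_\a^{\ell-1}\|^2-1)$ piece whose squared sum is controlled by the fourth condition $\sum_\a(\|\x_\a^{\ell-1}\|^2-1)^2\leq B^2$ of Definition \ref{def:orthogonal}.

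For (ii), the crucial observation is that $(S_\ell^\top S_\ell)\odot(X_{\ell-1}^\top X_{\ell-1})$ is a Hadamard product of two positive semidefinite matrices. I would apply the standard Schur inequality $\|A\odot B\|\leq\|A\|\cdot\max_i B_{ii}$ for PSD $A,B$ (which follows from $B\preceq\|B\|\Id$ together with Schur's product theorem) to conclude
\[\|(S_\ell^\top S_\ell)\odot(X_{\ell-1}^\top X_{\ell-1})\|\;\leq\;\|X_{\ell-1}\|^2\cdot\max_\a\|\s_\a^\ell\|^2\;\leq\;B^2\bigl(r_{\ell-1}+o(1)\bigr)\;=\;O(1).\]
Summing over $\ell$ and adding $\|X_L^\top X_L\|\leq B^2$ then yields $\|K^\NTK\|\leq C$ almost surely.

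The main obstacle is step (ii): naive entrywise control of $E_\ell$ suffices for the Frobenius bound but not for the operator norm, since $S_\ell^\top S_\ell$ itself carries a rank-one component of operator norm $\sim q_{\ell-1} n$ (its off-diagonal entries being nearly constant $q_{\ell-1}$). The Schur inequality is exactly what exploits the PSD structure of both factors, together with the fact that $X_{\ell-1}^\top X_{\ell-1}$ has bounded operator norm and diagonal entries $\approx 1$, to suppress this dangerous rank-one component and deliver the uniform constant-order operator norm bound.
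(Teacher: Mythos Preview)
Your proposal is correct and follows essentially the same route as the paper's proof: the same layerwise decomposition of the error, the same entrywise bounds via Corollaries \ref{cor:orthonormalinduction} and \ref{cor:Sapprox} (the paper streamlines by first enlarging $\eps_n$ to $\max(\eps_n,n^{-0.48})$), and the same Schur-type inequality $\|(S_\ell^\top S_\ell)\odot(X_{\ell-1}^\top X_{\ell-1})\|\leq\max_\a\|\s_\a^\ell\|^2\cdot\|X_{\ell-1}^\top X_{\ell-1}\|$ (which the paper cites as \cite[Eq.\ (3.7.9)]{johnson1990matrix}) for the operator-norm bound. One minor remark: for the diagonal piece $(r_{\ell-1}-q_{\ell-1})(\|\x_\a^{\ell-1}\|^2-1)$ you invoke the fourth condition of Definition~\ref{def:orthogonal}, but the simpler bound $|\|\x_\a^{\ell-1}\|^2-1|\leq\eps_n$ already suffices, which is what the paper uses.
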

\begin{proof}
By Corollary \ref{cor:orthonormalinduction}, we may assume that each matrix
$X_0,\ldots,X_L$ is $(\eps_n,B)$-orthonormal. Then
\[\Big|{\x_\a^{\ell-1}}^\top\x_\b^{\ell-1}\Big| \leq \eps_n,
\qquad \Big|\|\x_\a^{\ell-1}\|^2-1\Big| \leq \eps_n.\]
Increasing $\eps_n$ if necessary, we may assume $\eps_n \geq n^{-0.48}$.
Combining with (\ref{eq:Sabbound}), we have for the off-diagonal entries of
the Hadamard product that
\[\Big|\big((S_\ell^\top S_\ell) \odot (X_{\ell-1}^\top X_{\ell-1})\big)
[\a,\b]-q_{\ell-1} X_{\ell-1}^\top X_{\ell-1}[\a,\b]\Big| \leq C\eps_n^2,\]
and for the diagonal entries that
\begin{align*}
&\Big|\big((S_\ell^\top S_\ell) \odot (X_{\ell-1}^\top
X_{\ell-1})[\a,\a]-q_{\ell-1}(X_{\ell-1}^\top X_{\ell-1})[\a,\a]
-(r_{\ell-1}-q_{\ell-1})\Big|\\
&\leq \Big|\big((S_\ell^\top S_\ell) \odot (X_{\ell-1}^\top
X_{\ell-1})[\a,\a]-r_{\ell-1}\Big|
+q_{\ell-1}\Big|X_{\ell-1}^\top X_{\ell-1}[\a,\a]-1\Big|
\leq C\eps_n.
\end{align*}
Then applying this to (\ref{eq:NTKform}),
\[\left\|K^\NTK-\left(r_+\Id+X_L^\top X_L+\sum_{\ell=0}^{L-1} q_\ell X_\ell^\top
X_\ell\right)\right\|_F^2 \leq Cn(n-1)\eps_n^4+Cn\eps_n^2.\]
The first statement of the lemma then follows
from the assumption $\eps_n n^{1/4} \to 0$.

For the second statement on the operator norm, we have
\[\|(S_\ell^\top S_\ell) \odot (X_{\ell-1}^\top X_{\ell-1})\|
\leq \max_{\a=1}^n \Big|{\s^\ell_\a}^\top \s^\ell_\a\Big|
\cdot \|X_{\ell-1}^\top X_{\ell-1}\|.\]
See \cite[Eq.\ (3.7.9)]{johnson1990matrix}, applied with $X=Y=S_\ell$.
Then $\|K^\NTK\| \leq C$ follows from (\ref{eq:NTKform}), the
$(\eps_n,B)$-orthonormality of each matrix
$X_{\ell-1}$, and the bound for $\|\s_\a^\ell\|^2$ in (\ref{eq:Sabbound}).
\end{proof}

Combining Lemma \ref{lemma:NTKFapprox} and Proposition
\ref{prop:specapprox}, this proves Lemma \ref{lemma:NTKapprox}.

As a remark, Lemmas \ref{lemma:NTKFapprox} and \ref{lemma:NTKapprox} imply
$\limspec K^\NTK=\limspec(r_+\Id+X_L^\top X_L)$ when $b_\sigma=0$, since every
$q_{\ell}=0$ in this case. Thus, the Stieltjes transform of $\limspec K^\NTK$ is
actually $m_\NTK(z)=m(-r_++z)$ defined by the Stieltjes transform of
$\rho^\MP_\gamma$ in (\ref{eq:MPeq}) with $\gamma=\gamma_L$. Thus in the
following arguments for the limit spectrum of $K^\NTK$, we restrict to the case
$b_\sigma\neq 0$.

\subsection{Unique solution of the fixed-point equation}

Let $A,\Phi \in \R^{n \times n}$ be symmetric matrices, where $\Phi$ is
positive semi-definite. Let $z \in \C^+$, $\alpha \in \C^*$, and $\gamma>0$.
For $s \in \C^+$, define
\[S(s)=(A+s^{-1}\Phi-z\Id)^{-1}, \quad
f_n(s)=\alpha^{-1}+\gamma \tr S(s)\Phi.\]

\begin{lemma}\label{lemma:fixedpointfiniten}
\begin{enumerate}[(a)]
\item For any $s \in \C^+$, setting $S \equiv S(s)$,
\[\Im f_n(s) \geq \Im z \cdot \gamma \tr S\Phi S^* \geq 0.\]
\item For any $s_1,s_2 \in \C^+$, setting $S_1 \equiv S(s_1)$ 
and $S_2 \equiv S(s_2)$,
\begin{align*}
&|f_n(s_1)-f_n(s_2)|\\
& \leq
|s_1-s_2| \cdot \left(\frac{\Im f_n(s_1)-\Im z \cdot \gamma \tr S_1\Phi S_1^*}{\Im s_1}\right)^{1/2}
\left(\frac{\Im f_n(s_2)-\Im z \cdot \gamma \tr S_2\Phi S_2^*}{\Im
s_2}\right)^{1/2}
\end{align*}
\end{enumerate}
\end{lemma}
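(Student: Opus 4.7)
The plan is to derive both parts from two resolvent identities for $S \equiv S(s) = (A + s^{-1}\Phi - z\Id)^{-1}$. First, the real-symmetry of $A$ and $\Phi$ gives
\[
S - S^* = S^*\bigl[(S^*)^{-1} - S^{-1}\bigr]S = 2i\,S^*\bigl[\Im z\cdot \Id - \Im(s^{-1})\,\Phi\bigr]S,
\]
using $(S^*)^{-1} - S^{-1} = (\overline{s^{-1}} - s^{-1})\Phi + (z - \bar z)\Id$. Second,
\[
S(s_1) - S(s_2) = (s_2^{-1} - s_1^{-1})\,S(s_1)\,\Phi\,S(s_2).
\]
The other main ingredient is that $\Phi \succeq 0$, so $\Phi = \Phi^{1/2}\Phi^{1/2}$, which lets me recognize the resulting traces as nonnegative quantities of the form $\tr MM^*$ (for (a)) or bound them via Cauchy--Schwarz (for (b)).

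For part (a), I start from $\Im f_n(s) = \Im\alpha^{-1} + \gamma\,\Im\tr S\Phi$ and rewrite $\Im\tr S\Phi = \tr(S - S^*)\Phi/(2i)$. Substituting the first identity, applying cyclicity of the trace, and using $\Im(s^{-1}) = -\Im s/|s|^2$ yields
\[
\Im f_n(s) = \Im\alpha^{-1} + \gamma\,\Im z\cdot \tr S\Phi S^* + \gamma\,\frac{\Im s}{|s|^2}\,\tr S\Phi S^*\Phi.
\]
Inserting $\Phi^{1/2}$ factors via cyclicity, $\tr S\Phi S^* = \tr(S\Phi^{1/2})(S\Phi^{1/2})^* \geq 0$ and $\tr S\Phi S^*\Phi = \tr MM^* \geq 0$ for $M = \Phi^{1/2} S \Phi^{1/2}$; meanwhile $\Im\alpha^{-1} = -\Im\alpha/|\alpha|^2 \geq 0$ because $\alpha \in \C^*$, and $\Im z, \Im s > 0$. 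Dropping the first and third summands on the right gives (a).

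For part (b), the second resolvent identity gives
\[
f_n(s_1) - f_n(s_2) = \gamma\,\tr(S_1 - S_2)\Phi = \gamma\,\frac{s_1 - s_2}{s_1 s_2}\,\tr S_1 \Phi S_2 \Phi.
\]
Inserting $\Phi^{1/2}$ factors via cyclicity and applying the trace Cauchy--Schwarz bound $|\tr AB| \leq (\tr AA^*)^{1/2}(\tr BB^*)^{1/2}$ to $A = \Phi^{1/2} S_1 \Phi^{1/2}$ and $B = \Phi^{1/2} S_2 \Phi^{1/2}$ yields
\[
|\tr S_1 \Phi S_2 \Phi| \leq (\tr S_1 \Phi S_1^* \Phi)^{1/2} (\tr S_2 \Phi S_2^* \Phi)^{1/2}.
\]
The identity from part (a) rearranges to
\[
\frac{\gamma\,\tr S_j \Phi S_j^* \Phi}{|s_j|^2} = \frac{\Im f_n(s_j) - \Im\alpha^{-1} - \Im z \cdot \gamma\,\tr S_j \Phi S_j^*}{\Im s_j} \leq \frac{\Im f_n(s_j) - \Im z \cdot \gamma\,\tr S_j \Phi S_j^*}{\Im s_j}
\]
for $j = 1, 2$, with the inequality using $\Im\alpha^{-1} \geq 0$. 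Combining the last three displays produces the claimed bound.

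The argument is purely linear-algebraic and I do not anticipate any conceptual obstacle. The only mild care required is to write the first resolvent identity in the form $S^*[\cdots]S$ (rather than $S[\cdots]S^*$), so that multiplying by $\Phi$ and applying cyclicity produces exactly the terms $\tr S \Phi S^*$ and $\tr S \Phi S^* \Phi$ that appear in the statement, and to place the $\Phi^{1/2}$ factors so that every trace in sight is either manifestly a nonnegative $\tr MM^*$ or directly amenable to the trace Cauchy--Schwarz bound.
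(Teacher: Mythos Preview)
Your proposal is correct and follows essentially the same approach as the paper: both derive the identity $\Im f_n(s)=\Im\alpha^{-1}+\gamma\,\Im z\cdot\tr S\Phi S^*+\gamma\,\frac{\Im s}{|s|^2}\tr S\Phi S^*\Phi$ for part (a), and both use the resolvent difference $S_1-S_2=\frac{s_1-s_2}{s_1s_2}S_1\Phi S_2$ together with Cauchy--Schwarz and the identity from (a) for part (b). The only cosmetic differences are that the paper obtains the part (a) identity by expanding $S\Phi=S\Phi S^*(S^*)^{-1}$ rather than via $S-S^*$, and phrases Cauchy--Schwarz using the semi-inner product $\langle M,N\rangle_\Phi=\tr M\Phi N^*\Phi$ rather than via explicit $\Phi^{1/2}$ factors.
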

\begin{proof}
For part (a), let us write
\[S\Phi=S\Phi S^*(A+s^{-1}\Phi-z\Id)^*
=S\Phi S^*A+(1/s^*)S\Phi S^*\Phi-z^* S\Phi S^*.\]
Since $S\Phi S^*$ is Hermitian and positive semi-definite, the quantities
$\tr S\Phi S^*A$, $\tr S\Phi S^* \Phi$, and $\tr S\Phi S^*$ are all real, and
the latter two are nonnegative. Then
\begin{equation}\label{eq:ImPhiS}
\Im f_n(s)=\Im \alpha^{-1}+\gamma \Im \tr S\Phi=
\Im \alpha^{-1}+\frac{\Im s}{|s|^2} \cdot \gamma \tr S\Phi S^*\Phi+\Im z \cdot
\gamma\tr S\Phi S^*.
\end{equation}
Each term on the right side of (\ref{eq:ImPhiS}) is nonnegative, and dropping
the first two of these terms yields (a).

For part (b), applying the identity $A^{-1}-B^{-1}=A^{-1}(B-A)B^{-1}$, we have
\[S_1-S_2=S_1(s_2^{-1}\Phi-s_1^{-1}\Phi)S_2=\frac{s_1-s_2}{s_1s_2}S_1\Phi S_2,\]
so
\[f_n(s_1)-f_n(s_2)=\gamma \tr S_1\Phi-\gamma \tr S_2\Phi
=\frac{\gamma(s_1-s_2)}{s_1s_2}\tr S_1\Phi S_2\Phi.\]
Applying Cauchy-Schwarz to the inner-product $\langle S_1,S_2 \rangle_\Phi=\tr
S_1\Phi S_2^*\Phi$,
\[|\tr S_1\Phi S_2\Phi|^2
=|\langle S_1,S_2^* \rangle_\Phi|^2
\leq \langle S_1,S_1 \rangle_\Phi \cdot \langle S_2^*,S_2^* \rangle_\Phi
= \tr S_1\Phi S_1^*\Phi \cdot \tr S_2\Phi S_2^* \Phi.\]
Then
\[|f_n(s_1)-f_n(s_2)| \leq |s_1-s_2| \cdot
\left(\frac{\gamma \tr S_1\Phi S_1^*\Phi}{|s_1|^2}\right)^{1/2}
\left(\frac{\gamma \tr S_2\Phi S_2^* \Phi}{|s_2|^2}\right)^{1/2}.\]
Dropping $\Im \alpha^{-1}$ in (\ref{eq:ImPhiS}) and applying this
to upper-bound $\gamma \tr S\Phi S^*\Phi/|s|^2$, part (b) follows.
\end{proof}

\begin{corollary}\label{cor:fixedpointunique}
As $n \to \infty$, suppose that $f_n(s) \to f(s)$ pointwise for
each $s \in \C^+$, the empirical spectral distributions of $\Phi$ and $A$
converge weakly to deterministic limits, and the limit 
for $\Phi$ is not the point distribution at 0. Then the fixed-point equation
$s=f(s)$ has at most one solution $s \in \C^+$.
\end{corollary}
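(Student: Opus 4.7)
The plan is to argue by contradiction: suppose $s_1, s_2 \in \C^+$ with $s_1 \neq s_2$ both satisfy $s = f(s)$. For each $n$, write $S_i = S_i^{(n)} = (A + s_i^{-1}\Phi - z\Id)^{-1}$ and $a_i^{(n)} = \Im z \cdot \gamma \tr S_i \Phi S_i^*$. Lemma \ref{lemma:fixedpointfiniten}(a) guarantees that $a_i^{(n)} \in [0, \Im f_n(s_i)]$, so Lemma \ref{lemma:fixedpointfiniten}(b) yields
\[
|f_n(s_1) - f_n(s_2)| \;\leq\; |s_1 - s_2| \cdot R_n,
\qquad R_n := \left(\frac{\Im f_n(s_1) - a_1^{(n)}}{\Im s_1}\right)^{1/2}\left(\frac{\Im f_n(s_2) - a_2^{(n)}}{\Im s_2}\right)^{1/2}.
\]
Since $f_n(s_i) \to f(s_i) = s_i$ by hypothesis, the left side converges to $|s_1 - s_2| > 0$, and dividing through forces $\liminf_n R_n \geq 1$.

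Next, I would pass to a subsequence along which $a_1^{(n)} \to a_1$ and $a_2^{(n)} \to a_2$ (which exists since each $a_i^{(n)}$ is bounded). Along this subsequence, using $\Im f_n(s_i) \to \Im s_i$,
\[
1 \;\leq\; \liminf_n R_n \;\leq\; \left(\frac{\Im s_1 - a_1}{\Im s_1}\right)^{1/2}\left(\frac{\Im s_2 - a_2}{\Im s_2}\right)^{1/2},
\]
which is only possible if $a_1 = a_2 = 0$. The uniqueness claim therefore reduces to ruling out this degenerate scenario.

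The main obstacle, and the heart of the argument, is to show that at any fixed point $s^* \in \C^+$ one has $\liminf_n \Im z \cdot \gamma \tr S(s^*) \Phi S(s^*)^* > 0$. I would establish this via the inequality
\[
\tr S_i \Phi S_i^* \;=\; \tr \Phi^{1/2} S_i^* S_i \Phi^{1/2} \;\geq\; \lambda_{\min}(S_i^* S_i) \cdot \tr \Phi \;=\; \frac{\tr \Phi}{\|S_i^{-1}\|^2}\;\geq\; \frac{\tr \Phi}{\big(\|A\| + |s_i|^{-1}\|\Phi\| + |z|\big)^2}.
\]
Under the operator norm bounds $\|A\|, \|\Phi\| \leq C$ implicit in every application of this corollary in the paper, the denominator is uniformly bounded, so $\tr S_i \Phi S_i^* \gtrsim \tr \Phi$. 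Weak convergence of the empirical spectrum of $\Phi$ to $\nu$ combined with this uniform operator norm bound implies $\tr \Phi \to \int x\, d\nu(x)$; since $\Phi$ is positive semi-definite and $\nu \neq \delta_0$, this integral is strictly positive. Thus $\liminf_n a_i^{(n)} > 0$, contradicting $a_1 = a_2 = 0$ and completing the proof.
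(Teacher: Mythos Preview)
Your overall strategy matches the paper's: both arguments reduce uniqueness to showing $\liminf_n \tr S(s)\Phi S(s)^* > 0$ for each fixed $s \in \C^+$, then feed this into Lemma~\ref{lemma:fixedpointfiniten}(b) to force a contradiction. The organization via subsequences and the squeeze on $R_n$ is fine.

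The gap is in how you establish the strict positivity of $\tr S\Phi S^*$. Your bound $\tr S\Phi S^* \geq \tr\Phi / \|S^{-1}\|^2$ is correct, but controlling $\|S^{-1}\| = \|A + s^{-1}\Phi - z\Id\|$ requires uniform operator-norm bounds on $A$ and $\Phi$, which are \emph{not} among the stated hypotheses of the corollary (only weak convergence of the empirical spectral distributions is assumed, and that does not preclude a vanishing fraction of eigenvalues escaping to infinity). You acknowledge this and appeal to the fact that such bounds hold in every application; that is true, but it means you have proved a strictly weaker statement than the one written.

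The paper closes this gap by avoiding the smallest singular value of $S$ altogether. It applies the von Neumann trace inequality
\[
\tr S\Phi S^* \;\geq\; \frac{1}{n}\sum_{\alpha=1}^n \lambda_\alpha(\Phi)\,\lambda_{n+1-\alpha}(S^*S),
\]
then uses weak convergence of the spectrum of $\Phi$ (and $\nu \neq \delta_0$) to get $\lambda_{\eps n}(\Phi) > \eps$ for some fixed $\eps>0$, and controls $\lambda_{n+1-\alpha}(S^*S) = \sigma_\alpha(A+s^{-1}\Phi-z\Id)^{-2}$ for $\alpha$ in the range $[\eps n/2,\eps n]$ via the singular-value inequality $\sigma_{\alpha+\beta-1}(A+B) \leq \sigma_\alpha(A)+\sigma_\beta(B)$. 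This only requires \emph{quantiles} of the spectra of $A$ and $\Phi$ to be bounded, which follows directly from weak convergence without any operator-norm assumption. If you want your argument to prove the corollary as stated, you need to replace the $\lambda_{\min}(S^*S)$ step with this kind of quantile-level control.
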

\begin{proof}
Let us first show that for each $s \in \C^+$ and a value $c_0(s)>0$ independent
of $n$,
\begin{equation}\label{eq:trSPhiSlowerbound}
\liminf_{n \to \infty} \tr S(s)\Phi S(s)^* \geq c_0(s)>0.
\end{equation}
Denoting $S \equiv S(s)$ and applying the von Neumann trace inequality,
\[\tr S\Phi S^*=\frac{1}{n}\Tr \Phi S^*S
\geq \frac{1}{n}\sum_{\a=1}^n \lambda_\a(\Phi)\lambda_{n+1-\a}(S^*S),\]
where $\lambda_1(\cdot) \geq \ldots \geq \lambda_n(\cdot)$ denote the sorted
eigenvalues. Since
$\Phi$ has a non-degenerate limit spectrum, there is a constant $\eps>0$ for
which $\lambda_{\eps n}(\Phi)>\eps$ for all large $n$. (Throughout the proof,
$\eps n$, $\eps n/2$, etc.\ should be understood as their roundings to the
nearest integer.) Then
\[\tr S\Phi S^* \geq \eps \cdot \frac{1}{n}\sum_{\a=1}^{\eps n}
\lambda_{n+1-\a}(S^*S).\]
Denoting by $\sigma_\a(\cdot)$ the $\a^\text{th}$ largest singular value,
observe that
\[\lambda_{n+1-\a}(S^*S)=\sigma_{n+1-\a}(S)^2
=\sigma_\a(A+s^{-1}\Phi-z\Id)^{-2}.\]
Applying $\sigma_{\a+\b-1}(A+B) \leq \sigma_\a(A)+\sigma_\b(B)$, we have
\[\sigma_\a(A+s^{-1}\Phi-z\Id) \leq
\sigma_{\a/2}(A)+|s|^{-1}\sigma_{\a/2+1}(\Phi)+|z|.\]
Since the spectra of $A$ and $\Phi$ converge to deterministic limits,
this implies that there is a constant
$C(s)>0$ (also depending on $z$ and $\eps$)
such that $\sigma_\a(A+s^{-1}\Phi-z\Id) \leq C(s)$ for
every $\a \in [\eps n/2,\eps n]$ and all large $n$. Thus
\[\tr S\Phi S^* \geq \eps \cdot \frac{\eps n-\eps n/2}{n} \cdot C(s)^{-2}\]
for all large $n$, and this shows the claim (\ref{eq:trSPhiSlowerbound}).

Then, taking the limit $n \to \infty$ in
Lemma \ref{lemma:fixedpointfiniten}(b), we get
\[|f(s_1)-f(s_2)| \leq |s_1-s_2|
\cdot \left(\frac{\Im f(s_1)-\Im z \cdot \gamma c_0(s_1)}{\Im s_1}\right)^{1/2}
\left(\frac{\Im f(s_2)-\Im z \cdot \gamma c_0(s_2)}{\Im s_2}\right)^{1/2}.\]
If $s_1=f(s_1)$ and $s_2=f(s_2)$,
then this yields $|s_1-s_2| \leq |s_1-s_2| \cdot h(s_1,s_2)$ for some
quantity $h(s_1,s_2) \in [0,1)$, where $h(s_1,s_2)<1$ strictly because
$c_0(s_1),c_0(s_2)>0$. This contradiction implies $s_1=s_2$, so the equation $s=f(s)$ has at
most one solution $s \in \C^+$.
\end{proof}

\subsection{Proof of Proposition \ref{prop:swelldefined} and
Theorem \ref{thm:NTK}}

The operator norm bound in Theorem \ref{thm:NTK} was shown in Lemma
\ref{lemma:NTKFapprox}. For the spectral convergence,
note that by Lemma \ref{lemma:NTKapprox}, the limit Stieltjes transform of
$K^\NTK$ at any $z \in \C^+$ is given by
\[m_\NTK(z)=\lim_{n \to \infty} \tr \left((-z+r_+)\Id+X_L^\top X_L
+\sum_{\ell=0}^{L-1} q_\ell X_\ell^\top X_\ell\right)^{-1},\]
provided that this limit exists and defines the Stieltjes transform of a
probability measure. For
\[\z=(z_{-1},\ldots,z_\ell) \in \C^- \times \R^\ell \times \C^*,
\qquad \w=(w_{-1},\ldots,w_\ell) \in \C^{\ell+2},\]
recall the functions
\[\z \mapsto s_\ell(\z), \quad (\z,\w) \mapsto t_\ell(\z,\w)\]
defined recursively by (\ref{eq:sl}) and (\ref{eq:tl}). Proposition \ref{prop:swelldefined} and Theorem \ref{thm:NTK} are immediate
consequences of the following extended result.

\begin{lemma}\label{lemma:NTKextended}
Suppose $b_\sigma\neq 0$. Under Assumption \ref{assump:asymptotics}, for each $\ell=1,\ldots,L$:
\begin{enumerate}[(a)]
\item For every $\z \in \C^- \times \R^\ell \times \C^*$, the
equation (\ref{eq:sl}) has a unique fixed point $s_\ell(\z) \in \C^+$.
\item For every $(\z,\w) \in (\C^- \times \R^\ell \times \C^*) \times
\C^{\ell+2}$, almost surely
\begin{align}
&t_\ell(\z,\w)\nonumber\\
&=\lim_{n \to \infty}
\tr \Big(z_{-1}\Id+z_0 X_0^\top X_0+\ldots+z_\ell
X_\ell^\top X_\ell\Big)^{-1}\Big(w_{-1}\Id+w_0 X_0^\top X_0+\ldots+w_\ell
X_\ell^\top X_\ell\Big).\label{eq:tliscorrect}
\end{align}
In particular, for any $z_{-1},\ldots,z_\ell \in \R$ where $z_\ell \neq 0$,
\[\limspec z_{-1}\Id+z_0 X_0^\top X_0+\ldots+z_\ell
X_\ell^\top X_\ell=\nu\]
where $\nu$ is a probability measure on $\R$ with Stieltjes transform
\[m(z)=t_\ell\Big((-z+z_{-1},z_0,\ldots,z_\ell),(1,0,\ldots,0)\Big).\]
\end{enumerate}
\end{lemma}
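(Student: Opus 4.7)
The plan is to prove Lemma \ref{lemma:NTKextended} by induction on $\ell$, using the single-layer resolvent analysis of Lemma \ref{lemma:fixedpoint} at each step to reduce statements at layer $\ell$ to statements at layer $\ell-1$. The base case $\ell=0$ is immediate: part (a) is vacuous, and part (b) follows from Assumption \ref{assump:asymptotics}(d), since $x\mapsto (w_{-1}+w_0x)/(z_{-1}+z_0x)$ is bounded and continuous on the bounded support of $\limspec X_0^\top X_0$ (its denominator is nonzero because $\Im z_{-1}<0$ and $z_0\in\R$).

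For the inductive step, fix $\ell\ge 1$ and $(\z,\w)$, and invoke Corollary \ref{cor:orthonormalinduction} to assume all $X_k$ are $(\eps_n,B)$-orthonormal. I would apply Lemma \ref{lemma:fixedpoint} conditionally on $X_0,\ldots,X_{\ell-1}$ with $X=X_{\ell-1}$, $\vX=X_\ell$, $A=\sum_{k=0}^{\ell-1} z_k X_k^\top X_k$, $\alpha=z_\ell\in\C^*$, and $z=-z_{-1}\in\C^+$, so that $A+\alpha\vX^\top\vX-z\Id=z_{-1}\Id+\sum_{k=0}^\ell z_k X_k^\top X_k$. Assumption \ref{assump:singlelayer} is verified by the $(\eps_n,B)$-orthonormality and boundedness of the $z_k$. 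This produces a deterministic (conditional on $X_0,\ldots,X_{\ell-1}$) parameter $\bar{s}\in\C^+$ with $\bar{s}\approx z_\ell^{-1}+(n/d_\ell)\tr(A+\bar{s}^{-1}\Phi_\ell-z\Id)^{-1}\Phi_\ell$ and $\tr RM\approx \tr(A+\bar{s}^{-1}\Phi_\ell-z\Id)^{-1}M$ for any deterministic $M$. Using Lemma \ref{lemma:difference_Phi} together with $|\tr XY|\le n^{-1}\|X\|_F\|Y\|_F$, the resolvent identity, and the operator-norm bounds of Proposition \ref{prop:basic}, I would replace $\Phi_\ell$ by $\tilde{\Phi}_\ell=(1-b_\sigma^2)\Id+b_\sigma^2 X_{\ell-1}^\top X_{\ell-1}$ at cost $O(n^{-1/2}\|\Phi_\ell-\tilde{\Phi}_\ell\|_F)=o(1)$. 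After this substitution the matrix inside the resolvent becomes $(z_{-1}+(1-b_\sigma^2)/\bar{s})\Id+\sum_{k=0}^{\ell-2} z_k X_k^\top X_k+(z_{\ell-1}+b_\sigma^2/\bar{s})X_{\ell-1}^\top X_{\ell-1}$, which is precisely the specialization $\z_\prev(\bar{s},\z)$.

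To establish (a), I extract a sub-subsequence along which $\bar{s}\to s^*\in\C^+$ (possible by Proposition \ref{prop:basic}), and apply the inductive hypothesis for (b) with coefficient vector $(1-b_\sigma^2,0,\ldots,0,b_\sigma^2)$ (which yields $\tilde{\Phi}_\ell$) to identify the limit of $\tr(A+\bar{s}^{-1}\tilde{\Phi}_\ell-z\Id)^{-1}\tilde{\Phi}_\ell$ as $t_{\ell-1}(\z_\prev(s^*,\z),(1-b_\sigma^2,0,\ldots,0,b_\sigma^2))$; this uses continuity of $t_{\ell-1}$ in its first argument, obtained via equicontinuity from Proposition \ref{prop:basic}. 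This shows $s^*$ solves (\ref{eq:sl}). Uniqueness of $s^*\in\C^+$ follows from Corollary \ref{cor:fixedpointunique}, whose hypothesis of weak convergence of $\limspec A$ and $\limspec\tilde{\Phi}_\ell$ to nondegenerate deterministic limits holds by the inductive hypothesis (applied to $A$ with $z_{-1}\in\C^-$ arbitrary, which recovers the spectral distribution of $A$ from its Stieltjes transform) and by the assumption $b_\sigma\ne 0$ (which keeps $\limspec\tilde{\Phi}_\ell$ away from a point mass at $0$). Together these force $\bar{s}\to s_\ell(\z)$ along the full sequence.

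For (b), the algebraic identity $w_\ell X_\ell^\top X_\ell=(w_\ell/z_\ell)(A+z_\ell X_\ell^\top X_\ell-z\Id)-(w_\ell/z_\ell)(A-z\Id)$ splits the target trace as $w_\ell/z_\ell+\tr R\wM'$, where $\wM'$ is the linear combination of $\Id,X_0^\top X_0,\ldots,X_{\ell-1}^\top X_{\ell-1}$ with coefficients given by $\w_\prev$ in (\ref{eq:wprev}). Applying Lemma \ref{lemma:fixedpoint}(a) to $\wM'$, replacing $\Phi_\ell$ by $\tilde{\Phi}_\ell$ as before, and invoking the inductive hypothesis at $(\z_\prev(s_\ell(\z),\z),\w_\prev)$ yields exactly (\ref{eq:tl}), closing the induction. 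The final spectral statement follows by specializing $\w=(1,0,\ldots,0)$ and $z_{-1}\mapsto -z+z_{-1}$ for $z\in\C^+$, combining pointwise Stieltjes-transform convergence with the a.s.\ operator-norm bound of Lemma \ref{lemma:NTKFapprox}. The main obstacle I anticipate is the clean propagation of the Frobenius-norm approximation $\Phi_\ell\approx\tilde{\Phi}_\ell$ through the resolvent and trace uniformly in $\bar{s}$, and verifying at each layer that Corollary \ref{cor:fixedpointunique} applies — this is precisely where $b_\sigma\ne 0$ is essential, since it prevents $\limspec\tilde{\Phi}_\ell$ from collapsing to a point mass at $0$ and thus secures uniqueness of the inductive fixed point.
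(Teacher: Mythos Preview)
Your proposal is correct and follows essentially the same inductive scheme as the paper's proof: apply Lemma \ref{lemma:fixedpoint} conditionally at layer $\ell$, replace $\Phi_\ell$ by $\tilde{\Phi}_\ell$ via Lemma \ref{lemma:difference_Phi}, extract a subsequential limit $s^*$ of $\bar{s}$ to obtain existence of the fixed point, invoke Corollary \ref{cor:fixedpointunique} for uniqueness, and then handle part (b) by the algebraic splitting into $w_\ell/z_\ell$ plus a trace with coefficients $\w_\prev$.

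One small correction on the role of $b_\sigma\neq 0$: it is \emph{not} what prevents $\limspec\tilde{\Phi}_\ell$ from being a point mass at $0$ (indeed if $b_\sigma=0$ then $\tilde{\Phi}_\ell=\Id$, which is a point mass at $1$); the nondegeneracy needed for Corollary \ref{cor:fixedpointunique} comes instead from $(\eps_n,B)$-orthonormality, which forces $\tr\Phi_\ell\to 1$ while $\|\Phi_\ell\|\leq C$. Where $b_\sigma\neq 0$ is actually essential is in checking that $\z_\prev(s^*,\z)$ lies in the domain $\C^-\times\R^{\ell-1}\times\C^*$ of $t_{\ell-1}$: the last coordinate $z_{\ell-1}+b_\sigma^2/s^*$ has strictly negative imaginary part only because $b_\sigma^2>0$ and $s^*\in\C^+$, which is exactly the check the paper makes.
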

\begin{proof}
By Corollary \ref{cor:orthonormalinduction}, we may assume that each matrix
$X_0,\ldots,X_L$ is $(\eps_n,B)$-orthonormal.

Define $\Phi_\ell,\tilde{\Phi}_\ell$ by (\ref{eq:Phiell}) and
(\ref{eq:tildePhiell}). For $\z=(z_{-1},\ldots,z_\ell)$, let us write as
shorthand
\[\z \cdot \bX^\top \bX(\ell)
=z_{-1}\Id+z_0X_0^\top X_0+\ldots+z_\ell X_\ell^\top X_\ell,\]
where the parenthetical $(\ell)$ signifies the index of the last term in this
sum. Let us define similarly $\w \cdot \bX^\top \bX(\ell)$.

Note that part (b) holds for $\ell=0$, by 
the assumption $\limspec X_0^\top X_0=\mu_0$,
the definition of $t_0((z_{-1},z_0),(w_{-1},w_0))$ in (\ref{eq:t0}),
and the fact that the function $x
\mapsto (w_{-1}+w_0x)/(z_{-1}+z_0x)$ is continuous and bounded over the
non-negative real line when $z_{-1} \in \C^-$ and $z_0 \in \C^*$.

We induct on $\ell$. Suppose that part (b) holds for $\ell-1$. 
To show part (a) for $\ell$, fix any
$\z=(z_{-1},\ldots,z_\ell) \in \C^- \times \R^\ell \times \C^*$
(not depending on $n$) and consider the matrix
\begin{equation}\label{eq:RNTK}
R=\Big(\z \cdot \bX^\top \bX(\ell)\Big)^{-1}.
\end{equation}
We apply the analysis of Appendix \ref{appendix:singlelayer}, conditional on
$X_0,\ldots,X_{\ell-1}$, and with the identifications
\[\vX=X_\ell, \qquad X=X_{\ell-1}, \qquad \vd=d_\ell, \qquad d=d_{\ell-1},\]
\[A=z_0X_0^\top X_0+\ldots+z_{\ell-1}X_{\ell-1}^\top X_{\ell-1}, \qquad
\alpha=z_\ell, \qquad z=-z_{-1}.\]
Observe that $\alpha \in \C^*$ and $z \in \C^-$. The
matrix $R$ in (\ref{eq:RNTK}) is exactly
\[R=(A+\alpha \vX^\top \vX-z\Id)^{-1}.\]
Since each $X_0,\ldots,X_{\ell-1}$ is $(\eps_n,B)$-orthonormal, we have
$\|A\| \leq C$ for some constant $C>0$
(depending on $z_{-1},\ldots,z_\ell,\lambda_\sigma$).
Thus Assumption \ref{assump:singlelayer} holds, conditional on
$X_0,\ldots,X_{\ell-1}$. Let us define the $n$-dependent parameter
\[\bar{s}=\frac{1}{\alpha}+\frac{n}{d_\ell}\tr \E_{W_\ell}[R \Phi_\ell]\]
where this expectation is over only the weights $W_\ell$.
Then, applying Lemma \ref{lemma:fixedpoint}(b) with a
value $t \equiv t_n$ such that $t \to 0$ and $nt^2 \gg \log n$, we obtain
\begin{equation}\label{eq:fixedpointNTKbars}
\Big|\bar{s}-\frac{1}{\alpha}-\frac{n}{d_\ell}
\tr (A+\bar{s}^{-1}\Phi_\ell-z\Id)^{-1}\Phi_\ell\Big| \to 0
\end{equation}
almost surely as $n \to \infty$.

Proposition \ref{prop:basic} shows that $|\bar{s}|$ is bounded, so for
any subsequence in $n$, there is a further sub-subsequence where $\bar{s} \to
s_0$ for a limit $s_0 \equiv s_0(\z) \in \C^+$. Let us now replace $\bar{s}$ and
$\Phi_\ell$ above by $s_0$ and $\tilde{\Phi}_\ell$: First we have
\[\tr \left(A+\bar{s}^{-1} \Phi_\ell-z\Id\right)^{-1}\Phi_\ell-\tr
\left(A+s_0^{-1} \tilde{\Phi}_\ell-z\Id\right)^{-1}\Phi_\ell
\to 0\]
by the same argument as (\ref{eq:sbys0}). Then, we have
\begin{align*}
&\left|\tr \left(A+s_0^{-1} \Phi_\ell-z\Id\right)^{-1}\Phi_\ell-\tr
\left(A+s_0^{-1} \tilde{\Phi}_\ell-z\Id\right)^{-1}\Phi_\ell\right|\\
&=\left|s_0^{-1}\tr\left(A+s_0^{-1} \Phi_\ell-z\Id\right)^{-1}
(\tilde{\Phi}_\ell-\Phi_\ell)\left(A+s_0^{-1} \tilde{\Phi}_\ell-z\Id\right)^{-1}
\Phi_\ell\right|\\
&\leq \frac{C}{n}\|\tilde{\Phi}_\ell-\Phi_\ell\|_F
\cdot \left\|(A+s_0^{-1} \tilde{\Phi}-z\Id)^{-1}
\Phi(A+s_0^{-1} \Phi-z\Id)^{-1}\right\|_F\\
&\leq \frac{C}{\sqrt{n}}
\|\tilde{\Phi}_\ell-\Phi_\ell\|_F
\cdot \|(A+s_0^{-1} \tilde{\Phi}-z\Id)^{-1}\|
\cdot \|\Phi\| \cdot \|(A+s_0^{-1} \Phi-z\Id)^{-1}\| \to 0,
\end{align*}
where the convergence to 0 follows from Lemma \ref{lemma:NTKFapprox}.
Finally, we have
\begin{align*}
&\left|\tr \left(A+s_0^{-1} \Phi_\ell-z\Id\right)^{-1}\Phi_\ell-\tr
\left(A+s_0^{-1} \Phi_\ell-z\Id\right)^{-1}\tilde{\Phi}_\ell\right|\\
&\leq \frac{1}{n}\|(A+s_0^{-1} \Phi_\ell-z\Id)^{-1}\|_F
\cdot \|\Phi_\ell-\tilde{\Phi}_\ell\|_F
\leq \frac{1}{\sqrt{n}}\|(A+s_0^{-1} \Phi_\ell-z\Id)^{-1}\|
\cdot \|\Phi_\ell-\tilde{\Phi}_\ell\|_F \to 0.
\end{align*}
Applying these approximations to (\ref{eq:fixedpointNTKbars}), we have almost
surely along this sub-subsequence that
\begin{equation}\label{eq:s0fixedfinal}
\Big|s_0-\frac{1}{\alpha}-\gamma_\ell
\tr (A+s_0^{-1}\tilde{\Phi}_\ell-z\Id)^{-1}\tilde{\Phi}_\ell\Big| \to 0.
\end{equation}

Now observe from the definitions of $A$, $\tilde{\Phi}_\ell$, and $z$ that 
\begin{align*}
A+s_0^{-1}\tilde{\Phi}_\ell-z\Id
&=\Big(z_{-1}+\frac{1-b_\sigma^2}{s_0}\Big)\Id
+\sum_{k=0}^{\ell-2} z_kX_k^\top X_k
+\Big(z_{\ell-1}+\frac{b_\sigma^2}{s_0}\Big)X_{\ell-1}^\top X_{\ell-1},\\
\tilde{\Phi}_\ell&=(1-b_\sigma^2)\Id+b_\sigma^2 X_{\ell-1}^\top X_{\ell-1}.
\end{align*}
Then, applying (\ref{eq:s0fixedfinal}) and the induction hypothesis
that part (b) holds for $\ell-1$, we obtain that the value $s_0$ must satisfy
\[s_0=\frac{1}{\alpha}+\gamma_\ell t_{\ell-1}
\Big(\z_\prev(s_0,\z),(1-b_\sigma^2,0,\ldots,0,b_\sigma^2)\Big),\]
where $\z_\prev$ is defined in (\ref{eq:zprev}). This shows the existence of a
solution (in $\C^+$) to the fixed-point equation (\ref{eq:sl}). Notice that
because $b_\sigma\neq 0$ and $s_0\in\C^+$, the last entry of $\z_\prev(s_0,\z)$
is in $\C^*$ and $(\z_\prev(s_0,\z),(1-b_\sigma^2,0,\ldots,0,b_\sigma^2))$ is in the domain of function $t_{\ell-1}$.

To show uniqueness, we apply Corollary \ref{cor:fixedpointunique}:
For any fixed $s \in \C^+$, defining
\[f_n(s)=\frac{1}{\alpha}+(n/d_\ell)\tr (A+s^{-1}\Phi_\ell-z\Id)^{-1}\Phi_\ell,\]
the same arguments as above establish that
\[\lim_{n \to \infty} f_n(s)=f(s) \equiv 
\frac{1}{\alpha}+\gamma_\ell t_{\ell-1}
\Big(\z_\prev(s,\z),(1-b_\sigma^2,0,\ldots,0,b_\sigma^2)\Big).\]
Part (b) holding for $\ell-1$ implies that both $A$ and
$\Phi_\ell$ have deterministic spectral limits, where
\[\limspec \Phi_\ell=\limspec \tilde{\Phi}_\ell\]
by (\ref{eq:Phiapprox}). This cannot be the point distribution at 0,
because (\ref{eq:concentration_diagonal}) implies that $\tr \Phi_\ell \geq 1/2$ for
all large $n$, and $\|\Phi_\ell\| \leq C$ so at least $n/(2C)$ eigenvalues of
$\Phi_\ell$ exceed $1/2$ for every $n$.
Thus, Corollary \ref{cor:fixedpointunique} implies that the
fixed point $s=f(s)$ is unique. So the fixed point $s_\ell(\z) \in \C^+$ is
uniquely defined by (\ref{eq:sl}), and this shows part (a) for $\ell$.

By the uniqueness of this fixed point, we have also shown that $s_0=s_\ell(\z)$,
where $s_0$ is the limit of $\bar{s}$ along the above sub-subsequence. Since for
any subsequence in $n$, there exists a sub-subsequence for this which holds,
this shows that $\lim_{n \to \infty} \bar{s}=s_\ell(\z)$ almost surely.

Now, to show that part (b) holds for $\ell$, let us also fix
any $\w=(w_{-1},\ldots,w_\ell) \in \C^{\ell+2}$. Using that $z_\ell \neq 0$,
we may write
\[\w \cdot \bX^\top \bX(\ell)
=\frac{w_\ell}{z_\ell}\cdot  \z \cdot \bX^\top \bX(\ell)
+\w_\prev \cdot \bX^\top \bX(\ell-1),\]
where $\w_\prev$ is as defined in (\ref{eq:wprev}). Then
\begin{equation}\label{eq:wreduce}
\Big(\z \cdot \bX^\top \bX(\ell)\Big)^{-1}
\Big(\w \cdot \bX^\top \bX(\ell)\Big)
=\frac{w_\ell}{z_\ell}\Id+
\Big(\z \cdot \bX^\top \bX(\ell)\Big)^{-1}
\Big(\w_\prev \cdot \bX^\top \bX(\ell-1)\Big).
\end{equation}
We now apply Lemma \ref{lemma:fixedpoint}(a) conditional on
$X_0,\ldots,X_{\ell-1}$, with the same identifications as above and with
\[M=\w_\prev \cdot \bX^\top \bX(\ell-1).\]
Note that $M$ is indeed deterministic conditional on $X_0,\ldots,X_{\ell-1}$,
and $\|M\| \leq C$ for a constant $C>0$ (depending on $\z$ and $\w$) since
$X_0,\ldots,X_{\ell-1}$ are $(\eps_n,B)$-orthonormal. Then, applying Lemma
\ref{lemma:fixedpoint}(a),
\[\tr \Big[\Big(\z \cdot \bX^\top \bX(\ell)\Big)^{-1}
\Big(\w_\prev \cdot \bX^\top \bX(\ell-1)\Big)\Big]-
\tr \Big[(A+\bar{s}^{-1}\Phi_\ell-z\Id)^{-1}
\Big(\w_\prev \cdot \bX^\top \bX(\ell-1)\Big)\Big] \to 0.\]
By the same arguments as above, we may replace $\bar{s}$ by $s_0=s_\ell(\z)$
and $\Phi_\ell$ by $\tilde{\Phi}_\ell$. Then, applying this to
(\ref{eq:wreduce}),
\[\tr \Big[\Big(\z \cdot \bX^\top \bX(\ell)\Big)^{-1}
\Big(\w \cdot \bX^\top \bX(\ell)\Big)\Big]
-\frac{w_\ell}{z_\ell}-
\tr \Big[(A+s_\ell(\z)^{-1}\tilde{\Phi}_\ell-z\Id)^{-1}
\Big(\w_\prev \cdot \bX^\top \bX(\ell-1)\Big)\Big] \to 0.\]
Finally, applying that part (b) holds for $\ell-1$, this yields
\[\lim_{n \to \infty} \tr
\Big[\Big(\z \cdot \bX^\top \bX(\ell)\Big)^{-1}
\Big(\w \cdot \bX^\top \bX(\ell)\Big)\Big]
=\frac{w_\ell}{z_\ell}+t_{\ell-1}(\z_\prev(s_\ell(\z),\z),\w_\prev),\]
which is the definition of $t_\ell(\z,\w)$. This establishes
(\ref{eq:tliscorrect}).

For any fixed $z_{-1},\ldots,z_\ell \in \R$ where $z_\ell \neq 0$, and any fixed
$z \in \C^+$, this implies that the Stieltjes transform of $\z \cdot
\bX^\top \bX(\ell)$ has the almost sure limit
\[m(z)=t_\ell\Big((-z+z_{-1},z_0,\ldots,z_\ell),(1,0,\ldots,0)\Big).\]
So $m(z)$ defines the Stieltjes transform of a sub-probability
distribution $\nu$, and the empirical eigenvalue distribution of
$\z \cdot \bX^\top \bX(\ell)$ converges vaguely a.s.\ to $\nu$.
Since $\|\z \cdot \bX^\top \bX(\ell)\|$ is bounded because
$X_0,\ldots,X_L$ are $(\eps_n,B)$-orthonormal, this limit $\nu$ must in
fact be a probability distribution, and the eigenvalue distribution converges
weakly to $\nu$. This concludes the induction and the proof.
\end{proof}

\section{Multi-dimensional outputs and rescaled
parametrizations}\label{appendix:NTKmulti}

In this section, we provide some motivation for the form of the NTK 
in (\ref{eq:NTKmulti}) for networks with a $k$-dimensional output,
and we prove Theorem \ref{thm:NTKmulti} regarding its spectrum.

\subsection{Derivation of (\ref{eq:NTKmulti}) from gradient flow training}
\label{appendix:NTKmultiderivation}

Consider gradient flow training of the network (\ref{eq:NNfuncmulti}), with
training samples
$(\x_\a,\y_\a)_{\a=1}^n$ where $\x_\a \in \R^{d_0}$ and $\y_\a \in \R^k$,
under the general training loss
\[F(\theta)=\sum_{\a=1}^n \mathcal{L}(f_\theta(\x_\a),\y_\a).\]
Here, $\mathcal{L}:\R^k \times \R^k \to \R$ is the loss function.
We denote by $\nabla \mathcal{L}(f_\theta(\x_\a),\y_\a) \in \R^k$ the gradient
of $\mathcal{L}$ with respect to its first argument, and by
$\nabla_{W_\ell} f_\theta(\x_\a)\in \R^{\dim(W_\ell) \times k}$ the Jacobian
of $f_\theta(\x_\a)$ with respect to the weights $W_\ell$.

Consider a possibly reweighted gradient-flow training of $\theta$,
where the evolution of weights $W_\ell$ is given by
\begin{align*}
\frac{d}{dt}W_\ell(t)&=-\tau_\ell \cdot
\nabla_{W_\ell} F(\theta(t))=-\tau_\ell
\sum_{\a=1}^n \nabla_{W_\ell} f_{\theta(t)}(\x_\a) \cdot \nabla
\mathcal{L}(f_{\theta(t)}(\x_\a),\y_\a).
\end{align*}
The learning rate for each weight matrix $W_\ell$ is
scaled by a constant $\tau_\ell$---this may arise, for example,
from reparametrizing the network (\ref{eq:NNfuncmulti}) using
$\widetilde{W}_\ell=\tau_\ell^{-1} \cdot W_\ell$ and considering
gradient flow training for $\widetilde{W}_\ell$. Denoting
the vectorization of all training predictions and its Jacobian by
\[f_\theta(X)=(f_\theta^1(X),\ldots,f_\theta^k(X)) \in \R^{nk},
\qquad \nabla_{W_\ell} f_\theta(X) \in \R^{\dim(W_\ell) \times nk},\]
and the corresponding vectorization of
$(\nabla \mathcal{L}(f_\theta(\x_\a),\y_\a))_{\a=1}^n$
by $\nabla \mathcal{L}(f_\theta(X),\y) \in \R^{nk}$,
this may be written succinctly as
\[\frac{d}{dt}W_\ell(t)=-\tau_\ell \cdot \nabla_{W_\ell} f_{\theta(t)}(X)
\cdot \nabla \mathcal{L}(f_{\theta(t)}(X),\y).\]
Then the time evolution of in-sample predictions is given by
\begin{align*}
\frac{d}{dt} f_{\theta(t)}(X)
&=\Big(\nabla_\theta f_{\theta(t)}(X)\Big)^\top
\cdot \frac{d}{dt}\theta(t)\\
&=-\sum_{\ell=1}^{L+1} \tau_\ell 
\Big(\nabla_{W_\ell} f_{\theta(t)}(X)\Big)^\top
\Big(\nabla_{W_\ell} f_{\theta(t)}(X)\Big) \cdot \nabla
\mathcal{L}(f_{\theta(t)}(X),\y)\\
&=-K^\NTK(t) \cdot \nabla \mathcal{L}(f_{\theta(t)}(X),\y),
\end{align*}
where $K^\NTK$ is the matrix defined in (\ref{eq:NTKmulti}).
For $\tau_1=\ldots=\tau_{L+1}=1$, this matrix is simply
\[K^\NTK=\Big(\nabla_\theta f_\theta(X)\Big)^\top
\Big(\nabla_\theta f_\theta(X)\Big) \in \R^{nk \times nk},\]
which is a flattening of the neural
tangent kernel $K \in \R^{n \times n \times k \times k}$ (identified as a map
$K:\R^{n \times n} \to \R^{k \times k}$) that is defined in
\cite{jacot2018neural}.

\subsection{Proof of Theorem \ref{thm:NTKmulti}}

The matrix $K^\NTK$ in (\ref{eq:NTKmulti})
admits a $k \times k$ block decomposition
\[K^\NTK=\begin{pmatrix} K_{11}^\NTK & \cdots & K_{1k}^\NTK \\
\vdots & \ddots & \vdots \\
K_{k1}^\NTK & \cdots & K_{kk}^\NTK \end{pmatrix},
\qquad K_{ij}^\NTK=\sum_{\ell=1}^{L+1}
\tau_\ell \Big(\nabla_{W_\ell} f_\theta^i(X)\Big)^\top
\Big(\nabla_{W_\ell} f_\theta^j(X)\Big) \in \R^{n \times n}.\]
Writing
\[W_{L+1}=\begin{pmatrix} \w_1^\top \\ \vdots \\ \w_k^\top \end{pmatrix},\]
a computation using the chain rule similar to (\ref{eq:NTKform})
verifies that
\[K_{ij}^\NTK=\1\{i=j\}\tau_{L+1}X_L^\top X_L+\sum_{\ell=1}^L \tau_\ell
({S_\ell^i}^\top S_\ell^j) \odot (X_{\ell-1}^\top X_{\ell-1})\]
where $S_\ell^i \in \R^{d_\ell \times n}$ is the matrix with the same
column-wise definition as in (\ref{eq:Sell}), replacing $\w$ by $\w_i$.

\begin{lemma}\label{lemma:NTKFapproxmulti}
Under the assumptions of Theorem \ref{thm:NTKmulti},
for any indices $i \neq j \in [k]$, almost surely as $n
\to \infty$,
\[\frac{1}{n}\|K_{ij}^\NTK\|_F^2 \to 0.\]
Furthermore, for a constant $C>0$, almost surely for all large $n$,
$\|K_{ij}^\NTK\| \leq C$.
\end{lemma}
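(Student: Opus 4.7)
The plan is to imitate the proof of Lemma \ref{lemma:NTKFapprox}, exploiting that for $i\ne j$ the backprop vectors $\s_\a^{\ell,i}$ and $\s_\b^{\ell,j}$ carry independent last-layer Gaussian weights $\w_i$ and $\w_j$. Writing $\s_\a^{\ell,i}=M_\a^\ell\,\w_i$ with
\[M_\a^\ell=\frac{1}{\sqrt{d_L}}D_\a^\ell\frac{W_{\ell+1}^\top}{\sqrt{d_\ell}}D_\a^{\ell+1}\cdots\frac{W_L^\top}{\sqrt{d_{L-1}}}D_\a^L\in\R^{d_\ell\times d_L},\]
the matrix $M_\a^\ell$ depends on $W_1,\ldots,W_L$ and $\x_\a$ but not on $\w_i,\w_j$, so ${\s_\a^{\ell,i}}^\top\s_\b^{\ell,j}=\w_i^\top (M_\a^\ell)^\top M_\b^\ell\,\w_j$ is a mean-zero bilinear form in independent standard Gaussians conditional on $W_1,\ldots,W_L,X$. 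Standard high-probability operator-norm bounds for the Gaussian matrices $W_k/\sqrt{d_{k-1}}$ together with $\|D_\a^k\|\le\lambda_\sigma$ propagate across layers to yield $\|M_\a^\ell\|=O(1/\sqrt n)$ and $\|M_\a^\ell\|_F=O(1)$ uniformly in $\a,\ell$ almost surely, and hence $\|(M_\a^\ell)^\top M_\b^\ell\|_F=O(1/\sqrt n)$ uniformly in $\a,\b,\ell$.

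Next I apply a bilinear Hanson--Wright inequality (obtained by applying the usual Hanson--Wright to the Gaussian vector $(\w_i,\w_j)\in\R^{2d_L}$ against a symmetrized coupling matrix), conditional on $W_1,\ldots,W_L$, followed by a union bound over $(\a,\b,\ell)\in[n]^2\times[L]$, to obtain
\[\max_{\a,\b,\ell}\bigl|{\s_\a^{\ell,i}}^\top\s_\b^{\ell,j}\bigr|\le C\sqrt{(\log n)/n}\qquad\text{a.s.\ for all large }n.\]
By Corollary \ref{cor:orthonormalinduction} every $X_{\ell-1}$ is $(\eps_n,B)$-orthonormal, so $|(\x_\a^{\ell-1})^\top\x_\b^{\ell-1}|\le\eps_n$ for $\a\ne\b$ and $\|\x_\a^{\ell-1}\|^2\le 1+\eps_n$. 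Inserting these bounds into $(K_{ij}^\NTK)_{\a\b}=\sum_\ell\tau_\ell\,{\s_\a^{\ell,i}}^\top\s_\b^{\ell,j}\cdot(\x_\a^{\ell-1})^\top\x_\b^{\ell-1}$ and summing squared entries separately on and off the diagonal gives
\[\|K_{ij}^\NTK\|_F^2\le C\bigl(n(n-1)\cdot\tfrac{\log n}{n}\cdot\eps_n^2+n\cdot\tfrac{\log n}{n}\bigr)\le C'(n\eps_n^2\log n+\log n),\]
so $n^{-1}\|K_{ij}^\NTK\|_F^2\to 0$ since $\eps_n n^{1/4}\to 0$ forces $\eps_n^2\log n\to 0$.

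For the operator-norm bound, my route is positivity: $K^\NTK$ in (\ref{eq:NTKmulti}) is a nonnegative linear combination of matrices of the form $M^\top M$, hence is PSD, and any principal submatrix of a PSD matrix is PSD. Applying this to the $2\times 2$ block submatrix indexed by output coordinates $i,j$ yields the standard off-diagonal block bound $\|K_{ij}^\NTK\|\le\sqrt{\|K_{ii}^\NTK\|\,\|K_{jj}^\NTK\|}$, and each diagonal block $K_{ii}^\NTK$ is the scalar-output NTK (with the same $\tau_\ell$-reweighting) of the network $f_\theta^i$, so Lemma \ref{lemma:NTKFapprox}---whose proof absorbs the $\tau_\ell$ unchanged up to the value of $C$---gives $\|K_{ii}^\NTK\|,\|K_{jj}^\NTK\|\le C$, and therefore $\|K_{ij}^\NTK\|\le C$. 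The main technical content is the uniform Hanson--Wright estimate above, which demands simultaneous control of the $n^2L$ Frobenius norms $\|(M_\a^\ell)^\top M_\b^\ell\|_F$ and in turn reduces to high-probability operator-norm bounds for $W_k/\sqrt{d_{k-1}}$ iterated across layers; once this is in hand, both conclusions of the lemma assemble as indicated, in the style of Lemma \ref{lemma:NTKFapprox}.
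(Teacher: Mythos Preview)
Your proposal is correct and follows essentially the same route as the paper for the Frobenius-norm statement: both apply Hanson--Wright to ${\s_\a^{\ell,i}}^\top\s_\b^{\ell,j}=\w_i^\top(M_\a^\ell)^\top M_\b^\ell\,\w_j$ conditional on $W_1,\ldots,W_L$, the key observation being that independence of $\w_i,\w_j$ kills the trace term, so the entry is uniformly small (the paper takes the deviation level $\eps_n$ rather than your $\sqrt{(\log n)/n}$, but either works). For the operator-norm bound you take a slightly different path: the paper bounds each Hadamard block directly via the inequality $\|(S_\ell^{i\top}S_\ell^j)\odot(X_{\ell-1}^\top X_{\ell-1})\|\le(\max_\a\|\s_\a^{\ell,i}\|^2)^{1/2}(\max_\a\|\s_\a^{\ell,j}\|^2)^{1/2}\|X_{\ell-1}^\top X_{\ell-1}\|$, whereas you use the PSD block inequality $\|K_{ij}^\NTK\|\le\sqrt{\|K_{ii}^\NTK\|\,\|K_{jj}^\NTK\|}$ and reduce to the scalar-output bound; both are valid and of comparable difficulty.
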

\begin{proof}
By Corollary \ref{cor:orthonormalinduction}, we may assume that each
$X_0,\ldots,X_L$ is $(\eps_n,B)$-orthonormal.

Let us fix $i,j,\ell$ and denote the columns of $S_\ell^i$ and $S_\ell^j$ by
$\s_\a^{\ell,i}$ and $\s_\b^{\ell,j}$ for $\a,\b \in [n]$.
We apply the Hanson-Wright inequality conditional on $W_1,\ldots,W_L$, which is
similar to (\ref{eq:HansonWright}). However, since $\w_i$ and $\w_j$ are
independent, there is no trace term, and we obtain instead
\[\Big|{\s_\a^{\ell,i}}^\top \s_\b^{\ell,j}\Big| \leq C\eps_n\sqrt{n}
\frac{1}{d_L}\|M_L\|_F\]
for both $\a=\b$ and $\a \neq \b$ with probability $1-e^{-n^{0.01}}$,
where $M_L$ is the same matrix as defined in (\ref{eq:Mell}).
Applying the bound $\|M_L\|_F \leq C\sqrt{n}$ as
in the proof of Corollary \ref{cor:Sapprox}, this yields
\[\Big|{\s_\a^{\ell,i}}^\top \s_\b^{\ell,j}\Big| \leq C\eps_n\]
almost surely for all $\a,\b \in [n]$ and all large $n$. Combining with the
$(\eps_n,B)$-orthonormality of $X_{\ell-1}$, we get for $\a \neq \b$ that
\[\Big|({S_\ell^i}^\top S_\ell^j) \odot (X_{\ell-1}^\top X_{\ell-1})[\a,\b]\Big|
\leq C\eps_n^2, \qquad
\Big|({S_\ell^i}^\top S_\ell^j) \odot (X_{\ell-1}^\top X_{\ell-1})[\a,\a]\Big|
\leq C\eps_n.\]
Then
\[\|({S_\ell^i}^\top S_\ell^j) \odot (X_{\ell-1}^\top X_{\ell-1})\|_F^2
\leq Cn(n-1)\eps_n^4+Cn\eps_n^2,\]
and the first statement follows from the assumption $\eps_n n^{1/4} \to 0$.
The second statement on the operator norm follows from the bound
\[\|({S_\ell^i}^\top {S_\ell^j}) \odot (X_{\ell-1}^\top X_{\ell-1})\|
\leq \left(\max_{\a=1}^n \Big|{\s^{\ell,i}_{\a}}^\top
{\s^{\ell,i}_{\a}}\Big|\right)^{1/2}
\left(\max_{\a=1}^n \Big|{\s^{\ell,j}_\a}^\top
\s^{\ell,j}_\a\Big|\right)^{1/2} \cdot \|X_{\ell-1}^\top X_{\ell-1}\|.\]
See \cite[Eq.\ (3.7.9)]{johnson1990matrix} applied with $X=S_\ell^i$ and
$Y=S_\ell^j$. The bound $\|K_{ij}^\NTK\| \leq C$ then follows from
the $(\eps_n,B)$-orthonormality of
$X_{\ell-1}$ and Corollary \ref{cor:Sapprox}, applied to $S_\ell^i$ and
$S_\ell^j$.
\end{proof}

Applying this lemma together with Proposition \ref{prop:specapprox},
we obtain
\[\limspec K^\NTK =\limspec \begin{pmatrix} K_{11}^\NTK & & \\
& \ddots & \\ & & K_{kk}^\NTK \end{pmatrix}\]
where the off-diagonal blocks $K_{ij}^\NTK$ may be replaced by 0.
Then the limit spectral distribution of $K^\NTK$ is an equally weighted mixture of those
of $K_{11}^\NTK,\ldots,K_{kk}^\NTK$. For each diagonal block
$K_{ii}^{\NTK}$, the argument of Lemma \ref{lemma:NTKFapprox} shows that
\[\limspec K_{ii}^{\NTK}=\limspec
\left(\tau \cdot r_+\Id+\tau_{L+1}X_L^\top X_L+\sum_{\ell=0}^{L-1}
\tau_{\ell+1}q_\ell X_\ell^\top X_\ell\right).\]
Then by Theorem \ref{thm:NTK},
each diagonal block $K_{ii}^\NTK$ has the same limit spectral distribution, whose Stieltjes
transform is given by the function $m_\NTK(z)$ in Theorem \ref{thm:NTKmulti}.
Furthermore, since $\|K_{ii}^\NTK\| \leq C$ by Lemma \ref{lemma:NTKFapprox} and
$\|K_{ij}^\NTK\| \leq C$ for $i \neq j$ by Lemma \ref{lemma:NTKFapproxmulti},
this shows $\|K^\NTK\| \leq C$. This establishes Theorem \ref{thm:NTKmulti}.

Again, when $b_\sigma=0$, the limit spectrum of each $K_{ii}^\NTK$ reduces to $\limspec(\tau \cdot r_+\Id+\tau_{L+1}X_L^\top X_L)$, which can be computed via the Stieltjes transform of $\rho^\MP_{\gamma_L}$.

\section{Reduction to result of Pennington and Worah
\cite{pennington2017nonlinear} for one hidden
layer}\label{appendix:penningtonreduction}

Consider the one-hidden-layer conjugate kernel
\[K^\CK=X_1^\top X_1=\frac{1}{d_1}\sigma(W_1X)^\top \sigma(W_1X) \in
\R^{n\times n}.\]
Define an associated covariance matrix
\begin{equation}\label{eq:CKcompanion}
M=\frac{1}{n}\sigma(W_1X)\sigma(W_1X)^\top \in \R^{d_1 \times d_1},
\end{equation}
and observe that the eigenvalues of $K^\CK$ are those of $M$
multiplied by $n/d_1$ and padded by $n-d_1$ additional
zeros (or with $d_1-n$ zeros removed, if $n-d_1<0$). 
\cite[Theorem 1]{pennington2017nonlinear} characterizes the
limit spectral distribution of $M$ in terms of a quartic equation in its Stieltjes
transform, under the additional assumptions that $X$ has i.i.d.\ $\N(0,1/d_0)$
entries and $n/d_0 \to \gamma_0 \in (0,\infty)$.\footnote{In \cite{pennington2017nonlinear}, the
$1/\sqrt{d_0}$ scaling is in $W_1$ rather than $X$, but these are clearly the
same. We consider $\sigma_w=\sigma_x=1$ and
$\eta=1$ in the results of \cite{pennington2017nonlinear}.}
By Theorem \ref{thm:CK}, this should be equivalent to the description
\begin{equation}\label{eq:onelayerCK}
\limspec K^\CK
=\rho_{\gamma_1}^\MP \boxtimes \Big((1-b_\sigma^2)+b_\sigma^2 \mu_0\Big)
\end{equation}
for the limit spectrum of $K^\CK$, if we specialize to
$\mu_0=\rho_{\gamma_0}^\MP$ being the Marcenko-Pastur limit
of the input gram matrix $X^\top X$. We derive this equivalence in
this section.

Let $m_K(z)$ and $m_M(z)$ be the \emph{limit} Stieltjes transforms for $K^\CK$
and $M$. For any $z \in \C^+$, by the relation between the eigenvalues of
$K^\CK$ and $M$,
\begin{align*}
\frac{1}{n}\Tr\left(K^\CK-\frac{n}{d_1}z\Id\right)^{-1}&=\frac{n-d_1}{n}\left(-\frac{n}{d_1}z\right)^{-1}
+\frac{1}{n}\Tr\left(\frac{n}{d_1}M-\frac{n}{d_1}z\Id\right)^{-1}\\
&=-\left(1-\frac{d_1}{n}\right)\frac{d_1}{n}\cdot
\frac{1}{z}+\left(\frac{d_1}{n}\right)^2
\cdot \frac{1}{d_1}\Tr(M-z\Id)^{-1}.
\end{align*}
Taking the limit on both sides, we obtain the relation between $m_K(z)$ and
$m_M(z)$, which is
\begin{equation}\label{eq:CKMrelation}
m_K(\gamma_1 z)=-\left(1-\frac{1}{\gamma_1}\right)\frac{1}{\gamma_1
z}+\frac{1}{\gamma_1^2}m_M(z)
=\frac{1}{\gamma_1^2}\left(m_M(z)+\frac{1-\gamma_1}{z}\right).
\end{equation}

Following the notation of \cite{pennington2017nonlinear}, let us set
\begin{equation}\label{eq:PWCKnotation}
\phi=1/\gamma_0,\quad \psi=\gamma_1/\gamma_0,
\quad \eta=1=\E[\sigma(\xi)^2], \quad \zeta=b_\sigma^2.
\end{equation}
\cite[Theorem 1]{pennington2017nonlinear} characterizes $G(z) \equiv -m_M(z)$
as the root of a
quartic equation. Defining three $z$-dependent quantities
$P,P_\phi,P_\psi$ by
\begin{equation}\label{eq:Pequations}
G(z)=\frac{\psi}{z}P+\frac{1-\psi}{z},
\quad P_\phi=1+(P-1)\phi, \quad P_\psi=1+(P-1)\psi,
\end{equation}
this quartic equation is expressed as
\begin{equation}\label{eq:quarticP}
P=1+(1-\zeta)tP_\phi P_\psi+\frac{\zeta tP_\phi P_\psi}{1-\zeta tP_\phi P_\psi} \qquad \text{ where } \qquad t=\frac{1}{z\psi},
\end{equation}
see \cite[Equations (10--12)]{pennington2017nonlinear}.

To verify that (\ref{eq:onelayerCK}) is equivalent to this equation
(\ref{eq:quarticP}), note that (\ref{eq:onelayerCK})
means the Stieltjes transform $m_K(z)$ is defined by the Marcenko-Pastur
equation (\ref{eq:MPeq}) as
\begin{equation}\label{eq:MP}
m_K(z)=\int \frac{1}{[(1-b_\sigma^2)+b_\sigma^2 x][1-\gamma_1-\gamma_1 z
m_K(z)]-z}d\mu_0(x).
\end{equation}
Applying the identity $1-\gamma_1-\gamma_1^2 zm_K(\gamma_1 z)
=-zm_M(z)$ from rearranging (\ref{eq:CKMrelation}), and applying also
$\zeta=b_\sigma^2$ in (\ref{eq:PWCKnotation}),
\begin{equation}\label{eq:CKfixedpoint1}
m_K(\gamma_1 z)=\int \frac{1}{[(1-\zeta)+\zeta x][-zm_M(z)]-\gamma_1 z}
d\mu_0(x).
\end{equation}
When $X$ has i.i.d.\ $\N(0,1/d_0)$ entries, the limit spectral distribution
of $X^\top X$ is the Marcenko-Pastur law $\mu_0=\rho_{\gamma_0}^\MP$.
The Stieltjes transform $m(z)$ of this law $\mu_0=\rho_{\gamma_0}^\MP$ is
characterized by the quadratic equation
\[1=m(z)[1-\gamma_0-\gamma_0zm(z)-z]\]
(which is the specialization of (\ref{eq:MPeq}) when $\mu$ is the point
distribution at 1). Defining
\[g(a,b)=\int \frac{1}{ax-b}d\mu_0(x)=\frac{1}{a}m\left(\frac{b}{a}\right),\]
we obtain then that $g(a,b)$ satisfies the quadratic equation
\begin{align*}
1&=g(a,b)[a-\gamma_0a-\gamma_0bm(b/a)-b]\\
&=g(a,b)[(a-b)-\gamma_0a-\gamma_0ab \cdot g(a,b)].
\end{align*}
Applying this with $a=-\zeta zm_M(z)$ and $b=(1-\zeta)zm_M(z)+\gamma_1 z$,
the quantity (\ref{eq:CKfixedpoint1}) is exactly $g(a,b)$. Thus this equation
holds for $g(a,b)=m_K(\gamma_1z)$ and these settings of $(a,b)$, i.e.\
\begin{equation}\label{eq:CKfixedpoint2}
1=m_K(\gamma_1 z)\Big(-zm_M(z)-\gamma_1 z+\gamma_0\zeta zm_M(z)
+\gamma_0\zeta zm_M(z)[(1-\zeta)zm_M(z)+\gamma_1 z]m_K(\gamma_1 z)\Big).
\end{equation}
From the relation (\ref{eq:CKMrelation}), we see that this is a quartic equation
in $m_M(z)$. Note that the definitions of $P_\psi$ and $P_\phi$ in 
(\ref{eq:Pequations}) may be equivalently written as
\begin{align*}
P_\psi&=\psi P+1-\psi=zG(z)=-zm_M(z),\\
P_\phi&=1+\frac{\phi}{\psi}(zG(z)-1)
=\frac{1}{\gamma_1}(-zm_M(z)-1+\gamma_1)
=-\gamma_1 zm_K(\gamma_1 z)
\end{align*}
where we have used $G(z)=-m_M(z)$, $\psi/\phi=\gamma_1$ from
(\ref{eq:PWCKnotation}),
and the relation (\ref{eq:CKMrelation}).
Applying now $\gamma_1 z=(\psi/\phi)z=1/(\phi t)$ and $\gamma_0=1/\phi$,
the equation (\ref{eq:CKfixedpoint2}) becomes
\begin{align*}
1&=-\phi t P_\phi \left(P_\psi-\frac{1}{\phi t}
-\frac{\zeta}{\phi} P_\psi+\frac{\zeta}{\phi} P_\psi\left[
-(1-\zeta)P_\psi+\frac{1}{\phi t}\right]\phi t P_\phi\right)\\
&=-\phi tP_\phi P_\psi+P_\phi+(1-P_\phi)\zeta tP_\phi P_\psi
+\zeta(1-\zeta)\phi(tP_\phi P_\psi)^2.
\end{align*}
This may be rearranged as
\[(1-P_\phi-\phi)(1-\zeta tP_\phi P_\psi)
=-\phi(1-\zeta tP_\phi P_\psi)-\phi tP_\phi
P_\psi+\zeta(1-\zeta)\phi(tP_\phi P_\psi)^2,\]
and dividing both sides by $-\phi(1-\zeta tP_\phi P_\psi)$ yields
\[\frac{1}{\phi}(P_\phi-1)+1
=1+\frac{tP_\phi P_\psi-\zeta(1-\zeta)(tP_\phi P_\psi)^2}{1-\zeta tP_\phi
P_\psi}
=1+(1-\zeta)tP_\phi P_\psi+\frac{\zeta tP_\phi P_\psi}{1-\zeta tP_\phi P_\psi}.\]
Identifying the left side as $P$ by (\ref{eq:Pequations}), we obtain
(\ref{eq:quarticP}) as desired.

\section{Additional simulation results}

\subsection{Pairwise orthogonality of training samples}\label{appendix:orthogonality_for_data}

\xincludegraphics[width=0.33\textwidth,label=a)]{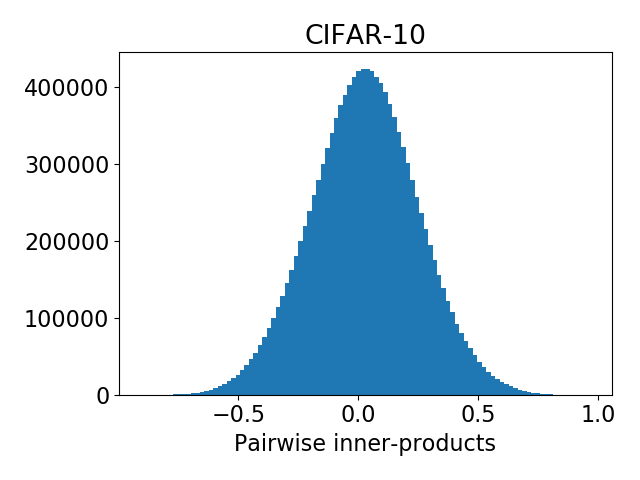}%
\xincludegraphics[width=0.33\textwidth,label=b)]{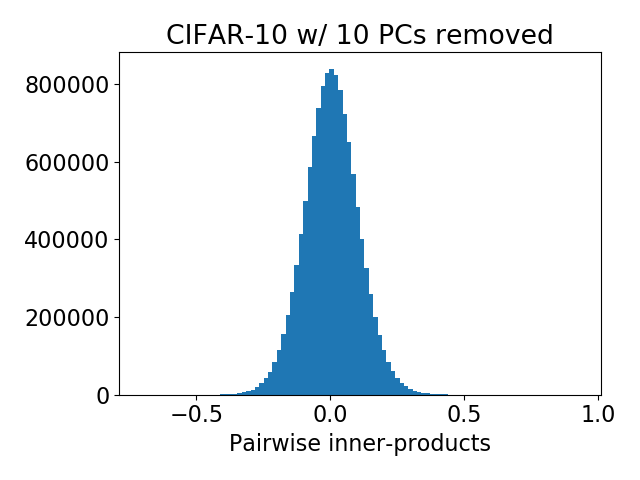}%
\xincludegraphics[width=0.33\textwidth,label=c)]{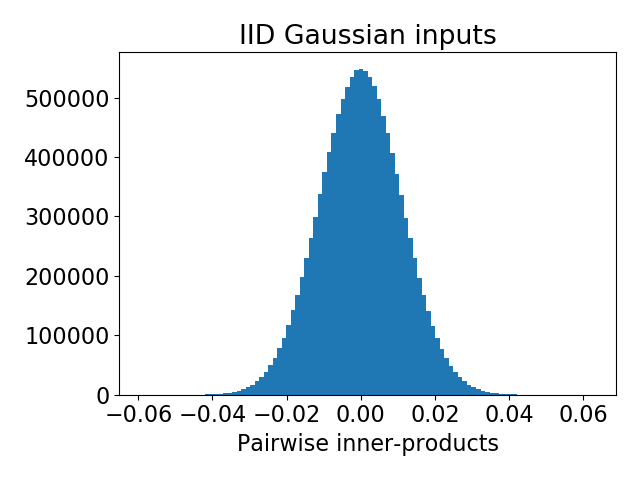}

All pairwise inner-products $\{\x_\a^\top \x_\b:1 \leq \a<\b
\leq n\}$, for (a) 5000 CIFAR-10 training samples, (b) 5000 CIFAR-10 training
samples with the first 10 PCs removed, and (c) i.i.d.\ Gaussian training data
of the same dimensions. Results for (b) were reported in Section
\ref{sec:CIFAR}, and results for (a) are reported below in Appendix
\ref{appendix:CIFARraw}. CIFAR-10 training samples were mean-centered and
normalized to satisfy $\x_\a^\top 1=0$ and $\|\x_\a\|^2=1$ in (a) and (b).

The pairwise inner-products in (a) span a typical range
of $[-0.5,0.5]$. Those in (b) span a range of about
$[-0.2,0.2]$, and those in (c) about $[-0.02,0.02]$. Thus,
with 10 PCs removed, these inner-products for CIFAR-10 are larger than
for i.i.d.\ Gaussian inputs by a factor of 10. We found in Section
\ref{sec:CIFAR} that the inner-products of (b) are sufficiently small for the
observed spectra to match the theoretical limits of
Theorems \ref{thm:CK} and \ref{thm:NTK}.

\subsection{CK and NTK spectra for CIFAR-10 without removal of leading PCs}\label{appendix:CIFARraw}

\xincludegraphics[width=0.33\textwidth,label=a)]{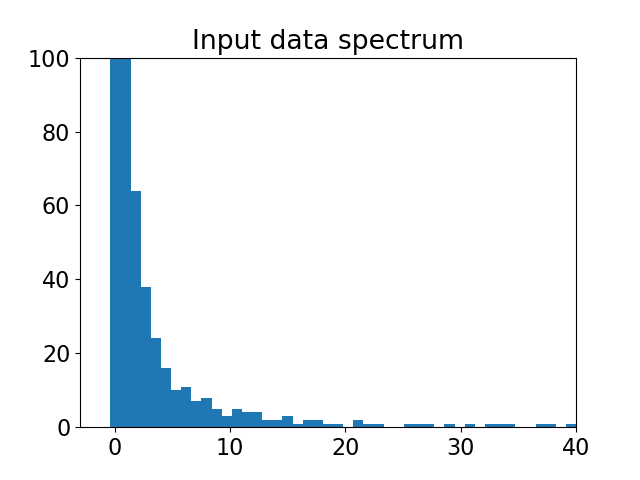}%
\xincludegraphics[width=0.33\textwidth,label=b)]{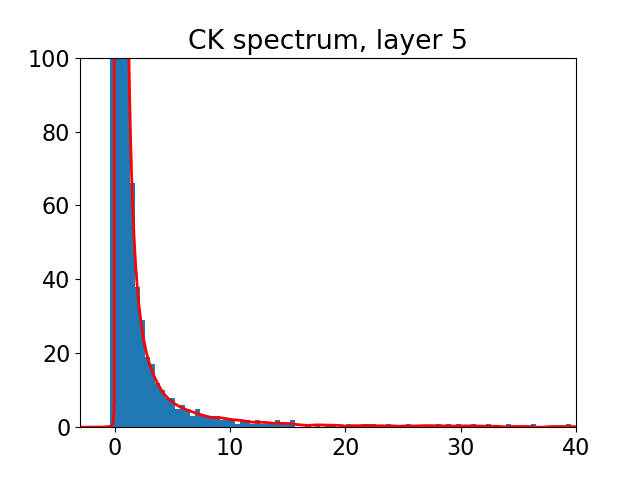}%
\xincludegraphics[width=0.33\textwidth,label=c)]{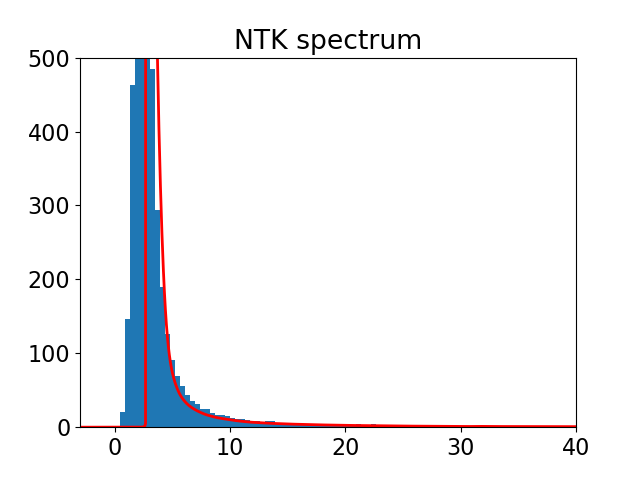}

Same plots as Figure \ref{fig:CIFAR} for CIFAR-10 training samples,
without the removal of the 10 leading PCs. We observe a close agreement of the
observed CK spectrum with the limit spectrum of
Theorem \ref{thm:CK}. However, there is a greater discrepancy of the NTK
spectrum with the limit spectrum of Theorem \ref{thm:NTK} in this setting.

\subsection{Example images of CIFAR-10 with/without leading PCs}\label{appendix:CIFAR_images}

\xincludegraphics[width=0.098\textwidth,label=0)]{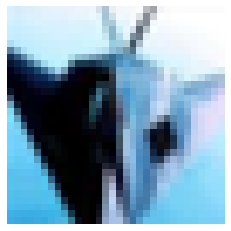}%
\xincludegraphics[width=0.098\textwidth,label=1)]{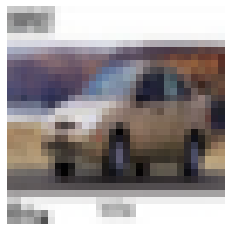}%
\xincludegraphics[width=0.098\textwidth,label=2)]{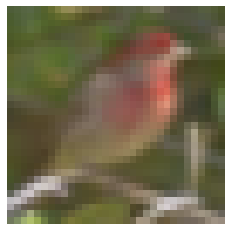}%
\xincludegraphics[width=0.098\textwidth,label=3)]{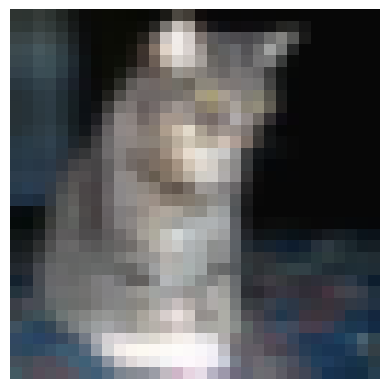}%
\xincludegraphics[width=0.098\textwidth,label=4)]{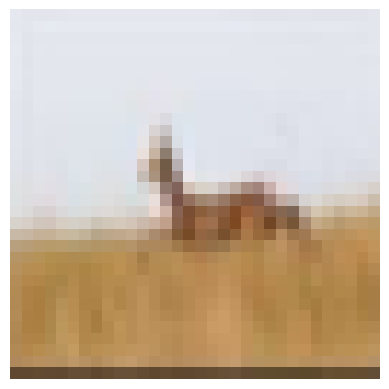}%
\xincludegraphics[width=0.098\textwidth,label=5)]{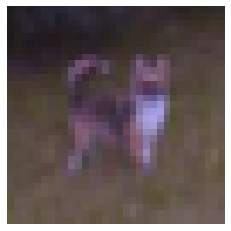}%
\xincludegraphics[width=0.098\textwidth,label=6)]{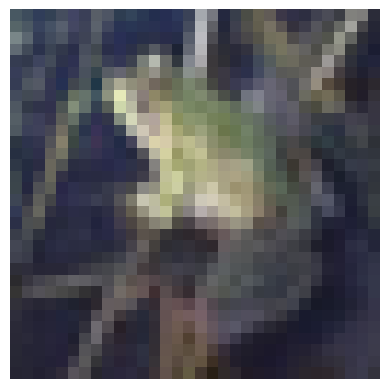}%
\xincludegraphics[width=0.098\textwidth,label=7)]{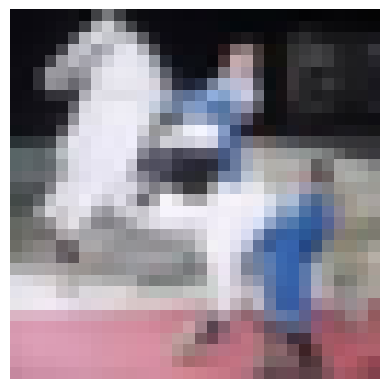}%
\xincludegraphics[width=0.098\textwidth,label=8)]{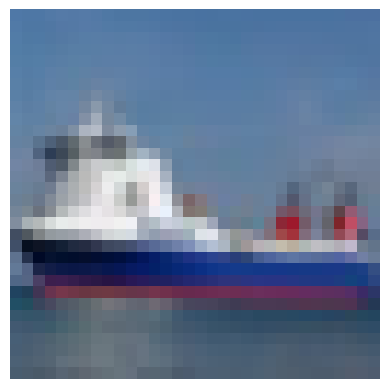}%
\xincludegraphics[width=0.098\textwidth,label=9)]{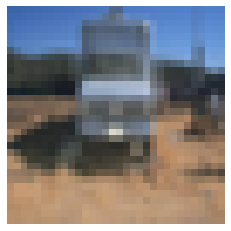}\\
\xincludegraphics[width=0.098\textwidth]{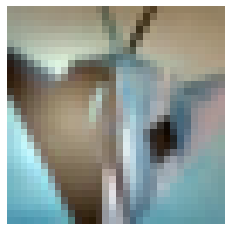}%
\xincludegraphics[width=0.098\textwidth]{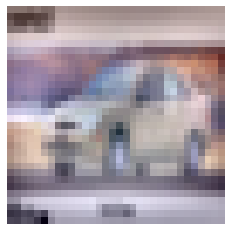}%
\xincludegraphics[width=0.098\textwidth]{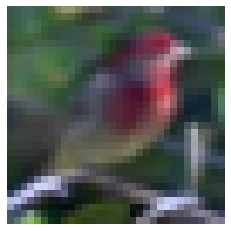}%
\xincludegraphics[width=0.098\textwidth]{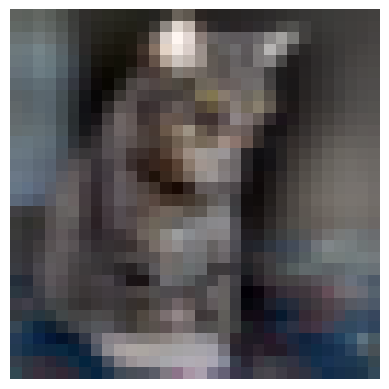}%
\xincludegraphics[width=0.098\textwidth]{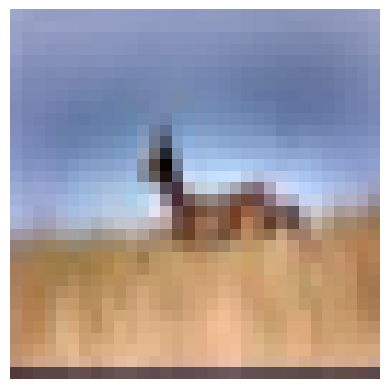}%
\xincludegraphics[width=0.098\textwidth]{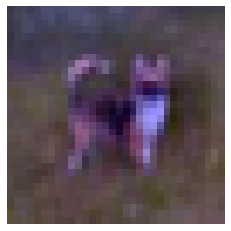}%
\xincludegraphics[width=0.098\textwidth]{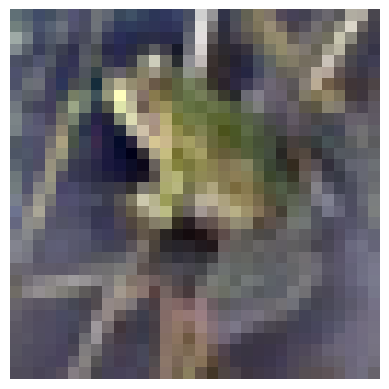}%
\xincludegraphics[width=0.098\textwidth]{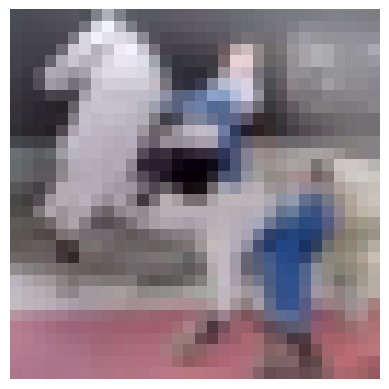}%
\xincludegraphics[width=0.098\textwidth]{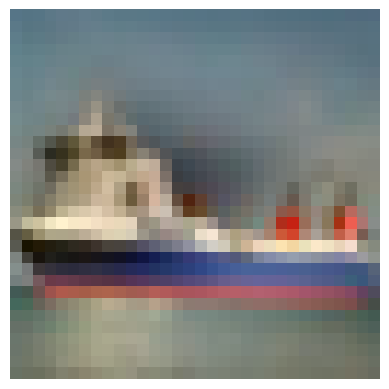}%
\xincludegraphics[width=0.098\textwidth]{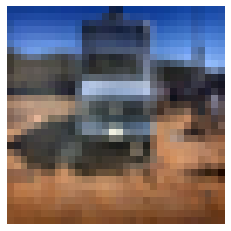}\\

\xincludegraphics[width=0.098\textwidth,label=0)]{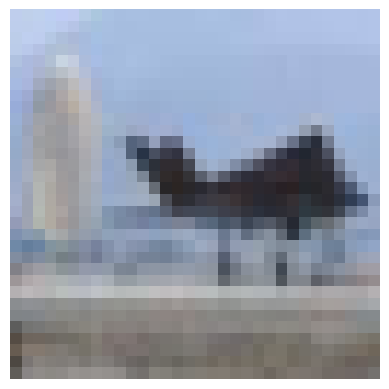}%
\xincludegraphics[width=0.098\textwidth,label=1)]{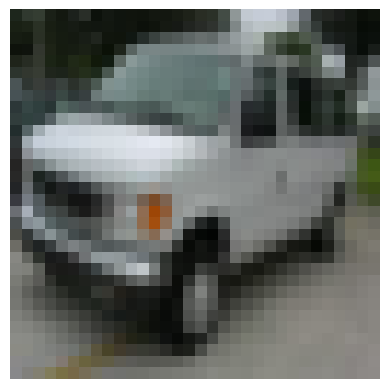}%
\xincludegraphics[width=0.098\textwidth,label=2)]{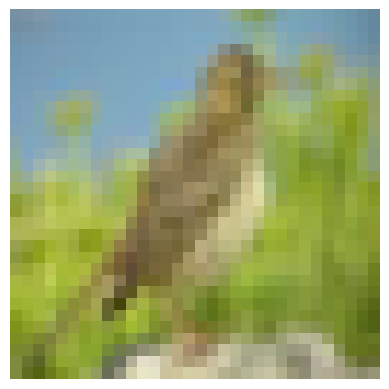}%
\xincludegraphics[width=0.098\textwidth,label=3)]{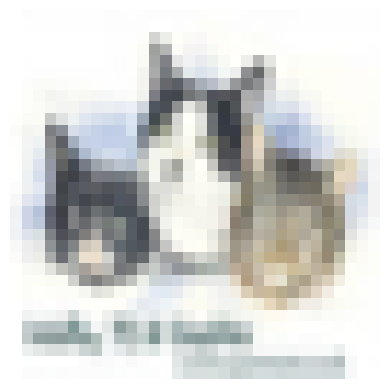}%
\xincludegraphics[width=0.098\textwidth,label=4)]{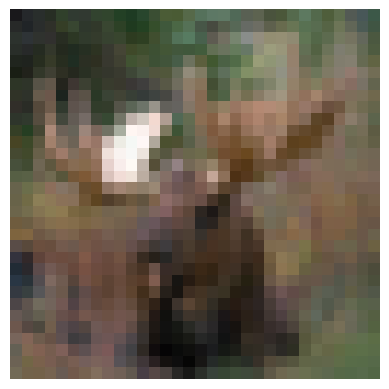}%
\xincludegraphics[width=0.098\textwidth,label=5)]{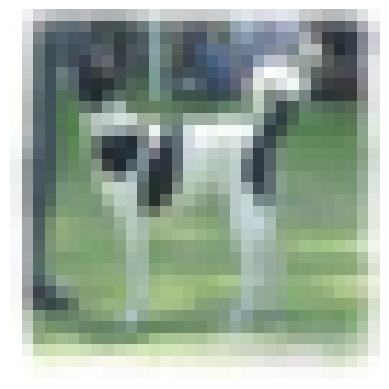}%
\xincludegraphics[width=0.098\textwidth,label=6)]{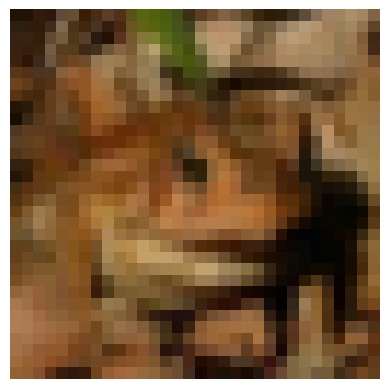}%
\xincludegraphics[width=0.098\textwidth,label=7)]{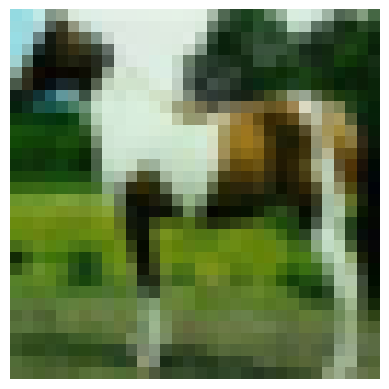}%
\xincludegraphics[width=0.098\textwidth,label=8)]{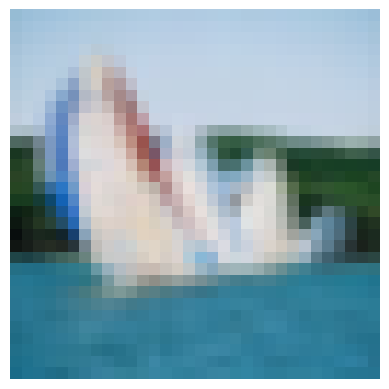}%
\xincludegraphics[width=0.098\textwidth,label=9)]{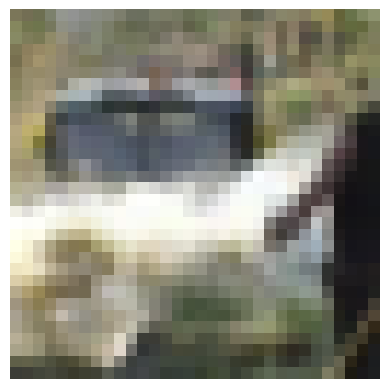}

\xincludegraphics[width=0.098\textwidth]{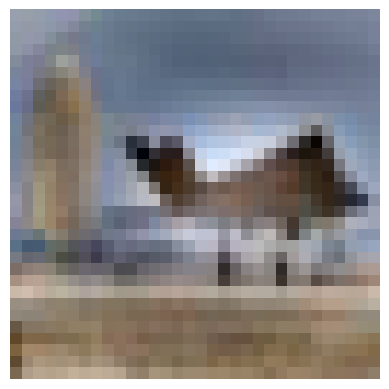}%
\xincludegraphics[width=0.098\textwidth]{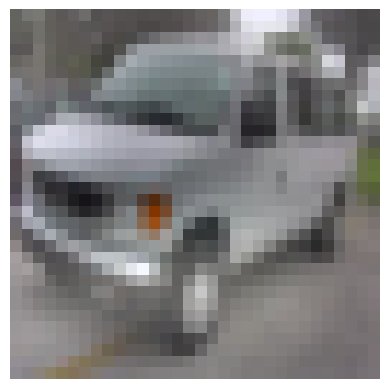}%
\xincludegraphics[width=0.098\textwidth]{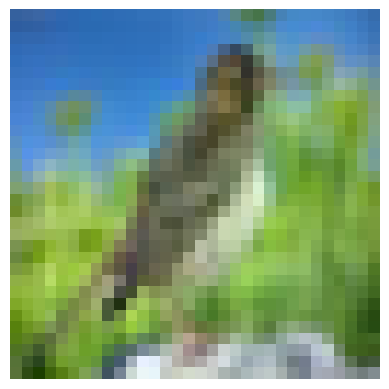}%
\xincludegraphics[width=0.098\textwidth]{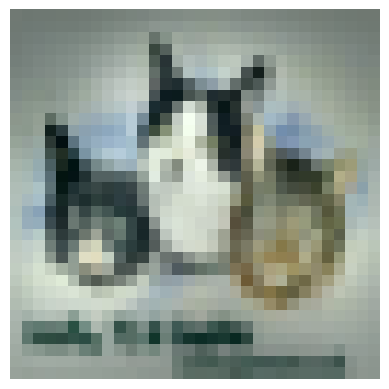}%
\xincludegraphics[width=0.098\textwidth]{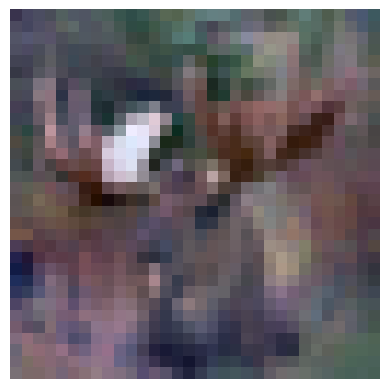}%
\xincludegraphics[width=0.098\textwidth]{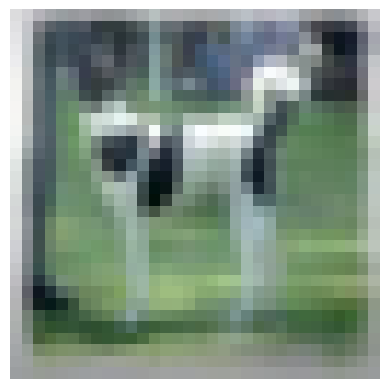}%
\xincludegraphics[width=0.098\textwidth]{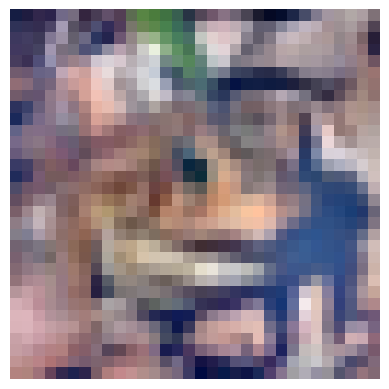}%
\xincludegraphics[width=0.098\textwidth]{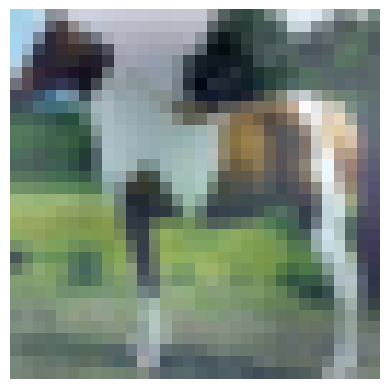}%
\xincludegraphics[width=0.098\textwidth]{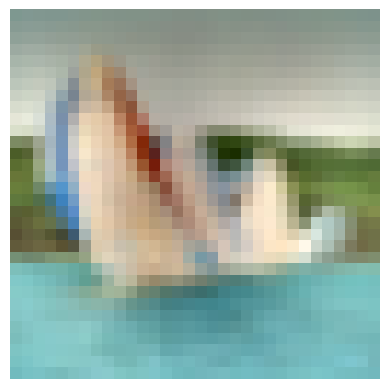}%
\xincludegraphics[width=0.098\textwidth]{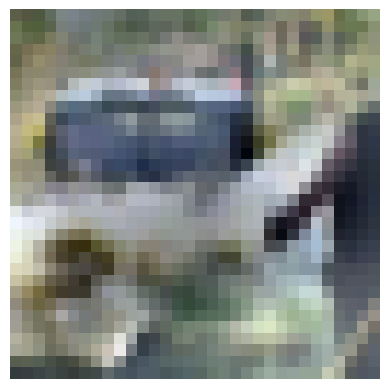}

\xincludegraphics[width=0.098\textwidth,label=0)]{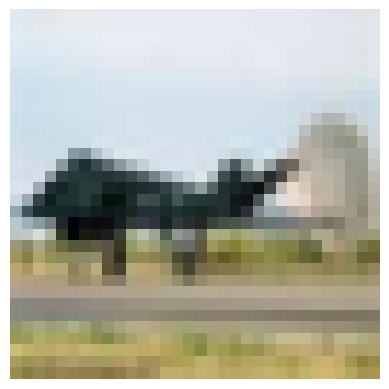}%
\xincludegraphics[width=0.098\textwidth,label=1)]{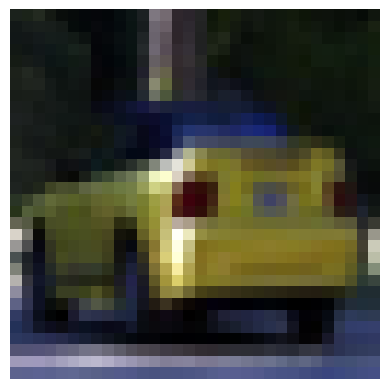}%
\xincludegraphics[width=0.098\textwidth,label=2)]{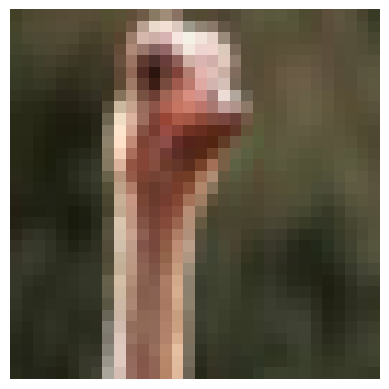}%
\xincludegraphics[width=0.098\textwidth,label=3)]{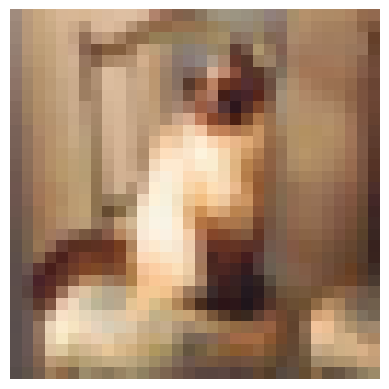}%
\xincludegraphics[width=0.098\textwidth,label=4)]{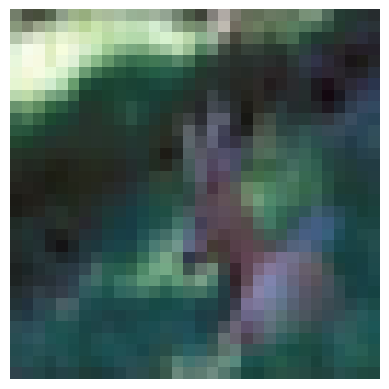}%
\xincludegraphics[width=0.098\textwidth,label=5)]{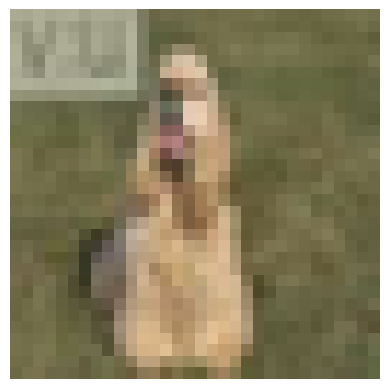}%
\xincludegraphics[width=0.098\textwidth,label=6)]{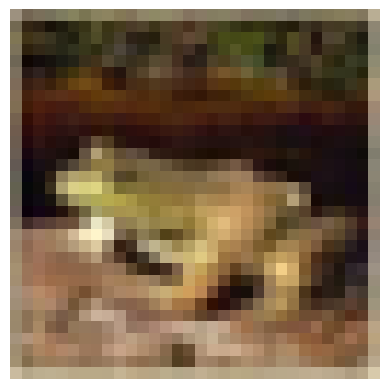}%
\xincludegraphics[width=0.098\textwidth,label=7)]{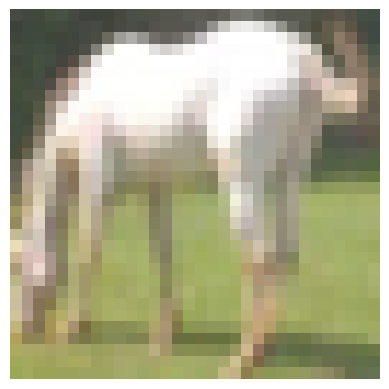}%
\xincludegraphics[width=0.098\textwidth,label=8)]{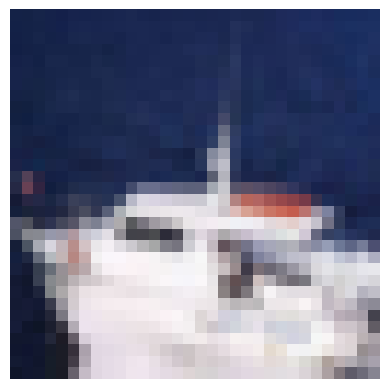}%
\xincludegraphics[width=0.098\textwidth,label=9)]{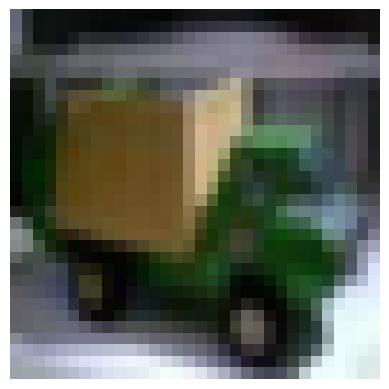}

\xincludegraphics[width=0.098\textwidth]{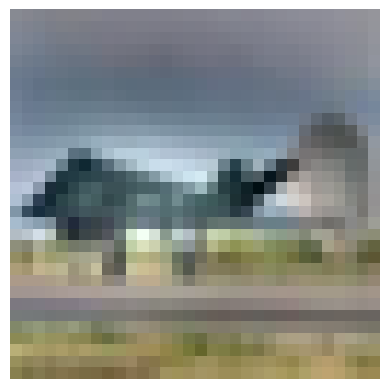}%
\xincludegraphics[width=0.098\textwidth]{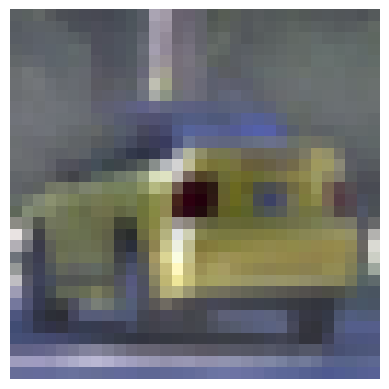}%
\xincludegraphics[width=0.098\textwidth]{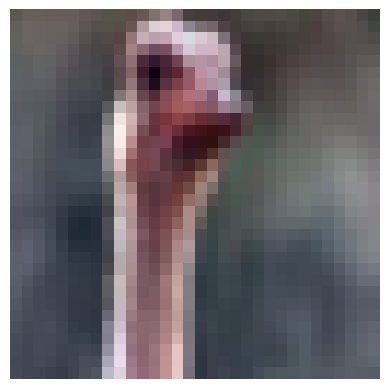}%
\xincludegraphics[width=0.098\textwidth]{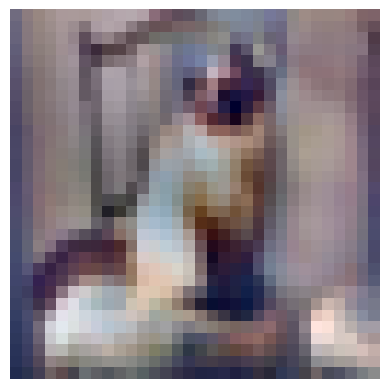}%
\xincludegraphics[width=0.098\textwidth]{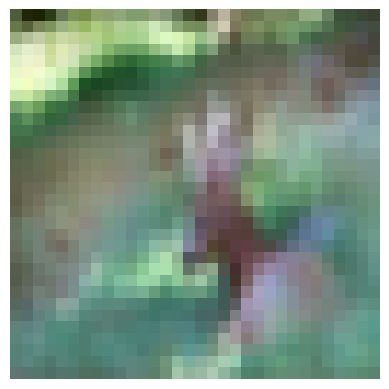}%
\xincludegraphics[width=0.098\textwidth]{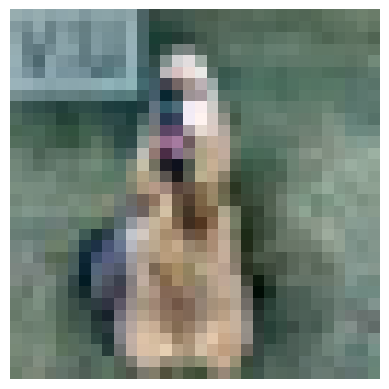}%
\xincludegraphics[width=0.098\textwidth]{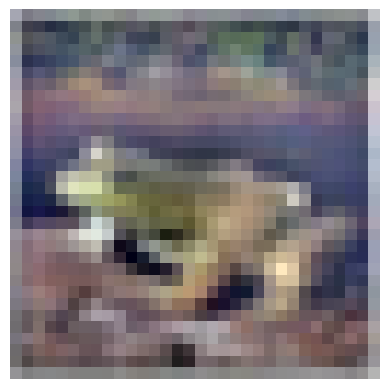}%
\xincludegraphics[width=0.098\textwidth]{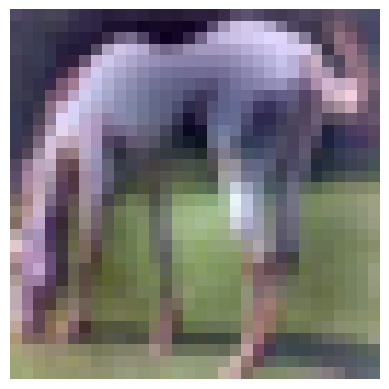}%
\xincludegraphics[width=0.098\textwidth]{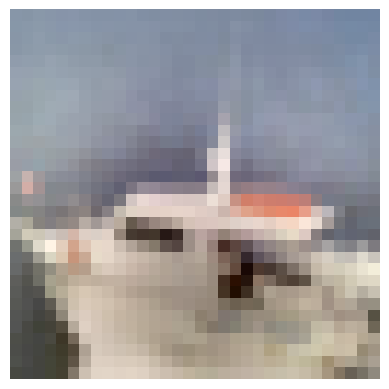}%
\xincludegraphics[width=0.098\textwidth]{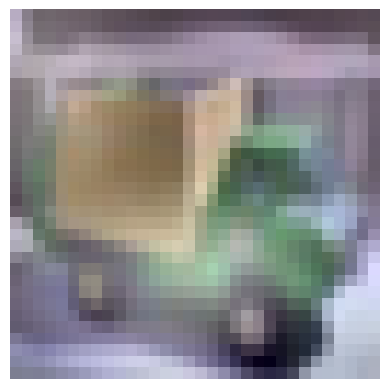}

Example CIFAR-10 training samples for each class. For each training sample, we
compare the original image (above) and the corresponding normalized image upon
removing the top 10 PCs (below). Most of the image details are preserved upon
removing these 10 PCs.
 
\subsection{Observed and limit CK spectra for all layers}\label{appendix:alllayers}

\includegraphics[width=0.33\textwidth]{figures/gaussian1_X0.png}%
\includegraphics[width=0.33\textwidth]{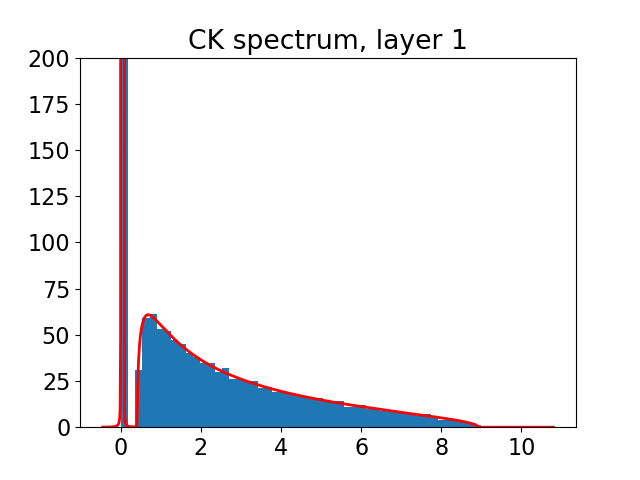}%
\includegraphics[width=0.33\textwidth]{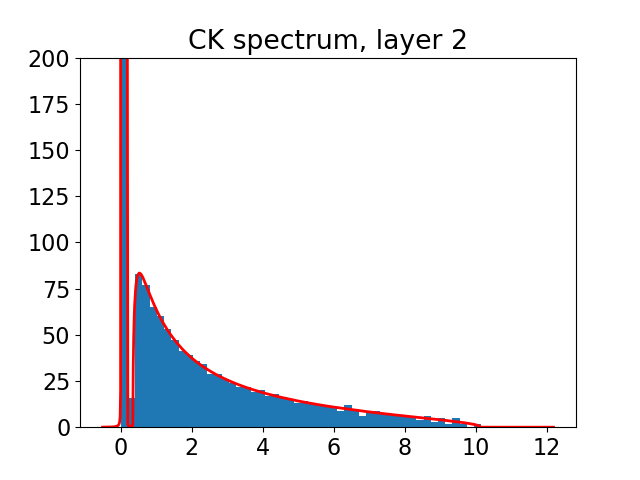}

\includegraphics[width=0.33\textwidth]{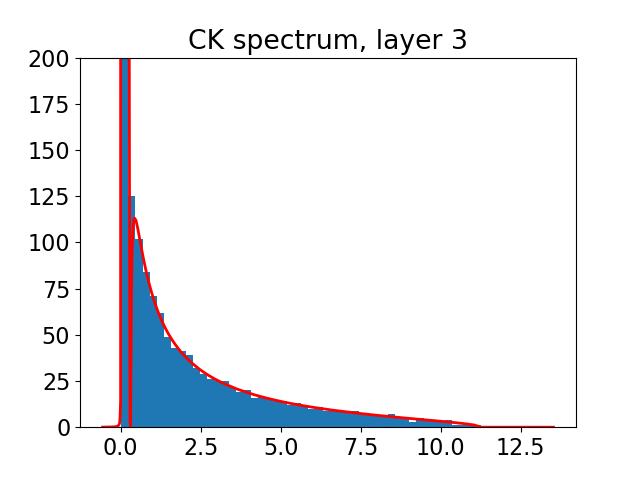}%
\includegraphics[width=0.33\textwidth]{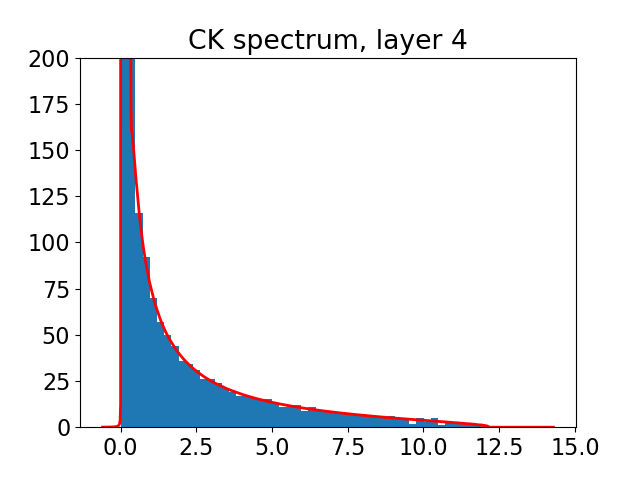}%
\includegraphics[width=0.33\textwidth]{figures/gaussian1_X5.png}

Simulated spectra of the initial CK matrices $X_\ell^\top
X_\ell$ at all intermediate
layers $\ell=1,\ldots,5$, corresponding to the i.i.d.\ Gaussian
training data example of Figure \ref{fig:gaussian}. Numerical computations
of the limit spectra from Theorem \ref{thm:CK} are overlaid in red. We observe a
merging of the two bulk spectral components and an extension of the spectral
support with increase in layer number.

\includegraphics[width=0.33\textwidth]{figures/CIFAR_raw_X0.png}%
\includegraphics[width=0.33\textwidth]{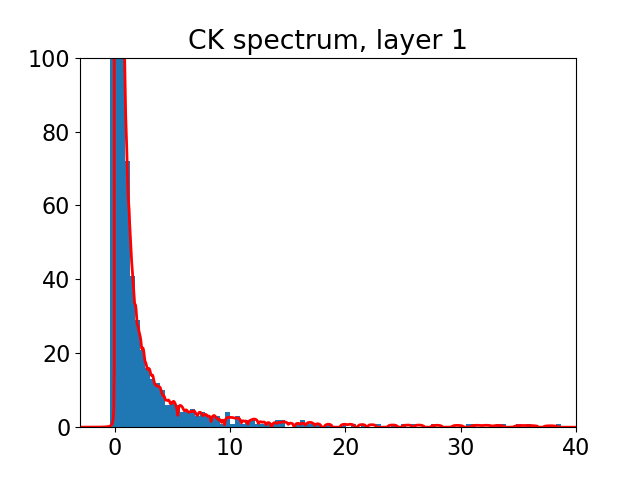}%
\includegraphics[width=0.33\textwidth]{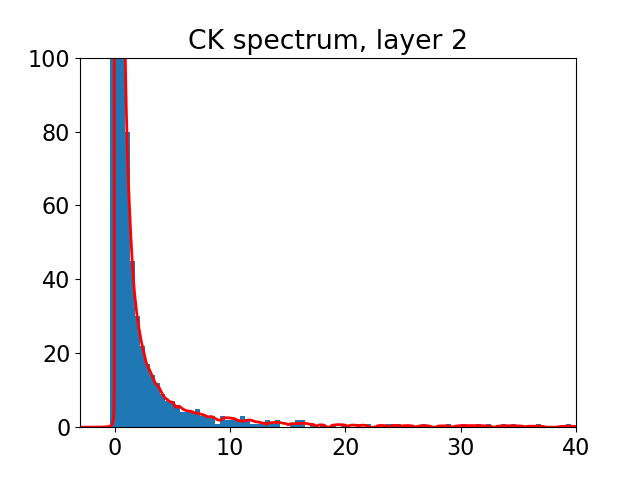}

\includegraphics[width=0.33\textwidth]{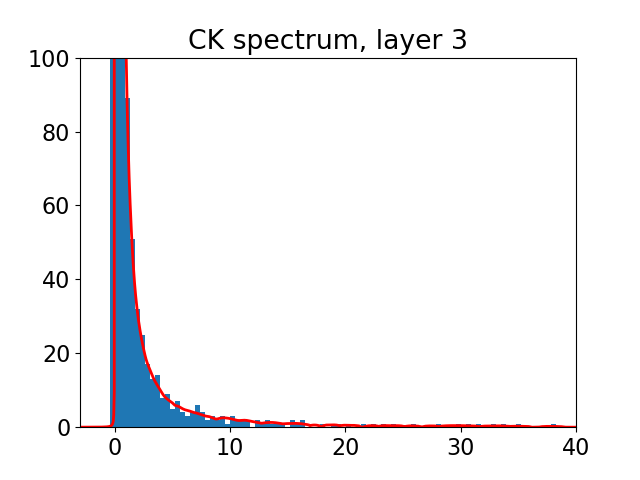}%
\includegraphics[width=0.33\textwidth]{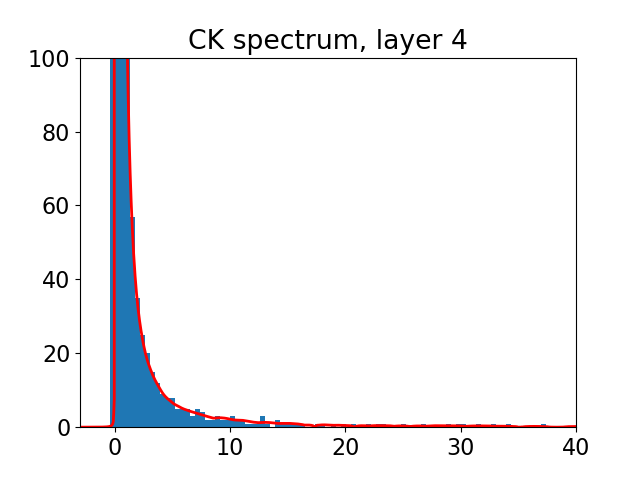}%
\includegraphics[width=0.33\textwidth]{figures/CIFAR_raw_X5.png}%

The same as above, corresponding to the CIFAR-10 training samples in Appendix
\ref{appendix:CIFARraw}. (Results with 10 PCs removed look the same.)
A close agreement with the limit spectrum described by Theorem
\ref{thm:CK} is observed at each layer.

\includegraphics[width=0.33\textwidth]{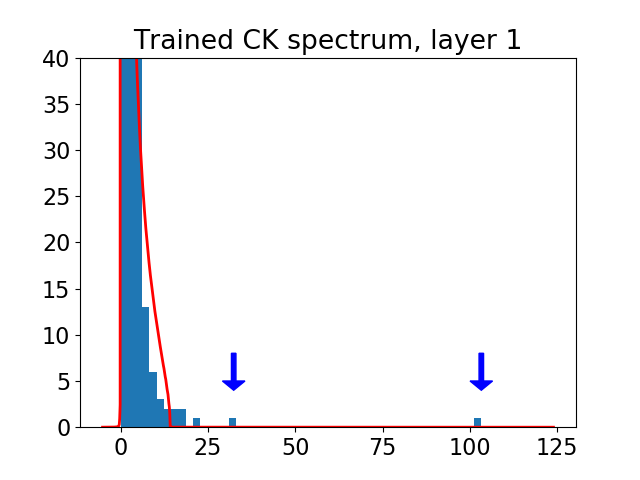}%
\includegraphics[width=0.33\textwidth]{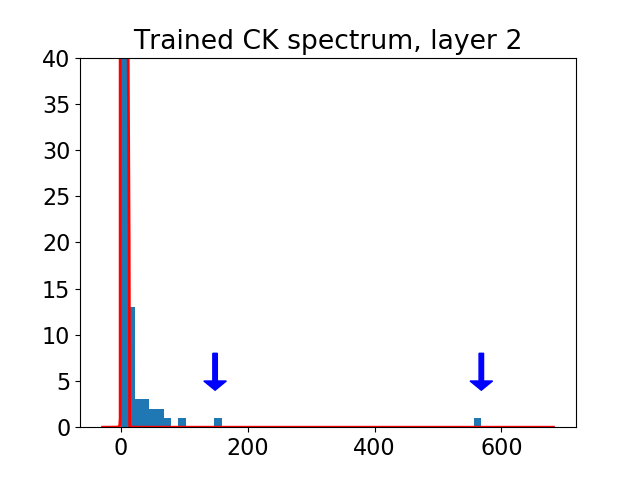}%
\includegraphics[width=0.33\textwidth]{figures/trained_X3.png}%

Spectra of the CK matrices at all three layers, corresponding to the
trained 3-layer network of Section \ref{sec:training}. The limit spectra at
random initialization of weights are depicted in red, and the two largest
eigenvalues of each matrix are depicted by blue arrows.

\subsection{CK spectrum after training on a CIFAR-10
example}\label{appendix:CIFARtraining}

We train a binary classifier on $n=10000$ training samples from CIFAR-10,
corresponding to classes 0 (airplane) and 1 (automobile). The classifier is a
fully-connected network with $L=4$ hidden layers of dimensions
$d_1=\ldots=d_4=1000$, with bias terms and a normalized sigmoid activation at each hidden
layer and also at the output layer. This network is given by
\[f_\theta(\x)=\sigma(\w^\top \x^L+b),
\qquad \x^\ell=\frac{1}{\sqrt{d_\ell}}\sigma(W_\ell \x^{\ell-1}+\mathbf{b}_\ell)
\quad \text{ for } \quad \ell=1,\ldots,L\]
where $b \in \R$ and $\mathbf{b}_\ell \in \R^{d_\ell}$ for each
$\ell=1,\ldots,L$ are the bias parameters.
The activation function $\sigma(x) \propto
(1-e^{-x})/(1+e^{-x})$ is scaled such that $\E[\sigma(\xi)^2]=1$.
Weights $\theta=(\vec(W_1),\ldots,\vec(W_4),\w)$ are initialized to independent $\N(0,1)$ for each entry,
and biases $(\mathbf{b}_1,\ldots,\mathbf{b}_4,b)$ are initialized to 0.
Hence, $K^\CK$ at random initialization has the same definition as in the
main text.

We train the weights and biases using the Adam optimizer in Keras, with
learning rate 0.01, batch size 128, and 60 training epochs. To ensure that the
leading PCs of the \emph{untrained} kernel matrix $K^\CK$ are
not too predictive of the training labels, and to better separate the original
PCs from those that emerge after training, we remove the
leading 5 PCs of the input data before training. The resulting
0--1 classification accuracy on the CIFAR-10 test set
is $85.3\%$. (Training without removing these 5 PCs yields a slightly higher
test accuracy of $90.7\%$, using the same network architecture.)

\xincludegraphics[width=0.33\textwidth,label=a)]{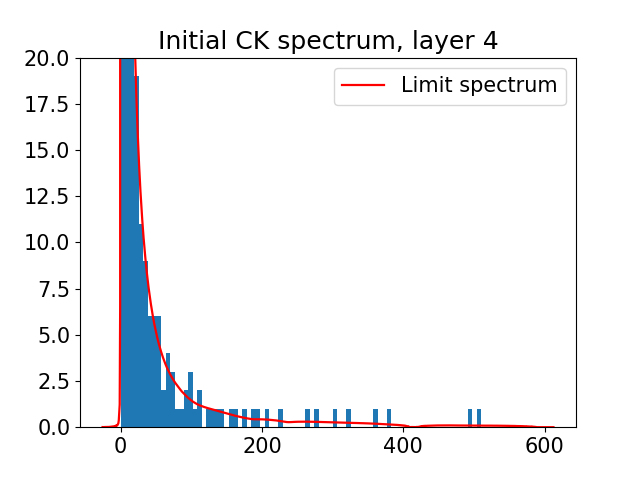}%
\xincludegraphics[width=0.33\textwidth,label=b)]{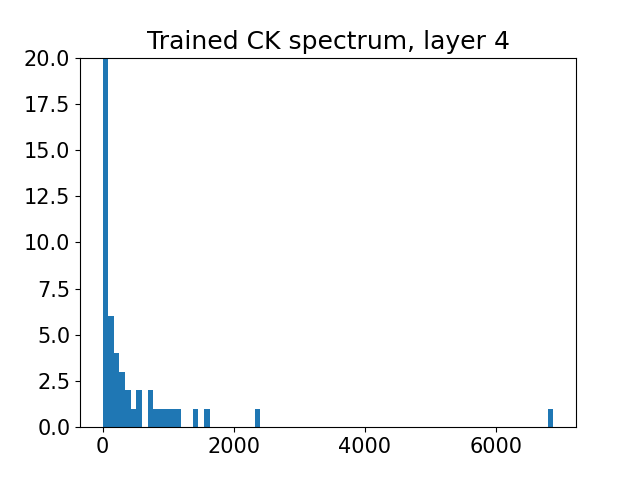}\\

Panel (a) above shows the eigenvalue distribution of $K^\CK$ at random initialization,
with the largest eigenvalue being approximately 500. We
observe a close agreement with the limit spectrum of Theorem \ref{thm:CK}.
Panel (b) shows the eigenvalues of $K^\CK$ after training. We observe
an elongation of the bulk spectral support and the emergence of large
outlier eigenvalues, analogous to the synthetic example of Section
\ref{sec:training}.

\xincludegraphics[width=0.33\textwidth,label=a)]{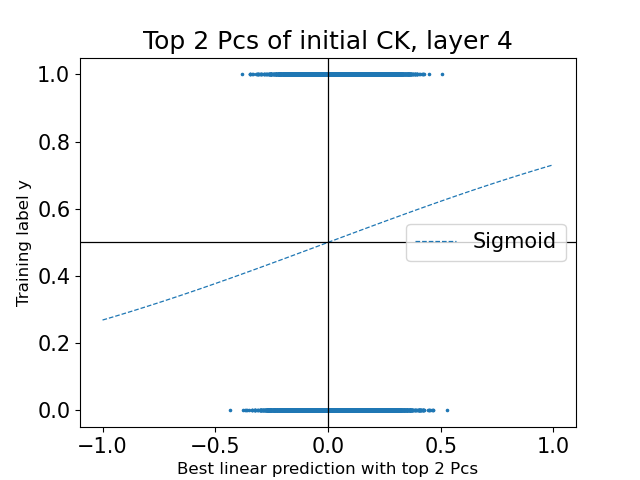}%
\xincludegraphics[width=0.33\textwidth,label=b)]{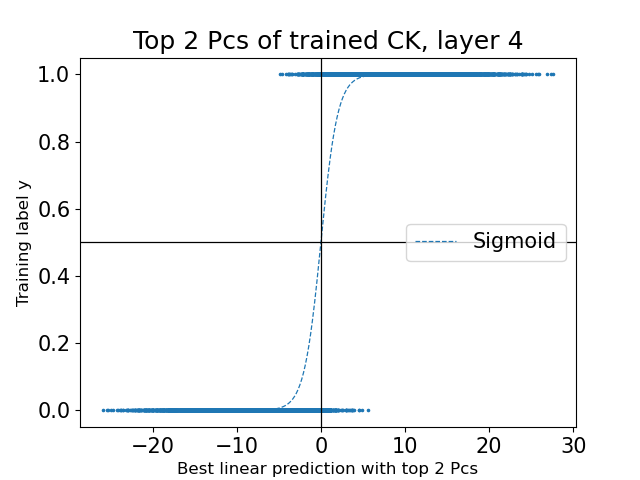}\\

The above figure depicts the information about the training labels that is
contained in the top 2 PCs of $K^\CK$, (a) before training
and (b) after training. Denoting by $\hat{X}_L$ the rank-2 approximation of
$X_L$, with columns
$\hat{\x}_1^L,\ldots,\hat{\x}_n^L$ (both before and after training),
we re-fit a linear binary classifier $y_\a=\sigma(\w^\top \hat{\x}_\a^L+b)$
of the training labels to these columns. The in-sample 0--1 training accuracy of
this classifier is 51.4\% pre-training and 96.8\% post-training, and the 
figure shows the linear predictions $\w^\top \hat{\x}_\a^L+b$ against the
training labels $y_\a$. We observe that the leading principal components of
$K^\CK$ are not predictive of the training labels before training, but
become highly predictive after training.
\end{document}